\theoremstyle{plain}
\newtheorem{theorem}{Theorem}[section]
\newtheorem{proposition}[theorem]{Proposition}
\newtheorem{lemma}[theorem]{Lemma}
\newtheorem{corollary}[theorem]{Corollary}
\theoremstyle{definition}
\newtheorem{definition}[theorem]{Definition}
\newtheorem{assumption}[theorem]{Assumption}
\newtheorem*{clusterassumption}{Local Cluster Assumption}
\theoremstyle{remark}
\newtheorem{remark}[theorem]{Remark}
\newtheorem{example}[theorem]{Example}
\newtheorem{claim}{Claim}
\def\eqref#1{equation~(\ref{#1})}
\def\1{\bm{1}}
\def\mO{{\bm{O}}}
\DeclareMathAlphabet{\mathsfit}{\encodingdefault}{\sfdefault}{m}{sl}
\SetMathAlphabet{\mathsfit}{bold}{\encodingdefault}{\sfdefault}{bx}{n}
\def\gN{{\mathcal{N}}}
\newcommand{\proj}{\mathrm{proj}}
\newcommand{\E}{\mathbb{E}}
\newcommand{\R}{\mathbb{R}}
\newcommand{\supp}{\mathrm{supp}}
\newcommand{\lfs}{\mathrm{lfs}}
\newcommand{\diam}{\mathrm{diam}}
\newcommand{\conv}{\mathrm{conv}}
\newcommand{\nn}{\mathrm{NN}}
\newcommand{\set}{\Omega}
\title{
Elucidating Flow Matching ODE Dynamics via Data Geometry and Denoisers
}
\author[1]{Zhengchao Wan\textsuperscript{*}}
\author[2]{Qingsong Wang\textsuperscript{*}}
\author[2]{Gal Mishne\textsuperscript{†}}
\author[2]{Yusu Wang\textsuperscript{†}}
\affil[1]{Department of Mathematics, University of Missouri}
\affil[2]{Hal{\i}c{\i}o\u{g}lu Data Science Institute, University of California San Diego}
\begin{document}

\maketitle

\begingroup
\renewcommand\thefootnote{}\footnotetext{\textsuperscript{*}Equal contribution (co-first authors): \url{zwan@missouri.edu}, \url{qswang92@gmail.com}}%
\footnotetext{\textsuperscript{†}Equal contribution (co-last authors): \url{gmishne@ucsd.edu}, \url{yusuwang@ucsd.edu}}%
\endgroup

\begin{abstract}
Flow matching (FM) models extend ODE sampler based diffusion models into a general framework, significantly reducing sampling steps through learned vector fields.
However, the theoretical understanding of FM models, particularly how their sample trajectories {interact with underlying data geometry,} remains underexplored.
A rigorous theoretical analysis of FM ODE is essential for sample quality, stability, and broader applicability.
In this paper, we advance the theory of FM models through a comprehensive analysis of sample trajectories. Central to our theory is the discovery that the denoiser, a key component of FM models, guides ODE dynamics through attracting and absorbing behaviors that adapt to the data geometry.
We identify and analyze the three stages of ODE evolution: in the initial and intermediate stages, trajectories move toward the mean and local clusters of the data. At the terminal stage, we rigorously establish the convergence of FM ODE under weak assumptions, addressing scenarios where the data lie on a low-dimensional submanifold\textemdash cases that previous results could not handle. 
Our terminal stage analysis offers insights into the memorization phenomenon and establishes equivariance properties of FM ODEs.
These findings bridge critical gaps in understanding flow matching models, with practical implications for optimizing sampling strategies and architectures guided by the intrinsic geometry of data.
\end{abstract}

\section{Introduction}
Diffusion-based generative models have become the de facto method for the task of image generation~\citep{sohl2015deep,ho2020denoising,song2019generative}. Compared to previous generative models (e.g., GANs~\citep{goodfellow2014generative}), diffusion models are easier to train but suffer from long sampling times due to the sequential nature of the sampling process. 
ODE-based samplers were introduced to address this limitation, where the sampling process is done by integrating an ODE. With its efficiency, ODE-based samplers have become the dominant approach in diffusion models~\citep{song2020denoising,lu2022dpm,karras2022elucidating}. 
Recently, the ODE-based viewpoint of diffusion models has been extended to a general framework known as \textbf{flow matching} (FM)~\citep{lipman2022flow,albergo2022building,liu2023flow}, which uses an ODE to interpolate between a prior and a target data distribution. FM models learn a vector field $u_t$, similar to the score function in diffusion models. During sampling, a data sample $x_1$ is generated by integrating the ODE starting from some $x_0\in\R^d$ sampled from a prior distribution:
$$\frac{dx_t}{dt} = u_t(x_t),\,t\in[0,1).$$

\begin{figure}[htbp!]
    \centering
     \begin{subfigure}[b]{0.40\textwidth}
        \centering
        \includegraphics[width=\textwidth]{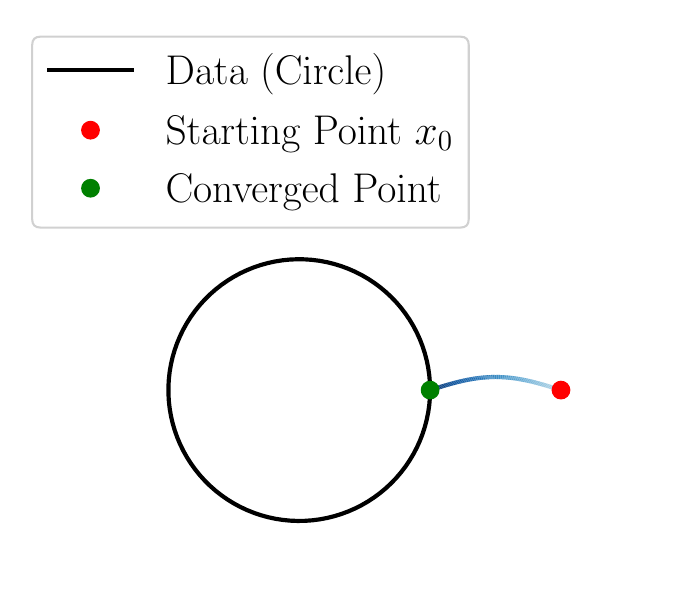}
        \caption{Convergent trajectory}
        \label{fig:notwinding}
    \end{subfigure}
    \begin{subfigure}[b]{0.40\textwidth}
        \centering
        \includegraphics[width=\textwidth]{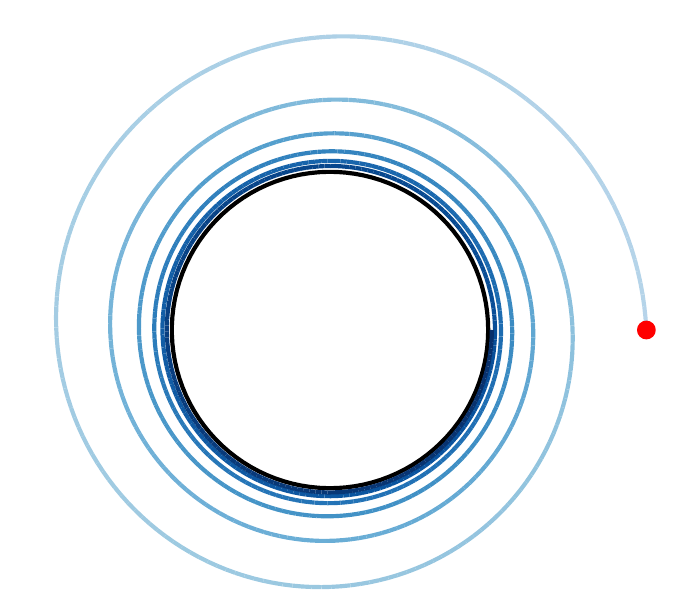}
        \caption{Winding trajectory}
        \label{fig:winding}
    \end{subfigure}
    \hfill
\caption{\textbf{Comparing two ODE trajectory behaviors.}}
\end{figure}

Various versions of the FM model have gained popularity, such as the rectified flow model~\citep{liu2023flow}, which is utilized in commercial image generation software~\citep{esser2024scaling}. Furthermore, the succinct and deterministic formulation of the FM model also makes theoretical analysis potentially easier. Despite the empirical success, critical theoretical questions remain insufficiently addressed: How does the data geometry (e.g., clusters, manifold structure) influence and guide individual sampling trajectories? Are these trajectories guaranteed to converge toward the data distribution as $t\to 1$, especially when the data lies on a low-dimensional subspace or manifold? {This convergence is critical for the generative model's performance, as suggested by~\citet{loaiza2024deep}.}

These questions regarding \emph{per-sample} trajectories have both theoretical and practical significance for the sampling process because (1) robust sampling requires trajectory convergence in terminal time (\Cref{fig:notwinding}), avoiding undesirable behaviors like winding around the data manifold (\Cref{fig:winding}). Such convergence provides the theoretical foundation for distilling the trajectory into a one-step generative model like the consistency model~\citep{song2023consistency}; (2) understanding the relationship between data geometry and ODE trajectories can {motivate
geometry-based steering of the sampling process or modification of the latent space} for improved generation quality.

\noindent \textbf{Our approach.} We conduct a thorough investigation of per-sample FM ODE trajectories by focusing our analysis on the \textbf{denoiser}—the conditional mean of the data given noise~\cite{karras2022elucidating}. The denoiser emerges as the only data-dependent component of {the flow vector field} $u_t$, fundamentally determining FM ODE dynamics.
{Interestingly, by} examining how the denoiser interacts with the data geometry, we demonstrate that the FM ODE exhibits two key properties: (1) {\bf Attracting}---trajectories are drawn toward a specific set, and (2) {\bf Absorbing}---once within a certain set, trajectories remain confined near it.
With these properties, we quantitatively elucidate FM ODE trajectories across three stages (\Cref{fig:trajectory}): \emph{initial}, \emph{intermediate}, and \emph{terminal}. The initial stage is characterized by trajectories moving toward the mean of the data distribution, while the intermediate stage is shaped by coarse-scale data geometry, with trajectories attracted to and absorbed into local clusters. The terminal stage is marked by the trajectory converging to the data support, where the attracting and absorbing dynamics ensure the convergence (see~\Cref{sec:well-posedness_ODE 1} for more details). 
{While there are prior works on sampling evolution of diffusion models~\citep{biroli2024dynamical,li2024critical}, they have mainly focused on \emph{distribution-level} analysis of stochastic samplers using simplified settings like Gaussian mixtures. In contrast, our work reveals how data geometry---both coarse-scale clustering and fine-scale structure (discrete vs. manifold)---manifests in and guides individual ODE trajectories.}

\begin{figure}[htbp!]
    \centering
    \includegraphics[width=0.4\textwidth]{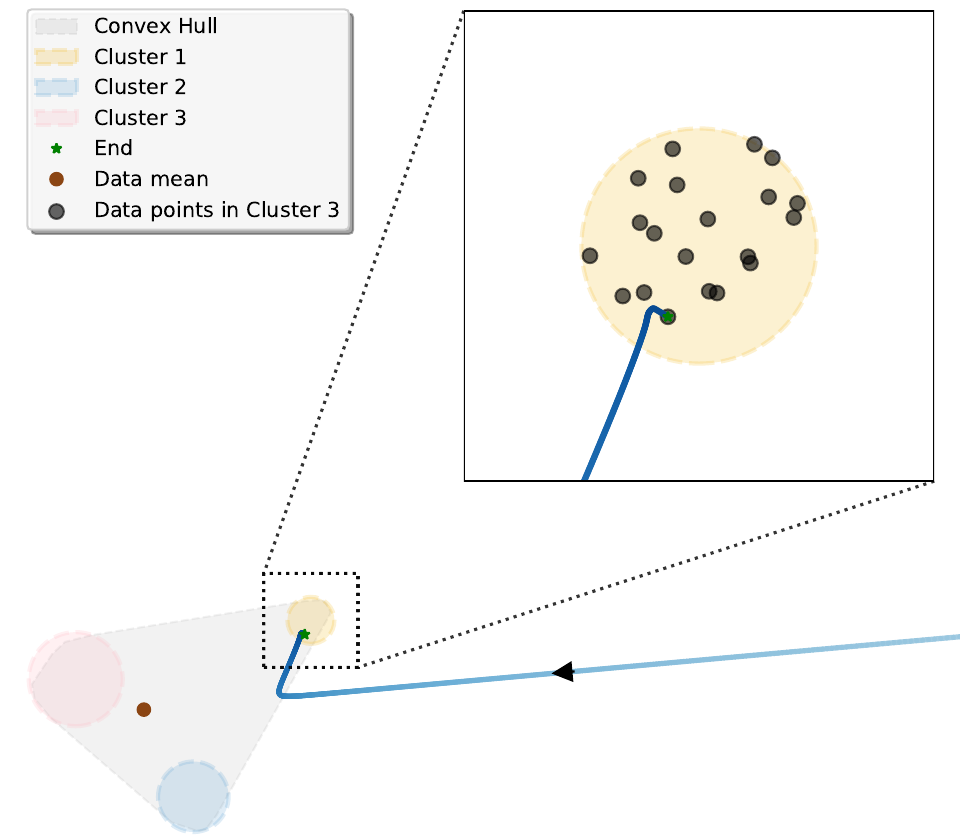}
    \caption{\textbf{Three stages of an FM ODE trajectory with synthetic data.}
    The curve with blue progression shows an FM ODE trajectory, with an arrow indicating direction. The shaded region indicates the convex hull of data. Three stages are visible: initially, the trajectory aligns with the data mean (brown point); next, it is attracted to a local cluster (yellow cluster); finally, it converges to a data point (green star). See \Cref{sec:experiment_synthetic} for more details.}
    \label{fig:trajectory}
\end{figure}

\noindent \textbf{Contributions.} In \Cref{sec:denoiser}, we introduce the fundamentals of the denoiser and present our meta attracting and absorbing theorems (\Cref{thm:attracting_toward_set_meta_informal,thm:absorbing_by_neighborhoods_meta}), which demonstrate how the properties of the denoiser can be leveraged to analyze FM ODE trajectories. These theorems provide a unifying framework to qualitatively study the behavior of trajectories during the initial and intermediate stages (\Cref{sec:stage 1 and 2}) as well as the terminal stage (\Cref{sec:terminal}).
Specifically, in~\Cref{sec:stage 1 and 2}, we first establish the well-posedness of the FM ODE trajectory on $[0,1)$ for general data distributions (\Cref{prop:existence of flow maps}) and establish rigorously how an FM ODE trajectory will initially move toward the data mean (\Cref{prop:initial-stage_mean_geneal}) and later toward local clusters (\Cref{prop:local_cluster_convergence_sigma}). In~\Cref{sec:terminal}, we establish the convergence for FM ODE as $t \to 1$ under mild assumptions (\Cref{thm:existence of flow maps}). To the best of our knowledge, this is the first result that accommodates data distributions supported on submanifolds.
This convergence result allows us to study the properties of flow maps, leading to our establishment of equivariance of flow maps with respect to geometric transformations (\Cref{thm:similarity_transform}).
We also delve into the case of discrete measures, showing that terminal time training plays a critical role in addressing memorization phenomena (\Cref{prop:V_epsilon_i_absorbing_sigma,prop:memorization_sigma}). 
{See} \Cref{fig:dependence} in \Cref{sec:roadmap_notations} for a roadmap of our main theoretical results.

Due to space constraints, all proofs are deferred to the appendix. 
A {\bf significant number of additional results} and observations, which could be of independent interest, are also presented in the appendix. 
For instance, we identify that the FM ODE vector field exhibits singularities and blows up when the data distribution lacks full support (cf. \Cref{sec:singularities}), 
and we derive precise rates of convergence of posterior distributions/denoisers as $t \to 1$ depending on the data geometry, as detailed in \Cref{sec:concentration_posterior}.

\section{Background and Related Work}\label{sec:background}

\noindent \textbf{Notations.} For any subset $\set\subset \R^d$, we let $d_{\set}(x):=\inf_{y\in \set}\|x-y\|$ denote the distance to $\set$. Let $B_r(\Omega):=\{x \in \R^d | d_\set(x) < r\}$. Let $\overline{\Omega}$ and $\partial \set$ denote the closure and boundary of $\set$, respectively.
The \emph{medial axis} of $\set$ is denoted
$$\vspace{-0.1cm}
\Sigma_\set:=\left\{x\in\R^d:\#\{    \mathrm{argmin}_{y\in \set}\|x-y\|>1\}\right\},$$
where $\#A$ denotes the cardinality of a set $A$. 
For any $x\notin\Sigma_\set$, its \emph{projection} onto $\Omega$ 
$$\proj_\set(x):=\arg\min_{y\in \set}\|x-y\|$$
is unique and hence well defined when $\Omega$ is closed. The \emph{reach} of $\set$ is defined as $\tau_\set:=\inf_{x\in \set}d_{\Sigma_\set}(x)$ which in a sense quantifies the smoothness of the set $\set$—a larger reach rules out tight bottlenecks and sharp bends (see, e.g., \citep{federer1959curvature} for more details).

We use $\delta_x$ to denote the Dirac delta measure at $x$. We let $\gN(\mu,\Sigma)$ denote the Gaussian distribution with mean $\mu$ and covariance $\Sigma$. Let  $s\in[1,\infty]$, and we use $d_{\mathrm{W},s}(\nu_1,\nu_2)$ to denote $s$-Wasserstein distance for two probability measures $\nu_1$ and $\nu_2$.

See \Cref{sec:roadmap_notations} {for a table of symbols used in this paper.}

\subsection{Background on Flow Matching}

Flow matching (FM) models~\citep{lipman2022flow,albergo2022building,liu2023flow} are a class of generative models whose \emph{training process} consists of learning a vector field $u_t$ that generates a probability path $(p_t)_{t\in[0,1]}$ interpolating a prior $p_0=p_{\text{prior}}$  and a target data distribution $p_1=p$ and whose \emph{sampling process} consists of integrating an ODE from an initial point $\bm{Z}\sim p_{\text{prior}}$ to obtain a terminal point $\bm{X}\sim p$.
More precisely, the interpolating path $(p_t)_{t\in[0,1]}$ in FM model
is constructed as follows: 
\begin{equation}\label{eq:pt}
p_t(dx_t) := \int p_t(dx_t|\bm{X}=x)p(dx),
\end{equation}
where the conditional distribution $p_t(\cdot|\bm{X}=x)$ satisfies that $p_0(\cdot|\bm{X}=x) = p_{\text{prior}}$ and $p_1(\cdot|\bm{X}=x) = \delta_{x}$. 
\emph{We assume that the prior $p_{\text{prior}}$ is the standard Gaussian $\gN (0, I)$} throughout this paper. Then, $p_t(\cdot|\bm{X}=x)$ are specified as 
$$p_t(\cdot|\bm{X}=x):=\gN ( \alpha_t x, \beta_t^2 I),$$ where $\alpha_t$ and $\beta_t$ are {{\it scheduling functions}} satisfying $\alpha_0 =\beta_1= 0$ and $\alpha_1 = \beta_0=1$ and are often monotonic. 
Common choices include linear scheduling  $\alpha_t = t$ and $\beta_t = 1-t$ used in the rectified flow model~\citep{liu2023flow,esser2024scaling} and those arising from noise scheduling in diffusion models. In this paper, we assume that $\alpha_t,\beta_t$ are smooth functions of $t$ on the closed interval $[0,1]$. It is worth noting that $p_t$ is the law of the random variable $\bm{X}_t:=\alpha_t\bm{X}+\beta_t\bm{Z}$, assuming $\bm{X}$ and $\bm{Z}$ are independent.

The FM model then designs a vector field $u_t$ such that the ODE trajectory below generates \((p_t)_{t\in[0,1]}\), i.e., the resulting flow map \(\Psi_t\) satisfies \( p_t = (\Psi_t)_\#p_0 \):
\begin{equation}
    \label{eq:flow_ode}
    \frac{dx_t}{dt} = u_t(x_t).
\end{equation}
To construct \( u_t \), the FM model marginalizes over the \emph{conditional vector field} \( u_t(x|x_1) \): 
\begin{equation}\label{eq:conditional_vector_field}
    u_t(x) = \int u_t(x|x_1)p(dx_1|\bm{X}_t=x),
\end{equation}  
where \( p(dx_1|\bm{X}_t=x) \) represents the \emph{posterior distribution}:
\[
p(dx_1|\bm{X}_t=x) = \frac{\exp\left(-\frac{\|x-\alpha_t x_1\|^2}{2\beta_t^2}\right)}{\int \exp\left(-\frac{\|x-\alpha_t x_1'\|^2}{2\beta_t^2}\right)p(dx_1')} p(dx_1).
\]
Importantly, if \( u_t(x|x_1) \) takes the following simple form:
\begin{equation}\label{eq:conditional ut}
    u_t(x|x_1) = \frac{\dot{\beta}_t}{\beta_t}  x + \frac{\dot{\alpha}_t \beta_t - \alpha_t \dot{\beta}_t}{\beta_t} x_1,
\end{equation}
where the dot denotes differentiation with respect to \( t \), then \cite[Theorem~3.3]{liu2023flow} and \citet[Theorem~1]{lipman2022flow} demonstrated that \( u_t \) generates the probability path \((p_t)_{t\in[0,1]}\), assuming the ODE trajectory of \Cref{eq:flow_ode} exists on \([0, 1]\).  This existence was rigorously established in \citet{gao2024gaussian} under restrictive assumptions, excluding cases where \( p \) is supported on a low-dim submanifold. For more general cases, see our results in \Cref{sec:well-posedness_01,sec:well-posedness_ODE 1}.

It turns out that the closed form of the conditional vector field $u_t(x|x_1)$ allows one to train a neural network to learn the vector field $u_t$ by minimizing the following loss function whose {\bf unique} minimizer is $u_t(x)$ \citep{lipman2022flow}:
\begin{equation}
    \label{eq:conditional_flow_matching}
    \E_{
        \substack{
            t \in [0,1), \\
            \bm{Z}\sim p_{\text{prior}},
            \bm{X}\sim p
        }
    }
    \bigl\|
        u_t^\theta\bigl(\alpha_t \bm{X} + \beta_t \bm{Z}\bigr)
        \;-\;
        \dot{\alpha}_t \bm{X} \;-\;
        \dot{\beta}_t \bm{Z}
    \bigr\|^2.
\end{equation}

\noindent \textbf{Noise-to-signal ratio.}
\label{sec:equivalence}
FM model with different scheduling functions can be unified through the {\it noise-to-signal ratio}~\citep{shaul2023bespoke,chen2024trajectory}. We find it useful in our analysis as it simplifies the ODE dynamics and allows us to present our results more cleanly. Proofs of results in this section can be found in \Cref{sec:proof background}.

Let $\alpha_t, \beta_t$ be strictly monotonic scheduling functions. The \emph{noise-to-signal ratio} $\sigma_t := {\beta_t}/{\alpha_t}$ is defined for $t \in (0,1]$. By monotonicity, $\sigma_t$ is invertible with inverse $t(\sigma)$. As $t$ increases from $0$ to $1$, $\sigma_t$ decreases from $\infty$ to $0$. For $\sigma \in [0,\infty)$, we define $q_\sigma$ as the convolution of $p$ with the Gaussian distribution $\mathcal{N}(0,\sigma^2I)$:
\begin{equation}
    \label{eq:q_sigma}
    q_\sigma :=p * \mathcal{N}(0,\sigma^2I) = \int \gN(\cdot|y, \sigma^2 I)p(dy).
\end{equation}
Then, we have the following result.

\begin{proposition}\label{prop:equivalence_variance_exploding}
    For any $t\in(0,1]$, define $A_t:\R^d\to\R^d$ by sending $x$ to $x/\alpha_t$. Then, $q_{\sigma_t}=(A_t)_\#p_t$. 
\end{proposition}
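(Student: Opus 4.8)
The plan is to unwind both sides as laws of explicit random variables. Recall that by construction $p_t(dx_t)=\int \gN(dx_t\mid \alpha_t x,\beta_t^2 I)\,p(dx)$, which is precisely the law of the random variable $\alpha_t\bm{X}+\beta_t\bm{Z}$, where $\bm{X}\sim p$ and $\bm{Z}\sim\gN(0,I)$ are independent. Since the scheduling functions are strictly monotonic with $\alpha_0=0$ and $\alpha_1=1$, we have $\alpha_t>0$ for every $t\in(0,1]$, so $A_t\colon x\mapsto x/\alpha_t$ is a well-defined linear bijection of $\R^d$. This observation about $\alpha_t\neq 0$ on $(0,1]$ is the only well-definedness point worth recording.

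Next I would simply push forward. The measure $(A_t)_\#p_t$ is the law of $A_t(\alpha_t\bm{X}+\beta_t\bm{Z})=\bm{X}+(\beta_t/\alpha_t)\bm{Z}=\bm{X}+\sigma_t\bm{Z}$, using the definition $\sigma_t=\beta_t/\alpha_t$. Because $\bm{X}$ and $\bm{Z}$ are independent and $\sigma_t\bm{Z}\sim\gN(0,\sigma_t^2 I)$, the law of $\bm{X}+\sigma_t\bm{Z}$ is the convolution $p*\gN(0,\sigma_t^2 I)$, which is exactly $q_{\sigma_t}$ by its definition in \eqref{eq:q_sigma}. Equivalently, one can argue purely at the level of measures: pushing the Gaussian $\gN(\alpha_t x,\beta_t^2 I)$ forward under the scaling $A_t$ rescales its mean by $1/\alpha_t$ and its covariance by $1/\alpha_t^2$, yielding $\gN(x,\sigma_t^2 I)$; and since pushforward commutes with the mixture integral $\int(\cdot)\,p(dx)$, we obtain $(A_t)_\#p_t=\int \gN(\cdot\mid x,\sigma_t^2 I)\,p(dx)=q_{\sigma_t}$.

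I do not anticipate any genuine obstacle: the statement is a bookkeeping identity reconciling the $(\alpha_t,\beta_t)$ parametrization with the noise-to-signal parametrization, and the argument above is just a linear change of variables under a mixture. The one remaining item to check is the terminal time $t=1$, which is the degenerate case $\sigma_1=\beta_1/\alpha_1=0$ with $A_1=\mathrm{id}$, $p_1=p$, and $q_0:=p$, so the claimed identity holds there trivially and the formula is consistent across the whole interval $(0,1]$.
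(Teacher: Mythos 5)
Your proof is correct and follows essentially the same route as the paper: the paper also notes $\alpha_t>0$ on $(0,1]$ and pushes the Gaussian mixture $\int\gN(\cdot\mid\alpha_t x_0,\beta_t^2 I)\,p(dx_0)$ through the scaling $A_t$, which rescales each component to $\gN(\cdot\mid x_0,\sigma_t^2 I)$ and commutes with the mixture integral — exactly your second, measure-level argument (your random-variable phrasing and the explicit check at $t=1$ are harmless additions).
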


The probability path $q_{\sigma_t}$ satisfies the following ODE in $\sigma$.

\begin{proposition}\label{prop:equivalence_ode}
    For any $[a,b)\subset (0,1]$,  let $(x_t)_{t\in[a,b)}$ denote an ODE trajectory of \Cref{eq:flow_ode}. Then, $(x_{\sigma}:=x_{t(\sigma)}/\alpha_{t(\sigma)})_{\sigma \in (\sigma_b,\sigma_a]}$ satisfies the following ODE:
    \begin{equation}
        \label{eq:ddim_ode}
        \frac{dx_\sigma}{d\sigma} = -\sigma\nabla \log q_\sigma(x_\sigma),
    \end{equation}
    where $q_\sigma(x)$ denotes the probability density of $q_\sigma$.
\end{proposition}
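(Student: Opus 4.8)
The plan is to establish the ODE in $\sigma$-variables in two stages: first rewrite the original flow ODE \eqref{eq:flow_ode} in terms of the denoiser (equivalently the score $\nabla\log p_t$), then push everything forward through the rescaling map $A_t:x\mapsto x/\alpha_t$ and change variables from $t$ to $\sigma$. The starting point is \eqref{eq:conditional_vector_field}: plugging in the closed form of $u_t(x|x_1)$ gives $u_t(x)=\frac{\dot\beta_t}{\beta_t}x+\frac{\dot\alpha_t\beta_t-\alpha_t\dot\beta_t}{\beta_t}\,D_t(x)$, where $D_t(x):=\int x_1\,p(dx_1|\bm X_t=x)$ is the denoiser. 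The key analytic identity I will use is Tweedie's formula: since $p_t$ is the law of $\alpha_t\bm X+\beta_t\bm Z$ with $\bm Z\sim\mathcal N(0,I)$, one has $\nabla\log p_t(x)=\frac{\alpha_t D_t(x)-x}{\beta_t^2}$, which lets me substitute $D_t(x)=\frac{x+\beta_t^2\nabla\log p_t(x)}{\alpha_t}$ into $u_t$. This requires the elementary Gaussian-convolution computation that $\nabla\log p_t(x) = \frac{1}{\beta_t^2}\big(\int(\alpha_t x_1 - x)\,p(dx_1|\bm X_t=x)\big)$, which follows by differentiating under the integral sign in the posterior density formula given in the text.

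Next I relate $p_t$ to $q_{\sigma_t}$. By \Cref{prop:equivalence_variance_exploding}, $q_{\sigma_t}=(A_t)_\#p_t$, and since $A_t$ is the linear map $x\mapsto x/\alpha_t$, the densities satisfy $q_{\sigma_t}(x)=\alpha_t^d\,p_t(\alpha_t x)$, hence $\nabla\log q_{\sigma_t}(x)=\alpha_t\,\nabla\log p_t(\alpha_t x)$. Now set $x_\sigma := x_{t(\sigma)}/\alpha_{t(\sigma)} = A_{t(\sigma)}(x_{t(\sigma)})$ and differentiate using the chain rule. Writing $t=t(\sigma)$, $\frac{dx_\sigma}{d\sigma} = \frac{d}{d\sigma}\!\left(\frac{x_t}{\alpha_t}\right) = \left(\frac{\dot x_t}{\alpha_t}-\frac{\dot\alpha_t}{\alpha_t^2}x_t\right)\frac{dt}{d\sigma} = \left(\frac{u_t(x_t)}{\alpha_t}-\frac{\dot\alpha_t}{\alpha_t^2}x_t\right)\frac{dt}{d\sigma}$. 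Substituting the expression for $u_t(x_t)$ obtained above, the term proportional to $x_t$ should cancel after using $\sigma_t=\beta_t/\alpha_t$, leaving something proportional to $\nabla\log p_t(x_t)=\frac{1}{\alpha_t}\nabla\log q_{\sigma_t}(x_\sigma)$. The remaining scalar prefactor must simplify to $-\sigma\,\frac{dt}{d\sigma}\cdot(\text{something})$; here I will use $\dot\sigma_t = \frac{\dot\beta_t\alpha_t-\beta_t\dot\alpha_t}{\alpha_t^2}$, equivalently $\frac{dt}{d\sigma}=\big(\frac{\dot\beta_t\alpha_t-\beta_t\dot\alpha_t}{\alpha_t^2}\big)^{-1}$, to reduce the coefficient to exactly $-\sigma$, yielding \eqref{eq:ddim_ode}.

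The main obstacle is purely bookkeeping: correctly tracking the three $t$-derivatives ($\dot\alpha_t$, $\dot\beta_t$, $\dot\sigma_t$) and verifying that the $x_t$-coefficients cancel and the score-coefficient collapses to $-\sigma$, all while being careful that $\alpha_t>0$ on $(0,1]$ so $A_t$ and the change of variables are legitimate (note $\alpha_t\to 0$ only at $t=0$, which is excluded). A secondary technical point is justifying differentiation under the integral sign for $\nabla\log p_t$ and the fact that the trajectory $x_\sigma$ is $C^1$ where defined — this follows from the smoothness of $q_\sigma$ for $\sigma>0$ (Gaussian convolution) and standard ODE regularity, which is presumably already established in the well-posedness discussion of \Cref{sec:concentration_posterior_ODE}; on the interval $[a,b)$ with $\sigma_b>0$ there is no singularity. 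I would also remark that \eqref{eq:ddim_ode} is exactly the probability-flow ODE of the variance-exploding / DDIM parametrization, which is the conceptual payoff and explains the name; the typo "$\sigma\in(\sigma_b,\sigma_b]$" in the statement should read $\sigma\in(\sigma_b,\sigma_a]$.
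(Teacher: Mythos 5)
Your proposal is correct and follows essentially the same route as the paper's proof: chain rule on $x_\sigma=x_{t(\sigma)}/\alpha_{t(\sigma)}$, the Jacobian $\frac{dt}{d\sigma}=\frac{\alpha_t^2}{\dot\beta_t\alpha_t-\beta_t\dot\alpha_t}$, and the denoiser/score form of $u_t$, with the only cosmetic difference that you reach $-\sigma\nabla\log q_\sigma(x_\sigma)$ via Tweedie's formula for $p_t$ together with the pushforward scaling $\nabla\log q_{\sigma_t}(x)=\alpha_t\nabla\log p_t(\alpha_t x)$, whereas the paper identifies the score of $q_\sigma$ directly from its Gaussian-convolution density (using $\|x_t-\alpha_t y\|^2/\beta_t^2=\|x_t/\alpha_t-y\|^2/\sigma_t^2$). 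Your observation that the interval in the statement should read $\sigma\in(\sigma_b,\sigma_a]$ is also correct.
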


The ODE model in \Cref{eq:ddim_ode} generates a probability path $q_\sigma$ for $\sigma\in(0,\infty)$. During sampling, the ODE integrates backwards over $\sigma \in (0, \sigma_T]$ with end condition $x_{\sigma_T} = x$. By expressing our results in terms of the noise-to-signal ratio $\sigma$ rather than time $t$, we obtain {a unified framework independent of specific scheduling functions, allowing for a more general and concise theoretical analysis.}

\subsection{Related Work}
\citet{chen2024trajectory} connects FM sampling to the mean shift algorithm \citep{comaniciu2002mean} via the denoiser, focusing on algorithmic strategies to identify high-curvature regions for better sampling.
\citet{pidstrigach2022score,permenter2024interpreting} show that the denoiser converges to the projection when near the data support. We establish a general convergence result for almost every point and provide a characterization of trajectory evolution across different stages, going beyond prior interpretations of the sampling process as an approximate projection to data in~\citet{permenter2024interpreting}.
\citet{gao2024flow} show that FM can only sample from the data support for discrete measures and analyzes local cluster absorption. However, their proof implicitly assumes ODE convergence and their local absorption analysis requires bounded prior support, which is not applicable to the common FM setting. We provide a rigorous proof of ODE convergence and analysis for full trajectory evolution.

The concurrent work by \citet{baptista2025memorization} analyzes the dynamical mechanisms underlying memorization in diffusion models with empirical measures. Their analysis of the ODE dynamics shares some similarities with our approach to discrete data distributions, e.g., the use of Voronoi diagrams. Their work also proposes some regularization techniques to mitigate memorization.

\section{Denoiser and ODE Dynamics}\label{sec:denoiser}
It turns out that the vector field $u_t$ is fully determined by the so-called \emph{denoiser}\textemdash the mean of the posterior distribution $p(\cdot|\bm{X}_t=x)$~\citep{karras2022elucidating}.
In this section, we first describe some basic properties of the denoiser, then illustrate a general attracting and absorbing dynamics of the ODE. {Proofs and missing details can be found in \Cref{app:denoiser}.}

\subsection{Basics of the Denoiser}
By plugging \Cref{eq:conditional ut} into \Cref{eq:conditional_vector_field}, we have that
\begin{equation}\label{eq:ut and mt}
    u_t(x) = {\dot{\beta}_t}/{\beta_t}\cdot x + {(\dot{\alpha}_t\beta_t-\alpha_t\dot{\beta}_t)}/{\beta_t}\cdot  \E[\bm{X}|\bm{X}_t=x],
\end{equation}
where $\bm{X}\sim p$, and $\E[\bm{X}|\bm{X}_t=x]$ is called the \emph{denoiser} with the following form (with existence proved in \Cref{app: denoiser basic}): 
\begin{equation}\label{eq:denoiser t}
    \E[\bm{X}|\bm{X}_t=x]=\int \frac{\exp\left(-\frac{\|x-\alpha_ty\|^2}{2\beta_t^2}\right)y}{\int \exp\left(-\frac{\|x-\alpha_ty'\|^2}{2\beta_t^2}\right)p(dy')}p(dy).
\end{equation}
For brevity, we write $m_t(x):=\E[\bm{X}|\bm{X}_t=x]$. Since $m_t$ fully determines $u_t$, instead of learning $u_t$ directly, one can train a neural network $m_t^\theta$ to learn the denoiser $m_t$:
\begin{equation}\label{eq:conditional_flow_matching_denoiser}
    \E_{t \in [0,1), \bm{Z}\sim p_{\text{prior}},\bm{X}\sim p } \left\|m_t^\theta(\alpha_t\bm{X} + \beta_t \bm{Z})- \bm{X} \right\|^2.
\end{equation}
Training with this loss can be more stable than with \Cref{eq:conditional_flow_matching} since for any $x\in\R^d$, $m_t(x)$ remains bounded while $u_t(x)$ can blow up to $\infty$ as $t\to 1$ (cf. \Cref{sec:singularities}).

By direct computation, the ODE in $\sigma$ can also be expressed through the denoiser $m_\sigma(x):= \E[\bm{X}|\bm{X}_\sigma=x]$ as follows where $\bm{X}_\sigma:=\bm{X}+\sigma\bm{Z}$:
    \begin{equation}
        \label{eq:ddim_ode_denoiser}
        \frac{dx_\sigma}{d\sigma} = -\sigma\nabla \log q_\sigma(x_\sigma) = -{\frac{1}{\sigma}} \left(
            m_\sigma(x_\sigma) - x_\sigma
        \right).
    \end{equation}

Notably, the ODE in $\sigma$ can be interpreted as {\bf moving toward the denoiser $m_\sigma(x_\sigma)$}. For any $x$, one can explicitly write $m_\sigma(x)$ as follows.
\begin{equation}\label{eq:denoiser_sigma}
    m_\sigma(x)=\int \frac{\exp\left(-\frac{\|x-y\|^2}{2\sigma^2}\right)y}{\int \exp\left(-\frac{\|x-y'\|^2}{2\sigma^2}\right)p(dy')}p(dy).
\end{equation}

\subsection{Attracting and Absorbing}\label{sec:attracting_absorbing_meta}
In \Cref{sec:well-posedness_01}, we will rigorously establish the existence of FM model ODE trajectory in $[0, 1)$ for any data distribution $p$ with a finite 2-moment. This sets the foundation for discussing the properties of the ODE trajectories.

Note that \Cref{eq:ddim_ode_denoiser} suggests that the trajectory moves toward the denoiser $m_\sigma(x_\sigma)$, which itself evolves along the trajectory, complicating the ODE dynamics.
We address this by analyzing the geometric relationship between $m_\sigma(x)$ and the projection $\proj_\set(x)$ onto certain closed sets $\set$.
This reveals that the ODE trajectory exhibits two key properties: an \emph{attracting property}\textemdash drawing trajectories toward $\set$, and an \emph{absorbing property}\textemdash keeping trajectories within neighborhoods of $\set$. We characterize how the sampling process unfolds into distinct stages by identifying appropriate closed sets $\set$ with these properties. Below, we formulate these properties into two meta-theorems.

\noindent \textbf{Attracting toward sets.}
Let $(x_\sigma)_{\sigma\in(\sigma_2,\sigma_1]}$ be an ODE trajectory. The distance $d_{\set}(x_\sigma)$ to a closed set $\set$ will decrease if the trajectory direction forms an \emph{acute angle} with the projection direction (see e.g.~\Cref{corollary:directional_derivative}):
\begin{equation}
\langle m_\sigma(x_\sigma)-x_\sigma,\proj_\set(x_\sigma)-x_\sigma\rangle > 0.
\end{equation}
See \Cref{fig:acute_angle} below for an illustration.
\begin{figure}[htbp!]
    \centering
    \includegraphics[width=0.35\textwidth]{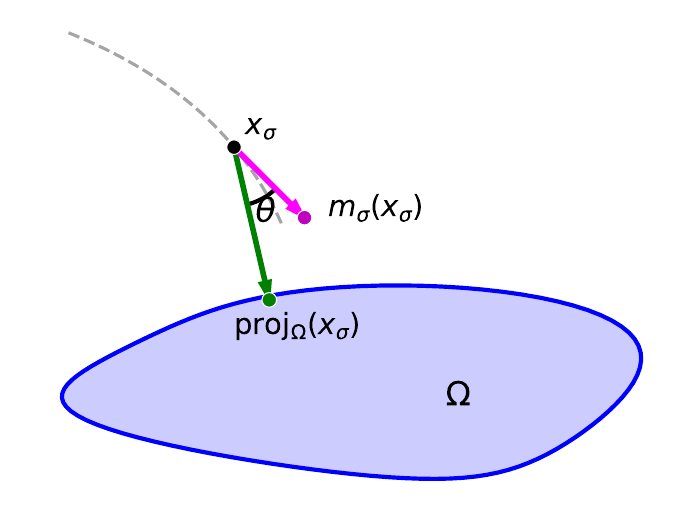}
    \caption{\textbf{Illustration of the acute angle condition.}}
    \label{fig:acute_angle}
\end{figure}

With some quantitative bound on the acute angle condition above, the ODE trajectory will be attracted toward $\set$.

\begin{theorem}[Informal - Attracting toward sets]\label{thm:attracting_toward_set_meta_informal}
    Let $(x_\sigma)_{\sigma\in(\sigma_2,\sigma_1]}$ be an ODE trajectory of~\Cref{eq:ddim_ode_denoiser} starting from some $x_{\sigma_1}$. Assume that the trajectory avoids the medial axis $\Sigma_\Omega$  of some closed $\set$ and satisfies the acute angle condition in a quantitative manner, then $d_\set(x_{\sigma})$ will decrease along the trajectory, and $\lim_{\sigma\to 0}d_\set(x_{\sigma}) = 0.$
\end{theorem}

See the formal version \Cref{thm:attracting_toward_set_meta} in \Cref{sec:attracting_absorbing_meta_detail}. 

The requirement of avoiding the medial axis of $\set$ is trivially satisfied when $\set$ is convex (its medial axis is empty). More generally, one can rely on the absorbing property (discussed below) to ensure the trajectory stays off the medial axis.

\noindent \textbf{Absorbing by sets.} Given a set $\set$ in $\R^d$, we say $\set$ is \emph{absorbing} for a FM ODE in $(\sigma_2,\sigma_1]$ if for any $x\in\set$, the ODE trajectory $(x_\sigma)_{\sigma\in(\sigma_2,\sigma_1]}$ starting at $x_{\sigma_1}=x$ will remain in $\set$ for all $\sigma\in(\sigma_2,\sigma_1]$.
It turns out that the acute angel condition can essentially also guarantee that small neighborhoods of $\set$ are absorbing.

\begin{theorem}[Absorbing by sets]\label{thm:absorbing_by_neighborhoods_meta}
    For any closed set $\Omega$ and $r>0$, we consider the open neighborhood $B_r(\Omega)$.
    \begin{enumerate}
        \item Let $\overline{B_r(\set)}$ denote the closure of the set $B_r(\set)$. If $\overline{B_r(\set)} \cap \Sigma_\set = \emptyset$ and suppose for any $x\in\partial B_r(\set)$ and any $\sigma\in(\sigma_2,\sigma_1)$, one has
        $\langle m_\sigma(x)-x,\proj_\set(x)-x \rangle> 0,$
        then $B_r(\set)$ is absorbing in $(\sigma_2,\sigma_1]$.

        \item If there exists some $r_0>0$ such that $B_{r}(\set)$ is absorbing in $(\sigma_2,\sigma_1]$ for all $r \in (0, r_0)$, then $\set$ is absorbing in $(\sigma_2,\sigma_1]$ as well.
    \end{enumerate}
\end{theorem}

\begin{remark}\label{rem:absorbing_property}
    Note that the absorbing property only depends on the denoiser’s behavior in a fixed region rather than requiring a priori knowledge of how the denoiser evolves along an ODE trajectory. 
    Once established, the absorbing property can propagate the acute angle condition to trajectories within this region, thereby enabling attracting property.
\end{remark}

Next, we apply the meta-theorems to provide a broad description of FM ODE dynamics.

\noindent \textbf{Attracting and absorbing to the convex hull of data support.}
By definition, the denoiser $m_\sigma(x)$ always lies in the convex hull of the support of the posterior distribution $p(\cdot|\bm{X}_\sigma=x)$ which is the same as $\conv(\supp(p))$. Due to convexity, $m_\sigma(x)$ always satisfies the acute angle condition with respect to $\conv(\supp(p))$. Consequently, the ODE trajectory is attracted toward and ultimately absorbed by $\conv(\supp(p))$.

\begin{proposition}\label{prop:initial_stage_convex_hull}
    Assume $p$ has a bounded support. For any $\sigma_1> 0$, let $(x_\sigma)_{\sigma\in (0, \sigma_1]}$ be a flow matching ODE trajectory follows~\Cref{eq:ddim_ode_denoiser}. Then we have the following results for any $\sigma\in(0,\sigma_1]$:
    \begin{enumerate}
        \item If $x_{\sigma_1}\in \conv(\supp(p))$, then $x_{\sigma}\in \conv(\supp(p))$;
        \item If $x_{\sigma_1}\notin \conv(\supp(p))$, then $x_{\sigma}$ moves toward $\conv(\supp(p))$ with
        the following decay guarantee:
        \[
        d_{\conv(\supp(p))}(x_{\sigma}) \leq d_{\conv(\supp(p))}(x_{\sigma_1})\cdot{\sigma}/{\sigma_1}.
        \]
    \end{enumerate}
\end{proposition}

See \Cref{sec:convex hull} for the proof and a slight generalization to the case where $p$ is a Gaussian-smoothed bounded distribution. A more refined analysis of the trajectory for the initial and intermediate stages (focusing on the data mean and local clusters, respectively) is provided in \Cref{sec:stage 1 and 2}. The terminal stage analysis in \Cref{sec:terminal} requires more sophisticated techniques, as we need to develop absorbing properties that effectively avoid the medial axis of the data support\textemdash a key technical challenge for establishing convergence (cf. \Cref{sec:singularities}).

\section{Pre-Terminal Trajectory Analysis}\label{sec:stage 1 and 2}
{As illustrated in \cref{fig:trajectory}, the FM ODE trajectory overall moves toward the convex hull of the data support (\cref{prop:initial_stage_convex_hull}). Furthermore, trajectory dynamics unfold in distinct stages. In this section, we first establish the well-posedness of the FM ODE to ensure the existence of trajectories. Then, we apply attracting and absorbing properties—toward the mean and local clusters—to provide a detailed analysis of the pre-terminal trajectory dynamics.}

\subsection{Well-posedness of FM ODEs for $t\in[0,1)$}\label{sec:well-posedness_01}
The following result establishes the existence and uniqueness of solutions to the FM ODE in $[0,1)$ under {very weak} assumptions, significantly expanding on previous work \citep{lipman2022flow,gao2024gaussian} to contain cases where $p$ is supported on subspaces or submanifolds.

\begin{theorem}\label{prop:existence of flow maps}
    Assume $p$ has a finite 2-moment, then for every $x_0\in\R^d$, there exists a unique solution $(x_t)_{t\in[0,1)}$ to \Cref{eq:flow_ode}. Furthermore, the flow map $\Psi_t$ is continuous and satisfies that $(\Psi_t)_\#p_{\text{prior}}=p_t$ for all $t\in[0,1)$.
\end{theorem}
The proof utilizes a careful analysis of the posterior covariance $\mathrm{Cov}[\bm{X}|\bm{X}_t=x]$ to establish local Lipschitzness and integrability of $u_t$. In addition, we show that the denoiser $m_t(x)$ grows at most linearly in $x$---a non-trivial bound obtained under the mild assumption that the data distribution $p$ has only a finite 2-moment. With these properties, we then apply the theory of continuity equations (see~\citet[Section 8.1]{ambrosio2008gradient}) to conclude the proof. 

{This result trivially extends to the
$\sigma$ parameter and forms the foundation for analyzing pre-terminal trajectory properties.}

\subsection{Initial Stage of the Sampling Process}\label{sec:initial}
When $t=0$,  we have $m_0(x)\equiv \mathbb{E}[\bm{X}]$, suggesting the trajectory will initially approach the mean of data distribution.
In this subsection, we quantitatively validate this intuition for a broad class of distributions.

\begin{proposition}\label{prop:initial-stage_mean_geneal}
Let $\delta\geq 0$ and $p_b$ be a distribution on $\R^d$ with a bounded support $\set:=\mathrm{supp}(p_b)$. Let $p := p_b \ast\mathcal{N}(0, \delta^2 I)$. 
For a point $x_0$ with $|x_0 - \E[\bm{X}]| = R_0$ where $\bm{X}\sim p$, and for any parameter $0 < \zeta < 1$, define 
\[\sigma_{\text{init}}(\set, \zeta, R_0) := \sqrt{{2R_0\diam(\set)}/{\log(1 + {\zeta R_0}/{\diam(\set)})}}.\]
Then for all $\sigma_1> \sqrt{\sigma_{\text{init}}(\set, \zeta, R_0)^2 + \delta^2}$, a trajectory $(x_{\sigma})_{\sigma\in (\sqrt{\sigma_{\text{init}}(\set, \zeta, R_0)^2 + \delta^2},\sigma_1]}$ starting from $x_{\sigma_1}=x_0$ will approach $\E[\bm{X}]$ with the rate:
\[\|x_\sigma - \E[\bm{X}]\| < R_0(\sigma^2 + \delta^2)^{\frac{1-\zeta}{2}}/{(\sigma_1^2 + \delta^2)^{\frac{1-\zeta}{2}}}.\]
\end{proposition}

Note that as $\zeta$ approaches $1$, $\sigma_{\text{init}}(\set, \zeta, R_0)$ decreases, extending the range of $\sigma$ that is applicable. However, the rate weakens as $\zeta$ gets closer to $1$.

\subsection{Intermediate Stage of the Sampling Process}\label{sec:local_cluster_dynamics}
{As $\sigma$ decreases, trajectory behavior starts being influenced by 
coarse-scale geometry of the data, particularly its local clusters. When the trajectory lies close to a local cluster, it is attracted toward (and eventually absorbed by) the convex hull of that cluster—--indicating robust feature emergence in the FM model. We formalize this notion via an assumption that characterizes a well-separated local cluster.}
\begin{clusterassumption}
    \hypertarget{assumption:local_cluster}{} 
    Let $p$ be a probability measure on $\R^d$ and we say a set $S$ is a \emph{local cluster} of $\set:=\supp(p)$ if the following conditions hold:
    \begin{enumerate}
        \item $S$ is  closed, bounded, and $\diam(S) = D < \infty.$
        \item For all $x \in \set \backslash S,\ d_{\conv(S)}(x) > 2D.$
    \end{enumerate}
\end{clusterassumption}

Then, the denoiser $m_\sigma(y)$ will be close to $\conv(S)$ for $y$ near $\conv(S)$ and when $\sigma$ is not too large.

\begin{proposition}\label{prop:local_cluster_denoiser_convergence_sigma}
Assume that $S$ is a local cluster of a probability measure $p$ satisfying the \hyperlink{assumption:local_cluster}{Local Cluster Assumption} and {$a_S:=p(S) > 0$.}
Then, for any $x\in\R^d$ such that $d_{\conv(S)}(x) \leq D/2 - \epsilon$, we have that
\begin{equation*}
        d_{\conv(S)}(m_\sigma(x)) \leq \diam(\set) \sqrt{{(1 - a_S)}/{a_S}}\,e^{-\frac{3D \epsilon}{2\sigma^2}}.
\end{equation*}
\end{proposition}

\begin{proposition}\label{prop:local_cluster_convergence_sigma}
With the same assumptions as in \Cref{prop:local_cluster_denoiser_convergence_sigma}, let
$C^S_{\epsilon} := \frac{D/2 - \epsilon}{\diam(\Omega) \sqrt{\frac{1 - a_S}{a_S}}}$, and define
\[
\sigma_0(S, \epsilon) := 
\begin{cases}
{\left(\frac{-3D\epsilon}{2\log(C^S_{\epsilon})}\right)^{\frac12}}, & \text{if } C^S_{\epsilon} < 1,\\
\infty, & \text{otherwise}.
\end{cases}
\]
Then for any $\sigma_1 < \sigma_0(S, \epsilon)$, $\overline{B_{D/2 - \epsilon}(\conv(S))}$ is absorbing in $(0,\sigma_1]$ and any ODE trajectory starting from
$x_{\sigma_1}\in \overline{B_{D/2 - \epsilon}(\conv(S))}$ converges to $\conv(S)$ as $\sigma\to 0$.
\end{proposition}

Note that when the weight $a_S$ of the local cluster $S$ is large, we do not necessarily have that $C^S_{\epsilon} < 1$.  In this case, the cluster $S$, in fact, exhibits a stronger attracting force and the above result holds for all $\sigma_1>0$.

{In \cref{sec:experiment_synthetic}, we detail the synthetic data used in \cref{fig:trajectory} to validate the above \Cref{prop:initial-stage_mean_geneal,prop:local_cluster_convergence_sigma}. We also observe in~\Cref{sec:cifar10} that despite the theoretical constants not being tight (as is common with worst-case bounds), the qualitative behavior of an initial mean-attraction phase consistently holds in practice.} Although \Cref{prop:local_cluster_convergence_sigma} is developed under the local-cluster assumption, real-world datasets seldom satisfy it exactly. Empirically, however, we find that ODE trajectories still gravitate toward locally dense regions—even when clusters overlap—indicating that the dynamics are more robust than the assumption suggests (see \Cref{sec:local_cluster_behavior}).
We provide a partial theoretical explanation in \Cref{coro:local_cluster_convergence_sigma}: for a distribution $p$ obtained by convolving a Gaussian with any measure that does satisfy the local-cluster assumption, we prove that dense regions remain attracting and absorbing for the flow. Together, the analysis and experiments point to a broader validity of the ODE dynamics beyond the confines of our assumptions.

{While the above results are theoretical, they reveal how data geometry fundamentally shapes FM ODE dynamics\textemdash particularly, the cluster absorption results suggest a ``locking" property where trajectories are systematically absorbed into local clusters. This property provides a theoretical foundation for why FM models achieve effective feature separation and mode coverage, as observed empirically in ~\citet{georgiev2023journey}.
It further suggests that embedding data into latent spaces with clear geometric structure (e.g., by object categories or visual attributes) could enhance robust feature learning.}

Finally, as $\sigma$ approaches $0$, trajectories are drawn to their nearest data points, with the nature of this attraction strongly governed by the fine-scale geometry of the data support—--whether discrete or manifold-structured. We analyze this terminal convergence behavior in the next section.

\section{Terminal Trajectory Analysis}\label{sec:terminal}
{The well-posedness of the FM ODE in $[0,1)$ ensures the probability path under flow map $\Psi_t$
approaches the data distribution. However, distributional convergence alone does not guarantee trajectory convergence, as pathological cases like winding paths around data points may exist (see Figure~\ref{fig:winding}). The convergence of ODE trajectories—equivalently, the existence of flow map $\Psi_1$ at $t=1$—is crucial for generating samples stably and for models like the consistency model~\citep{song2023consistency}, which learns the flow map $\Psi_1$.}

In this section, we establish the convergence of ODE trajectories at $t=1$ for a broad class of data distributions. With this result, we analyze the equivariance of $\Psi_1$ under geometric transformations and discuss implications for memorization behavior.

\subsection{Convergence of ODE Trajectories at $t=1$}\label{sec:well-posedness_ODE 1}
The convergence of ODE trajectory at $t=1$, when expressed in terms of $\sigma$, requires the integration of $\frac{dx_\sigma}{d\sigma} = -\frac{1}{\sigma}(m_\sigma(x_\sigma) - x_\sigma)$ to remain convergent as $\sigma\to 0$. The divergence of $\int_0^T \frac{1}{\sigma}d\sigma$ creates a potential singularity that must be counteracted by a rapid diminishing of $\|m_\sigma(x_\sigma) - x_\sigma\|$.
{We study the denoiser's terminal behavior at a fixed point $x$
to understand when diminishing may occur.} 
\begin{theorem}[Convergence of denoiser to projection]\label{thm:denoiser_limit}
Let $p$ be a probability distribution with a finite 2-moment and support $\set$. Then for all $x\in \R^d\backslash \Sigma_\set$:
$$ \lim_{\sigma\to 0} m_\sigma(x) = \proj_{\set}(x),\footnote{We also establish the convergence of $m_t$ in \Cref{coro:denoiser_limit_t} which is a nontrivial consequence of the convergence of $m_\sigma$.}$$
where $\Sigma_\set$ denotes the medial axis of $\set$.
\end{theorem}

Since $\proj_\set(x) = x$ precisely when $x \in \set$, the term $m_\sigma(x_\sigma) - x_\sigma$ diminishes if $x_\sigma$ is attracted to $\set$. The convergence only holds for $x\in \R^d\backslash \Sigma_\set$ since projection is not well-defined at the medial axis, which also causes the denoiser's Lipschitz constant to blow up (see \Cref{sec:singularities}), preventing the use of standard ODE theory like the Picard-Lindel\"of theorem.

We address these issues by a refined denoiser's convergence result with rate guarantee, which occurs at an $O(\sigma^\zeta)$ rate for any $0<\zeta<1$. See~\Cref{sec:concentration_posterior} for the proof and details\textemdash for example, the convergence rate is $\sqrt{m}\sigma + O(\sigma^2)$ for distributions on an $m$-dimensional submanifold and exponential for discrete distributions. This convergence rate yields a strengthened version of the absorbing result in \Cref{sec:attracting_absorbing_meta}, ensuring that trajectories are: (1) absorbed near $\set$ to avoid the medial axis, and (2) attracted to $\set$ rapidly enough to ensure convergence. Our theoretical analysis works for data distributions satisfying:

\begin{assumption}[Regularity assumptions for data distribution]\label{assumption:data_distribution}
    Let $p$ be a probability measure on $\R^d$ with a finite 2-moment satisfying the following properties:
    \vspace{-0.3cm}
    \begin{enumerate}
        \item The reach $\tau_\set$ of the support $\set:=\mathrm{supp}(p)$ is positive\footnote{The support $\set$ can be $\R^d$ and in this case the reach $\tau=\infty$.};
        \item There exist constants $k \geq 0$ and $c > 0$ such that for any radius $R > 0$, there is a constant $C_R > 0$ satisfying the following: for any small radius $0 \leq r < c$ and any $x \in B_R(0) \cap \set$, we have $p(B_r(x)) \geq C_R r^k.$
    \end{enumerate}
\end{assumption}

Any discrete distribution satisfies the assumptions with \( k = 0 \). Moreover, \( p \) satisfies the assumptions with \( k = m \) when supported on an \( m \)-dimensional linear subspace or compact submanifold with positive reach, provided $p$ has a finite 2-moment and a non-vanishing density (cf. \Cref{lemma:volume_growth_manifold}). 
We emphasize that the positive reach condition is not restrictive and is a common assumption to ensure that the data manifold has no ``sharp turns" in $R^d$ \citep{niyogi2008finding,fefferman2016testing}. This condition holds for common smooth compact submanifolds~\citep{lieutier2024manifolds} like spheres and tori.

We now state our main result:

\begin{theorem}\label{thm:existence of flow maps}
    For $p$ satisfying \Cref{assumption:data_distribution}, we have
    \begin{enumerate}
        \item  $\Psi_1(x):=\lim_{t\to 1}\Psi_t(x)$ exists for $x\in\R^d$ a.e.
        \item $\Psi_1$ is a measurable map and $(\Psi_1)_\#p_{\text{prior}}=p$.
    \end{enumerate}
    Furthermore, we have the following estimate of the convergence rate of the flow map. Recall that $\sigma_t:=\beta_t/\alpha_t$, then, we have that for any fixed $0<\zeta<1$,
    \[\|\Psi_1(x)-\Psi_t(x)\|=O (\sigma_t^{{\zeta/2}}).\]
\end{theorem}

This theorem establishes the existence and convergence of the flow map for general data distributions.  The convergence rate can be further refined for specific cases:

\begin{theorem}\label{thm:existence of flow maps different geometries}
    When $p$ is supported on a submanifold or a discrete set, we have the following convergence results:
   
     \noindent \textbf{Manifold.} Let $M \subset \mathbb{R}^d$ be an $m$-dim closed submanifold with positive reach and bounded second fundamental form up to its first-order derivatives.
Assume that $p$ is supported on $M$, has a finite 2-moment, and has a density given by $p(dx) = \rho(x) \mathrm{vol}_M(dx)$, where $\rho: M \to \mathbb{R}$ is smooth and \emph{nonvanishing}.
  Then, for a.e. $x\in\R^d$, we have that
        $$\|\Psi_1(x)-\Psi_t(x)\|=O (\sqrt{\sigma_t});$$
 \noindent \textbf{Discrete.} If $p=\sum_{i=1}^Na_ix_i$ denotes a discrete probability measure, then for a.e. $x\in\R^d$, we have that 
        $$\|\Psi_1(x)-\Psi_t(x)\|=O (\sigma_t).$$
\end{theorem}

We examine a tractable special case to assess our convergence rate bounds: a standard Gaussian distribution on a subspace, where closed-form solutions exist.

\begin{example}[Data supported on subspaces]\label{ex: subspace}
    For any $0 < m \leq d$, consider the subspace $\mathbb{R}^m \subset \mathbb{R}^d$. We express any point $\bm{x} \in \mathbb{R}^d$ as $\bm{x} = (x, y)$, where $x \in \mathbb{R}^m$  and  $y \in \mathbb{R}^{d-m}$. Assume that the probability measure $p$ is supported on $\mathbb{R}^m$ and satisfies \Cref{assumption:data_distribution}. One can show that FM ODE trajectories allow a dimension reduction in the following manner. For any initial point $\bm{x}_0 = (x_0, y_0) \in \mathbb{R}^d$, the FM ODE trajectory is given by 
    \begin{equation}\label{eq:decompose}
        \bm{x}_t = (x_t, \beta_ty_0),\,\,\text{for any }{t \in [0, 1]},
    \end{equation}
    where $(x_t)_{t\in[0,1]}$ is the trajectory of the FM ODE on $\R^m$ with initial point $x_0$, and with $p$ regarded as a distribution on $\R^m$; see~\Cref{prop:linear_subspace_fm} for a proof.

    Let $\alpha_t = t$ and $\beta_t = 1-t$, and let $p$ be the standard Gaussian on $\R^m$. By~\Cref{lem:standard_gaussian} and \Cref{eq:decompose}, we have that 
    $$\bm x_t=\left(\sqrt{(1-t)^2+t^2}x_0,(1-t)y_0\right).$$
    Also, when $m=0$, $p=\delta_{\mathbf0}$ is supported on a single point. Then, the denoiser is always $\mathbf0$ and the ODE trajectory is given by $(\bm x_t =(1-t)\bm{x}_0)_{t\in[0,1]}$.
    In both cases, $\|\bm{x}_1 - \bm{x}_t\| = \Theta(1-t) = \Theta(\sigma_t)$.
\end{example}

This example demonstrates our rate's optimality for discrete distributions, while suggesting potential improvement for manifolds: the current $O(\sqrt{\sigma_t})$ rate might be improved to $O(\sigma_t)$. This conjecture is supported by the linear convergence rate $O(\sigma)$ of the ODE's distribution path $q_\sigma \to p$ (see proposition below), which suggests the trajectory should converge at the same rate.

\begin{proposition}\label{prop: linear convergence probability}
    For any probability measure $p$ with a finite 2-moment, we have that
    $d_{\mathrm{W},2}(q_\sigma,p)=O(\sigma).$
\end{proposition}

{An important practical implication of the convergence rates in \Cref{thm:existence of flow maps,thm:existence of flow maps different geometries} is that, during the terminal stage, the ODE trajectory exhibits {minor movements} suggesting one can use fewer sampling steps to generate samples without sacrificing quality.}

\noindent \textbf{Equivariance under geometric transformations.}
Having established the existence of the flow map $\Psi_1: \R^d \to \R^d$ under mild assumptions, we now investigate how ambient space geometry affects the flow maps through their behavior under geometric transformations. This analysis has practical implications for stability under data augmentation and reveals important equivariance properties.

We examine how FM flow maps transform under similarity transformations $T: \R^d \to \R^d$ of the form $T(x) = \gamma(\mO x + b)$, \
where $\gamma>0$ is a scaling factor, $\mO$ is an orthogonal matrix, and $b$ is a translation vector. These transformations include any combination of scaling, rotation and translation.

For a data distribution $p$ satisfying~\Cref{assumption:data_distribution}, let $\bar{p} := T_\#p$ denote the transformed distribution. {To relate the flow maps $\overline{\Psi}_1$ (for $\bar{p}$) and $\Psi_1$ (for $p$), we identify that the flow for the transformed data $\bar{p}$ can be obtained from the flow for the original data $p$ by choosing appropriate scheduling functions $\bar{\alpha}_t$ and $\bar{\beta}_t$ with respect to the original functions $\alpha_t$ and $\beta_t$. Specifically, taking $\bar{\alpha}_t:=s_t\alpha_t/\gamma$ and $\bar{\beta}_t:=s_t\beta_t$ where $s_t$ is any positive smooth function with $s_0=1$, $s_1=\gamma$ (or simply $s_t\equiv 1$ when $\gamma=1$), we establish:}

\begin{proposition}[Equivariance under similarity transformations]\label{thm:similarity_transform}
    For any $x\in\R^d$ and $t\in[0,1)$, we have that
    $$\overline{\Psi}_t(\mO x) = s_t(\mO\Psi_t(x) + \alpha_tb).$$
    Whenever $\Psi_1(x)$ exists (this holds for a.e. $x\in\R^d$ by \Cref{thm:existence of flow maps}), we have that $\overline{\Psi}_1(\mO x)$ exists and satisfies
$$\overline{\Psi}_1(\mO x) = \gamma(\mO\Psi_1(x)+b).$$
\end{proposition}

\begin{remark}[Data distribution on affine subspaces]
    By \Cref{thm:similarity_transform}, we can generalize \Cref{ex: subspace} to cases where $p$ is supported on any affine subspace $A \subset \mathbb{R}^d$ (e.g., a point translated away from origin or a shifted linear subspace). The idea is simple: first apply a rigid transformation to map $A$ to $\R^{\dim(A)} \subset \R^d$, then apply the FM model there.
    This suggests that for data distribution supported on an affine subspace, we can reduce computation by first projecting onto that subspace, training an FM model there, and extending it back to ambient space via this result.
\end{remark}

\subsection{Terminal Absorbing Behavior and Memorization}\label{sec:terminal_behavior_discrete}
In this subsection, we focus on the case when $p = \sum_{i=1}^n a_i \,\delta_{x_i}$ is supported on a discrete set, as this represents an important scenario corresponding to empirical target distributions derived from training data. We provide a detailed characterization of the terminal stage and show that each point $x_i$ exhibits strong attracting behavior during this stage, which is connected to the memorization in diffusion models~\citep{carlini2023extracting,wen2024detecting}.

We let $\set=\{x_1,\ldots,x_n\}$ denote the support of $p$.
For any small $\epsilon > 0$, we define the \emph{$\epsilon$-shrunk Voronoi cells} as
\begin{equation*}
    V_i^\epsilon := \left\{x: \|x - x_i\|^2 \leq \|x - x_j\|^2 - \epsilon^2,\ \forall x_{j}\neq x_i \in \set\right\}.
\end{equation*}
Note that $V_i^\epsilon$ is convex and as $\epsilon\to 0$, $V_i^\epsilon$ expands to the classical Voronoi cell $V_i$ of $x_i$ with $\cup_{i=1}^n V_i = \R^d$.

We introduce $\sigma_0(V_i^\epsilon)$ such that $V_i^\epsilon$ is absorbing for $(0, \sigma_0(V_i^\epsilon))$. Specifically, let $\mathrm{sep}(x_i):=d_{\set\backslash\{x_i\}}(x_i)$ denote the separation of $x_i$ and we introduce a constant 
$$C^\set_{i,\epsilon} = \frac{2\, \mathrm{sep}(x_i)}{\mathrm{sep}^2(x_i) - \epsilon^2} \cdot \sqrt{\frac{1 - a_i}{a_i}} \cdot \diam(\set)$$
for any parameter $\epsilon \in (0, \mathrm{sep}(x_i)/2)$, where $a_i$ is the weight of $x_i$ in $p$. Then the time $\sigma_0(V_i^\epsilon)$ is defined as
\begin{equation*}
    \sigma_0(V_i^\epsilon) = \begin{cases}
        \infty, & \text{if } C^\set_{i,\epsilon} \leq 1,\\
        \frac{\epsilon}{2} \left(
            \log (C^\set_{i,\epsilon}) \right)^{-1/2}, & \text{if } C^\set_{i,\epsilon} > 1.
    \end{cases}
\end{equation*}
{The constant $\sigma_0(V_i^\epsilon)$ is the time when the ODE trajectory is attracted to $V_i^\epsilon$ and with larger weight $a_i$ or higher separation $\mathrm{sep}(x_i)$, the time $\sigma_0(V_i^\epsilon)$ is larger, showing strong attraction early on.}

{The following result shows that after approaching the mean and then being attracted to a local cluster, the ODE trajectory will eventually be attracted to the nearest data point.}
\begin{proposition}\label{prop:V_epsilon_i_absorbing_sigma}
    Fix an arbitrary $0<\sigma_1<\sigma_0(V_i^\epsilon)$. Then, for any $y \in V_i^\epsilon$, the ODE trajectory $(x_\sigma)_{\sigma\in(0,\sigma_0]}$ starting from $x_{\sigma_0}=y$ will stay inside $V_i^\epsilon$, i.e., $x_\sigma\in V_i^\epsilon$. Furthermore, $(x_\sigma)_{\sigma\in(0,\sigma_0]}$ will converge to $x_i$ as $\sigma\to0$.
\end{proposition}

\noindent \textbf{Discussion on memorization.}
Memorization occurs when a model perfectly fits training data and fails to generalize. This is relevant to FM models since the unique solution in \Cref{eq:conditional_flow_matching} or \Cref{eq:conditional_flow_matching_denoiser} regarding empirical data only reproduces training data.
{The constant $\sigma_0(V_i^\epsilon)$ indicates attraction strength of each training sample\textemdash higher values (from larger weights $a_i$ or more isolated points) suggest increased memorization risk.} This explains empirical findings of increased memorization for duplicate samples in~\citet{somepalli2023diffusion}, as duplicates raise $a_i$ in the empirical distribution.
For CIFAR-10 with $\epsilon = 1.0$, the mean $\sigma_0(V_i^\epsilon)$ across training images is approximately $0.17$, corresponding to the final quarter of EDM's sampling steps~\citep{karras2022elucidating}. This suggests training in this critical final stage should not target optimality to avoid memorization.

We now provide a more formal connection between the terminal behavior of the ODE trajectory and the memorization phenomenon.
For a neural network denoiser $m_\sigma^\theta$, the corresponding ODE trajectory is:
\begin{equation}\label{eq:ODE trained}
    \frac{dx^\theta_\sigma}{d\sigma} = -\frac{1}{\sigma} (m_\sigma^\theta(x_\sigma) - x^\theta_\sigma).
\end{equation}
The following result shows that an asymptotically optimally trained denoiser \( m_\sigma^\theta \) merely reproduces the training data.

\begin{proposition}[Memorization of asymptotically optimal denoiser]\label{prop:memorization_sigma}
    Let $p = \sum_{i=1}^n a_i \,\delta_{x_i}$, and let $m^\theta_\sigma:\mathbb{R}^d \to \mathbb{R}^d$ be a smooth map. Assume there exists a function $\phi(\sigma)$ with $\lim_{\sigma \to 0} \phi(\sigma) = 0$ such that $\|m^\theta_\sigma(x) - m_\sigma(x)\| \leq \phi(\sigma)$ for all $x \in \mathbb{R}^d$. Then, for any $i=1,\ldots,n$, there exists $\sigma_0(V_i^\epsilon, \phi) > 0$ such that for all $0 < \sigma_0 < \sigma_0(V_i^\epsilon, \phi)$ and any $y \in V_i^\epsilon$, the ODE trajectory $(x_\sigma^\theta)_{\sigma \in (0, \sigma_0]}$ for \Cref{eq:ODE trained}, starting from $x_{\sigma_0} = y$, converges to $x_i$ as $\sigma \to 0$.

If further, both limits $\lim_{\sigma \to 0} x_\sigma^\theta$ and $\lim_{\sigma \to 0} m_\sigma^\theta(x_\sigma^\theta)$ known to exist, then
$\lim_{\sigma \to 0} \|m^\theta_\sigma(x^\theta_\sigma) - x^\theta_\sigma\| = 0.$
\end{proposition}

This proposition shows that for an FM model to be capable of generalization, the near-terminal denoiser itself must generalize—i.e., it must approximate projection onto the true underlying data manifold rather than simply projecting onto the training points. This insight motivates careful tuning of denoiser training near the terminal time. We validate these insights empirically in both synthetical (\Cref{sec:experiment_synthetic}) and image dataset (\Cref{sec:cifar10}).

\section{Discussion}
{Our study significantly enhances the theoretical foundation for FM models by establishing a connection between data geometry and FM ODE dynamics. This leads to interesting practical implications; for example: (1) The FM ODE trajectory direction's initial alignment with the mean and its terminal time convergence suggest one can use more sparse sampling resources in these stages and reallocate more resources to the intermediate stage where the denoiser evolves more significantly which aligns with empirical findings in~\citet{esser2024scaling}. (2) The interaction between flow trajectories and data geometry through attracting and absorbing behavior reveals how the same dataset can exhibit distinct sampling trajectories when embedded in different spaces. This could be utilized to optimize latent space for improved generation and facilitate stable fine-tuning through careful adaptation when integrating new data. (3) Identifying the importance of terminal stages in memorization suggests targeted regularization in training such as regularizing the Jacobian of denoiser to avoid collapsing to locally constant maps; see more discussion in \Cref{remark: jacobian}.} {Looking ahead, we aim to explore these directions to develop more understanding of memorization versus generalization, as well as more efficient diffusion models with steerable generation.}

Theoretically, our analysis assumes that $\bm{X}\sim p$ and $\bm{Z}\sim p_{\mathrm{prior}}$ are independent when constructing the probability path $(p_t)_{t\in[0,1]}$. It will be intriguing in future work to investigate whether this analysis can be extended to settings where $\bm{X}$ and $\bm{Z}$ are dependent. Such cases naturally arise, for example, when applying rectification techniques as in \cite{liu2023flow} or when employing known coupling methods to enhance flow matching, as explored in \cite{pooladian2023multisample,tong2024improving}.


\section*{Impact Statement}
This paper studies the theoretical properties of the flow-matching model, a novel generative model that has been widely adopted in practice. Our investigation on the memorization phenomenon could potentially help to design better variants that do not leak private information and have a positive societal impact.

\section*{Acknowledgements}
This work is partially supported by NSF grants CCF-2112665, CCF-2217058, CCF-2310411 and CCF-2403452.

\bibliography{references.bib}

\begin{thebibliography}{58}
\providecommand{\natexlab}[1]{#1}
\providecommand{\url}[1]{\texttt{#1}}
\expandafter\ifx\csname urlstyle\endcsname\relax
  \providecommand{\doi}[1]{doi: #1}\else
  \providecommand{\doi}{doi: \begingroup \urlstyle{rm}\Url}\fi

\bibitem[Aamari and Levrard(2019)]{aamari2019nonasymptotic}
EDDIE Aamari and CL{\'E}MENT Levrard.
\newblock Nonasymptotic rates for manifold, tangent space and curvature estimation.
\newblock \emph{The Annals of Statistics}, 47\penalty0 (1):\penalty0 177--204, 2019.

\bibitem[Albergo and Vanden-Eijnden(2023)]{albergo2022building}
Michael~S Albergo and Eric Vanden-Eijnden.
\newblock Building normalizing flows with stochastic interpolants.
\newblock In \emph{11th International Conference on Learning Representations, ICLR 2023}, 2023.

\bibitem[Alexander and Bishop(2006)]{alexander2006gauss}
Stephanie~B Alexander and Richard~L Bishop.
\newblock Gauss equation and injectivity radii for subspaces in spaces of curvature bounded above.
\newblock \emph{Geometriae Dedicata}, 117:\penalty0 65--84, 2006.

\bibitem[Ambrosio and Crippa(2014)]{ambrosio2014continuity}
Luigi Ambrosio and Gianluca Crippa.
\newblock Continuity equations and {ODE} flows with non-smooth velocity.
\newblock \emph{Proceedings of the Royal Society of Edinburgh Section A: Mathematics}, 144\penalty0 (6):\penalty0 1191--1244, 2014.

\bibitem[Ambrosio et~al.(2008)Ambrosio, Gigli, and Savar{\'e}]{ambrosio2008gradient}
Luigi Ambrosio, Nicola Gigli, and Giuseppe Savar{\'e}.
\newblock \emph{Gradient flows: in metric spaces and in the space of probability measures}.
\newblock Springer Science \& Business Media, 2008.

\bibitem[Amenta and Bern(1998)]{amenta1998surface}
Nina Amenta and Marshall Bern.
\newblock Surface reconstruction by {V}oronoi filtering.
\newblock In \emph{Proceedings of the fourteenth annual symposium on Computational geometry}, pages 39--48, 1998.

\bibitem[Baptista et~al.(2025)Baptista, Dasgupta, Kovachki, Oberai, and Stuart]{baptista2025memorization}
Ricardo Baptista, Agnimitra Dasgupta, Nikola~B Kovachki, Assad Oberai, and Andrew~M Stuart.
\newblock Memorization and regularization in generative diffusion models.
\newblock \emph{arXiv preprint arXiv:2501.15785}, 2025.

\bibitem[Ben-Hamu et~al.(2024)Ben-Hamu, Puny, Gat, Karrer, Singer, and Lipman]{citation-0}
Heli Ben-Hamu, Omri Puny, Itai Gat, Brian Karrer, Uriel Singer, and Yaron Lipman.
\newblock D-flow: Differentiating through flows for controlled generation.
\newblock In \emph{Forty-first International Conference on Machine Learning}, 2024.

\bibitem[Berenfeld et~al.(2022)Berenfeld, Harvey, Hoffmann, and Shankar]{berenfeld2022estimating}
Cl{\'e}ment Berenfeld, John Harvey, Marc Hoffmann, and Krishnan Shankar.
\newblock Estimating the reach of a manifold via its convexity defect function.
\newblock \emph{Discrete \& Computational Geometry}, 67\penalty0 (2):\penalty0 403--438, 2022.

\bibitem[Bia{\l}o{\.z}yt(2023)]{bialozyt2023tangent}
Adam Bia{\l}o{\.z}yt.
\newblock The tangent cone, the dimension and the frontier of the medial axis.
\newblock \emph{Nonlinear Differential Equations and Applications NoDEA}, 30\penalty0 (2):\penalty0 27, 2023.

\bibitem[Biroli et~al.(2024)Biroli, Bonnaire, De~Bortoli, and M{\'e}zard]{biroli2024dynamical}
Giulio Biroli, Tony Bonnaire, Valentin De~Bortoli, and Marc M{\'e}zard.
\newblock Dynamical regimes of diffusion models.
\newblock \emph{Nature Communications}, 15\penalty0 (1):\penalty0 9957, 2024.

\bibitem[Carlini et~al.(2023)Carlini, Hayes, Nasr, Jagielski, Sehwag, Tramer, Balle, Ippolito, and Wallace]{carlini2023extracting}
Nicolas Carlini, Jamie Hayes, Milad Nasr, Matthew Jagielski, Vikash Sehwag, Florian Tramer, Borja Balle, Daphne Ippolito, and Eric Wallace.
\newblock Extracting training data from diffusion models.
\newblock In \emph{32nd USENIX Security Symposium (USENIX Security 23)}, pages 5253--5270, 2023.

\bibitem[Chen et~al.(2024)Chen, Zhou, Wang, Shen, and Lyu]{chen2024trajectory}
Defang Chen, Zhenyu Zhou, Can Wang, Chunhua Shen, and Siwei Lyu.
\newblock On the trajectory regularity of {ODE}-based diffusion sampling.
\newblock In \emph{Forty-first International Conference on Machine Learning}, 2024.

\bibitem[Chow et~al.(2023)Chow, Lu, and Ni]{chow2023hamilton}
Bennett Chow, Peng Lu, and Lei Ni.
\newblock \emph{Hamilton’s Ricci flow}, volume~77.
\newblock American Mathematical Society, Science Press, 2023.

\bibitem[Comaniciu and Meer(2002)]{comaniciu2002mean}
Dorin Comaniciu and Peter Meer.
\newblock Mean shift: A robust approach toward feature space analysis.
\newblock \emph{IEEE Transactions on pattern analysis and machine intelligence}, 24\penalty0 (5):\penalty0 603--619, 2002.

\bibitem[de~Mises(1937)]{de1937base}
MR~de~Mises.
\newblock La base g{\'e}om{\'e}trique du th{\'e}oreme de m. mandelbrojt sur les points singuliers d’une fonction analytique.
\newblock \emph{CR Acad. Sci. Paris S{\'e}r. I Math}, 205:\penalty0 1353--1355, 1937.

\bibitem[Delfour and Zol{\'e}sio(2011)]{delfour2011shapes}
Michel~C Delfour and J-P Zol{\'e}sio.
\newblock \emph{Shapes and geometries: metrics, analysis, differential calculus, and optimization}.
\newblock SIAM, 2011.

\bibitem[Efron(2011)]{efron2011tweedie}
Bradley Efron.
\newblock Tweedie’s formula and selection bias.
\newblock \emph{Journal of the American Statistical Association}, 106\penalty0 (496):\penalty0 1602--1614, 2011.

\bibitem[Esser et~al.(2024)Esser, Kulal, Blattmann, Entezari, Müller, Saini, Yam~Levi, Sauer, Boesel, Podell, Dockhorn, English, and Rombach]{esser2024scaling}
Patrick Esser, Sumith Kulal, Andreas Blattmann, Rahim Entezari, Jonas Müller, Harry Saini, Dominik~Lorenz Yam~Levi, Axel Sauer, Frederic Boesel, Dustin Podell, Tim Dockhorn, Zion English, and Robin Rombach.
\newblock Scaling rectified flow transformers for high-resolution image synthesis.
\newblock In \emph{Forty-first International Conference on Machine Learning}, 2024.

\bibitem[Federer(1959)]{federer1959curvature}
Herbert Federer.
\newblock Curvature measures.
\newblock \emph{Transactions of the American Mathematical Society}, 93\penalty0 (3):\penalty0 418--491, 1959.

\bibitem[Fefferman et~al.(2016)Fefferman, Mitter, and Narayanan]{fefferman2016testing}
Charles Fefferman, Sanjoy Mitter, and Hariharan Narayanan.
\newblock Testing the manifold hypothesis.
\newblock \emph{Journal of the American Mathematical Society}, 29\penalty0 (4):\penalty0 983--1049, 2016.

\bibitem[Gallot et~al.(1990)Gallot, Hulin, and Lafontaine]{gallot1990riemannian}
Sylvestre Gallot, Dominique Hulin, and Jacques Lafontaine.
\newblock \emph{Riemannian geometry}, volume~2.
\newblock Springer, 1990.

\bibitem[Gao and Li(2024)]{gao2024flow}
Weiguo Gao and Ming Li.
\newblock How do flow matching models memorize and generalize in sample data subspaces?
\newblock \emph{arXiv preprint arXiv:2410.23594}, 2024.

\bibitem[Gao et~al.(2024)Gao, Huang, and Jiao]{gao2024gaussian}
Yuan Gao, Jian Huang, and Yuling Jiao.
\newblock Gaussian interpolation flows.
\newblock \emph{Journal of Machine Learning Research}, 25\penalty0 (253):\penalty0 1--52, 2024.

\bibitem[Georgiev et~al.(2023)Georgiev, Vendrow, Salman, Park, and Madry]{georgiev2023journey}
Kristian Georgiev, Joshua Vendrow, Hadi Salman, Sung~Min Park, and Aleksander Madry.
\newblock The journey, not the destination: How data guides diffusion models.
\newblock \emph{arXiv preprint arXiv:2312.06205}, 2023.

\bibitem[Goodfellow et~al.(2014)Goodfellow, Pouget-Abadie, Mirza, Xu, Warde-Farley, Ozair, Courville, and Bengio]{goodfellow2014generative}
Ian Goodfellow, Jean Pouget-Abadie, Mehdi Mirza, Bing Xu, David Warde-Farley, Sherjil Ozair, Aaron Courville, and Yoshua Bengio.
\newblock Generative adversarial nets.
\newblock \emph{Advances in neural information processing systems}, 27, 2014.

\bibitem[Ho et~al.(2020)Ho, Jain, and Abbeel]{ho2020denoising}
Jonathan Ho, Ajay Jain, and Pieter Abbeel.
\newblock Denoising diffusion probabilistic models.
\newblock \emph{Advances in neural information processing systems}, 33:\penalty0 6840--6851, 2020.

\bibitem[Hug and Weil(2020)]{hug2020lectures}
Daniel Hug and Wolfgang Weil.
\newblock \emph{Lectures on convex geometry}, volume 286.
\newblock Springer, 2020.

\bibitem[Karras et~al.(2019)Karras, Laine, and Aila]{karras2019style}
Tero Karras, Samuli Laine, and Timo Aila.
\newblock A style-based generator architecture for generative adversarial networks.
\newblock In \emph{Proceedings of the IEEE/CVF conference on computer vision and pattern recognition}, pages 4401--4410, 2019.

\bibitem[Karras et~al.(2022)Karras, Aittala, Aila, and Laine]{karras2022elucidating}
Tero Karras, Miika Aittala, Timo Aila, and Samuli Laine.
\newblock Elucidating the design space of diffusion-based generative models.
\newblock \emph{Advances in neural information processing systems}, 35:\penalty0 26565--26577, 2022.

\bibitem[Khalil and Grizzle(2002)]{khalil2002nonlinear}
Hassan~K Khalil and Jessy~W Grizzle.
\newblock \emph{Nonlinear systems}, volume~3.
\newblock Prentice hall Upper Saddle River, NJ, 2002.

\bibitem[Krizhevsky(2009)]{krizhevsky2009learning}
A.~Krizhevsky.
\newblock Learning multiple layers of features from tiny images.
\newblock Master's thesis, Department of Computer Science, University of Toronto, 2009.

\bibitem[Li and Chen(2024)]{li2024critical}
Marvin Li and Sitan Chen.
\newblock Critical windows: non-asymptotic theory for feature emergence in diffusion models.
\newblock In \emph{International Conference on Machine Learning}, pages 27474--27498. PMLR, 2024.

\bibitem[Lieutier and Wintraecken(2024)]{lieutier2024manifolds}
Andr{\'e} Lieutier and Mathijs Wintraecken.
\newblock Manifolds of positive reach, differentiability, tangent variation, and attaining the reach.
\newblock \emph{arXiv preprint arXiv:2412.04906}, 2024.

\bibitem[Lipman et~al.(2022)Lipman, Chen, Ben-Hamu, Nickel, and Le]{lipman2022flow}
Yaron Lipman, Ricky~TQ Chen, Heli Ben-Hamu, Maximilian Nickel, and Matthew Le.
\newblock Flow matching for generative modeling.
\newblock In \emph{The Eleventh International Conference on Learning Representations}, 2022.

\bibitem[Liu et~al.(2023)Liu, Gong, and Liu]{liu2023flow}
Xingchao Liu, Chengyue Gong, and Qiang Liu.
\newblock Flow straight and fast: Learning to generate and transfer data with rectified flow.
\newblock In \emph{International conference on learning representations (ICLR)}, 2023.

\bibitem[Loaiza-Ganem et~al.(2024)Loaiza-Ganem, Ross, Hosseinzadeh, Caterini, and Cresswell]{loaiza2024deep}
Gabriel Loaiza-Ganem, Brendan~Leigh Ross, Rasa Hosseinzadeh, Anthony~L. Caterini, and Jesse~C. Cresswell.
\newblock Deep generative models through the lens of the manifold hypothesis: A survey and new connections.
\newblock \emph{Transactions on Machine Learning Research}, 2024.
\newblock ISSN 2835-8856.
\newblock URL \url{https://openreview.net/forum?id=a90WpmSi0I}.
\newblock Survey Certification, Expert Certification.

\bibitem[Lu et~al.(2022)Lu, Zhou, Bao, Chen, Li, and Zhu]{lu2022dpm}
Cheng Lu, Yuhao Zhou, Fan Bao, Jianfei Chen, Chongxuan Li, and Jun Zhu.
\newblock Dpm-solver: A fast {ODE} solver for diffusion probabilistic model sampling in around 10 steps.
\newblock \emph{Advances in Neural Information Processing Systems}, 35:\penalty0 5775--5787, 2022.

\bibitem[Meng et~al.(2021)Meng, Song, Li, and Ermon]{meng2021estimating}
Chenlin Meng, Yang Song, Wenzhe Li, and Stefano Ermon.
\newblock Estimating high order gradients of the data distribution by denoising.
\newblock \emph{Advances in Neural Information Processing Systems}, 34:\penalty0 25359--25369, 2021.

\bibitem[Monera et~al.(2014)Monera, Montesinos-Amilibia, and Sanabria-Codesal]{monera2014taylor}
Maria~G Monera, A~Montesinos-Amilibia, and Esther Sanabria-Codesal.
\newblock The taylor expansion of the exponential map and geometric applications.
\newblock \emph{Revista de la Real Academia de Ciencias Exactas, Fisicas y Naturales. Serie A. Matematicas}, 108:\penalty0 881--906, 2014.

\bibitem[Niyogi et~al.(2008)Niyogi, Smale, and Weinberger]{niyogi2008finding}
Partha Niyogi, Stephen Smale, and Shmuel Weinberger.
\newblock Finding the homology of submanifolds with high confidence from random samples.
\newblock \emph{Discrete \& Computational Geometry}, 39:\penalty0 419--441, 2008.

\bibitem[Permenter and Yuan(2024)]{permenter2024interpreting}
Frank Permenter and Chenyang Yuan.
\newblock Interpreting and improving diffusion models from an optimization perspective.
\newblock In \emph{Forty-first International Conference on Machine Learning}, 2024.

\bibitem[Pidstrigach(2022)]{pidstrigach2022score}
Jakiw Pidstrigach.
\newblock Score-based generative models detect manifolds.
\newblock \emph{Advances in Neural Information Processing Systems}, 35:\penalty0 35852--35865, 2022.

\bibitem[Pooladian et~al.(2023)Pooladian, Ben-Hamu, Domingo-Enrich, Amos, Lipman, and Chen]{pooladian2023multisample}
Aram-Alexandre Pooladian, Heli Ben-Hamu, Carles Domingo-Enrich, Brandon Amos, Yaron Lipman, and Ricky~TQ Chen.
\newblock Multisample flow matching: Straightening flows with minibatch couplings.
\newblock In \emph{International Conference on Machine Learning}, pages 28100--28127. PMLR, 2023.

\bibitem[Rissanen et~al.(2025)Rissanen, Heinonen, and Solin]{rissanen2025free}
Severi Rissanen, Markus Heinonen, and Arno Solin.
\newblock Free hunch: Denoiser covariance estimation for diffusion models without extra costs.
\newblock In \emph{The Thirteenth International Conference on Learning Representations}, 2025.
\newblock URL \url{https://openreview.net/forum?id=4JK2XMGUc8}.

\bibitem[Rubner et~al.(1998)Rubner, Tomasi, and Guibas]{rubner1998metric}
Yossi Rubner, Carlo Tomasi, and Leonidas~J Guibas.
\newblock A metric for distributions with applications to image databases.
\newblock In \emph{Sixth international conference on computer vision (IEEE Cat. No. 98CH36271)}, pages 59--66. IEEE, 1998.

\bibitem[Shaul et~al.(2024)Shaul, Perez, Chen, Thabet, Pumarola, and Lipman]{shaul2023bespoke}
Neta Shaul, Juan Perez, Ricky~TQ Chen, Ali Thabet, Albert Pumarola, and Yaron Lipman.
\newblock Bespoke solvers for generative flow models.
\newblock In \emph{The Twelfth International Conference on Learning Representations}, 2024.

\bibitem[Sohl-Dickstein et~al.(2015)Sohl-Dickstein, Weiss, Maheswaranathan, and Ganguli]{sohl2015deep}
Jascha Sohl-Dickstein, Eric Weiss, Niru Maheswaranathan, and Surya Ganguli.
\newblock Deep unsupervised learning using nonequilibrium thermodynamics.
\newblock In \emph{International conference on machine learning}, pages 2256--2265. PMLR, 2015.

\bibitem[Somepalli et~al.(2023)Somepalli, Singla, Goldblum, Geiping, and Goldstein]{somepalli2023diffusion}
Gowthami Somepalli, Vasu Singla, Micah Goldblum, Jonas Geiping, and Tom Goldstein.
\newblock Diffusion art or digital forgery? investigating data replication in diffusion models.
\newblock In \emph{Proceedings of the IEEE/CVF Conference on Computer Vision and Pattern Recognition}, pages 6048--6058, 2023.

\bibitem[Song et~al.(2021)Song, Meng, and Ermon]{song2020denoising}
Jiaming Song, Chenlin Meng, and Stefano Ermon.
\newblock Denoising diffusion implicit models.
\newblock In \emph{International Conference on Learning Representations}, 2021.

\bibitem[Song and Ermon(2019)]{song2019generative}
Yang Song and Stefano Ermon.
\newblock Generative modeling by estimating gradients of the data distribution.
\newblock \emph{Advances in neural information processing systems}, 32, 2019.

\bibitem[Song et~al.(2023)Song, Dhariwal, Chen, and Sutskever]{song2023consistency}
Yang Song, Prafulla Dhariwal, Mark Chen, and Ilya Sutskever.
\newblock Consistency models.
\newblock In \emph{International Conference on Machine Learning}, pages 32211--32252. PMLR, 2023.

\bibitem[Stanczuk et~al.(2024)Stanczuk, Batzolis, Deveney, and Sch{\"o}nlieb]{stanczuk2024diffusion}
Jan~Pawel Stanczuk, Georgios Batzolis, Teo Deveney, and Carola-Bibiane Sch{\"o}nlieb.
\newblock Diffusion models encode the intrinsic dimension of data manifolds.
\newblock In \emph{Forty-first International Conference on Machine Learning}, 2024.

\bibitem[Tong et~al.(2024)Tong, Fatras, Malkin, Huguet, Zhang, Rector-Brooks, Wolf, and Bengio]{tong2024improving}
Alexander Tong, Kilian Fatras, Nikolay Malkin, Guillaume Huguet, Yanlei Zhang, Jarrid Rector-Brooks, Guy Wolf, and Yoshua Bengio.
\newblock Improving and generalizing flow-based generative models with minibatch optimal transport.
\newblock \emph{Transactions on Machine Learning Research}, 2024.
\newblock ISSN 2835-8856.
\newblock URL \url{https://openreview.net/forum?id=CD9Snc73AW}.
\newblock Expert Certification.

\bibitem[Villani(2003)]{villani2003topics}
C{\'e}dric Villani.
\newblock \emph{Topics in Optimal Transportation}.
\newblock Number~58. American Mathematical Soc., 2003.

\bibitem[Villani(2009)]{villani2009optimal}
C{\'e}dric Villani.
\newblock \emph{Optimal transport: old and new}, volume 338.
\newblock Springer, 2009.

\bibitem[Wen et~al.(2024)Wen, Liu, Chen, and Lyu]{wen2024detecting}
Yuxin Wen, Yuchen Liu, Chen Chen, and Lingjuan Lyu.
\newblock Detecting, explaining, and mitigating memorization in diffusion models.
\newblock In \emph{The Twelfth International Conference on Learning Representations}, 2024.

\bibitem[Zhang et~al.(2024)Zhang, Yin, Li, and Xie]{zhang2024formulating}
Pengze Zhang, Hubery Yin, Chen Li, and Xiaohua Xie.
\newblock Formulating discrete probability flow through optimal transport.
\newblock \emph{Advances in Neural Information Processing Systems}, 36, 2024.

\end{thebibliography}
\bibliographystyle{plainnat}

\newpage

\appendix
\onecolumn

\section*{Appendix}
\addcontentsline{toc}{section}{Appendix}

\section*{Table of Contents for Appendix}
\addtocontents{toc}{\protect\setcounter{tocdepth}{2}}
\begin{itemize}
    \item \textbf{\hyperref[sec:roadmap_notations]{A. Table of Symbols and Roadmap of Theoretical Results}} \dotfill \pageref{sec:roadmap_notations}
    \item \textbf{\hyperref[sec:metric_geometry]{B. Geometric Notions and Results}}\dotfill \pageref{sec:metric_geometry}
    \item \textbf{\hyperref[app:denoiser]{C. Denoiser and FM ODE: Singularity, Attracting and Absorbing}} \dotfill \pageref{app:denoiser}
    \item \textbf{\hyperref[sec:concentration_posterior]{D. Denoiser Analysis:
    Concentration and Convergence of the Posterior Distribution}} \dotfill \pageref{sec:concentration_posterior}
    \item \textbf{\hyperref[sec:examples]{E. Explicit Eamples of FM ODEs}} \dotfill \pageref{sec:examples}
    \item \textbf{\hyperref[sec:proof background]{F. Proofs in \Cref{sec:background}}} \dotfill \pageref{sec:proof background}
    \item \textbf{\hyperref[sec:denoiser proof]{G. Proofs in \Cref{sec:denoiser}}} \dotfill \pageref{sec:denoiser proof}
    \item \textbf{\hyperref[sec:proof_attraction_dynamics]{H. Proofs in \Cref{sec:stage 1 and 2}}} \dotfill \pageref{sec:proof_attraction_dynamics}
    \item \textbf{\hyperref[sec: proof of terminal]{I. Proofs in \Cref{sec:terminal}}} \dotfill \pageref{sec: proof of terminal}
    \item \textbf{\hyperref[sec:experiments]{J. Experiments}} \dotfill \pageref{sec:experiments}
\end{itemize}

\newpage
\section{Table of Symbols and Roadmap of Theoretical Results}\label{sec:roadmap_notations}
In this section, we provide a table of symbols used in the paper and a roadmap of the theoretical results in \Cref{fig:dependence}.


\resizebox{\linewidth}{!}{%
\renewcommand{\arraystretch}{1.3}
\begin{tabular}{ll}
\toprule
\footnotesize
\textbf{Symbol} & \textbf{Meaning / Description} \\ 
\midrule
\(\R^d\)                & \(d\)-dimensional Euclidean space.\\
\(\|\cdot\|\)           & Euclidean norm in \(\R^d\).\\
\(\mathrm{supp}(p)\)    & Support of the probability measure \(p\).\\
\(\bm{X}\sim p\)        & Random variable \(\bm{X}\) with distribution \(p\).\\
\(\mathsf{M}_2(p)\)     & Second moment of \(p\), i.e., \(\int \|x\|^2 p(dx)\).\\
\(\diam(\set)\)            & Diameter of a set \(\set\subset\R^d\).\\
$\overline{\set}$           & Closure of a set $\set$.\\
\(d_\set(x)\)
                        & Point-to-set distance from \(x\) to \(\set\), i.e., $d_\set:=\inf_{y\in \set}\|x-y\|$.\\
\(\proj_\set(x)\)          & Projection of \(x\) onto a closed set \(\set\subset\R^d\).\\
$\conv(\set)$               & Convex hull of a set $\set$.\\
\(\Sigma_\set\)            & Medial axis of a closed set \(\set\subset\R^d\).\\
\(\lfs_\set(x)\)             & Local feature size of \(x\) regarding set $\set$, i.e.\ distance to medial axis \(\Sigma_{\set}\).\\
\(\tau_\Omega\)& Reach of a set \(\set\subset\R^d\).\\
\(\mathrm{Inj}(M)\)     & Injectivity radius of a manifold \(M\).\\
\(\mathrm{sep}(x_i)\)   & Separation scale w.r.t.\ a discrete point \(x_i\).\\
\(\delta_x\)            & Dirac delta measure at \(x\).\\
\(\gN(\mu,\Sigma)\) & Gaussian distribution with mean \(\mu\) and covariance \(\Sigma\).\\
\(p * q\) 
                        & Convolution of probability measures \(p\) and \(q\).\\
\(q_\sigma=p*\gN(0,\sigma^2 I)\)
                        & The blurred (or noisy) distribution at noise level \(\sigma\).\\
\(\nn_\set(x)\),\, \(\widehat{p}_{\nn(x)}\)
                        & Nearest-neighbor set of \(x\) in a discrete set \(\set\), and the 
                          measure restricted to \(\nn_\set(x)\).\\
\(x_t\)            & A state (trajectory point) indexed by time \(t\in[0,1]\) in the FM ODE.\\
\(x_\sigma\)            & A state (trajectory point) indexed by noise level \(\sigma\in(0,\infty)\) in the FM ODE.\\
\(\bm{X}_t\)       & The random variable \(\alpha_t\bm{X}+\beta_t\,\bm{Z}\),
                          where \(\bm{Z}\sim\gN(0,I)\).\\
\(\bm{X}_\sigma\)       & The random variable \(\bm{X}+\sigma\,\bm{Z}\),
                          where \(\bm{Z}\sim\gN(0,I)\).\\
\(p(\cdot|\bm{X}_t=x),\,p(\cdot|\bm{X}_\sigma=x)\)& Posterior distributions of \(\bm{X}\) given \(\bm{X}_t=x\) and given \(\bm{X}_\sigma=x\), respectively.\\
\(m_\sigma(x)\)         & The denoiser in $\sigma$ parameter, i.e.\ \(\E[\bm{X}\,|\,\bm{X}_\sigma=x]\).\\
\(m_t(x)\)              & The denoiser in $t$ parameter (implicitly dependent on \(\alpha_t,\beta_t\));
                          i.e.\ \(\E[\bm{X}\mid\bm{X}_t = x]\).\\
\(\nabla \log q_\sigma\)& Score function of \(q_\sigma\).\\
\(d_{\mathrm{W},s}\)
                        & \(s\)-Wasserstein distance.\\
\(u_t(x)\)              & Vector field in the FM ODE for $t\in [0,1)$.\\
\(\Psi_t\)              & The flow map of the FM ODE.\\
\(p_t=(\Psi_t)_\#p_0\)  & The pushforward distribution of an initial distribution \(p_0\) under \(\Psi_t\).\\
\bottomrule
\end{tabular}
}
\begin{figure}
    \centering
    \scalebox{0.8}{
    \begin{tikzpicture}[
        node distance=1cm and 0.7cm,
        block/.style={
            draw,
            rectangle,
            rounded corners=3pt,
            minimum height=2cm,
            minimum width=4.8cm,
            align=center,
            fill=white,
            font=\small,
            inner sep=3pt
        },
        layer/.style={
            draw=gray!100,
            rounded corners=10pt,
            minimum width=18cm,
            minimum height=4cm,
            inner sep=1pt,
            fill=none
        },
        thickarrow/.style={
            ->,
            double,
            double distance=1pt,
            very thick,
            >=stealth,
        },
        arrow/.style={
            ->,
            very thick,
            >=stealth,
            shorten >=6pt,
            shorten <=10pt
        }
    ]
    \definecolor{lightergray}{rgb}{0.45,0.45,0.45}
    
    \node[block] (well) at (-6,0) {\textbf{Well-posedness in $[0, 1)$}\\
    {\footnotesize\color{lightergray} Thm.~\ref{prop:existence of flow maps}}};
    \node[block] (abs) at (0,0) {\textbf{Absorbing \& Attracting}\\[-1pt]\textbf{via Denoiser}\\[-1pt]
    {\footnotesize\color{lightergray} Thm.~\ref{thm:attracting_toward_set_meta},~\ref{thm:absorbing_by_neighborhoods_meta},~\ref{thm:new contraction}, Prop.~\ref{prop:absorbing_convex}}};
    \node[block] (conc) at (6,0) {\textbf{Concentration of}\\[-1pt]\textbf{Posterior}\\[-1pt]
    {\footnotesize\color{lightergray} Thm.~\ref{thm:denoiser_limit_sigma}, \ref{thm:conditional_convergence},{\footnotesize\color{lightergray} Prop.~\ref{prop:posterior_convergence_discrete_sigma}}}};

    \node[layer] (Layer1) at (0,0) {};
    \node at (0,1.5) {\textbf{Theoretical Foundations}};

    \node[block] (init) at (-6,-5) {\textbf{Initial Stage}\\[-1pt]\textbf{(Mean Attraction)}\\
    {\footnotesize\color{lightergray} Prop.~\ref{prop:initial-stage_mean_geneal}}};
    \node[block] (inter) at (0,-5) {\textbf{Intermediate Stage}\\[-1pt]\textbf{(Cluster Absorption)}\\
    {\footnotesize\color{lightergray} Prop.~\ref{prop:local_cluster_convergence_sigma}}};
    \node[block] (term) at (6,-5) {\textbf{Terminal Stage} \\[-1pt]
    \textbf{(Convergence \& Memorization)}\\[2pt]
    {\footnotesize\color{lightergray} Thm.~\ref{thm:existence of flow maps},~\ref{thm:existence of flow maps different geometries}}
    {\footnotesize\color{lightergray} Prop.~\ref{prop:V_epsilon_i_absorbing_sigma},~\ref{prop:memorization_sigma}}};

    \node[layer] (Layer2) at (0,-5) {};
    \node at (0,-3.5) {\textbf{ODE Trajectory Stages}};

    \node[block, below=1.5cm of term] (flow) {\textbf{Equivariance of Flow Maps}\\
    {\footnotesize\color{lightergray} Thm.~\ref{thm:similarity_transform}}};

    \draw[thickarrow] ($(Layer1.south)+(0,-0.2)$) -- ($(Layer2.north)+(0,0.2)$);
    \draw[arrow] (term.south) -- (flow.north);
    \draw[arrow] (well.east) -- (abs.west);
\end{tikzpicture}}
    \caption{\textbf{Roadmap of theoretical results.} Our analysis builds upon three foundational components: (1) well-posedness of FM ODEs in $[0,1)$, establishing existence and uniqueness of trajectories\textemdash setting the foundation for subsequent analysis, (2) attracting and absorbing properties derived from denoiser behavior, and (3) concentration results for posterior distributions. These foundations enable us to characterize the full evolution of FM ODE trajectories through three distinct stages: initial mean attraction, intermediate cluster absorption, and terminal convergence with memorization implications. The convergence at terminal time further allows us to establish equivariance properties of flow maps. Together, these results provide a complete geometric understanding of FM ODE dynamics.}
    \label{fig:dependence}
\end{figure}

\section{Geometric Notions and Results}\label{sec:metric_geometry}
In this section, we review basic concepts from convex geometry and metric geometry, and establish several results which will be used in our later proofs. These geometric results are also of independent interest and may be applicable in other contexts.

\subsection{Convex Geometry Notions and Results}
In this subsection, we collect some basic notions and results in convex geometry that are used in the proofs. Our main reference is the book~\cite{hug2020lectures}.

We first introduce the definition of a convex set and convex function.
\begin{definition}[Convex set]
    A set $K\subset \R^d$ is called a \emph{convex set} if for any $x_1, \ldots, x_n\in K$ and $0\leq \alpha_1, \ldots, \alpha_n\leq 1$ such that $\sum_{i=1}^n \alpha_i = 1$, we have that $\sum_{i=1}^n \alpha_i x_i\in K$.
\end{definition}

\begin{definition}[Convex function]
    A function $f:\R^d\to \R$ is called a \emph{convex function} if for any $x,y\in \R^d$ and $0\leq \alpha\leq 1$, we have that $f(\alpha x + (1-\alpha)y)\leq \alpha f(x) + (1-\alpha)f(y)$.
\end{definition}

An intermediate result that the sublevel sets $\{ f< c\}$ or $\{f\leq c\}$ of a convex function $f$ are convex sets; see~\citet[Remark~2.6]{hug2020lectures}.

We now introduce the definition of the convex hull of a set.
\begin{definition}[Convex hull~{\citep[Definition~1.3, Theorem~1.2]{hug2020lectures}}]
    The \emph{convex hull} of a set $\set\subset \R^d$ is the smallest convex set that contains $\set$ and is denoted by $\conv(\set)$. Additionally, we have that
    \[
        \conv(\set) = \left\{ \sum_{i=1}^n \alpha_i x_i: k\in \mathbb{N},  x_i\in \set, \alpha_i\in[0, 1], \sum_{i=1}^n \alpha_i = 1\right\}.
    \]
\end{definition}

Let $\set$ be a set, and $x\in \set$. We say a hyperplane give by a linear function $H$ is a \emph{supporting hyperplane} of $\set$ at $x$ if the following conditions hold:
\begin{enumerate}
    \item $H(x) = 0$.
    \item $H(y)\geq 0$ for all $y\in \set$.
\end{enumerate}

For a closed convex set $\set$, every boundary point of $\set$ has a supporting hyperplane.

\begin{proposition}[Supporting hyperplane~{\citep[Theorem~1.16]{hug2020lectures}}]\label{prop:existence of supporting hyperplane}
    Let $K$ be a closed convex set in $\R^d$ and $x\in \partial K$. Then, there exists a supporting hyperplane of $K$ at $x$.
\end{proposition}

We collect some basic properties regarding a convex set $K$ and its distance function $d_X(x):=\inf_{y\in K}\|x-y\|$.

\begin{proposition}\label{prop:convex_set_properties}
    Let $K$ be a convex set in $\R^d$ and $\set$ be a set in $\R^d$. Then, we have that
    \begin{enumerate}
        \item For each $x\in \R^d$, there exists a unique point $\proj_K(x)\in K$ such that $\|x-\proj_K(x)\| = d_{K}(x)$.
        \item The distance function $d_{K}(x)$ is a convex function.
        \item For any $r\geq 0$, the $r$-thickening of $K$, defined as $B_r(K) := \{x\in \R^d: d_{K}(x)< r\}$, is a convex set.
        \item The diameter of $\set$ is the same as the diameter of its convex hull, that is $\diam(\set) = \diam(\conv(\set))$.
        \item Let $a>0$, then a set $\set$ is convex if and only if $a\cdot \set$ is convex.
    \end{enumerate}
\end{proposition}

\subsection{Metric Geometry Notions and Results}
Let $\set\subset\R^d$ be a closed subset. Recall that  $\Sigma_\set$ denotes the medial axis of $\set$ and $d_{\set}:\R^d\to \R$ is defined by $d_{\set}(x):=\inf_{y\in \set}\|x-y\|$.  We now consider certain properties of the projection function $\proj_\set:\Sigma_\set^c\to\set$, where $\Sigma_\set^c:=\R^d\backslash \Sigma_\set$.

We first recall the definition of the local feature size in~\citet{amenta1998surface} with a slight generalization that we consider all points in $\R^d$ instead of only points in $\set$.
\begin{definition}[\citep{amenta1998surface}]
    For any $x\in \R^d$, we define the \emph{local feature size} $\lfs_{\set}(x)$ of $x$
    as $\lfs_{\set}(x):=d_{\Sigma_\set}(x)$.
\end{definition}

For any $x\in  \Sigma_\set^c$, we let $x_\set:=\proj_\set(x)$. We consider the following set for any $x\in  \Sigma_\set^c$:
$$T_\Omega(x):=\left\{t\geq0:\proj_\set\left(x_\set+t\frac{x-x_\set}{\|x-x_\set\|}\right)=x_\set.\right\}.$$

\begin{lemma}\label{rmk:characterization of TOmega}
 We have the following characterizations of $T_\Omega(x)$. 
\begin{itemize}
    \item $T_\set(x)$ is an interval.
    \item For any $t\in T_\set(x)$, we have that $x_\set+t\frac{x-x_\set}{\|x-x_\set\|}\notin{\Sigma_\set}$.
    \item We let $R_\set(x):=\sup T_\set(x)$. If $0<R_\set(x)<\infty$, we have that $x_\set+R_\set(x)\frac{x-x_\set}{\|x-x_\set\|}\in\overline{\Sigma_\set}$.
    \item $[0,d_\set(x)+\lfs_{\set}(x))\subset T_\set(x)$.
\end{itemize}

\end{lemma}
\begin{proof}[Proof of \Cref{rmk:characterization of TOmega}]
    The first three items follow from the pioneering work \citet[Theorem 4.8]{federer1959curvature}; see also~\citet[Theorem~6.2]{delfour2011shapes} for more details. 
    
    We provide a proof for the last item. First of all, it is straightforward to see that $[0,d_\set(x)]\subset T_\set(x)$. Now, we assume that $r:=R_\set(x)\in [d_\set(x),d_\set(x)+\lfs_\set(x)).$ This implies that
    $$x_\set+r\frac{x-x_\set}{\|x-x_\set\|}=x+(r-d_\set(x))\frac{x-x_\set}{\|x-x_\set\|}\in\overline{\Sigma_\set}$$ 
    and hence 
    $$\lfs_\set(x)=d_{\Sigma_\set}(x)=d_{\overline{\Sigma_\set}}(x)\leq r-d_\set(x)<\lfs_\set(x). $$
    This is a contradiction and hence $R_\set(x)\geq d_\set(x)+\lfs_\set(x)$. This concludes the proof.
\end{proof}

Next, for any point $b\in \set$, we analyze the angle $\angle xx_\set b$ for any $b\in \set$. The following lemma is a slight variant of \citet[Theorem 4.8 (7)]{federer1959curvature}.

\begin{lemma}\label{lemma:t_extension_and_size_angle}
    For any $x\in \Sigma_\set^c$ and any $t>0$ such that $t\in T_\set(x)$, the following holds for any $b \in \set$:
    \[
        \left \langle x - x_\set, x_\set - b \right \rangle \geq - \frac{\| x_\set - b\|^2 \| x - x_\set\|}{2 t}.
    \]
\end{lemma}

\begin{proof}
   By definition of $T_\set(x)$, we have that $\proj_\set\left(x_\set+t\frac{x-x_\set}{\|x-x_\set\|}\right)=x_\set$. Therefore, we have that
    \begin{align*}
         \left\| x_\set  + t\frac{x-x_\set}{\|x-x_\set\|} - b\right\|^2 &\geq  d_\set^2\left(x_\set  + t\frac{x-x_\set}{\|x-x_\set\|}\right)= t^2\\
            \| x_\set - b\|^2 + 2t \left \langle x_\set - b, \frac{x - x_\set}{\|x - x_\set\|} \right \rangle + t^2 &\geq t^2\\
            2t \left \langle x_\set - b, x - x_\set \right \rangle &\geq -\| x_\set - b\|^2 \| x - x_\set\|\\
            \left \langle x - x_\set, x_\set - b \right \rangle &\geq - \frac{\| x_\set - b\|^2 \| x - x_\set\|}{2 t}
    \end{align*}
\end{proof}

When $\set$ is convex, then $T_\Omega(x)=[0,\infty)$. In this way, we have the following corollary.

\begin{corollary}\label{coro:local_feature_size_angle}
    If $\set$ is convex, then for any $b\in \set$ and any $x\in \R^d$ we have that
    $\langle x-x_\set, x_\set-b\rangle \geq 0$.
\end{corollary}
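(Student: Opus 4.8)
The plan is to deduce this directly from \Cref{lemma:t_extension_and_size_angle}, exploiting the fact that a convex set has empty medial axis and an unbounded ``normal ray'' emanating from every projection point. The key point is that for convex $\set$ the interval $T_\set(x)$ is all of $[0,\infty)$, so the error term in \Cref{lemma:t_extension_and_size_angle} can be driven to zero.

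First I would dispose of the degenerate case $x\in\set$: then $x_\set=\proj_\set(x)=x$, so $x-x_\set=0$ and the claimed inequality reads $0\geq 0$. Hence assume $x\notin\set$, so that $x_\set\neq x$ and the unit vector $v:=(x-x_\set)/\|x-x_\set\|$ is well-defined. Next I would verify that $T_\set(x)=[0,\infty)$, i.e.\ $\proj_\set(x_\set+tv)=x_\set$ for every $t\geq 0$. This is the standard variational characterization of metric projection onto a closed convex set: $x_\set=\proj_\set(x)$ is equivalent to $\langle y-x_\set,\,x-x_\set\rangle\leq 0$ for all $y\in\set$. Since $x_\set+tv-x_\set=(t/\|x-x_\set\|)(x-x_\set)$ is a nonnegative multiple of $x-x_\set$, the same inequality holds verbatim with $x$ replaced by $x_\set+tv$, so $x_\set$ is also the projection of $x_\set+tv$. (Alternatively, one may invoke that a convex $\set$ has empty medial axis, so by the third item of \Cref{rmk:characterization of TOmega} the quantity $R_\set(x)=\sup T_\set(x)$ cannot be finite, and $T_\set(x)$ is an interval containing $0$.)

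Finally, applying \Cref{lemma:t_extension_and_size_angle} with an arbitrary $t>0$ and any $b\in\set$ gives
\[
\langle x-x_\set,\,x_\set-b\rangle \;\geq\; -\,\frac{\|x_\set-b\|^2\,\|x-x_\set\|}{2t},
\]
and letting $t\to\infty$ yields $\langle x-x_\set,\,x_\set-b\rangle\geq 0$, which is the desired conclusion.

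I do not expect any genuine obstacle here; the only thing requiring care is making the fact $T_\set(x)=[0,\infty)$ for convex $\set$ explicit (via the projection variational inequality above) rather than merely asserting it, after which the corollary follows from a one-line passage to the limit in \Cref{lemma:t_extension_and_size_angle}.
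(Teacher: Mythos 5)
Your proof is correct and follows essentially the same route as the paper: the paper deduces the corollary from \Cref{lemma:t_extension_and_size_angle} by observing that $T_\set(x)=[0,\infty)$ when $\set$ is convex and letting $t\to\infty$. Your added verification of $T_\set(x)=[0,\infty)$ via the variational characterization of the projection onto a closed convex set is a fine way of making explicit what the paper only asserts.
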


This control of the angle $\angle xx_\set b$ allows us to derive the following result that bounds the distance between $b\in \set$ and the projection $x_\set=\proj_\set(x)$ in terms of the distance between $b$ and $x$.

\begin{lemma}\label{lemma:t_extension_ball}
    For any $x\in  \Sigma_\set^c$ and any $t>0$ such that $t\in T_\set(x)$, we have that for any $b\in \set$,
    \[
       \| x - b\|^2 \geq d_{\set}(x)^2 +  \|b - x_\set\|^2 \left(1 - \frac{d_{\set}(x)}{t}\right).
    \]
\end{lemma}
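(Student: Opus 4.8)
The plan is to expand $\|x-b\|^2$ around the nearest point $x_\set=\proj_\set(x)$ and then control the cross term using \Cref{lemma:t_extension_and_size_angle}. Write $d:=d_\set(x)=\|x-x_\set\|$. First I would use the orthogonal-style decomposition $x-b=(x-x_\set)+(x_\set-b)$ to get
\[
\|x-b\|^2 = \|x-x_\set\|^2 + 2\langle x-x_\set,\,x_\set-b\rangle + \|x_\set-b\|^2 = d^2 + \|x_\set-b\|^2 + 2\langle x-x_\set,\,x_\set-b\rangle.
\]
This identity holds for every $b\in\set$ with no geometric hypotheses; it is just the law of cosines.

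Next I would invoke \Cref{lemma:t_extension_and_size_angle}, whose hypotheses ($x\in\Sigma_\set^c$ and $t\in T_\set(x)$, $t>0$) are exactly those assumed here. That lemma gives the lower bound on the cross term
\[
\langle x-x_\set,\,x_\set-b\rangle \;\geq\; -\,\frac{\|x_\set-b\|^2\,\|x-x_\set\|}{2t} \;=\; -\,\frac{\|x_\set-b\|^2\,d}{2t}.
\]
Substituting this into the previous display yields
\[
\|x-b\|^2 \;\geq\; d^2 + \|x_\set-b\|^2 - 2\cdot\frac{\|x_\set-b\|^2\,d}{2t} \;=\; d^2 + \|x_\set-b\|^2\!\left(1-\frac{d}{t}\right),
\]
which is precisely the claimed inequality, since $d=d_\set(x)$ and $\|x_\set-b\|=\|b-x_\set\|$.

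There is essentially no obstacle here beyond correctly citing the prior lemma: the entire argument is one expansion plus one substitution, and no case analysis or sign subtleties arise (the factor $1-d/t$ may be negative, but the inequality is stated and used in that generality, so nothing extra is needed). If one wanted, one could also note the companion fact that when $t\geq d$ the bound is nontrivial and when $t<d$ it is weaker than the trivial $\|x-b\|\ge d$ — but this is not required for the statement.
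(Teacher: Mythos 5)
Your proof is correct, and it uses the same two ingredients as the paper — the law-of-cosines expansion of $\|x-b\|^2$ around $x_\set$ and the cross-term bound from \Cref{lemma:t_extension_and_size_angle} — but it is organized differently: the paper argues by contradiction, assuming the inequality fails, converting it into a lower bound on $\cos(\angle x x_\set b)$, and then contradicting the upper bound on that cosine that follows from \Cref{lemma:t_extension_and_size_angle}. Your direct substitution is cleaner and buys a small technical advantage: because the paper works with $\cos(\angle x x_\set b) = \frac{\langle x-x_\set,\,b-x_\set\rangle}{\|x-x_\set\|\,\|b-x_\set\|}$, it must divide by $d_\set(x)\,\|b-x_\set\|$ and therefore treats the case $x\in\set$ separately (and implicitly needs $b\neq x_\set$), whereas your inner-product form never divides by these quantities, so no degenerate cases arise beyond what the hypotheses $x\in\Sigma_\set^c$, $t\in T_\set(x)$ already require. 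The only caveat is cosmetic: when $x\in\set$ the defining direction $\frac{x-x_\set}{\|x-x_\set\|}$ in $T_\set(x)$ is itself undefined, which is presumably why the paper singles that case out; in that situation your claimed inequality is trivially true anyway, so nothing is lost.
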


\begin{proof}
The case when $x\in\set$ trivially holds. Below we consider the case when $x\in \Sigma_\set^c \cap \set^c$. In this case, $d_\set(x)>0$ as $\set$ is a closed set.

    By the law of cosines, we have that
    \[
    \cos(\angle xx_\set b) = \frac{d_{\set}(x)^2 + \|b - x_\set\|^2 - \|x - b\|^2}{2 d_{\set}(x) \|b - x_\set\|}
    \]
    Suppose $\| x - b\|^2 < d_{\set}(x)^2 + \|b - x_\set\|^2 (1 - \frac{d_{\set}(x)}{t})$, then we have that
    \begin{align*}
       \cos(\angle xx_\set b) & = \frac{d_{\set}(x)^2 + \|b - x_\set\|^2 - \|x - b\|^2}{2 d_{\set}(x) \|b - x_\set\|} \\
       &> \frac{\|d_{\set}(x)^2 + \|b - x_\set\|^2 - d_{\set}(x)^2 - \|b - x_\set\|^2 + \|b - x_\set\|^2 \frac{d_{\set}(x)}{t}}{2 d_{\set}(x) \|b - x_\set\|}\\
       &= \frac{
              \|b - x_\set\|^2
       }{2 t}.
    \end{align*}
    By applying Lemma~\ref{lemma:t_extension_and_size_angle} to $b$ and $x$, we have that
    \[
        \left \langle x - x_\set, x_\set - b \right \rangle \geq - \frac{\| x_\set - b\|^2 \| x - x_\set\|}{2 t}.
    \]
    This implies the following estimate for the cosine of the angle $\angle xx_\set b$:
    \[
        \cos(\angle xx_\set b) = \frac{\left \langle x - x_\set, b -x_\set  \right \rangle}{\|x - x_\set\| \|b - x_\set\|} \leq \frac{\| x_\set - b\|^2}{2 t}.
    \]
    This contradicts the inequality above and hence we must have that
    \[\| x - b\|^2 \geq d_{\set}(x)^2 + \|b - x_\set\|^2 \left(1 - \frac{d_{\set}(x)}{t}\right).\tag*{\qedhere}\]
\end{proof}

We then have the following corollary which will be used in the proof of \Cref{thm:denoiser_limit_sigma}.

\begin{corollary}\label{corollary:local_feature_size_ball}
    Fix any $x\in \Sigma_\set^c$, any $t\in [d_\set(x),d_\set(x)+\lfs_\set(x))$ and any $\epsilon>0$.  Then, we have that
    \[B_{\sqrt{d_{\set}(x)^2+\epsilon^2\left(1-d_\set(x)/t\right)}}\left(x\right)\cap \set\subseteq B_\epsilon(x_\set)\cap \set.\]
    \end{corollary}

\begin{proof}
    For any $b\in B_{\sqrt{d_{\set}(x)^2+\epsilon^2\left(1-d_\set(x)/t\right)}}\left(x\right)\cap \set$, we have that 
    \[\| x - b\|^2  < d_{\set}(x)^2 + \epsilon^2\left(1 - \frac{d_{\set}(x)}{t}\right).\]
    By \Cref{lemma:t_extension_ball}, we have that 
    \[\| x - b\|^2 \geq d_{\set}(x)^2 +  \|b - x_\set\|^2 \left(1 - \frac{d_{\set}(x)}{t}\right).\]
    Combining the two inequalities, we have that $\|b - x_\set\|^2< \epsilon^2$ and hence $b\in B_\epsilon(x_\set)\cap \set$.
\end{proof}

Finally, we derive the local Lipschitz continuity of the projection function.

\begin{lemma}[Local Lipschitz continuity of the projection]\label{lemma:local_lipschitz}
    For any $\epsilon>0$ and for any $x, y\in \R^d$ such that $\lfs_\set(x)> \epsilon$ and $\lfs_\set(y)> \epsilon$, we have that
    \[\| x_\set - y_\set\| \leq \left(\frac{\max\{d_\set(x), d_\set(y)\}}{\epsilon} + 1\right)\|x-y\|.\]
\end{lemma}

\begin{proof}
    Since $\lfs(x)> \epsilon$ and $\lfs_\set(y)> \epsilon$, by \Cref{rmk:characterization of TOmega} we have that
    $$\proj_\set\left(x_\set + (\epsilon + d_\set(x))\frac{x - x_\set}{\|x - x_\set\|}\right)=x_\set$$
    and
    $$\proj_\set\left(y_\set + (\epsilon + d_\set(y))\frac{y - y_\set}{\|y - y_\set\|}\right)=y_\set.$$

    By applying \Cref{lemma:t_extension_and_size_angle} to $x, x_\set, y_\set$ and separately to $y, y_\set, x_\set$, we have that
    \begin{align*}
        \left \langle x_\set - y_\set,  x - x_\set\right\rangle &\geq - \frac{\| x_\set - y_\set\|^2}{2 \left( \epsilon + d_\set(x) \right)}d_\set(x),\\
        \left \langle y_\set - x_\set,  y - y_\set\right\rangle &\geq - \frac{\| x_\set - y_\set\|^2}{2 \left( \epsilon + d_\set(y) \right)}d_\set(y).
    \end{align*}

By adding the two inequalities about the inner products, we have that
\begin{align*}
    \left \langle x_\set - y_\set,  x - x_\set  - y + y_\set \right\rangle &\geq - \frac{\| x_\set - y_\set\|^2}{2 \left( \epsilon + d_\set(x) \right)}d_\set(x) - \frac{\| x_\set - y_\set\|^2}{2 \left( \epsilon + d_\set(y) \right)}d_\set(y),\\
    \left \langle x_\set - y_\set,  x - y \right\rangle - \| x_\set - y_\set\|^2 &\geq- \frac{\| x_\set - y_\set\|^2}{2 \left( \epsilon + d_\set(x) \right)}d_\set(x) - \frac{\| x_\set - y_\set\|^2}{2 \left( \epsilon + d_\set(y) \right)}d_\set(y),\\
    \left \langle x_\set - y_\set,  x - y \right\rangle &\geq \| x_\set - y_\set\|^2 - \frac{\| x_\set - y_\set\|^2}{2 \left( \epsilon + d_\set(x) \right)}d_\set(x) - \frac{\| x_\set - y_\set\|^2}{2 \left( \epsilon + d_\set(y) \right)}d_\set(y).
\end{align*}

Let $m = \max\{d_\set(x), d_\set(y)\}$, then we have that
\[
    0< 1 - \frac{m}{m + \epsilon} \leq \left(
        1 - \frac{d_\set(x)}{2 \left( \epsilon + d_\set(x) \right)} - \frac{d_\set(y)}{2 \left( \epsilon + d_\set(y) \right)}
    \right).
\]
Therefore, by Cauchy-Schwarz inequality, we have that
\[
    \| x_\set - y_\set\|^2 \left(1 - \frac{m}{m + \epsilon} \right) \leq \left \langle x_\set - y_\set,  x - y \right\rangle \leq \| x_\set - y_\set\| \| x - y\|.
\]
This implies $\| x_\set - y_\set\| \leq \left( \frac{m + \epsilon}{\epsilon} \right) \| x - y\|$.
\end{proof}

\subsubsection{Differentiability of the Distance Function}

Let $\set\subset \R^d$ be any closed subset. For any $x\in\R^d$, we let $P_\set(x):=\{y\in \set:\|x-y\|=d_{\set}(x)\}$ denote the set of points in $\set$ that achieve the infimum. When $x$ is not in the medial axis of $\set$, the set $P_\set(x)$ is the singleton set $\{\proj_\set(x)\}$.

Interestingly, there is the following result result showing the existence of one sided directional derivatives for $d_{\set}$ by \citet{de1937base} (see also \citet{bialozyt2023tangent} for a more recent English treatment).

\begin{lemma}[{\cite{de1937base}}]
    For any vector $v\in \R^d$, the one-sided directional derivative of $d_{\set}$ at $x\in\R^d\backslash \set$ in the direction $v$ exists and is given by
    \[D_vd_{\set}(x)=\inf\left\{-\frac{\langle v,y-x\rangle}{\|y-x\|}:y\in P_\set(x)\right\}.\]
\end{lemma}

Motivated by the this result, we mimick the proof and establish the following result for the squared distance function $d_{\set}^2$. Notice that, we can remove the constraint for $x\notin \set$.

\begin{lemma}\label{lemma:directional_derivative}
    For any $x\in \R^d\backslash \Sigma_\set$, the directional derivative of $d_{\set}^2$ at $x$ exists and is given by
    \[D_vd_{\set}^2(x)=-2\langle v,\proj_\set(x)-x\rangle.\]
\end{lemma}
\begin{proof}
    Without loss of generality, we assume that $x=0$ is the origin and $v=(c,0,\ldots,0)$ for some $c> 0$ (one can achieve these by applying rigid transformations). 

    We let $x_\set=\proj_\set(x)$, $x_t=tv$ and $x_t^*=\proj_\set(x_t)$. Since $\|x_t^*-x_t\|\leq \|x_\set-x_t\|$, we have that

    \[\|x_t^*\|^2-\|x_\set\|^2\leq 2ct(x_t^{*,(1)}-x_0^{*,(1)}),\]
    where $x_t^{*,(1)}$ denotes the first coordinate of $x_t^*$. Note that $\|x_t^*\|=\|x_t^*-x\|\geq \|x_\set-x\|=\|x_\set\|$. So we have that
    \[0\leq x_t^{*,(1)}-x_0^{*,(1)}.\]
    Now we have that by the cosine rule of the triangle formed by $x_t=tv$, $x=0$ and $x_t^*$, we have that 
    \[d^2_\set(x_t)=\|x_t^*\|^2+t^2c^2-2t\langle v,x_t^*-x\rangle.\]
    
    Therefore, we have that 
    \begin{align*}
        \frac{d_{\set}^2(x_t)-d_{\set}^2(x)}{t}=\frac{\|x_t^*\|^2-\|x_\set\|^2}{t}-2\langle v,x_t^*-x\rangle.
    \end{align*}

    As discussed above, we have that 
    $$0\leq \frac{\|x_t^*\|^2-\|x_\set\|^2}{t}\leq 2c(x_t^{*,(1)}-x_0^{*,(1)}).$$
    By continuity of $\proj_\set$ outside $\Sigma_\set$, we have that $x_t^*\to x^*$ as $t\to 0$. Therefore, we have that
    \[\lim_{t\to 0}\frac{d_{\set}^2(x_t)-d_{\set}^2(x)}{t}=-2\langle v,x^*-x\rangle=-2\langle v,\proj_\set(x)-x\rangle. \tag*{\qedhere}\]
\end{proof}

\begin{corollary}\label{corollary:directional_derivative}
    Let $(x_t)_t$ be a differentiable curve in $x\in \R^d\backslash \Sigma_\set$, then we have that $d_{\set}^2(x_t)$ is differentiable with respect to $t$ and we have that
    \[\frac{d}{dt}d_{\set}^2(x_t)=D_{\dot{x}_t}d_{\set}^2(x_t)=-2\langle \dot{x}_t,\proj_\set(x_t)-x_t\rangle.\]
\end{corollary}

\section{Denoiser and FM ODE: Singularity, Attracting and Absorbing}\label{app:denoiser}

\subsection{Basics About the Denoiser}\label{app: denoiser basic}

\noindent \textbf{Well-definedness of the denoiser.}
Although denoisers have been widely used in various works, their well-definedness has not been rigorously established in the literature—specifically, whether the integral defining the conditional mean $\mathbb{E}[\bm{X}|\bm{X}_t=x]$ exists. We address this gap in the following proposition.

\begin{proposition}\label{rem:denoiser}
    If $p$ has a finite 1-moment, then $p(\cdot|\bm{X}_t=x)$ also has a finite 1-moment for any $t\in[0,1)$, making $m_t(x):=\mathbb{E}[\bm{X}|\bm{X}_t=x]$ well-defined. The same applies to $m_\sigma$ for $\sigma\in(0,\infty)$.
\end{proposition}

\begin{proof}[Proof of \Cref{rem:denoiser}]
    For the normalizing factor $Z=\int \exp\left(-\frac{\|x-\alpha_ty'\|^2}{2\beta_t^2}\right)p(dy')$, we note that  $0<\exp\left(-\frac{\|x-\alpha_ty'\|^2}{2\beta_t^2}\right)\leq 1$. Hence, the factor $Z$ must be positive and bounded. 

    Now, we consider the following integral 
    \begin{align*}
        \int {\exp\left(-\frac{\|x-\alpha_ty\|^2}{2\beta_t^2}\right)\|y\|}p(dy)\leq \int \|y\|p(dy)<\infty.
    \end{align*}
    The last inequality follows from the fact that $p$ has a finite 1-moment. Hence, the posterior distribution $p(\cdot|\bm{X}_t=x)$ has a finite 1-moment and the denoiser $m_t$ is well-defined.
\end{proof}

\noindent \textbf{An alternative parametrization for $\sigma$.}
The backward integration w.r.t. $\sigma$ in \Cref{eq:ddim_ode} might be cumbersome in analysis and we alternatively use the parameter $\lambda:= -\log(\sigma)$. We let $\sigma(\lambda)$ denote the inverse function. Then, when $\sigma$ changes from $\infty$ to $0$, $\lambda$ changes from $-\infty$ to $\infty$. For an ODE trajectory $(x_\sigma)_{\sigma\in(0,\infty)}$, we define $(x_\lambda:=x_{\sigma(\lambda)})_{\lambda\in(-\infty,\infty)}$. For any $x\in\R^d$, we define $m_\lambda(x):=m_{\sigma(\lambda)}(x)$. Then, the ODE in $\lambda$ has a concise form:
\begin{equation}
    \label{eq:ddim_ode_lambda}
    \frac{dx_\lambda}{d\lambda} = m_\lambda(x_\lambda)-x_\lambda.
\end{equation}
\begin{proof}[Proof of \Cref{eq:ddim_ode_lambda}]
    Since $\lambda = - \log \sigma$, we have that
\[
\frac{d\lambda}{d\sigma} = - \frac{1}{\sigma}
\]
and thus the ODE equation \Cref{eq:ddim_ode_denoiser} becomes
\begin{align*}
    \frac{d{x}_{\lambda}}{d\lambda} &=\frac{d{x}_{\sigma(\lambda)}}{d\sigma}\frac{d\sigma}{d\lambda}= -\sigma \nabla \log q_\sigma(x_\sigma) \cdot \left(- {\sigma}\right)\\
    & = -x_\sigma +\frac{\int y \exp\left(-\frac{\|x_\sigma-y\|^2}{2\sigma^2}\right)p(dy)}{\int \exp\left(-\frac{\|x_\sigma-y'\|^2}{2\sigma^2}\right)p(dy')}\\
    & = m_{\lambda}(x_\lambda)-x_\lambda.\tag*{\qedhere}
\end{align*}
\end{proof}

\noindent \textbf{Jacobians of the denoiser and data covariance.}
We point out that the denoiser, under some mild condition on the data distribution $p$, is differentiable and its Jacobian is inherently connected with the covariance matrix of the posterior distribution $p(\cdot |\bm{X}_t=x)$ (or $p(\cdot |\bm{X}_\sigma=x)$). Similar formulas for computing the Jacobian have been utilized before for various purposes; see, for example, \citet[Lemma B.2.1]{zhang2024formulating}, \citet[Lemma 4.1]{gao2024gaussian}, \citet[Proposition 4.1]{citation-0} and \citet[Equation (8)]{rissanen2025free}. Moreover, the covariance formula in \Cref{thm:denoiser_jacobian} is a direct consequence of higher order generalization of Tweedie's formula, which has been studied in previous works (see, e.g., \citet{efron2011tweedie}, \citet{meng2021estimating}).

\begin{proposition}\label{thm:denoiser_jacobian}
    Assume that $p$ has a finite 2-moment. For any  $t\in[0,1)$, we have that $m_t$ is differentiable. In particular, the Jacobian $\nabla_x m_t$ can be explicitly expressed as follows for any $x\in\R^d$:    
    \begin{align*}
        \nabla_xm_t(x)&=\frac{\alpha_t}{2\beta_t^2}\iint (z-z')(z-z')^T p(dz|\bm{X}_t=x)p(dz'|\bm{X}_t=x) \\
        & = \frac{\alpha_t}{\beta_t^2}\mathrm{Cov}[\bm{X}|\bm{X}_t=x].
    \end{align*}

    Furthermore, if we let $\sigma = \sigma_t$, then for any $\sigma\in(0,\infty)$:
    \begin{align*}
        \nabla_xm_\sigma(x)&=\frac{1}{2\sigma^2}\iint (z-z')(z-z')^T p(dz|\bm{X}_\sigma=x)p(dz'|\bm{X}_\sigma=x) \\
        & = \frac{1}{\sigma^2}\mathrm{Cov}[\bm{X}|\bm{X}_\sigma=x].
    \end{align*}
    \end{proposition}
\begin{proof}[Proof of \Cref{thm:denoiser_jacobian}]

    Recall that
    
        \[
        m_t(x) = \frac{\int \exp\left(-\frac{\|x - \alpha_t z\|^2}{2 \beta_t^2}\right) z \, p(dz)}{\int \exp\left(-\frac{\|x - \alpha_t z\|^2}{2 \beta_t^2}\right) \, p(dz)}\text{ and }p(dz|\bm{X}_t=x)=\frac{\exp\left(-\frac{\|x - \alpha_t z\|^2}{2 \beta_t^2}\right)p(dz)}{\int \exp\left(-\frac{\|x - \alpha_t z\|^2}{2 \beta_t^2}\right) \, p(dz)}.
        \]
    We let $w_t(x,z):=\exp\left(-\frac{\|x - \alpha_t z\|^2}{2 \beta_t^2}\right)$.  Then,

        \begin{align*}
            \nabla_xm_t(x)=&\int z\nabla_x\left(\frac{\exp\left(-\frac{\|x-\alpha_tz\|^2}{2\beta_t^2}\right)}{\int\exp\left(-\frac{\|x-\alpha_tz'\|^2}{2\beta_t^2}\right)p(dz')}\right)p(dz)\\
            &=\left(\int w_t(x,z')p(dz')\right)^{-2}\int z\left(w_t(x,z)\left(-\frac{x-\alpha_tz}{\beta_t^2}\right)^T\int w_t(x,z')p(dz')\right)p(dz)\\
            &-\left(\int w_t(x,z')p(dz')\right)^{-2}\int z\left(w_t(x,z)\int w_t(x,z')\left(-\frac{x-\alpha_tz'}{\beta_t^2}\right)^Tp(dz')\right)p(dz)\\
            &= \left(\int w_t(x,z')p(dz')\right)^{-2}\iint w_t(x,z)w_t(x,z')z\left(-\frac{x-\alpha_tz}{\beta_t^2}+\frac{x-\alpha_tz'}{\beta_t^2}\right)^Tp(dz)p(dz')\\
            &= \frac{\alpha_t}{\beta_t^2}\left(\int w_t(x,z')p(dz')\right)^{-2}\iint w_t(x,z)w_t(x,z')z\left(z-z'\right)^Tp(dz)p(dz')\\
            &= \frac{\alpha_t}{2\beta_t^2}\left(\int w_t(x,z')p(dz')\right)^{-2}\iint w_t(x,z)w_t(x,z')(z-z')\left(z-z'\right)^Tp(dz)p(dz')\\
            &=\frac{\alpha_t}{2\beta_t^2}\iint (z-z')(z-z')^T p(dz|\bm{X}_t=x)p(dz'|\bm{X}_t=x).
        \end{align*}
    The second equation follows from a similar calculation.
    \end{proof}

\begin{remark}\label{remark: jacobian}
    We have established in \Cref{coro:denoiser_limit_t} that for any $x\notin \Sigma_\Omega$ (where $\Sigma_\Omega$ denotes the medial axis of the support $\Omega$ of $p$), the denoiser satisfies
    $$
    m_t(x)\to \proj_\Omega(x)\quad\text{as }t\to 1.
    $$
    We conjecture that a similar convergence holds for the Jacobian, that is,
    $$
    \nabla_x m_t(x)\to \nabla_x \proj_\Omega(x)\quad\text{as }t\to 1.
    $$
    Note that when $\Omega$ is a discrete set, the projection $\proj_\Omega(x)$ is locally constant almost everywhere, so its Jacobian $\nabla_x \proj_\Omega(x)$ is identically zero. This suggests a potential pitfall: if $\nabla_x m_t(x)$ also collapses to zero, the model may effectively “memorize” training points. Regularizing $\nabla_x m_t(x)$ to prevent such collapse could thus help mitigate memorization.
\end{remark}

\subsection{Denoiser and ODE Dynamics: Terminal Time Singularity}\label{sec:singularities}
The terminal time is referred to as the time $t=1$ (or $\sigma=0$, $\lambda\to \infty$) in the FM model.
The convergence of the ODE trajectory at the terminal time relies on the terminal time regularity of the vector field.
We now elucidate two types of singularities of the vector field that arise at the terminal time in the FM model, one due to the ODE formulation and the other due to the data geometry.

\noindent \textbf{Singularity due to the ODE formulation.} Recall that the vector field $u_t$ is given by 
$$u_t(x) = \frac{\dot{\beta}_t}{\beta_t}x + \frac{\dot{\alpha}_t\beta_t-\alpha_t\dot{\beta}_t}{\beta_t}\, \E[\bm{X}|\bm{X}_t=x].$$
Since the denominator $\beta_t$ approaches $0$ as $t\to 1$, the vector field $u_t$ faces an issue of division by zero when approaching the terminal time.
Similarly, the vector field in $\sigma$ formulation is given by $-\frac{1}{\sigma}(m_\sigma(x)-x)$, which also faces the same issue when $\sigma\to 0$.
This singularity is intrinsic to the flow matching ODE formulation. In the following proposition, we show that when the data distribution $p$ is not fully supported, the limit $\lim_{t\to 1}\|u_t(x)\|$ diverges to infinity for almost all $x$ outside the support of $p$, whereas it remains bounded when $p$ is fully supported.

\begin{proposition}\label{prop:blowup}
    Assume that $\alpha,\beta:[0,1]\to\R$ are smooth, and $\dot{\alpha}_1,\dot{\beta}_1$ exist and are non zero. Let $\set:=\mathrm{supp}(p)$ and let $\Sigma_\set$ denote its medial axis. Then, we have the following properties:
    \begin{itemize}
        \item If $p$ is fully supported, i.e., $\set=\R^d$, and has a Lipschitz density, then for any $x\in\R^d$, the vector field $u_t(x)$ is uniformly bounded for all $t\in[0,1)$.
        \item If $p$ is not fully supported, i.e., $\set\neq \R^d$, then for any $x\notin \set\cup \Sigma_\set$, $\lim_{t\to 1}\|u_t(x)\|=\infty$.
    \end{itemize}
\end{proposition}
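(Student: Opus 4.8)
The plan is to rewrite $u_t(x)$ in terms of the denoiser $m_t$ and then control $m_t$ as $t\to 1$ by passing to the noise-to-signal picture of \Cref{sec:equivalence}. From the hypotheses I record the asymptotics $\alpha_t\to 1$, $\beta_t\to 0$, $\dot\alpha_t\to\dot\alpha_1\neq 0$, $\dot\beta_t\to\dot\beta_1\neq 0$ as $t\to 1$; in particular $\beta_t=|\dot\beta_1|(1-t)+o(1-t)$ and $|\dot\beta_t/\beta_t|\to\infty$. I will use two elementary identities, valid on $[0,1)$ with $\bm X_t=\alpha_t\bm X+\beta_t\bm Z$ and $\bm Z\sim\gN(0,I)$: taking conditional expectations in $x=\alpha_t\bm X+\beta_t\bm Z$ gives $x-\alpha_t m_t(x)=\beta_t\,\E[\bm Z|\bm X_t=x]$, and Tweedie's formula gives $x-\alpha_t m_t(x)=-\beta_t^2\nabla\log p_t(x)$. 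Substituting the first into the denoiser form $u_t(x)=\tfrac{\dot\beta_t}{\beta_t}x+\tfrac{\dot\alpha_t\beta_t-\alpha_t\dot\beta_t}{\beta_t}m_t(x)$ yields
\[
u_t(x)=\tfrac{\dot\beta_t}{\beta_t}\bigl(x-\alpha_t m_t(x)\bigr)+\dot\alpha_t m_t(x)=\dot\alpha_t m_t(x)+\dot\beta_t\,\E[\bm Z|\bm X_t=x].
\]
Finally, comparing \Cref{eq:denoiser t} with \Cref{eq:denoiser_sigma} shows $m_t(x)=m_{\sigma_t}(x/\alpha_t)$, where $\sigma_t=\beta_t/\alpha_t\to 0$ and $m_\sigma$ is the denoiser associated to $q_\sigma=p*\gN(0,\sigma^2 I)$; so near $t=1$ everything reduces to the behaviour of $m_\sigma$ and $q_\sigma$ as $\sigma\to 0$.

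\emph{Fully supported case.} Since $u_t(x)$ depends continuously on $t\in[0,1)$, it is bounded on every $[0,1-\delta]$, and it remains to bound it as $t\to 1$. Conditioning on $\bm X=y$ forces $\bm Z=(x-\alpha_t y)/\beta_t$, so $\E[\bm Z|\bm X_t=x]=D_t^{-1}\int z\,g_t(z)\,\gN(z|0,I)\,dz$ with $D_t:=\int g_t(z)\,\gN(z|0,I)\,dz$ and $g_t(z):=p\bigl((x-\beta_t z)/\alpha_t\bigr)$. The Lipschitz bound gives $|g_t(z)-g_t(z')|\le L\sigma_t\|z-z'\|$ with $L:=\mathrm{Lip}(p)$; subtracting the mean-zero term $g_t(0)z$ from the numerator and using $\int\|z\|^2\gN(z|0,I)\,dz=d$ yields $\|\E[\bm Z|\bm X_t=x]\|\le L\sigma_t d/D_t$. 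A Lipschitz full-support density is continuous and positive near $x$, so $D_t$ is bounded below by a positive constant for $t$ close to $1$; hence $\E[\bm Z|\bm X_t=x]\to 0$, and then $m_t(x)=x/\alpha_t-\sigma_t\,\E[\bm Z|\bm X_t=x]\to x$, so that $u_t(x)\to\dot\alpha_1 x$. (Equivalently, $u_t(x)=\tfrac{\dot\alpha_t}{\alpha_t}x+\sigma_t(\dot\alpha_t\sigma_t-\dot\beta_t)\nabla\log q_{\sigma_t}(x/\alpha_t)$, where integration by parts gives $\|\nabla q_\sigma\|\le L$ and $q_{\sigma_t}(x/\alpha_t)\to p(x)>0$.)

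\emph{Non--fully supported case.} Fix $x\notin\set\cup\Sigma_\set$ and set $x^{\ast}:=\proj_\set(x)$, $r:=d_\set(x)=\|x-x^{\ast}\|>0$. I claim $m_t(x)\to x^{\ast}$ as $t\to 1$; granting this, $x-\alpha_t m_t(x)\to x-x^{\ast}\neq 0$, so $\|x-\alpha_t m_t(x)\|\ge r/2$ for $t$ near $1$, while $|\dot\beta_t/\beta_t|\to\infty$ and $\dot\alpha_t m_t(x)\to\dot\alpha_1 x^{\ast}$ stays bounded; hence $\|u_t(x)\|=\|\tfrac{\dot\beta_t}{\beta_t}(x-\alpha_t m_t(x))+\dot\alpha_t m_t(x)\|\to\infty$. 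To prove the claim, pass to the $\sigma$-picture: $m_t(x)$ is the mean of the posterior $\nu_{\sigma,x'}(dy)\propto\exp\bigl(-\|x'-y\|^2/(2\sigma^2)\bigr)p(dy)$ at $\sigma=\sigma_t\to 0$ and $x'=x/\alpha_t\to x$. Three Laplace-type estimates then finish the argument. First, since $x^{\ast}\in\set=\supp(p)$, every ball $B_\rho(x^{\ast})$ has positive $p$-mass $c_\rho$, so for $t$ near $1$ the normalizing constant of $\nu_{\sigma_t,x/\alpha_t}$ is at least $c_\rho\exp\bigl(-(r+2\rho)^2/(2\sigma_t^2)\bigr)$. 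Second, since $\set$ is closed and $x^{\ast}$ is the \emph{unique} nearest point to $x$, a compactness argument gives for each $\epsilon>0$ a gap $\delta(\epsilon)>0$ with $\|x-y\|\ge r+\delta(\epsilon)$ for all $y\in\set\setminus B_\epsilon(x^{\ast})$; combined with the previous bound and a choice of $\rho$ small relative to $\delta(\epsilon)$, this forces $\nu_{\sigma_t,x/\alpha_t}\bigl(\set\setminus B_\epsilon(x^{\ast})\bigr)\to 0$. Third, a tail estimate using $\int\|y\|\,p(dy)<\infty$ (finite first moment, needed for $m_t$ to be well defined, cf.\ \Cref{rem:denoiser}) gives $\int_{\set\setminus B_\epsilon(x^{\ast})}\|y-x^{\ast}\|\,\nu_{\sigma_t,x/\alpha_t}(dy)\to 0$. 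Combining the three, $\limsup_{t\to 1}\|m_t(x)-x^{\ast}\|\le\epsilon$ for every $\epsilon>0$, i.e.\ $m_t(x)\to x^{\ast}$.

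\emph{Main obstacle.} The technical heart is the second and third estimates: because the posterior's centre $x/\alpha_t$ drifts as $t\to 1$, one must keep the separation $\inf_{y\in\set\setminus B_\epsilon(x^{\ast})}\|x/\alpha_t-y\|-d_\set(x/\alpha_t)$ bounded below uniformly for $t$ near $1$, and when $\set$ is unbounded one must control the contribution of far-away data using only the first-moment assumption. The rest is bookkeeping; in the fully supported case the only delicate point is when $x$ lies in the (Lebesgue-null) zero set of the density, which is still handled by the above denominator estimate with a slightly more careful lower bound.
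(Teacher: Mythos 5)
Your treatment of the second bullet is correct: after writing $u_t(x)=\frac{\dot\beta_t}{\beta_t}(x-\alpha_t m_t(x))+\dot\alpha_t m_t(x)$, the blow-up follows once $m_t(x)\to\proj_\set(x)\neq x$, exactly as in the paper; the difference is that the paper simply invokes its denoiser-limit result (\Cref{coro:denoiser_limit_t}, itself built on \Cref{thm:denoiser_limit_sigma} and the projection/medial-axis lemmas), whereas you re-prove that limit from scratch by Laplace-type estimates with the drifting center $x/\alpha_t$ (normalizer lower bound from $p(B_\rho(x^\ast))>0$, a compactness gap $\delta(\epsilon)$, and a first-moment tail bound). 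That self-contained route is sound and handles the drift correctly. For the first bullet your route is also genuinely different from the paper's: instead of citing the manifold-case rate $\|m_\sigma(x)-\proj_\set(x)\|=O(\sigma)$ (\Cref{coro:denoiser_limit_manifold}, which needs a smooth non-vanishing density), you bound $\E[\bm Z\mid\bm X_t=x]$ directly by $L\sigma_t d/D_t$ using only the Lipschitz constant, which is cleaner and needs less regularity — but only at points where $D_t$ stays bounded away from $0$, i.e.\ where $p(x)>0$.

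The genuine gap is your closing claim that the case $p(x)=0$ "is still handled by the above denominator estimate with a slightly more careful lower bound." It is not: $D_t=q_{\sigma_t}(x/\alpha_t)\to p(x)=0$, and at a zero of the density $D_t$ can decay faster than any power of $\sigma_t$, so no refinement of the denominator bound alone can keep $L\sigma_t d/D_t$ bounded. Worse, boundedness itself can fail there. Take $d=1$, $x=0$, and a density equal to $c\,e^{-1/y}$ for small $y>0$ and $c\,e^{-2/|y|}$ for small $y<0$ (extended to a smooth, Lipschitz, everywhere-positive-except-at-$0$, fully supported density). A Laplace computation shows the posterior at $0$ concentrates at scale $s_\ast\asymp\sigma^{2/3}$ on the right branch, so $\|m_\sigma(0)\|\asymp\sigma^{2/3}$, hence $\|\E[\bm Z\mid\bm X_t=0]\|=\alpha_t\|m_{\sigma_t}(0)\|/\beta_t\asymp\beta_t^{-1/3}\to\infty$ and $\|u_t(0)\|\to\infty$. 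So the uniform boundedness in the first bullet genuinely requires positivity of the density at the point (the paper's own proof uses this implicitly, since the corollary it cites assumes a non-vanishing density); your argument becomes complete — and under weaker smoothness than the paper's — once you add the hypothesis $p(x)>0$ (or non-vanishing density), but the "delicate point" cannot be repaired the way you suggest.
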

\begin{proof}[Proof of \Cref{prop:blowup}]
    Let $\set:=\mathrm{supp}(p)$.
    Recall that 
    \begin{align*}
        u_t(x)&=(\log \beta_t)'x + \beta_t \left( \alpha_t/\beta_t \right)' m_t(x)\\
        &=\dot{\beta}_t\frac{x-\alpha_t m_t(x)}{\beta_t} + \dot{\alpha}_t m_t(x).
    \end{align*} 
    Then, we have that 
    \begin{align*}
        \|m_t(x)-\proj_\set(x)\|&\leq \|m_{\sigma_t}(x/\alpha_t)-\proj_\set(x/\alpha_t)\| + \|\proj_\set(x/\alpha_t)-\proj_\set(x)\|.
    \end{align*}
    By \Cref{lemma:local_lipschitz}, there exists a positive constant $C_x$ such that $\|\proj_\set(x)-\proj_\set(y)\|\leq C_x\|x-y\|$ for any $y$ close to $x$. Therefore, 

    \[ \|\proj_\set(x/\alpha_t)-\proj_\set(x)\|\leq C_x\left\|\frac{x}{\alpha_t}-x\right\| = O(1-\alpha_t).\]

     Now, when $\set=\R^d$, by \Cref{coro:denoiser_limit_manifold}, we have that
    \[\|m_{\sigma_t}(x/\alpha_t)-\proj_\set(x/\alpha_t)\|=O(\sigma_t)=O(\beta_t).\]

    In this case, $\proj_\set(x)=x$. This implies that for $t$ close to 1, we have that

    \begin{align*}
        \left\|\frac{x-\alpha_t m_t(x)}{\beta_t}\right\|&\leq O\left(\frac{1-\alpha_t}{\beta_t}\right)+O(1).
    \end{align*}
    The right hand side is bounded since 
    $\lim_{t\to 1}\frac{1-\alpha_t}{\beta_t}=\lim_{t\to 1}\frac{-\dot{\alpha}_t}{\dot{\beta}_t}=\frac{-\dot{\alpha}_1}{\dot{\beta}_1}$ exists. Therefore, the vector field $u_t(x)$ is uniformly bounded for all $t\in[0,1)$.

    If $\set\neq \R^d$ and $x\notin \set\cup \Sigma_\set$, then $\proj_\set(x)\neq x$. By \Cref{coro:denoiser_limit_t}, we have that $m_t(x)\to \proj_\set(x)$ as $t\to 1$. This implies that $\|x-\alpha_t m_t(x)\|\to \|x-\proj_\set(x)\|>0$. Then, as the denominator $\beta_t\to 0$, we have that
     $\lim_{t\to 1}\|u_t(x)\|=\infty$. 
\end{proof}

Another way to interpret the singularity in the ODE formulation is through the transformation of the ODE in terms of $\sigma$  
in \Cref{eq:ddim_ode_lambda}, where the singularity arises due to the presence of the $1/\sigma$ term.

A seemingly straightforward approach to addressing this blowup is to reformulate the ODE in terms of $\lambda$ as given in \Cref{eq:ddim_ode_lambda}:
\begin{equation*}
    \frac{dx_\lambda}{d\lambda} = m_\lambda(x_\lambda)-x_\lambda, \quad \lambda\in(-\infty,\infty),
\end{equation*}
with the terminal time being $\lambda\to \infty$. In this formulation, there is no denominator approaching zero, seemingly eliminating the singularity. However, for the ODE trajectory to converge as $\lambda\to\infty$, a necessary condition is that $\lim_{\lambda\to \infty} \|m_\lambda(x_\lambda) - x_\lambda\| = 0$. This is precisely what we establish in proving our convergence result in \Cref{thm:existence of flow maps}.

\noindent \textbf{Singularity due to the data geometry.}
When $p$ is not fully supported, the medial axis $\Sigma_\set$ of the data support plays a crucial role in the singularity of the denoiser $m_\sigma$ which results in discontinuity of the limit $\lim_{\sigma\to 0} m_\sigma(x)$. In this case, when the ODE is transformed into the $\lambda$, the vector field $u_\lambda$ does not have a uniform Lipschitz bound for all $\lambda \in (a, \infty)$ and hence the typical ODE theory such as Picard-Lindel\"of theorem can not be directly applied to analyze the flow matching ODEs.

\begin{figure}[htbp!]
    \centering
    \includegraphics[width=0.4\textwidth]{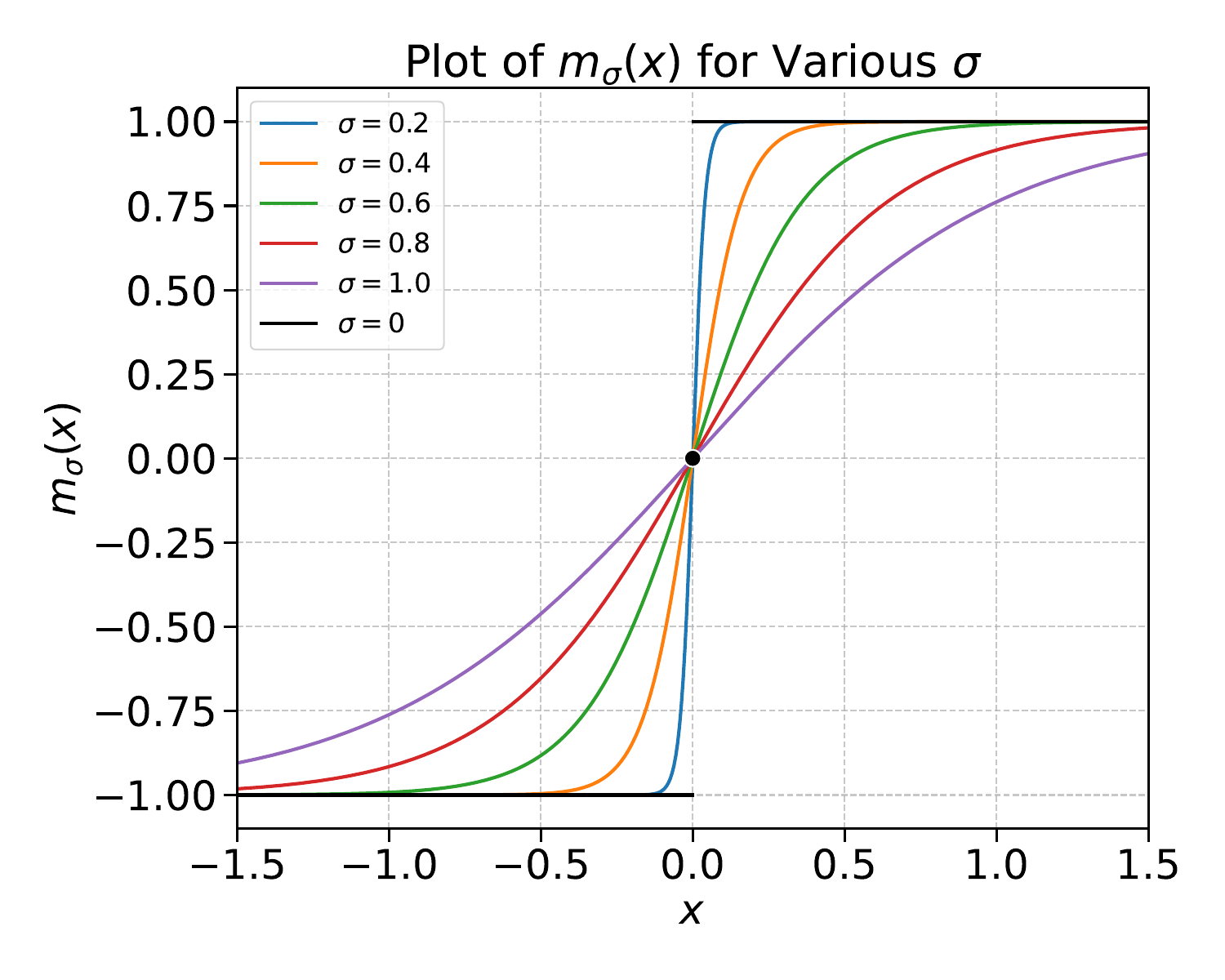}
    \caption{\textbf{The denoiser $m_\sigma(x)$ for the two point example with various $\sigma$.}}
    \label{fig:two_point}
\end{figure}

The discontinuity behavior can be illustrated by the following simple example of a two-point data distribution which can be easily extended to higher dimensions. 

\begin{example}
    Let $p=\frac{1}{2}\delta_{-1}+\frac{1}{2}\delta_{1}$ be a probability measure on $\R^1$.
    Then, the support $\set:=\mathrm{supp}(p)=\{-1,1\}$ is just a two-point set. The medial axis is the
    singleton $\Sigma_\set=\{0\}$ whose distance to either point is $1$. Now, we can explicitly
    write down the denoiser $m_\sigma$ as follows:
    \begin{equation}
        m_\sigma(x) = \frac{-\exp\left(-\frac{(x+1)^2}{2\sigma^2}\right)
        +\exp\left(-\frac{(x-1)^2}{2\sigma^2}\right)}
        {\exp\left(-\frac{(x+1)^2}{2\sigma^2}\right)
        +\exp\left(-\frac{(x-1)^2}{2\sigma^2}\right)}.
    \end{equation}

    Notice that when $\sigma$ approaches $0$, 
    \begin{itemize}
        \item The denoiser $m_\sigma$ is converging to the function $f:\R^1\to\{-1, 0, 1\}$ with $f(x)=\begin{cases}
            1 & x>0\\
            0 & x=0\\
            -1 & x<0
        \end{cases}.$ 
        \item A singularity of $m_\sigma$ is emerging at $\Sigma_\set=\{0\}$: the
        derivative $\frac{dm_\sigma(0)}{dx}$ is blowing up when $\sigma\to 0$.
    \end{itemize}
\end{example}

A full characterization of the limit $\lim_{\sigma\to 0} m_\sigma(x)$ for discrete data distribution will be given in~\Cref{sec:concentration_posterior} where the discontinuity often arises at the medial axis of the data support. All these singularities pose challenges in theoretical analysis of the flow matching ODEs and particularly in the convergence of the ODE trajectory when approaching the terminal time. The data geometry singularity is more challenging to handle, especially
the discontinuity behavior of the limit of the denoiser near the medial axis.

Our resolution of the convergence of FM ODE trajectory (\Cref{thm:existence of flow maps}) will be based on the attracting and absorbing property of the ODE dynamics so that the trajectory will avoid the singularities and converge to the data support.
\subsection{Denoiser and ODE Dynamics: Attracting and Absorbing}\label{sec:attracting_absorbing_meta_detail}

The following result on the convergence of ODE under asymptotically vanishing perturbation will be used often in the proofs of this section.
    \begin{lemma}\label{lemma:ode_convergence}
        Let $(y_t\geq 0)_{t\in[t_1,\infty)}$ be a differentiable trajectory of non-negative real numbers satisfying the following differential inequality
        $$\frac{d\, y_t}{dt} \leq -k\, y_t + \phi(t),$$
        where $k>0$ and $\phi:\R\to\R$. Then, we have:

        \begin{enumerate}
        \item For any $t_2>t_1$ we have that 
        $$y_{t_2} \leq e^{-k(t_2 - t_1)} y_{t_1} + \int_{t_1}^{t_2} e^{-k(t_2 - t)} \phi(t) dt.$$
        \item If $\lim_{t\to\infty}\phi(t)=0$, then $\lim_{t\to\infty}y_t=0$.
        \end{enumerate}
    \end{lemma}
    
    \begin{proof}[Proof of \Cref{lemma:ode_convergence}]
    By multiplying the integrating factor $e^{kt}$, we have that
    \begin{align*}
        \frac{d\, e^{kt} y_t}{dt} &= e^{kt} \frac{d\, y_t}{dt} + k e^{kt} y_t \leq \phi(t) e^{kt}.
    \end{align*}
    Then for all $t_2 > t_1$, we have
    \begin{align*}
        e^{kt_2} y_{t_2} &\leq e^{kt_1} y_{t_1} + \int_{t_1}^{t_2} e^{kt} \phi(t) dt,\\
        y_{t_2} &\leq e^{-k(t_2 - t_1)} y_{t_1} + \int_{t_1}^{t_2} e^{-k(t_2 - t)} \phi(t) dt.
    \end{align*}
    This proves Item 1.
    
    For Item 2, we will first show that $y_t$ is bounded. As $\phi(t)$ decays to zero as $t$ goes to infinity, so will be $|\phi(t)|$, and hence there exists some constant $C>0$ such that $|\phi(t)| \leq C$ for all $t\geq t_1$. Then, we have that
    \begin{align*}
         y_{t_2}  &\leq e^{-k(t_2 - t_1)}  y_{t_1}  + C \int_{t_1}^{t_2} e^{-k(t_2 - t)} dt,\\
        &\leq e^{-k(t_2 - t_1)}  y_{t_1}  + \frac{C}{k} (1 - e^{-k(t_2 - t_1)}),\\
        &\leq e^{-k(t_2 - t_1)}  y_{t_1}  + \frac{C}{k}.
    \end{align*}
    Hence, $y_t$ is bounded for all $t\geq t_1$, and we denote by $C_y>0$ any bound for $y_t$.
    
    Next, we show that $y_t$ converges to zero as $t$ goes to infinity. For any $\epsilon>0$, there is a sufficient large $t_\epsilon > t_1$ such that
    \begin{enumerate}
        \item $e^{-k t_\epsilon}C_y < \epsilon/2$.
        \item $| \phi(t) | < \epsilon/2$ for all $t \geq t_\epsilon$.
    \end{enumerate}
    Then for all $t_2 > 2t_\epsilon$, by integrating the inequality from $t_\epsilon$ to $t_2$, we have that
    \begin{align*}
         y_{t_2}  &\leq e^{-k(t_2 - t_\epsilon)}  y_{t_\epsilon}  + \int_{t_\epsilon}^{t_2} e^{-k(t_2 - t)} \phi(t) dt,\\
        &\leq e^{-k(t_\epsilon)}C_y + \epsilon/2\int_{t_\epsilon}^{t_2} e^{-k(t_2 - t)} dt,\\
        &\leq \epsilon/2 + \epsilon/2 (1 - e^{-k(t_2 - t_\epsilon)}),\\
        &\leq \epsilon.
    \end{align*}
    This implies that $y_t$ converges to zero as $t$ goes to infinity.
    \end{proof}

\noindent \textbf{Attracting toward sets.} Let $\set$ be a closed set in $\R^d$. We want to examine the distance to $\set$ along the ODE trajectory $(x_\sigma)_{\sigma\in(\sigma_2,\sigma_1]}$ with some $\sigma_1>\sigma_2$. Assume that the trajectory avoids the medial axis of $\set$ then the distance of $x_\sigma$ to $\set$ will decrease as $\sigma$ decreases if the trajectory direction $\frac{1}{\sigma}\left(m_\sigma(x_\sigma)-x_\sigma\right)$ forms an acute angle with the direction
pointing toward $\set$, that is
$$\langle m_\sigma(x_\sigma)-x_\sigma,\proj_\set(x_\sigma)-x_\sigma\rangle > 0.$$

Notice that one has
\begin{equation}\label{eq:acute_angle_break}
    \langle m_\sigma(x_\sigma)-x_\sigma, \proj_\set(x_\sigma)-x_\sigma\rangle=\langle m_\sigma(x_\sigma)-\proj_\set(x_\sigma),\proj_\set(x_\sigma) - x_\sigma\rangle
+\|\proj_\set(x_\sigma)-x_\sigma\|^2. 
\end{equation}
Hence, whenever $x_\sigma \notin \set$, $\|\proj_\set(x_\sigma)-x_\sigma\|>0$ and the acute angle condition will be satisfied if the term $\langle m_\sigma(x_\sigma)-\proj_\set(x_\sigma),\proj_\set(x_\sigma) - x_\sigma\rangle$ is not too negative. This intuition is formalized in the following theorem, which is the formal version of \Cref{thm:attracting_toward_set_meta_informal}.

\begin{theorem}[Attracting toward sets]\label{thm:attracting_toward_set_meta}
    Let $(x_\sigma)_{\sigma\in(\sigma_2,\sigma_1]}$ be an ODE trajectory of~\Cref{eq:ddim_ode_denoiser} starting from some $x_{\sigma_1}$. Assume that the trajectory avoids the medial axis of a closed $\set$ then we have the following results.
    \begin{enumerate}
        \item If $\langle m_\sigma(x_\sigma)-\proj_\set(x_\sigma),\proj_\set(x_\sigma) - x_\sigma\rangle\leq \zeta \|x_\sigma-\proj_\set(x_\sigma)\|^2$ for some $0\leq \zeta <1$ along the trajectory, then $d_\set(x_{\sigma})$ decreases along the trajectory with rate:
        \[
        d_\set(x_{\sigma}) \leq \frac{\sigma^{1-\zeta}}{\sigma_1^{1-\zeta}}d_\set(x_{\sigma_1}),
        \]
        In particular, if $\sigma_2 = 0$, then $d_\set(x_{\sigma})$ is guaranteed to converge to zero as $\sigma\to 0$.
        \item If $\sigma_2 = 0$ and
         $|\langle m_\sigma(x_\sigma)-\proj_\set(x_\sigma),\proj_\set(x_\sigma) - x_\sigma\rangle|\leq \phi(\sigma)$ for some function $\phi(\sigma)$ along the trajectory with $\lim_{\sigma\to 0}\phi(\sigma)=0$, then
        \[
        \lim_{\sigma\to 0}d_\set(x_{\sigma}) = 0.
        \]
    \end{enumerate}
\end{theorem}
\begin{remark}\label{rem:attracting_toward_set_meta}
    In fact, when considering the parameter $\lambda = -\log(\sigma)$ and the trajectory $z_\lambda:=x_{\sigma(\lambda)}$, we obtain the following convergence rate for Item 2 in the above theorem:
    \[ d_\set(z_{\lambda}) \leq  e^{-(\lambda-\lambda_1)}d_\set(z_{\lambda_1}) + e^{-\lambda}\sqrt{\int_{\lambda_1}^\lambda 2e^{2t}\phi(e^{-t})dt}.\]
\end{remark}

\begin{proof}[Proof of~\Cref{thm:attracting_toward_set_meta} and \Cref{rem:attracting_toward_set_meta}]
    We consider the change of variable $\lambda = -\log(\sigma)$. We let $z_\lambda:=x_\sigma(\lambda)$ for all $\lambda\in [\lambda_1:=\lambda(\sigma_1),
    \lambda_2:=\lambda(\sigma_2)]$. Then, $(z_\lambda)_{\lambda\in[\lambda_1,\lambda_2]}$ satisfies the nice ODE as in \Cref{eq:ddim_ode_lambda}. By assumption we have that $z_{\lambda_1}$ is outside the convex set $\set$. For any $\lambda\in [\lambda_1,\lambda_2]$, by \Cref{corollary:directional_derivative} we have that
    \begin{align*}
        \frac{d}{d\lambda}d_\set^2(z_\lambda) &= -2\langle z_\lambda - \proj_\set(z_\lambda),  z_\lambda - m_\lambda(z_\lambda)\rangle \\
        &=-2\left(\langle z_\lambda - \proj_\set(z_\lambda),  z_\lambda - \proj_\set(z_\lambda)\rangle + \langle z_\lambda - \proj_\set(z_\lambda),  \proj_\set(z_\lambda) - m_\lambda(z_\lambda)\rangle\right).
    \end{align*}

    For the first item, we have that $\frac{d}{d\lambda}d_\set^2(z_\lambda) \leq -2d_\set^2(z_\lambda)$. Multiplying with the exponential integrator $e^{2(1-\zeta)\lambda}$, we have that
    \begin{align*}
        \frac{d}{d\lambda}(e^{2(1-\zeta)\lambda}d_\set^2(z_\lambda)) & \leq 0.
    \end{align*}
    Then for any $\lambda\in [\lambda_1,\lambda_2]$, we have that
    \begin{align*}
        d^2_{\set}(z_\lambda) &\leq e^{-2(1-\zeta)(\lambda-\lambda_1)}d^2_{\set}(z_{\lambda_1}).
    \end{align*}
    Using the change of variable, we have that
    \[
        d_\set(z_\lambda)\leq \frac{\sigma^{1-\zeta}}{\sigma_1^{1-\zeta}}d_\set(z_{\lambda_1}).
    \]

    For the second item, we have that
  \begin{align*}
        \frac{d}{d\lambda}d_\set^2(z_\lambda) &\leq -2d_\set^2(z_\lambda) + 2\phi(e^{-\lambda}).
    \end{align*}
    Then we can apply Item 2 of \Cref{lemma:ode_convergence} to obtain  $\lim_{\lambda\to\infty}d_\set^2(z_\lambda)=0$. This implies that $\lim_{\sigma\to 0}d_\set^2(x_\sigma)=0$.

    We then apply Item 1 of \Cref{lemma:ode_convergence} to obtain that for any $\lambda\geq \lambda_1$:
    \[
        d_\set^2(z_\lambda) \leq e^{-2(\lambda-\lambda_1)}d_\set^2(z_{\lambda_1}) + 2\int_{\lambda_1}^{\lambda} e^{-2(\lambda-t)}\phi(e^{-t})dt.
    \]
    Using the fact that $\sqrt{a+b}\leq \sqrt{a}+\sqrt{b}$ for any $a,b\geq 0$, we have that
    \[
        d_\set(z_\lambda) \leq e^{-(\lambda-\lambda_1)}d_\set(z_{\lambda_1}) + e^{-\lambda}\sqrt{\int_{\lambda_1}^{\lambda} 2e^{2t}\phi(e^{-t})}dt.
    \]
    This proves \Cref{rem:attracting_toward_set_meta}
\end{proof}

\noindent \textbf{Absorbing by sets.} Now, we prove the absorbing \Cref{thm:absorbing_by_neighborhoods_meta} below.
\begin{proof}[Proof of \Cref{thm:absorbing_by_neighborhoods_meta}]
    We will utilize the parameter $\lambda$ for this proof with which we consider the  ODE  trajectory $(z_\lambda:=x_{\sigma(\lambda)})_{\lambda\in [\lambda_1, \lambda_2)}$ of~\Cref{eq:ddim_ode_lambda} with $\lambda_1= -\log(\sigma_1)$ and $\lambda_2= -\log(\sigma_2)$. 
    
    For Item 1, we will show that the trajectory $z_\lambda$ must stay inside $B_r(\set)$ for all $\lambda \in [\lambda_1, \lambda_2)$ whenever $z_{\lambda_1}\in B_r(\set)$. Suppose otherwise then there exists some first time $\lambda_o>\lambda_1$ such that $z_{\lambda_o}\in \partial B_r(\set)$. By the assumption that $\overline{B_r(\set)} \cap \Sigma_\set = \emptyset$, we can apply \Cref{lemma:directional_derivative} to obtain the derivative of the squared distance function along the trajectory for any $\lambda \in [\lambda_1, \lambda_o]$:
    \[
        \frac{d}{d\lambda}d_{B_r(\set)}^2(z_\lambda) = -2\langle m_\lambda(z_\lambda) - z_\lambda, \proj_{\set}(z_\lambda) - z_\lambda\rangle.
    \]
    This implies that $\frac{d}{d\lambda}d_{B_r(\set)}^2(z_{\lambda_o})<0$. By the continuity of the derivative above, we have that $\frac{d}{d\lambda}d_{B_r(\set)}^2(z_{\lambda})<0$ for all $\lambda \in [\lambda_o - \epsilon, \lambda_o]$ for some $\epsilon>0$. Note that $d_{\set}^2(z_{\lambda_o})=r^2$ and $d_{\set}^2(z_{\lambda_o - \epsilon})<r^2$. Then by the mean value theorem, there exists some $\lambda_i\in (\lambda_o - \epsilon, \lambda_o)$ such that $\frac{d}{d\lambda}d_{B_r(\set)}^2(z_{\lambda_i})>0$. This leads to a contradiction and hence the trajectory $z_\lambda$ must stay inside $B_r(\set)$ for all $\lambda \in [\lambda_1, \lambda_2)$, which concludes the proof.

    The second part of the theorem follows straightforwardly from the fact that $ \set $ is a closed set. If a trajectory starts from $ \set $ but leaves $ \set $ at some point, it must also leave a neighborhood of $ \set $, which would contradict the given assumption. This completes the proof.
\end{proof}

We now describe how we will use the above absorbing property for convex sets which will be used often in results in~\Cref{sec:stage 1 and 2,sec:terminal_behavior_discrete}, and {how its generalization will be used to analyze the convergence of the flow matching ODEs in~\Cref{sec:well-posedness_ODE 1}.}

For a convex set $K$, its the medial axis $\Sigma_K$ is empty (see e.g. Item 1 of~\Cref{prop:convex_set_properties}) and if we assume that the denoiser $m_\sigma$ lies in $K$ for any $x\in\partial B_r(K)$ then any neighborhood of $K$ will be absorbing for the ODE trajectory. We also obtain an stronger result regarding when the set $K$ itself is absorbing.
\begin{proposition}[Absorbing of convex sets]\label{prop:absorbing_convex}
    Let $K$ be a closed convex set in $\R^d$. Let $(x_\sigma)_{\sigma\in(\sigma_2,\sigma_1]}$ be an ODE trajectory of~\Cref{eq:ddim_ode_denoiser}. Then, we have the following results.
\begin{enumerate}
    \item For any $r>0$, if $m_\sigma(x)\in K$ for any $x\in\partial B_r(K)$ and any $\sigma\in(\sigma_2,\sigma_1]$, then $B_r(K)$ is absorbing for $(x_\sigma)_{\sigma\in(\sigma_2,\sigma_1]}$.
    \item If the interior $K^{\circ}$ of $K$ is not empty and $m_\sigma(x)\in K^{\circ}$ for any $x\in \partial K$ and any $\sigma\in(\sigma_2,\sigma_1]$, then $K$ is absorbing for $(x_\sigma)_{\sigma\in(\sigma_2,\sigma_1]}$.
\end{enumerate}
\end{proposition}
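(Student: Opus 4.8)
The plan is to handle the two items separately: Item~1 I would derive from the general absorbing meta-theorem, while Item~2 needs a direct ``first-exit'' contradiction argument, because inside $K$ the distance function degenerates and cannot detect an escape.

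\textbf{Item 1.} The structural fact I would use is that a closed convex set $K$ has empty medial axis, $\Sigma_K=\emptyset$, since the metric projection onto a convex set is single-valued on all of $\R^d$. Hence $\overline{B_r(K)}\cap\Sigma_K=\emptyset$ automatically and the trajectory vacuously avoids $\Sigma_K$, so Item~1 of \Cref{thm:absorbing_by_neighborhoods_meta} applies as soon as the acute-angle condition $\langle m_\sigma(x)-x,\proj_K(x)-x\rangle>0$ holds for $x\in\partial B_r(K)$. To get it: such $x$ satisfies $d_K(x)=r>0$, hence $x\notin K$ and $\proj_K(x)$ is well defined, and the variational inequality for the projection onto a convex set gives $\langle y-\proj_K(x),\,x-\proj_K(x)\rangle\le 0$ for all $y\in K$, in particular for $y=m_\sigma(x)$, i.e.\ the cross term in \Cref{eq:acute_angle_break} (with $\set$ replaced by $K$) is $\langle m_\sigma(x)-\proj_K(x),\proj_K(x)-x\rangle\ge 0$. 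Combining this with \Cref{eq:acute_angle_break} yields $\langle m_\sigma(x)-x,\proj_K(x)-x\rangle\ge\|\proj_K(x)-x\|^2=r^2>0$, as needed. (Alternatively, Item~1 can be proved from scratch by differentiating $\tfrac12 d_K(x_\sigma)^2$ along the trajectory --- legitimate off $K$, with gradient $x_\sigma-\proj_K(x_\sigma)$ --- and observing this derivative is strictly positive at any point of $\partial B_r(K)$, so $d_K$ cannot cross the level $r$ from inside.)

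\textbf{Item 2.} Since $d_K$ vanishes on all of $K$, I would argue by contradiction. Suppose $x_\sigma\notin K$ for some $\sigma$, and set
\[
\sigma^\star := \inf\{\sigma\in[\sigma_2,\sigma_1]: x_\tau\in K \text{ for all } \tau\in[\sigma,\sigma_1]\}.
\]
This infimum is over a nonempty set (it contains $\sigma_1$), which is a closed subinterval $[\sigma^\star,\sigma_1]$ by closedness of $K$ and continuity of $\sigma\mapsto x_\sigma$, and the assumption forces $\sigma^\star>\sigma_2$. If $x_{\sigma^\star}\in K^\circ$, continuity keeps $x_\sigma\in K^\circ$ for $\sigma$ near $\sigma^\star$, contradicting minimality; hence $x_{\sigma^\star}\in\partial K$, and the hypothesis gives $m:=m_{\sigma^\star}(x_{\sigma^\star})\in K^\circ$. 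Fix $\rho>0$ with $B_\rho(m)\subset K$. By convexity, for every $s\in[0,1]$,
\[
B_{s\rho}\big((1-s)\,x_{\sigma^\star}+s\,m\big) \;=\; (1-s)\,x_{\sigma^\star}+s\,B_\rho(m)\;\subset\;K.
\]
Because the vector field in \Cref{eq:ddim_ode_denoiser} is continuous along the trajectory (by the well-posedness established in \Cref{sec:concentration_posterior_ODE}), a first-order expansion as $\sigma\uparrow\sigma^\star$ gives $x_\sigma = x_{\sigma^\star}+\tfrac{\sigma^\star-\sigma}{\sigma^\star}(m-x_{\sigma^\star})+R(\sigma)$ with $\|R(\sigma)\|=o(\sigma^\star-\sigma)$. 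Writing $s:=(\sigma^\star-\sigma)/\sigma^\star$, this is $x_\sigma = \big((1-s)\,x_{\sigma^\star}+s\,m\big)+R(\sigma)$ with $\|R(\sigma)\|=o(s)<s\rho$ for $\sigma$ close enough to $\sigma^\star$, hence $x_\sigma\in K$ on a left-neighborhood of $\sigma^\star$. Together with $x_\tau\in K$ on $[\sigma^\star,\sigma_1]$ this contradicts the definition of $\sigma^\star$, so $\sigma^\star=\sigma_2$ and $K$ is absorbing on $(\sigma_2,\sigma_1]$.

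\textbf{Main obstacle.} The one genuinely delicate step is the last one in Item~2: because $\partial K$ need not be smooth, I cannot argue via an inward normal and must instead route through the segment $(1-s)x_{\sigma^\star}+sm$ from a boundary point to an interior point and the honest ball $B_{s\rho}(\cdot)$ it contains, then match the ball's $\Theta(s)$ radius against the $o(s)$ remainder of the trajectory. The remaining ingredients --- continuity of $\sigma\mapsto x_\sigma$ and $\sigma\mapsto m_\sigma(x_\sigma)$, the inclusion $\partial B_r(K)\subset\{x:d_K(x)=r\}$, and the projection variational inequality --- are standard, and I would dispatch each in a line.
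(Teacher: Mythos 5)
Your proof is correct. For Item~1 you take essentially the same route as the paper: the projection variational inequality for a convex set (the paper's \Cref{coro:local_feature_size_angle}) gives nonnegativity of the cross term, so the acute-angle condition holds with margin $r^2>0$ on $\partial B_r(K)$, and \Cref{thm:absorbing_by_neighborhoods_meta} applies because $\Sigma_K=\emptyset$. For Item~2 your argument genuinely differs from the paper's. The paper also localizes at the first exit time, with exit point $x_{\sigma_o}\in\partial K$, but instead of expanding the trajectory it replaces the degenerate distance function by a supporting hyperplane $H(y)=\langle y-x_{\sigma_o},n\rangle$ of $K$ at $x_{\sigma_o}$: since $H\ge 0$ along the trajectory up to the exit time and $H(x_{\sigma_o})=0$, the one-sided derivative of $H$ along the flow (in the $\lambda$-parametrization) is $\le 0$ at the exit time, while the ODE computes it as $\langle m(x_{\sigma_o})-x_{\sigma_o},n\rangle>0$ because the denoiser lies in $K^{\circ}$ --- a one-line contradiction. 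You instead Taylor-expand the solution at $\sigma^\star$ and show the first-order step lands inside the ball $B_{s\rho}\bigl((1-s)x_{\sigma^\star}+sm\bigr)\subset K$ provided by convexity, whose radius $\Theta(s)$ beats the $o(s)$ remainder. Both are sound: the paper's hyperplane argument is shorter and needs only the sign of one inner product (and supporting hyperplanes exist at every boundary point of a closed convex set, so it too requires no smoothness of $\partial K$), whereas yours is more elementary, using only the convex-combination inclusion and differentiability of the solution at $\sigma^\star$, which follows directly from $x_\sigma$ solving \Cref{eq:ddim_ode_denoiser}.
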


\begin{proof}[Proof of \Cref{prop:absorbing_convex}]
    For the first item, for any $x\in\partial B_r(K)$, since $m_\sigma(x)$ lies in the convex set $K$,
    we can apply Item 2 of \Cref{coro:local_feature_size_angle} to conclude
    \[
        \langle m_\sigma(x)-\proj_K(x),\proj_K(x) - x\rangle \geq 0.
    \]
    Then there is
    \[
        \langle m_\sigma(x)-x,\proj_K(x) - x\rangle  = \langle m_\sigma(x)-\proj_K(x),\proj_K(x) - x\rangle  + \|\proj_K(x) - x\|^2 \geq r^2 >0.
    \]
    We then apply \Cref{thm:absorbing_by_neighborhoods_meta} to conclude that $B_r(K)$ is absorbing for $(x_\sigma)_{\sigma\in(\sigma_2,\sigma_1]}$.

    For the second item, the distance function $d_K^2(x)$ does not distinguish between the interior and the boundary of $K$ and we utilize the supporting hyperplane function instead.
    Assume the trajectory $(x_\sigma)_{t\in (\sigma_2,\sigma_1]}$ leaves $K$ at some first time $\sigma_o \in (\sigma_2,\sigma_1]$. Then, in particular, $x_\sigma\in K$ for all $\sigma\in [\sigma_o, \sigma_1]$ and $x_{\sigma_o}\in \partial K$. In terms of the parameter $\lambda = -\log(\sigma)$, we have $z_\lambda \in [\lambda_1:= -\log(\sigma_1), \lambda_o:= -\log(\sigma_o)]$ and $z_{\lambda_o}\in \partial K$. Since $K$ is a closed convex set, there exist a supporting hyperplane $H$ at $z_{\lambda_o}$ such that $H(z_{\lambda_o}) = 0$ and $H(y)\geq 0$ for all $y\in K$ (see \Cref{prop:existence of supporting hyperplane}). In particular, we can write $H(y) = \langle y-z_{\lambda_o}, n \rangle $, where $n$ is the unit normal vector of the hyperplane. Since $z_\lambda\in K$ for all $\lambda\in [\lambda_o, \lambda_1]$ and $H(z_{\lambda_o}) \leq H(z_\lambda)$ for all $\lambda\in [\lambda_o, \lambda_1]$, we must have that $\frac{d H(z_\lambda)}{d\lambda}|_{\lambda=\lambda_o^{-}} \leq 0$. Therefore, we have that
    \[
        \frac{d H(z_\lambda)}{d\lambda}|_{\lambda=\lambda_o^{-}} = \left \langle \frac{d z_\lambda}{d\lambda}|_{\lambda=\lambda_o}, n \right \rangle = \langle m_{\lambda_o}(z_{\lambda_o}) - z_{\lambda_o}, n \rangle \leq 0.
    \]
    Since $m_{\lambda_o}(z_{\lambda_o})$ lies in the interior of $K$, we must have that $\langle m_{\lambda_o}(z_{\lambda_o}) - z_{\lambda_o}, n \rangle > 0$. This leads to a contradiction and hence the trajectory $z_\lambda$ must stay inside $K$ for all $\lambda \in [\lambda_1, \lambda_2)$.
\end{proof}

\begin{remark}\label{rmk:absorbing_attracting_via_decay}
	When $\set$ is not convex, a typical way to show the acute angle condition is by requiring $\|m_\sigma(x)-\proj_\set(x)\|$ to be small enough on $\partial B_r(\set)$ for all $\sigma\in(\sigma_2,\sigma_1]$. This can be seen by the following computation:
\begin{align}
    \langle m_\sigma(x_\sigma)-x_\sigma, \proj_\set(x_\sigma)-x_\sigma\rangle&=\langle m_\sigma(x_\sigma)-\proj_\set(x_\sigma),\proj_\set(x_\sigma) - x_\sigma\rangle
+\|\proj_\set(x_\sigma)-x_\sigma\|^2\\
&\geq -\| m_\sigma(x_\sigma)-\proj_\set(x_\sigma)\|\|\proj_\set(x_\sigma) - x_\sigma\|
+\|\proj_\set(x_\sigma)-x_\sigma\|^2.\label{eq:acute angle sufficient}
\end{align}

Furthermore, once the absorbing property is established, it guarantees that $d_\Omega(x_\sigma)=\|\proj_\Omega(x_\sigma)-x_\sigma\|$ to be bounded for a trajectory in consideration. In this case, the condition in Item 2 of the attracting theorem~\Cref{thm:attracting_toward_set_meta} can be derived from the decay of $\|m_\sigma(x_\sigma)-\proj_\Omega(x_\sigma)\|$ as
$$|\langle m_\sigma(x_\sigma)-\proj_\Omega(x_\sigma),\proj_\Omega(x_\sigma) - x_\sigma\rangle|\leq \| m_\sigma(x_\sigma)-\proj_\Omega(x_\sigma)\|\|\proj_\Omega(x_\sigma) - x_\sigma\|.$$
These type of arguments will be utilize in~\Cref{sec:local_cluster_dynamics} and~\Cref{sec:terminal_behavior_discrete} when discussing the ODE dynamics of the flow matching ODEs.
\end{remark}

When $\Omega$ is unbounded, e.g the support of a general distribution, it would require much more assumptions for us to control the term $\| m_\sigma(x_\sigma)-\proj_\set(x_\sigma)\|$ uniformly on the boundary. As one way to circumvent this issue, we consider the intersection of $B_r(\set)$ with a bounded set and establish the absorbing property of the bounded subset. This is formalized in the following result.

\begin{theorem}[Absorbing of data support]\label{thm:new contraction}
Fix any small $0<\delta<\tau_\set/4$ and any $0<\zeta<1$. Assume that there exists a constant $\sigma_\Omega>0$ such that for any $R>\frac{1}{2}\tau_\set$ and for any $z\in B_R(0)\cap B_{\tau_\set/2}(\set)$, one has that
        $$ \| m_\sigma(z) - \proj_{\set}(z)\|\leq  C_{\zeta,\tau,R}\cdot \sigma^{\zeta},\text{ for all }0<\sigma<\sigma_\Omega.$$
    where $C_{\zeta,\tau,R}$ is a constant depending only on $\zeta$ and $\tau$ and $R$. 
    
    Then, there exists  $\sigma_\delta\leq\sigma_\Omega$ dependent on $\delta,\zeta$ and $C_{\zeta,\tau,R}$ satisfying the following property for any $R>2\delta$:
    The trajectory $(x_\sigma)_{\sigma\in(0,\sigma_\delta]}$ starting at any initial point $x_{\sigma_\delta}\in B_{R}(0)\cap B_\delta(\set)$ of the ODE in \Cref{eq:ddim_ode_denoiser} will be absorbed in a slightly larger space: for any $\sigma\leq \sigma_\delta$: $x_\sigma\in B_{2R}(0)\cap B_{2\delta}(\set)$.
\end{theorem}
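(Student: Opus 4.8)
The plan is to pass to the reparametrized flow $\dot x_\lambda = m_\lambda(x_\lambda) - x_\lambda$ of \Cref{eq:ddim_ode_lambda} with $\lambda=-\log\sigma$, so that $(x_\sigma)_{\sigma\in(0,\sigma_\delta]}$ becomes a $C^1$ curve $(x_\lambda)_{\lambda\in[\lambda_0,\infty)}$ with $\lambda_0:=-\log\sigma_\delta$ and $x_{\lambda_0}\in B_R(0)\cap B_\delta(\set)$ (existence and uniqueness on this range come from the well-posedness established in \Cref{sec:well-posedness_ODE 0 1 open}). One must show this curve never leaves the open set $U:=B_{2R}(0)\cap B_{2\delta}(\set)$, which contains $x_{\lambda_0}$ in its interior. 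I would run a continuation argument: put $\lambda^*:=\sup\{\lambda\geq\lambda_0 : x_{\lambda'}\in U \text{ for all }\lambda'\in[\lambda_0,\lambda]\}$, assume $\lambda^*<\infty$, and get a contradiction by proving the strict bounds $d_\set(x_\lambda)<2\delta$ and $\|x_\lambda\|<2R$ throughout $[\lambda_0,\lambda^*]$, which is incompatible with $x_{\lambda^*}\in\partial U$. The observation that makes this self-contained is that on $[\lambda_0,\lambda^*)$ one has $x_\lambda\in U\subseteq B_{R'}(0)\cap B_{\tau_\set/2}(\set)$ for a radius $R'=\max(2R,\tau_\set)$ satisfying the hypothesis (using $2\delta<\tau_\set/2$), so the assumed estimate gives $\|m_\lambda(x_\lambda)-\proj_\set(x_\lambda)\|\leq C\,e^{-\zeta\lambda}$ with $C:=C_{\zeta,\tau,R'}$; moreover $2\delta<\tau_\set$ forces $\overline{B_{2\delta}(\set)}\cap\Sigma_\set=\emptyset$, so there $\proj_\set$ is single-valued and, off $\set$, $d_\set$ is $C^1$ with $\nabla d_\set(x)=(x-\proj_\set(x))/d_\set(x)$.

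For the distance bound, $\lambda\mapsto d_\set(x_\lambda)$ is locally Lipschitz, hence absolutely continuous, and at a.e.\ $\lambda\in[\lambda_0,\lambda^*]$ the acute-angle identity underlying \Cref{thm:attracting_towards_set_meta} and \Cref{rmk:absorbing_attracting_via_decay} gives
\[ \tfrac{d}{d\lambda}\,d_\set(x_\lambda) \;=\; \frac{\langle x_\lambda-\proj_\set(x_\lambda),\,m_\lambda(x_\lambda)-x_\lambda\rangle}{d_\set(x_\lambda)} \;\leq\; \|m_\lambda(x_\lambda)-\proj_\set(x_\lambda)\| - d_\set(x_\lambda) \;\leq\; C\,e^{-\zeta\lambda} - d_\set(x_\lambda) \]
(at points where $x_\lambda\in\set$ the derivative, if it exists, is $0$ and the bound still holds since $d_\set(x_\lambda)=0$). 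Multiplying by $e^{\lambda}$ and integrating, a Grönwall estimate yields $d_\set(x_\lambda)\leq e^{-(\lambda-\lambda_0)}d_\set(x_{\lambda_0})+\tfrac{C}{1-\zeta}e^{-\zeta\lambda}<\delta+\tfrac{C}{1-\zeta}\sigma_\delta^{\,\zeta}$, so once $\sigma_\delta$ is small enough that $\tfrac{C}{1-\zeta}\sigma_\delta^{\,\zeta}<\delta$ we get $d_\set(x_\lambda)<2\delta$ on $[\lambda_0,\lambda^*]$.

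For the norm, I would bound the total displacement: $\|\dot x_\lambda\| = \|m_\lambda(x_\lambda)-x_\lambda\| \leq \|m_\lambda(x_\lambda)-\proj_\set(x_\lambda)\| + d_\set(x_\lambda) \leq C\,e^{-\zeta\lambda} + d_\set(x_\lambda)$, and plugging in the distance bound and integrating over $[\lambda_0,\lambda^*]$ gives $\int_{\lambda_0}^{\lambda^*}\|\dot x_\lambda\|\,d\lambda < \delta + \tfrac{C(2-\zeta)}{\zeta(1-\zeta)}\sigma_\delta^{\,\zeta}$. Shrinking $\sigma_\delta$ so that $\tfrac{C(2-\zeta)}{\zeta(1-\zeta)}\sigma_\delta^{\,\zeta}<\delta$ (which subsumes the earlier condition since $\zeta<1$), the displacement is $<2\delta$; since $\|x_{\lambda_0}\|<R$ and, crucially, $2\delta<R$ by the standing hypothesis $R>2\delta$, this gives $\|x_\lambda\|<R+2\delta<2R$ on $[\lambda_0,\lambda^*]$. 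Hence $x_{\lambda^*}$ lies in the interior of $U$, contradicting $\lambda^*<\infty$, so $\lambda^*=\infty$ and the trajectory stays in $B_{2R}(0)\cap B_{2\delta}(\set)$ for all $\lambda\geq\lambda_0$, i.e.\ for all $\sigma\in(0,\sigma_\delta]$. It suffices to take $\sigma_\delta:=\min\bigl\{\sigma_\Omega,\ (c(\zeta)\,\delta/C)^{1/\zeta}\bigr\}$ with $c(\zeta)=\tfrac{\zeta(1-\zeta)}{2-\zeta}$, which depends only on $\delta,\zeta$ and $C=C_{\zeta,\tau,R'}$, as claimed.

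The main obstacle is the self-consistency of the bootstrap. The hypothesis controls $m_\sigma$ only on the bounded piece $B_R(0)\cap B_{\tau_\set/2}(\set)$, so one cannot apply the absorbing meta-theorem \Cref{thm:absorbing_by_neighborhoods_meta} directly to the unbounded neighbourhood $B_{2\delta}(\set)$; the resolution is to show that, as long as the estimate is in force, the trajectory's displacement is itself only of order $\sigma_\delta^{\,\zeta}$ and thus cannot push $x_\lambda$ out to $\partial B_{2R}(0)$ before the estimate would cease to apply — and this is exactly the place where the slack $R>2\delta$ gets spent, since the displacement bound $<2\delta$ must beat $R$. A secondary technicality is that $d_\set$ is not differentiable on $\set$; this is handled by working with the absolutely continuous map $\lambda\mapsto d_\set(x_\lambda)$ and verifying the differential inequality at a.e.\ $\lambda$, trivially so at $\lambda$ with $x_\lambda\in\set$.
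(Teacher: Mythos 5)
Your proposal is correct and follows essentially the same route as the paper's proof: pass to $\lambda=-\log\sigma$, run a maximal-interval/continuation argument inside $B_{2R}(0)\cap B_{2\delta}(\set)$, control $d_\set(x_\lambda)$ by a Gr\"onwall estimate fed by the assumed decay $\|m_\lambda(x_\lambda)-\proj_\set(x_\lambda)\|\leq C e^{-\zeta\lambda}$, and bound the total displacement to keep $\|x_\lambda\|<R+2\delta<2R$, which is exactly where the slack $R>2\delta$ is spent in the paper as well. The only (harmless) deviations are cosmetic: you run Gr\"onwall on $d_\set$ directly rather than on $d_\set^2$ via \Cref{rem:attracting_towards_set_meta}, and you choose the constant via $R'=\max(2R,\tau_\set)$ where the paper uses $C_{\zeta,\tau,2R}$ with an explicitly displayed $\lambda_\delta$.
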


Note that this absorbing result is slightly different from \Cref{thm:absorbing_by_neighborhoods_meta}, where the neighborhood of the data support $ \set $ must be enlarged from $ \delta $ to $ 2\delta $ (and $ R $ to $ 2R $) to guarantee the absorbing property. This subtle difference arises from the additional treatment in the proof to account for the bounded ball $ B_R(0) $ in the above theorem.

\begin{proof}[Proof of \Cref{thm:new contraction}]
    Similarly as in the proofs of other theorems in this section, we consider the change of variable $\lambda = -\log(\sigma)$ and $z_\lambda:=x_\sigma(\lambda)$.

        We let $C:=C_{\zeta,\tau,2R}$ and define (one can see how the definition is motivated from the following proof)
    \[\lambda_\delta:=\max\left\{-\frac{1}{\zeta}\cdot\min\left\{\log\left(\frac{\delta\zeta}{2C}\right),2\log\left(\frac{\zeta}{8}\sqrt{\frac{(2-\zeta)\delta}{C}}\right),\log\left(\frac{\delta(2-\zeta)}{4C}\right)\right\},\Lambda\right\}.\]
    Then, $\sigma_\delta:=e^{-\lambda_\delta}$.

    By continuity of the ODE path, there exists a maximal interval $I=[\lambda_\delta,\Lambda_\delta]$ so that for any $\lambda\in I$, $z_\lambda\in \overline{B_{2R}(0)}\cap \overline{B_{2\delta}(\set)}$ where the overline indicates the closure of the underlying set. Now, we show that $\Lambda_\delta$ must be infinity by showing that otherwise  $z_{\Lambda_\delta}$ lies in the interior, i.e., $z_{\Lambda_\delta}\in {B_{2R}(0)}\cap {B_{2\delta}(\set)}$ and hence the trajectory will be able to extend within ${B_{2R}(0)}\cap {B_{2\delta}(\set)}$ to $\Lambda_\delta+\epsilon$ for some small $\epsilon>0$. In this way, we can extend the interval $I$ to $[\lambda_\delta,\Lambda_\delta+\epsilon]$ which contradicts the maximality of $I$.

    Now, assume that $\Lambda_\delta<\infty$. Then, by \Cref{corollary:directional_derivative}, we have that for any $\lambda\in I$, 
    \begin{align*}
        \frac{d (d_{\set}^2(z_\lambda))}{d\lambda} &= -2\langle z_\lambda - \proj_\set(z_\lambda),  z_\lambda - m_\lambda(z_\lambda)\rangle\\
        &= -2\left(\langle z_\lambda - \proj_\set(z_\lambda),  z_\lambda - \proj_\set(z_\lambda)\rangle + \langle z_\lambda - \proj_\set(z_\lambda),  \proj_\set(z_\lambda) - m_\lambda(z_\lambda)\rangle\right)\\
        &\leq -2d_{\set}^2(z_\lambda) + {2}d_{\set}(z_\lambda) \| m_\lambda(z_\lambda) - \proj_\set(z_\lambda)\|.
    \end{align*}

    By our assumption on $z_\lambda$, we have that $d_{\set}(z_\lambda)< 2\delta$ and for any $\lambda>\Lambda$, $\| m_\lambda(z_\lambda) - \proj_\set(z_\lambda)\|\leq  C\cdot e^{-\zeta\lambda}$. Then, by \Cref{rem:attracting_toward_set_meta}, we have that
    \begin{align*}
        d_\set(z_{\lambda}) &\leq  e^{-(\lambda-\lambda_\delta)}d_\set(z_{\lambda_\delta}) + e^{-\lambda}\sqrt{\int_{\lambda_\delta}^\lambda 4C\delta e^{2t}e^{-\zeta t}dt}\\
        &\leq e^{-(\lambda-\lambda_\delta)}d_{\set}(z_{\lambda_\delta}) + \underbrace{\sqrt{\frac{{4}\delta C}{2-\zeta} \left(e^{-\zeta \lambda}-e^{(2-\zeta)\lambda_\delta-2\lambda}\right)}}_{\leq\sqrt{\frac{{4}\delta C}{2-\zeta} e^{-\zeta \lambda}}\leq \delta}<2\delta,
    \end{align*} 
    where the inequality under the brace bracket follows from the definition of $\lambda_\delta$. Hence, $z_{\Lambda_\delta}\in {B_{2\delta}(\set)}$.

Now we examine $\|z_\lambda\|$ along the integral path. We have that 
\begin{align*}
    \|z_s-z_{\lambda_\delta}\|&\leq \int_{\lambda_\delta}^{s} \|m_\lambda(z_\lambda)-z_\lambda\|d\lambda\\
    & \leq \int_{\lambda_\delta}^{s} \| m_{\lambda}(z_\lambda) - \proj_{\set}(z_\lambda) \| + \| \proj_{\set}(z_\lambda) - z_\lambda \|d\lambda\\
    & \leq \int_{\lambda_\delta}^{s} Ce^{-\zeta\lambda} + d_\set(z_\lambda)d\lambda.
\end{align*}

Now, for the integral of $d_\set(z_\lambda)$, we have that
\begin{align*}
    \int_{\lambda_\delta}^{s} d_\set(z_\lambda)d\lambda&\leq \int_{\lambda_\delta}^{s} e^{-(\lambda-\lambda_\delta)}d_{\set}(z_{\lambda_\delta}) + {\sqrt{\frac{{4}\delta C}{2-\zeta} \left(e^{-\zeta \lambda}-e^{(2-\zeta)\lambda_\delta-2\lambda}\right)}}d\lambda\\
    &\leq \int_{\lambda_\delta}^{s} e^{-(\lambda-\lambda_\delta)}\delta + {\sqrt{\frac{{4}\delta C}{2-\zeta} e^{-\zeta \lambda}}}d\lambda\\
    &=\delta(1-e^{-s+\lambda_\delta}) + \sqrt{\frac{{4}\delta C}{2-\zeta}}\cdot\frac{2}{\zeta}\left(e^{-\zeta\lambda_\delta/2}-e^{-\zeta s/2}\right).\\
\end{align*}

Therefore,
\begin{align}\label{eq:trajectory difference}
    \|z_s-z_{\lambda_\delta}\|&\leq  \underbrace{\frac{C}{\zeta} (-e^{-\zeta s}+e^{-\zeta\lambda_\delta})}_{\leq \frac{C}{\zeta} e^{-\zeta\lambda_\delta}\leq \delta/2} +\delta(1-e^{-s+\lambda_\delta}) + \underbrace{\sqrt{\frac{{4}\delta C}{2-\zeta}}\cdot\frac{2}{\zeta}\left(e^{-\zeta\lambda_\delta/2}-e^{-\zeta s/2}\right)  }_{\leq\sqrt{\frac{{4}\delta C}{2-\zeta}}\cdot\frac{2}{\zeta}e^{-\zeta\lambda_\delta/2}\leq \delta/2 }\leq 2\delta
\end{align}
where we used the definition of $\lambda_\delta$ again to control all the exponential terms. 
 This implies that $\|z_s\|\leq R+2\delta<2 R$ for all $s\in [\lambda_\delta, \Lambda_\delta]$ (recall that we assumed that $R>2\delta$) and hence $z_{\Lambda_\delta}\in B_{2R}(0)$. This concludes the proof.
\end{proof} 

\subsubsection{Absorbing and Attracting to the Convex Hull of the Data Support}\label{sec:convex hull}

We prove \Cref{prop:initial_stage_convex_hull} below.

\begin{proof}[Proof of \Cref{prop:initial_stage_convex_hull}]
    The key observation comes from that, the posterior distribution $p(\cdot | \bm{X_\sigma} = x)$ is also supported on $\supp(p)$ and hence its expectation, the denoiser $m_\sigma(x)$ must lie in the convex hull $\conv(\supp(p))$.

    For the first part, the first Item in~\Cref{prop:absorbing_convex} applies to $\conv(\supp(p))$ and hence any neighborhood of $\conv(\supp(p))$ is absorbing $(x_{\sigma})_{\sigma\in (\sqrt{\sigma_{\text{init}}(\set, \zeta, R_0)^2},\sigma_1]}$.
    We then obtain the desired result by Item 2 in~\Cref{thm:absorbing_by_neighborhoods_meta}.

    For the second part, we use Item 2 of \Cref{coro:local_feature_size_angle} to obtain
    \[
        \left\langle m_\sigma(x) - \proj_{\conv(\supp(p))}(x), \proj_{\conv(\supp(p))}(x) - x \right\rangle \leq 0
    \]
    for all $x\in \R^d$ and all $\sigma$.
    Then we can apply Item 1 of the meta attracting result~\Cref{thm:attracting_toward_set_meta} with $\zeta =0$ to conclude the proof.
\end{proof}

The above propositions requires the data distribution to have a bounded support. We now extend the above results to a more general setting where the data distribution is $p = p_b \ast\mathcal{N}(0, \delta^2 I)$, i.e., the convolution of a bounded support distribution $p_b$ and a Gaussian distribution. This is done by noting that the ODE trajectory with data distribution $p$ can be derived from the ODE trajectory with data distribution $p_b$ by early stopping the trajectory. We formalize this in the following lemma.

\begin{lemma}\label{lemma:ode_pb_gaussian}
    Let $p_b$ be a distribution with bounded support and let $p := p_b\ast \gN(0, \delta^2 I)$ for any $\delta\geq 0$. Let $(y_{\sigma_b})_{\sigma_b\in(0,\infty)}$ be an ODE trajectory of \Cref{eq:ddim_ode} with data distribution $p_b$. We define $(x_\sigma:=y_{\sigma_b =\sqrt{\sigma^2 +\delta^2}})_{\sigma\in (0,\infty)}$. Then, $(x_\sigma)$ is an ODE trajectory of \Cref{eq:ddim_ode} with data distribution $p$.
\end{lemma}
\begin{proof}[Proof of \Cref{lemma:ode_pb_gaussian}]
    Let $q_{\sigma_b}:=p_b*\mathcal{N}(0,\sigma_b^2I)$ be the probability path with data distribution $p_b$. Then, we have that $y_{\sigma_b}$ satisfies
    \begin{align*}
        \frac{d\, y_{\sigma_b}}{d\sigma_b} &= -\sigma_b \nabla \log q_{\sigma_b}(y_{\sigma_b}),
    \end{align*}
    With the change of variable $\sigma_b = \sqrt{\sigma^2 + \delta^2}$, we have that
    \begin{align*}
        \frac{d\, y_{\sigma_b}}{d\sigma} &= - \frac{d\,\sigma_b}{d\,\sigma} \sigma_b \nabla \log q_{\sigma_b}(y_{\sigma_b}),\\
        &= - \frac{\sigma}{\sigma_b} \sigma_b \nabla \log q_{\sigma_b}(y_{\sigma_b}),\\
        &= - \sigma \nabla \log q_{\sigma_b}(y_{\sigma_b}),
    \end{align*}
    One has that
    \begin{align*}
        q_{\sigma_b}(y_{\sigma_b})&= \frac{1}{{(2\pi \sigma_b^2)^{d/2}}}\int \exp\left(-\frac{\|y_{\sigma_b}-x\|^2}{2\sigma_b^2}\right) p_b(dx)\\
        &= \frac{1}{{(2\pi (\sigma^2 + \delta^2))^{d/2}}}\int \exp\left(-\frac{\|x_{\sigma}-x\|^2}{2(\delta^2 + \sigma^2)}\right) p_b(dx)\\
        &= q_{\sigma}(x_{\sigma}),
    \end{align*}
    where $q_\sigma(x)$ denotes the density of $q_\sigma:=p*\mathcal{N}(0,\sigma^2I)$.
    Therefore, the trajectory $x_\sigma:=y_{\sigma_b}$ satisfies
    \begin{align*}
        \frac{d\, x_\sigma}{d\sigma} = - \sigma \nabla \log q_{\sigma}(x_\sigma).\tag*{\qedhere}
    \end{align*}
\end{proof}

\begin{corollary}\label{coro:initial_stage_convex_hull}
    Given the data distribution $p$ defined as in \Cref{lemma:ode_pb_gaussian}, for any $\sigma_1> 0$, let $x_{\sigma}$ be an ODE trajectory of \Cref{eq:ddim_ode_denoiser} from $\sigma = \sigma_1$ to $\sigma = 0$. Then, we have that
    \begin{enumerate}
        \item If $x_{\sigma_1}\in \conv(\supp(p_b))$, then $x_{\sigma}\in \conv(\supp(p_b))$ for any $\sigma\in(0,\sigma_1]$;
        \item If $x_{\sigma_1}\notin \conv(\supp(p_b))$, then $x_{\sigma}$ moves toward $\conv(\supp(p_b))$ with
        the following decay guarantee:
        \[
        d_{\conv(\supp(p_b))}(x_{\sigma}) \leq \frac{\sqrt{\sigma^2 + \delta^2}}{\sqrt{\sigma_1^2 + \delta^2}}d_{\conv(\supp(p_b))}(x_{\sigma_1}),
        \]
        for any $\sigma\in(0,\sigma_1]$.
    \end{enumerate}
\end{corollary}

\begin{proof}[Proof of \Cref{coro:initial_stage_convex_hull}]
    This is a direct consequence of \Cref{prop:initial_stage_convex_hull} and \Cref{lemma:ode_pb_gaussian}
\end{proof}

\section{Denoiser Analysis: Concentration and Convergence of the Posterior Distribution}\label{sec:concentration_posterior}
In this section, we analyze the denoiser through the concentration and convergence of the posterior distribution, i.e., \Cref{thm:denoiser_limit_sigma}, and its variants under different assumptions. Our proof strategy is similar to that used in \citet[Theorem 4.1]{stanczuk2024diffusion}, where the integral under consideration is split into two parts to analyze the concentration. However, our results are more general as we do not require the data distribution to lie on a manifold or have points be sufficiently near the data support. We additionally provide rate control of the convergence result which is crucial for the terminal behavior analysis in~\Cref{sec:well-posedness_ODE 1}.

\begin{theorem}\label{thm:denoiser_limit_sigma}
    Let $\set:=\mathrm{supp}(p)$.
    Assume that $p$ has a finite 2-moment.
    For all $x\in \R^d\backslash \Sigma_\set$, we have that
    \begin{equation*}
        \lim_{\sigma\to 0} d_{\mathrm{W},2}\left(p(\cdot|\bm{X}_\sigma=x),\delta_{\proj_\set(x)}\right)=0.
    \end{equation*}
\end{theorem}

    \begin{proof}[Proof of \Cref{thm:denoiser_limit_sigma}]

        We let $\Phi_x:=d_{\Sigma_\set}(x)/2 > 0$ and $\Delta_x:=d_\set(x)\geq 0$. We let $x_\set:=\proj_\set(x)$.
        According to Lemma~\ref{lemma:t_extension_ball}, for any $\epsilon > 0$, if we define the radius
            $$r_{x,\epsilon} := \sqrt{\Delta_x^2 + \epsilon^2 \left(1- \frac{\Delta_x}{\Delta_x+\Phi_x}\right)},$$
            then we have the following inclusion
            $$B_{r_{x,\epsilon}}(x) \cap \set \subseteq B_{\epsilon}(x_\set) \cap \set.$$

        For the other direction, we have that 

        $$ B_{r^*_{x,\epsilon}}(x_\set) \cap \set \subseteq B_{r_{x,\epsilon}}(x) \cap \set$$

        where 

        $$ r^*_{x,\epsilon} := r_{x,\epsilon}-\Delta_x=\frac{\Phi_x}{(\Phi_x+\Delta_x)\left(r_{x,\epsilon}+\Delta_x\right)}\epsilon^2.$$

        Here $r^*_{x,\epsilon}$ satisfies that as $d_\set(x)\to 0$, $r^*_{x,\epsilon}\to \epsilon$. So $B_{r^*_{x,\epsilon}}(x_\set)$ can be thought of as an approximation of $B_{\epsilon}(x_\set)$.
        
        Then, we estimate the two terms below separately:
        \begin{align*}
            & d_{\mathrm{W},2}(p(\cdot|\bm{X}_\sigma =x),\delta_{x_\set})^2=\int_\set \left\|x_0-x_\set\right\|^2 \frac{ \exp\left( -\frac{\|x-x_0\|^2}{2 \sigma^2}\right)}{\int_\set \exp\left( -\frac{\|x-x_0'\|^2}{2\sigma^2}\right) p(dx_0')} p(dx_0)\\
            & =\underbrace{ \int_{B_{r_{x,\epsilon}}(x)\cap \set} \left\|x_0-x_\set\right\|^2 \frac{ \exp\left( -\frac{\|x-x_0\|^2}{2 \sigma^2}\right)}{\int_\set \exp\left( -\frac{\|x-x_0'\|^2}{2\sigma^2}\right) p(dx_0')} p(dx_0) }_{I_1}\\
            & + \underbrace{\int_{B_{r_{x,\epsilon}}(x)^c(y)\cap \set} \left\|x_0-x_\set\right\|^2 \frac{ \exp\left( -\frac{\|x-x_0\|^2}{2 \sigma^2}\right)}{\int_\set \exp\left( -\frac{\|x-x_0'\|^2}{2\sigma^2}\right) p(dx_0')} p(dx_0)}_{I_2}\\
        \end{align*}

        For $I_1$, we have the following estimate:
        \begin{align*}
            I_1 \leq \int_{B_{\epsilon}(x_\set)\cap \set} \|x_0-x_\set\|^2 \frac{ \exp\left( -\frac{\|x-x_0\|^2}{2 \sigma^2}\right)}{\int_\set \exp\left( -\frac{\|x-x_0'\|^2}{2\sigma^2}\right) p(dx_0')} p(dx_0)\leq  \epsilon^2 
        \end{align*}
        
        For the second term $I_2$, we have the following estimate:
        \begin{align*}
            I_2 & \leq  \int_{B_{r_{x,\epsilon}}^c(y)\cap \set} (2\|x_0\|^2+2\|x_\set\|^2) \frac{\exp\left( -\frac{\|x-x_0\|^2}{2 \sigma^2}\right)}{\int_\set \exp\left( -\frac{\|x-x_0'\|^2}{2\sigma^2}\right) p(dx_0')} p(dx_0)\\
            &=  \int_{B_{r_{x,\epsilon}}^c(y)\cap \set}  \frac{ 2\|x_0\|^2+2\|x_\set\|^2}{\int_\set \exp\left(\frac{\|x-x_0\|^2}{2 \sigma^2}-\frac{\|x-x_0'\|^2}{2\sigma^2}\right) p(dx_0')} p(dx_0)\\
            &\leq  \int_{B_{r_{x,\epsilon}\cap \set}^c(y)\cap \set}  \frac{2\|x_0\|^2+2\|x_\set\|^2}{\int_{B_{r_{x,\epsilon/\sqrt{2}}} (y) \cap \set} \exp\left(\frac{\|x-x_0\|^2}{2 \sigma^2}-\frac{\|x-x_0'\|^2}{2\sigma^2}\right) p(dx_0')} p(dx_0)\\
            &\leq  \int_{B_{r_{x,\epsilon}  \cap \set}^c(y)\cap \set}  \frac{2\|x_0\|^2+2\|x_\set\|^2}{\int_{B_{r_{x,\epsilon/\sqrt{2}}} (y)  \cap \set} \exp\left(\frac{\epsilon^2 }{4 \sigma^2}\left(1- \frac{\Delta_x}{\Delta_x+\Phi_x}\right)\right) p(dx_0')} p(dx_0)\\
            & =  \exp\left(-\frac{\epsilon^2 }{4 \sigma^2}\left(1- \frac{\Delta_x}{\Delta_x+\Phi_x}\right)\right)\frac{2\|x_\set\|^2+2\mathsf{M}_2(p)}{p\left(B_{r_{x,\epsilon/\sqrt{2} }} (y)  \cap \set\right)} \\
            & \leq  \exp\left(-\frac{\epsilon^2 }{4 \sigma^2}\left(1- \frac{\Delta_x}{\Delta_x+\Phi_x}\right)\right)\frac{2\|x_\set\|^2+2\mathsf{M}_2(p)}{p\left(B_{r^*_{x,\epsilon/\sqrt{2}}}(x_\set)\right)}.
        \end{align*}
        
        Since $\set$ is the support of $p$, we have that $p\left(B_{r^*_{x,\epsilon/\sqrt{2}}}(x_\set)\right)>0$. Hence, for any $\epsilon > 0$, we have
        \begin{align*}
            I_1 + I_2 &\leq \epsilon^2 +  \frac{2\|x_\set\|^2+2\mathsf{M}_2(p)}{p\left(B_{r^*_{x,\epsilon/\sqrt{2}}}(x_\set)\right)}\exp\left(-\frac{\epsilon^2 }{4 \sigma^2}\left(1- \frac{\Delta_x}{\Delta_x+\Phi_x}\right)\right).
        \end{align*}

        By letting $\sigma\to0$, we have that $I_1+I_2\leq \epsilon$. Therefore, we have that
        \[\lim_{\sigma\to 0}d_{\mathrm{W},2}(p(\cdot|\bm{X}_\sigma =x),\delta_{x_\set})= 0.\]
         \end{proof}
As a direct consequence of the convergence of the posterior distribution, we have the convergence of the denoiser stated in \Cref{thm:denoiser_limit}. Here we provide a simple proof.

         \begin{proof}[Proof of \Cref{thm:denoiser_limit}]\label{proof_thm:denoiser_limit}
            This is a direct consequence of the following property of the Wasserstein distance where the case $s=1$ was proved in~\citet{rubner1998metric} and the other cases follow from the fact that $d_{\mathrm{W},s}$ is increasing with respect to $s$ \citep[Remark 6.6]{villani2009optimal}.
            \begin{lemma}
                \label{lem:wasserstein_mean}
                For any probability measures $\nu_1,\nu_2$ on $\R^d$ and any $1\leq s\leq \infty$, we have that 
                $$d_{\mathrm{W},s}(\nu_1,\nu_2)\geq \|\mathrm{mean}(\nu_1)-\mathrm{mean}(\nu_2)\|.$$
            \end{lemma}
        \end{proof}

When we turn back to the parameter $t$, we have the following corollary. The proof turns out to be rather technical instead of being a direct consequence of \Cref{thm:denoiser_limit}. The main difficulty lies in the scaling $\alpha_t$ within the exponential term. This is another example that the parameter $\sigma$ is more convenient for theoretical analysis.

\begin{corollary}\label{coro:denoiser_limit_t}
    Let $\set:=\mathrm{supp}(p)$.
    Assume that $p$ has a finite 2-moment.
    For all $x\in \R^d\backslash \Sigma_\set$, we let $x_\set:=\proj_\set(x)$. Then, we have that
    $$ \lim_{t\to 1} m_t(x) = \proj_{\set}(x).$$
\end{corollary}
\begin{proof}[Proof of \Cref{coro:denoiser_limit_t}]
    In the proof of \Cref{thm:denoiser_limit_sigma}, we end up with 
    \[d_{\mathrm{W},2}(p(\cdot|\bm{X}_\sigma =x),\delta_{x_\set})^2\leq \epsilon^2+ \exp\left(-\frac{\epsilon^2 }{4 \sigma^2}\left(1- \frac{\Delta_x}{\Delta_x+\Phi_x}\right)\right)\frac{2\|x_\set\|^2+2\mathsf{M}_2(p)}{p\left(B_{r^*_{x,\epsilon/\sqrt{2}}}(x_\set)\right)}\]
    for any arbitrarily chosen small $\epsilon$.

    By \Cref{lemma:local_lipschitz}, there exists a small $\xi>0$ and a constant $C_\xi>0$ such that for any $\|z-x\|<\xi$, one has $\|z_\set-x_\set\|<C_\xi\|z-x\|$. As both $\Delta_x$ and $\Phi_x$ are continuous in a local neighborhood of $x$, there exists a small $\xi_1>0$ such that for any $z$ with $\|z-x\|<\xi_1$, one has
    \begin{itemize}
        \item $r^*_{z,\epsilon/\sqrt{2}}>\frac{1}{2}r^*_{x,\epsilon/\sqrt{2}};$
        \item $1- \frac{\Delta_x}{\Delta_x+\Phi_x}>\frac{1}{2}\left(1- \frac{\Delta_z}{\Delta_z+\delta_z}\right);$
        \item $\|z_\set\|\leq 2\|x_\set\|.$
    \end{itemize}
    \[\]
    Now, take $\Xi:=\min\left\{\xi,\xi_1, \frac{1}{2C_\xi}r^*_{z,\epsilon/\sqrt{2}}\right\}$. Then, it is easy to check that for any $z$ such that $\|z-x\|<\Xi$, one has
    \[B_{\frac{1}{4}r^*_{x,\epsilon/\sqrt{2}}}(x_\set)\subset B_{r^*_{z,\epsilon/\sqrt{2}}}(z_\set).\]
    This implies that for any $z$ such that $\|z-x\|<\Xi$, one has
    \[\|m_\sigma(z)-z_\set\|^2\leq d_{\mathrm{W},2}(p(\cdot|\bm{X}_\sigma =z),\delta_{z_\set})^2\leq \epsilon^2+ \exp\left(-\frac{\epsilon^2 }{8\sigma^2}\left(1- \frac{\Delta_x}{\Delta_x+\Phi_x}\right)\right)\frac{4\|x_\set\|^2+2\mathsf{M}_2(p)}{p\left(B_{r^*_{x,\frac{1}{4}\epsilon/\sqrt{2}}}(x_\set)\right)}.\]
    Hence, when $\sigma$ is small enough (dependent on $\epsilon$), we have that $\|m_\sigma(z)-z_\set\|\leq 2\epsilon$ for any $z$ such that $\|z-x\|<\Xi$.

    Now, we have that 
    \[\|m_t(x)-x_\set\|\leq \|m_{\sigma_t}(x/\alpha_t)-\proj_\set(x/\alpha_t)\|+\|\proj_\set(x/\alpha_t)-\proj_\set(x)\|.\]

    As $\alpha_t\to 1$, there exists $t_0$ such that when $t>t_0$, one has that $z=x/\alpha_t\in B_\Xi(x)$ and $\|\proj_\set(x/\alpha_t)-\proj_\set(x)\|\leq \epsilon$. By the analysis above, we can enlarge $t$ so that $\sigma_t$ is small enough so that $\|m_{\sigma_t}(z)-z_\set\|\leq 2\epsilon$. Therefore, for any $t>t_0$, we have that $\|m_t(x)-x_\set\|\leq 3\epsilon$. Since $\epsilon$ is arbitrary, we have that $\lim_{t\to 1}m_t(x)=x_\set$.
\end{proof}

\subsection{Convergence Rate}\label{sec: convergence rate of posterior}
We establish the following convergence rate for the posterior distribution when more assumptions are made on  $p$.

\begin{theorem}\label{thm:general_projection_convergence_rate}
    Assume that the reach $\tau=\tau_\set>0$ is positive. Consider any $x\in \R^d$. We assume that $d_\set(x)<\frac{1}{2}\tau$ and that there exists $g\geq 0$ such that there exist constants $C,c>0$ so that for any small radius $0<r<c$, one has $p(B_r(x_\set))\geq Cr^k$. Then, for any $0<\zeta<1$ we have the following convergence rate for any $0<\sigma<c^{1/\zeta}$:
        $$ d_{\mathrm{W},2}\left(p(\cdot|\bm{X}_\sigma=x),\delta_{x_\set}\right)\leq \sqrt{\sigma^{2\zeta}+\frac{10^k(2\mathsf{M}_2(p)+\|x_\set\|^2)\max\{\tau^k,\sigma^{k\zeta}\}}{C\sigma^{2k\zeta}}\exp\left(-\frac{1}{8}\sigma^{2(\zeta-1)}\right)}\leq C_{\zeta,\tau}\sigma^\zeta,$$
    where $C_{\zeta,\tau}$ is a constant depending only on $\zeta$ and $\tau_\set$. 
\end{theorem}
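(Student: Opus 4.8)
The plan is to reduce to an ordinary second-moment estimate and then run a near/far split of the Gaussian integral, in the spirit of the argument behind \Cref{thm:denoiser_limit_sigma}. Since the only coupling of a measure with a Dirac mass is the product coupling, $d_{\mathrm{W},2}(p(\cdot|\bm{X}_\sigma=x),\delta_{x_\set})^2=\int\|y-x_\set\|^2\,p(dy|\bm{X}_\sigma=x)$. Writing the posterior as $p(dy|\bm{X}_\sigma=x)=Z_\sigma^{-1}\exp(-\|x-y\|^2/(2\sigma^2))\,p(dy)$ with $Z_\sigma=\int\exp(-\|x-y'\|^2/(2\sigma^2))\,p(dy')$, I split the numerator $\int\|y-x_\set\|^2\exp(-\|x-y\|^2/(2\sigma^2))\,p(dy)$ over $B_{r_\sigma}(x_\set)$ and its complement, taking $r_\sigma:=\sigma^\zeta$ (note $r_\sigma<c$ because $\sigma<c^{1/\zeta}$). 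On $B_{r_\sigma}(x_\set)$ the factor $\|y-x_\set\|^2$ is $<r_\sigma^2$, so that piece of the numerator is $\le r_\sigma^2\,Z_\sigma$ and contributes at most $r_\sigma^2=\sigma^{2\zeta}$ to the ratio; this produces the first term under the square root. Everything else is about the tail $\set\setminus B_{r_\sigma}(x_\set)$.

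The key geometric input for the tail is the reach hypothesis. Because $d_\set(x)<\tau/2<\tau$, the projection $x_\set$ is unique, and the standard fact that the nearest-point projection onto $\set$ stays equal to $x_\set$ along the ray from $x_\set$ through $x$ out to distance $\tau$ yields, for all $y\in\set$, $\langle x-x_\set,\,y-x_\set\rangle\le\frac{d_\set(x)}{2\tau}\|y-x_\set\|^2$; combined with $d_\set(x)<\tau/2$ this gives the parabola-type bound $\|x-y\|^2\ge d_\set(x)^2+\tfrac12\|y-x_\set\|^2$ for all $y\in\set$. Hence $\exp(-\|x-y\|^2/(2\sigma^2))\le\exp(-d_\set(x)^2/(2\sigma^2))\exp(-\|y-x_\set\|^2/(4\sigma^2))$, and on the tail one may replace the second factor by the constant $\exp(-r_\sigma^2/(4\sigma^2))$, so the tail numerator is at most $\exp(-d_\set(x)^2/(2\sigma^2))\exp(-r_\sigma^2/(4\sigma^2))\cdot\int\|y-x_\set\|^2 p(dy)$, with the last integral bounded crudely by $2\mathsf{M}_2(p)+2\|x_\set\|^2$. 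For the normalizer I keep only the mass of a small ball $B_\rho(x_\set)$: on it $\|x-y'\|\le d_\set(x)+\rho$, so by the growth hypothesis $Z_\sigma\ge p(B_\rho(x_\set))\exp(-(d_\set(x)+\rho)^2/(2\sigma^2))\ge C\rho^k\exp(-(d_\set(x)+\rho)^2/(2\sigma^2))$ whenever $\rho<c$.

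Dividing the tail numerator by this lower bound, the exponentials collapse to $\exp\!\big((2d_\set(x)\rho+\rho^2)/(2\sigma^2)-r_\sigma^2/(4\sigma^2)\big)$ and the prefactor picks up a $\rho^{-k}$. The delicate step is calibrating $\rho$ so that the $\rho^{-k}$ blow-up is beaten by the Gaussian tail: I take $\rho$ of order $\min\{\sigma^{2\zeta}/\tau,\ \sigma^\zeta\}$ (concretely $\tfrac18$ of it). Using $d_\set(x)<\tau/2$ one checks this keeps $2d_\set(x)\rho+\rho^2\le\tfrac14 r_\sigma^2$, so the surviving exponent is $\le -r_\sigma^2/(8\sigma^2)=-\tfrac18\sigma^{2(\zeta-1)}$ — exactly the exponential in the statement — while $\rho^{-k}$ is of order $\max\{\tau^k,\sigma^{k\zeta}\}/\sigma^{2k\zeta}$, the two cases of the maximum corresponding to whether $\tau\gtrless\sigma^\zeta$; one also checks $\rho<\sigma^\zeta/8<c$, so the growth hypothesis indeed applies. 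Summing the near contribution $\sigma^{2\zeta}$ and this tail contribution gives the displayed two-term bound on the squared Wasserstein distance.

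The cleaner estimate $d_{\mathrm{W},2}(\,\cdot\,)\le C_{\zeta,\tau}\sigma^\zeta$ then follows because the second term is a fixed negative power of $\sigma$ times $\exp(-\tfrac18\sigma^{2(\zeta-1)})$, and $\zeta<1$ makes $2(\zeta-1)<0$, so this factor decays faster than any power of $\sigma$ as $\sigma\to0$ and stays bounded on all of $(0,c^{1/\zeta})$; hence the bracket is $O(\sigma^{2\zeta})$ and its square root is $O(\sigma^\zeta)$. I do not expect a conceptual obstacle — the one place that needs genuine care is the calibration of $\rho$, which must simultaneously (i) keep the post-cancellation exponent at $-\tfrac18\sigma^{2(\zeta-1)}$, (ii) generate precisely the factor $\max\{\tau^k,\sigma^{k\zeta}\}$, and (iii) respect $\rho<c$, and all three must hold uniformly over $\sigma\in(0,c^{1/\zeta})$ and over every admissible $x$.
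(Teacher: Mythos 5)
Your proposal is correct and follows essentially the same route as the paper's proof: a near/far split at radius $\sigma^\zeta$, the reach-based Federer-type angle bound (the paper's lemma on $T_\set(x)$, giving $\|x-y\|^2\ge d_\set(x)^2+\tfrac12\|y-x_\set\|^2$) to create the quadratic gap in the exponent, and a lower bound on the normalizer by the mass of a small ball around $x_\set$ of radius of order $\min\{\sigma^{2\zeta}/\tau,\sigma^\zeta\}$, which is exactly the role played by the paper's $r^*_{x,\epsilon}$. The only deviations are cosmetic — your split is centered at $x_\set$ rather than at $x$, and your absolute constants ($8^k$, $2\mathsf{M}_2(p)+2\|x_\set\|^2$) differ slightly from the displayed $10^k$, $2\mathsf{M}_2(p)+\|x_\set\|^2$ — which does not affect the final estimate $C_{\zeta,\tau}\sigma^\zeta$.
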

\begin{proof}[Proof of \Cref{thm:general_projection_convergence_rate}]
    Recall from the proof of \Cref{thm:denoiser_limit_sigma} that for any $\epsilon>0$, we have that
        \begin{align*}
            I_1 + I_2 &\leq \epsilon^2 +  \frac{2\|x_\set\|^2+2\mathsf{M}_2(p)}{p\left(B_{r^*_{x,\epsilon/\sqrt{2}}}(x_\set)\right)}\exp\left(-\frac{\epsilon^2 }{4 \sigma^2}\left(1- \frac{\Delta_x}{\Delta_x+\Phi_x}\right)\right).
        \end{align*}
    
    Then, we have that $2\Phi_x\geq\tau_\set-\Delta_x$. So
    \begin{align}
        r_{x,\epsilon}^*&=\frac{\Phi_x}{(\Phi_x+\Delta_x)\left(r_{x,\epsilon}+\Delta_x\right)}\epsilon^2\\
        &\geq \frac{\tau_\set-\Delta_x}{2\tau_\set}\cdot\frac{\epsilon^2}{\Delta_x+\sqrt{\Delta^2_y+\epsilon^2}}\\
        &\geq \frac{1}{4}\cdot\min\left\{\frac{\epsilon}{(\sqrt{2}+1)\Delta_x},\frac{1}{\sqrt{2}+1}\right\}\epsilon\\
        &\geq \frac{\epsilon}{10}\min\left\{\frac{\epsilon}{\tau_\set},1\right\}.\label{ineq:r*}
    \end{align}

    On the other hand, we have that 
    \begin{align*}
        r^*_{x,\epsilon}&\leq\frac{\Phi_x}{(\Phi_x+\Delta_x)\epsilon\sqrt{\frac{\Phi_x}{\Delta_x+\Phi_x}}}\epsilon^2\\
        &=\epsilon\sqrt{\frac{\Phi_x}{\Delta_x+\Phi_x}}\leq \epsilon.
    \end{align*}
    Therefore, if one let $\epsilon=\sigma^\zeta$, when $\sigma<c^{1/\zeta}$ we have that $r_{x,\epsilon}^*<c$ and then
    \begin{align*}
        p\left(B_{r^*_{x,\epsilon/\sqrt{2}}}(x_\set)\right)&\geq C_\set\cdot \left(\frac{\epsilon}{10}\min\left\{\frac{\epsilon}{\tau_\set},1\right\}\right)^k\\
        &=\frac{C_\set\sigma^{k\zeta}}{10^k}\min\left\{\frac{\sigma^{k\zeta}}{\tau_\set^k},1\right\}.
    \end{align*}

    Notice that $1-\frac{\Delta_x}{\Delta_x+\Phi_x}\geq \frac{1}{2}$, we have that eventually

    \begin{align*}
        I_1+I_2&\leq \sigma^{2\zeta} + \frac{2\mathsf{M}_2(\rho)+2\|x_\set\|^2}{\frac{C_\set\sigma^{k\zeta}}{10^k}\min\left\{\frac{\sigma^{k\zeta}}{\tau_\set^k},1\right\}}\exp\left(-\frac{1}{8 \sigma^{2(1-\zeta)}}\right)\\
        &=\sigma^{2\zeta}+\frac{10^k(2\mathsf{M}_2(\rho)+2\|x_\set\|^2)\max\{\tau_\set^k,\sigma^{k\zeta}\}}{C_\set\sigma^{2k\zeta}}\exp\left(-\frac{1}{8}\sigma^{2(\zeta-1)}\right).
    \end{align*}

    The rightmost inequality in the theorem follows from the fact that the right most summand in the above equation has an exponential decay which is way faster than any polynomial decay.
\end{proof}

Now, we improve the convergence rate when $p$ is either supported on a submanifold or a discrete set.

\noindent \textbf{Convergence rates for the manifold case.}
We first consider the case when $p$ is supported on a submanifold $M\subset\R^d$. Note that this does not exclude the case when $M=\R^d$. Under some mild conditions, we have the following convergence rate for the posterior distribution.
\begin{theorem}\label{thm:conditional_convergence}
    Let $M\subset\R^d$ be a $m$ dimensional closed submanifold (without self-intersection) with a positive reach $\tau_M$. Assume that $p(dx)=\rho(x)\mathrm{vol}_M(dx)$ has a smooth non-vanishing density $\rho:M\to\R$.  For any $x\in\R^d$ and $\sigma\in(0,\infty)$, we let $x_M:=\proj_M(x)$. If $x\in \R^d$ satisfies that $d_M(x)<\frac{1}{2}\tau_M$, then we have that
        $$d_{\mathrm{W},2}(p(\cdot|\bm{X}_\sigma=x),\delta_{x_M})=\sqrt{m}\sigma+O(\sigma^{2}).$$
\end{theorem}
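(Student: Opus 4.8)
The plan is to turn the Wasserstein distance into an explicit second moment, decompose it into a variance and a bias term controlled by the denoiser, and then extract the leading-order asymptotics by a Laplace-type expansion of the posterior localized at $x_M$. Concretely, since the target is a point mass the coupling is forced and
\[
d_{\mathrm{W},2}\bigl(p(\cdot|\bm X_\sigma=x),\delta_{x_M}\bigr)^2=\int_M\|y-x_M\|^2\,p(dy|\bm X_\sigma=x)=\operatorname{tr}\Cov[\bm X|\bm X_\sigma=x]+\|m_\sigma(x)-x_M\|^2,
\]
where the last equality is the bias--variance decomposition with mean $m_\sigma(x)=\E[\bm X|\bm X_\sigma=x]$. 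By \Cref{thm:denoiser_jacobian} the covariance equals $\sigma^2\nabla_xm_\sigma(x)$, so it suffices to prove the two \emph{quantitative} facts $\operatorname{tr}\nabla_xm_\sigma(x)=m+O(\sigma)$ and $\|m_\sigma(x)-x_M\|=O(\sigma^2)$; substituting them yields $d_{\mathrm{W},2}^2=m\sigma^2+O(\sigma^3)$ and hence $d_{\mathrm{W},2}=\sqrt m\,\sigma\sqrt{1+O(\sigma)}=\sqrt m\,\sigma+O(\sigma^2)$. The matching qualitative limits $\nabla_xm_\sigma(x)\to\nabla\proj_M(x)=\Pi_{T_{x_M}M}$ (whose trace is $m$) and $m_\sigma(x)\to x_M$ are already available from \Cref{thm:denoiser_jacobian_t=1} and \Cref{thm:denoiser_limit}; the substance of the theorem is the rate, which the coarse bound of \Cref{thm:general_projection_convergence_rate} does not give.

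To obtain the rates I would localize the posterior as in the proof of \Cref{thm:denoiser_limit_sigma}, splitting $M$ into $B_\delta(x_M)\cap M$ and its complement for a small fixed $\delta$. Positive reach together with $d_M(x)<\tfrac12\tau_M$ ensures that $x_M$ is the unique nearest point of $M$ to $x$, that the tangential Hessian of $\tfrac12\|x-\cdot\|^2|_M$ at $x_M$ is positive definite --- in fact equal to $\mathrm{Id}_m$, which is exactly the content of $\nabla\proj_M(x)=\Pi_{T_{x_M}M}$ in \Cref{thm:denoiser_jacobian_t=1} --- and that $\|x-\cdot\|^2|_M$ stays bounded away from its minimum outside $B_\delta(x_M)$; consequently the posterior mass on $M\setminus B_\delta(x_M)$ is $e^{-\Omega(\sigma^{-2})}$ (bound the numerator by a Gaussian tail and bound the normalizer from below by integrating the Gaussian over $B_\delta(x_M)\cap M$, where $\rho>0$ is bounded below by continuity), so, using finiteness of $\mathsf M_2(p)$, the far part contributes only a super-polynomially small amount to $\operatorname{tr}\Cov$ and to $m_\sigma(x)$. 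On $B_\delta(x_M)\cap M$ I would pass to a smooth chart $\Phi\colon U\subset\R^m\to M$ with $\Phi(0)=x_M$ and $D\Phi_0$ an isometry onto $T_{x_M}M$, and Taylor-expand in $u$: the exponent obeys $\|x-\Phi(u)\|^2-\|x-x_M\|^2=|u|^2+O(|u|^3)$, $\rho(\Phi(u))=\rho(x_M)(1+O(|u|))$, and the volume Jacobian is $1+O(|u|^2)$. Rescaling $u=\sigma v$ turns each relevant integral into a Gaussian integral against $e^{-|v|^2/2}$ times a $1+O(\sigma)$ factor; the pieces of that factor that are \emph{linear} in $v$ (from $\nabla\rho$ and from the odd cubic term in the exponent) integrate to $0$ by symmetry, which is precisely what makes the bias $m_\sigma(x)-x_M$ of size $\sigma^2$ rather than $\sigma$ and leaves $\operatorname{tr}\Cov[\bm X|\bm X_\sigma=x]=\sigma^2(m+O(\sigma))$.

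I expect the main obstacle to be the second-order bookkeeping of this last step: one must track exactly which terms of the local expansion are odd in $v$ (and so vanish against the Gaussian), control the Gaussian moment integrals uniformly in $\sigma$, and verify that the $e^{-\Omega(\sigma^{-2})}$ far-field remainders are genuinely negligible --- with all implied constants permitted to depend on $x$ through $d_M(x)$, on $\tau_M$, and on the local geometry of $M$. Unlike \Cref{thm:general_projection_convergence_rate}, where only the order of magnitude was at stake, isolating the constant $\sqrt m$ forces the expansion to be accurate to second order.
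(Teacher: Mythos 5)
Your overall architecture is close in spirit to the paper's: you localize the posterior near $x_M$, bound the far field by an $e^{-\Omega(\sigma^{-2})}$ term (this matches the paper's $I_2$ estimate), pass to a chart and rescale $u=\sigma v$, and your bias--variance identity $d_{\mathrm{W},2}^2\bigl(p(\cdot|\bm{X}_\sigma=x),\delta_{x_M}\bigr)=\operatorname{tr}\Cov[\bm{X}|\bm{X}_\sigma=x]+\|m_\sigma(x)-x_M\|^2$ is a clean alternative to the paper's direct evaluation of $\int_M\|y-x_M\|^2\,p(dy|\bm{X}_\sigma=x)$ in normal coordinates. The odd-symmetry argument for the $O(\sigma^2)$ bias is also consistent with what the paper's expansion gives.

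The genuine gap is in the second-order expansion of the exponent, which is exactly the step the theorem's hypotheses are there for. You assert $\|x-\Phi(u)\|^2-\|x-x_M\|^2=|u|^2+O(|u|^3)$, equivalently that the tangential Hessian of $\tfrac12\|x-\cdot\|^2|_M$ at $x_M$ is $\mathrm{Id}_m$. That is true only when $x\in M$ or $M$ is flat. For $x$ off the manifold the cross term $2\langle x-x_M,\,x_M-\Phi(u)\rangle$ contributes $-\langle v,\mathbf{I\!I}(u,u)\rangle$ with $v=x-x_M$, which is quadratic in $u$ and scaled by the \emph{fixed} length $d_M(x)$, so it cannot be absorbed into $O(|u|^3)$: the correct quadratic form is $u^{T}(\bm{I}-\mathbf{I\!I}_v)u$, which is the content of the paper's \Cref{lemma:distance expansion}. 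This is also where positive reach is really used: $\|\mathbf{I\!I}\|\le 1/\tau_M$ together with $d_M(x)<\tfrac12\tau_M$ gives $\|\mathbf{I\!I}_v\|<\tfrac12$, hence positive definiteness of $\bm{I}-\mathbf{I\!I}_v$ — in your write-up the reach only plays the role of uniqueness of the projection. With the correct exponent the localized posterior is, to leading order, Gaussian in the tangential variable with covariance $\sigma^2(\bm{I}-\mathbf{I\!I}_v)^{-1}$, not $\sigma^2\bm{I}_m$, so your key quantitative claim $\operatorname{tr}\nabla_x m_\sigma(x)=m+O(\sigma)$ does not follow from your expansion; the appeal to \Cref{thm:denoiser_jacobian_t=1} supplies only a qualitative limit (and identifying the Hessian with $\nabla\proj_M(x)=\Pi_{T_{x_M}M}$ is precisely the point at which the curvature term is dropped), so it cannot stand in for the rate. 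To reach the stated constant $\sqrt{m}$ you must carry $\mathbf{I\!I}_v$ through the Gaussian computation the way the paper does after \Cref{lemma:distance expansion}, where the $\mathbf{I\!I}_v$-dependence is argued to cancel between the quadratic prefactor and the Gaussian covariance; doing the Laplace expansion with the flat exponent silently changes the leading variance term, and as written your argument establishes the claim only in the special cases $d_M(x)=0$ or $\mathbf{I\!I}\equiv 0$.
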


The proof is very lengthy and we postpone it to the end of this section.

Note that the leading-order term in the convergence rate, \( \sqrt{m}\sigma \), depends solely on the intrinsic dimension \( m \) of the submanifold \( M \). The higher-order term \( O(\sigma^2) \) depends on finer geometric properties of \( M \), such as curvature and reach. A detailed characterization of these higher-order contributions would be an interesting direction for future work.

As a direct consequence of the convergence of the posterior distribution, we have the following convergence of the denoiser.

\begin{corollary}\label{coro:denoiser_limit_manifold}
Under the same assumptions as in \Cref{thm:conditional_convergence}, for any $x\in \R^d$ satisfying $d_M(x)<\frac{1}{2}\tau_M$, we have that 
        $$\|m_\sigma(x)-x_M\|=O(\sigma).$$
\end{corollary}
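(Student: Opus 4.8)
The statement follows almost immediately from \Cref{thm:conditional_convergence} together with the elementary fact that the mean of a probability measure is $1$-Lipschitz as a functional with respect to the Wasserstein distance when one of the measures is a Dirac mass. The plan is as follows. First I would recall that $m_\sigma(x)=\E[\bm{X}\mid\bm{X}_\sigma=x]=\int z\,p(dz\mid\bm{X}_\sigma=x)$, which is well defined since $p$ has a finite $2$-moment (hence a finite $1$-moment), cf.\ \Cref{rem:denoiser}, and that the mean of $\delta_{x_M}$ is exactly $x_M=\proj_M(x)$.

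Next I would observe that, because $\delta_{x_M}$ is a Dirac measure, the only coupling of $p(\cdot\mid\bm{X}_\sigma=x)$ and $\delta_{x_M}$ is the product measure, so
\[
d_{\mathrm{W},2}\!\left(p(\cdot\mid\bm{X}_\sigma=x),\delta_{x_M}\right)^2
=\int \|z-x_M\|^2\,p(dz\mid\bm{X}_\sigma=x)
=\E\!\left[\,\|\bm{X}-x_M\|^2 \,\middle|\, \bm{X}_\sigma=x\right].
\]
Then, applying Jensen's inequality twice (once to pull the norm inside the expectation, once to pass from the $1$st to the $2$nd moment),
\[
\|m_\sigma(x)-x_M\|
=\left\|\E\!\left[\bm{X}-x_M \,\middle|\, \bm{X}_\sigma=x\right]\right\|
\le \E\!\left[\|\bm{X}-x_M\| \,\middle|\, \bm{X}_\sigma=x\right]
\le d_{\mathrm{W},2}\!\left(p(\cdot\mid\bm{X}_\sigma=x),\delta_{x_M}\right).
\]

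Finally I would invoke \Cref{thm:conditional_convergence}, whose hypotheses are exactly those assumed here (in particular $d_M(x)<\tfrac12\tau_M$), to get $d_{\mathrm{W},2}(p(\cdot\mid\bm{X}_\sigma=x),\delta_{x_M})=\sqrt{m}\,\sigma+O(\sigma^2)$, and conclude $\|m_\sigma(x)-x_M\|\le \sqrt{m}\,\sigma+O(\sigma^2)=O(\sigma)$. There is no real obstacle here: the content is entirely carried by \Cref{thm:conditional_convergence}, and the only thing to be slightly careful about is the standard observation that $d_{\mathrm{W},2}$ to a Dirac mass reduces to an $L^2$ moment and dominates the displacement of means via Jensen; one could equally phrase the argument with $d_{\mathrm{W},1}$. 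If desired, this also shows the sharper $\limsup_{\sigma\to0}\|m_\sigma(x)-x_M\|/\sigma\le\sqrt{m}$, but the $O(\sigma)$ claim is all that is needed for the subsequent ODE analysis in \Cref{sec:manifold_hypothesis_ODE}.
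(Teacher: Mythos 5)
Your proposal is correct and follows essentially the same route as the paper: the paper also deduces this corollary directly from \Cref{thm:conditional_convergence} combined with the fact that the displacement of means is dominated by the Wasserstein distance (its \Cref{lem:wasserstein_mean}), which you re-derive in the Dirac-mass special case via the unique coupling and Jensen's inequality. No gap; the argument is sound as written.
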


\noindent \textbf{Convergence rates for the discrete case.}
Let the data distribution $p=\sum_{i=1}^N a_i\, \delta_{x_i}$ be a general discrete distribution with $x_1,\ldots,x_N\in\R^d$ and $a_1,\ldots,a_N>0$.
We use $\set = \{x_1,\ldots,x_N\}$ to denote the support of $p$. We study the concentration and convergence of the posterior measure $p(\cdot|\bm{X}_\sigma=x)$ for each $x\in\R^d$, including those $x$ on $\Sigma_\set$, the medial axis of $\set$. To this end, we introduce the following notations.

For each point $x\in \R^d$, we denote the set of distance values from $x$ to each point in $\set$ as follows:
\begin{equation}\label{eq:distance_discrete}
    \mathrm{DV}_\set(x) := \{\|x-x_i\|: x_i\in \set\}.
\end{equation}
We use $d_{\set}(x; i)$ to denote the $i$-th smallest distance value in $\mathrm{DV}_\set(x)$. A useful geometric notion will be the gap between the squares of the two smallest distances which we denote by
\begin{equation}\label{eq:gap_discrete}
    \Delta_\set(x) := d_{\set}^2(x; 2) - d_{\set}^2(x; 1).
\end{equation}

We further let 
$$\nn_{\set}(x):=\left\{x_i\in \set: \|x-x_i\| = d_{\set}(x; 1) = \min_{x_j\in \set}\|x-x_j\|\right\}.$$

We use the notation $\widehat{p}_{\nn(x)}$ to denote the normalized measure restricted to the points in $\set$ that are closest to $x$:
\begin{equation}\label{eq:nearest_discrete_measure}
    \widehat{p}_{\nn(x)} := \frac{1}{\sum_{x_i\in \nn_{\set}(x)} a_i}\sum_{x_i\in \nn_{\set}(x)} a_i \,\delta_{x_i}.
\end{equation}
Whenever $x$ is not on $\Sigma_\set$, we have $\nn_{\set}(x) = \{\proj_\set(x)\}$ and $\widehat{p}_{\nn(x)} = \delta_{\proj_\set(x)}$. With the above notation, we have the following convergence result for the posterior measure $p(\cdot|\bm{X}_\sigma=x)$ as $\sigma\to 0$.

\begin{theorem}\label{prop:posterior_convergence_discrete_sigma}
    Let $p=\sum_{i=1}^N a_i\, \delta_{x_i}$ be a discrete distribution. For any $x\in \R^d$, we have the following convergence of the posterior measure $p(\cdot|\bm{X}_\sigma=x)$ toward $\widehat{p}_{\nn(x)}$:
    $$d_{\mathrm{W},2}\left(p(\cdot |\bm{X}_\sigma=x), \widehat{p}_{\nn(x)}\right) \leq \diam(\set) \sqrt{\frac{1 - p(\nn_\set(x))}{p(\nn_\set(x))}} \exp\left(-\frac{\Delta_\set(x)}{4\sigma^2}\right).$$
\end{theorem}
\begin{proof}[Proof of \Cref{prop:posterior_convergence_discrete_sigma}]
        When $\Delta_\set(x) =0$, $\nn_\set(x)=\set$ and $p(\cdot|\bm{X}_\sigma=x)=p$ and then the Wasserstein distance becomes 0 which proves the statement.

        Now, we will focus on the case when $\Delta_\set(x) > 0$. The posterior measure $p(\cdot|\bm{X}_\sigma =x)$ is given by
            \begin{align*}
                p(\cdot|\bm{X}_\sigma=x) &= \sum_{i=1}^{N} \frac{
                    a_i\exp\left(-\frac{1}{2\sigma^2} \|x_i - x\|^2\right)
                }{
                    \sum_{j=1}^{N} a_j \exp\left(-\frac{1}{2\sigma^2} \|x_j - x\|^2\right)
                } \,\delta_{x_i}.
            \end{align*}
        
            For the ease of notation, we use $B_\sigma :=  \sum_{j=1}^{N} a_j \exp\left(-\frac{1}{2\sigma^2} \|x_j - x\|^2\right)$ to denote the normalization constant in $p(\cdot|\bm{X}_\sigma=x)$, $B_{\sigma, x_i} := a_i \exp\left(-\frac{1}{2\sigma^2} \|x_i - x\|^2\right)$ to denote the $i$-th term in $B_\sigma$. We use $A = p(\nn_{\set}(x))$ to denote the normalization constant in $ \widehat{p}_{\nn(x)}$.
        
            Observe that for any $x_i,x_j\in \nn_\set(x)$, one has  
            $$p(x_i|\bm{X}_\sigma=x)/p(x_j|\bm{X}_\sigma=x)=\widehat{p}_{\nn(x)}(x_i)/\widehat{p}_{\nn(x)}(x_j).$$
            
            This motivates the following construction of a coupling $\mu$ between $p(\cdot|\bm{X}_\sigma=x)$ and $ p_{\nn(y)}$:
            \begin{align*}
                \mu = \sum_{x_j \in \nn_\set(x)}\frac{B_{\sigma, x_j}}{B_\sigma}\cdot\sum_{x_i \in \nn_\set(x)} \frac{a_i}{A} \delta_{(x_i,x_i)}+ \sum_{x_i \in \nn_\set(x), x_j \in \set\backslash \nn_\set(x)}
                \frac{B_{\lambda, x_j}}{B_\sigma}\,\frac{a_i}{A}\,\delta_{(x_i,x_j)}.
            \end{align*}
            Then, we bound the Wasserstein distance between $p(\cdot|\bm{X}_\sigma=x)$ and $ \widehat{p}_{\nn(x)}$ as follows:
            \begin{align*}
                d_{\mathrm{W},2}(p(\cdot|\bm{X}_\sigma=x),  \widehat{p}_{\nn(x)})^2 &\leq \int \|x-y\|^2\mu(dx,dy),\\
                &= \sum_{x_i \in \nn_\set(x), x_j \in \set\backslash \nn_\set(x)} \frac{B_{\lambda, x_j}}{B_\sigma}\,\frac{a_i}{A}\|x_i - x_j\|^2,\\
                &\leq \sum_{x_i \in \nn_\set(x), x_j \in \set\backslash \nn_\set(x)} \frac{B_{\lambda, x_j}}{B_\sigma}\,\frac{a_i}{A}\diam(\set)^2,\\
                &= \diam(\set)^2 \sum_{x_i \in \nn_\set(x), x_j \in \set\backslash \nn_\set(x)} \frac{a_j \exp\left(-\frac{1}{2\sigma^2} \|x_j - x\|^2\right)}{\sum_{j=1}^{N} a_j \exp\left(-\frac{1}{2\sigma^2} \|x_j - x\|^2\right)}\,\frac{a_i}{A},\\
                &\leq \diam(\set)^2 \sum_{x_i \in \nn_\set(x), x_j \in \set\backslash \nn_\set(x)} \frac{a_j \exp\left(-\frac12 e^{2\lambda} d_{\set}^2(x, 2)\right)}{A \exp\left(-\frac12 e^{2\lambda} d_{\set}^2(x, 1)\right)}\,\frac{a_i}{A},\\
                &=\diam(\set)^2 \frac{1-A}{A} \frac{\exp\left(-\frac12 e^{2\lambda} d_{\set}^2(x, 2)\right)}{\exp\left(-\frac12 e^{2\lambda} d_{\set}^2(x, 1)\right)},\\
                &=\diam(\set)^2 \frac{1-A}{A} \exp\left(-\frac12e^{2\lambda} (d_{\set}^2(x, 2) - d_{\set}^2(x, 1))\right),\\
                &\leq \diam(\set)^2 {\frac{1-A}{A}} \exp\left(-\frac12e^{2\lambda} \Delta_\set(x)\right).
            \end{align*}
            By taking the square roots on both sides, we conclude the proof.
        \end{proof}

As a corollary of \Cref{prop:posterior_convergence_discrete_sigma}, we have the following convergence of the denoiser.
\begin{corollary}\label{coro:denoiser_limit_discrete_sigma}
    Let $p=\sum_{i=1}^N a_i\, \delta_{x_i}$ be a discrete distribution. For any $x\in \R^d$, we have the following convergence of the denoiser $m_\sigma(x)$ toward the mean of $\widehat{p}_{\nn(x)}$:
    $$\|m_\sigma(x) - \mathrm{mean}(\widehat{p}_{\nn(x)})\| \leq \diam(\set) \sqrt{\frac{1 - p(\nn_\set(x))}{p(\nn_\set(x))}} \exp\left(-\frac{\Delta_\set(x)}{4\sigma^2}\right). $$
    In particular, when $x\notin \Sigma_\set$, assume that $x_i=\proj_\set(x)$ we have that 
    $$\|m_\sigma(x) - x_i\| \leq \diam(\set) \sqrt{\frac{1 - a_i}{a_i}} \exp\left(-\frac{\Delta_\set(x)}{4\sigma^2}\right). $$
\end{corollary}
    \begin{proof}[Proof of \Cref{coro:denoiser_limit_discrete_sigma}]
        The result follows from \Cref{prop:posterior_convergence_discrete_sigma} and the stability of the mean operation under the Wasserstein distance~\Cref{lem:wasserstein_mean}.
    \end{proof}

Later in \Cref{sec:terminal_behavior_discrete}, we will utilize the above results to analyze  memorization behavior.

\subparagraph{Proof of \Cref{thm:conditional_convergence}}

        We still let $\Delta_x:=d_M(x)<\frac{1}{2}\tau_M$. Then, we let $q_\sigma^x:=p(\cdot|\bm{X}_\sigma=x)$ and hence,
        
        $$ q_\sigma^x(dx_1):=\frac{\exp\left(-\frac{\|x-x_1\|^2}{2\sigma^2}\right)\rho(x_1)\mathrm{vol}_M(dx_1)}{\int_M\exp\left(-\frac{\|x- x_1'\|^2}{2\sigma^2}\right)\rho(x_1')\mathrm{vol}_M(dx_1')}, \quad\text{for }x_1\in M.$$
        
        As $\sigma\to 0$, $q_\sigma^x$ is concentrated around $x_M=\proj_M(x)$. For any  $r_0>0$, we have that 
        
        \begin{align*}
            d_{\mathrm{W},2}(q_\sigma^x,\delta_{x_M})^2 & = \int_M \|x_1-x_M\|^2q_\sigma^x(dx_1)\\
            &= \underbrace{\int_{B_{r_0}(x_M)} \|x_1-x_M\|^2q_\sigma^x(dx_1)}_{I_1}+\underbrace{\int_{M\backslash B_{r_0}(x_M)} \|x_1-x_M\|^2q_\sigma^x(dx_1)}_{I_2}.
            \end{align*}
        
        We first consider the term $I_2$. 
        
        \[I_2\leq \int_{M\backslash B_{r_0}(x_M)} (2\|x_1\|^2+2\|x_M\|^2)q_\sigma^x(dx_1).\]
        
        As we have shown already in the proof of \Cref{thm:denoiser_limit_sigma}, we have that $B_{r_{x,r_0}}(x)\cap M\subset B_{r_0}(x_M)\cap M$. So, we have that
        
        \begin{align}\label{eq:bound_I2}
            I_2&\leq \int_{M\backslash B_{r_{x,r_0}}(x)} (2\|x_1\|^2+2\|x_M\|^2)q_\sigma^x(dx_1)\\
            &\leq \exp\left(-\frac{r_0^2 }{4 \sigma^2}\left(1- \frac{\Delta_x}{\Delta_x+\Phi_x}\right)\right)\frac{2\mathsf{M}_2(p)+2\|x_M\|^2}{p\left(B_{r^*_{x,r_0/\sqrt{2}}}(x_M)\right)}=O\left(\exp\left(-\frac{r_0^2}{8\sigma^2}\right)\right).
        \end{align}

        Now, we consider the term $I_1$.
        Since $M$ is a submanifold of $\R^d$, its tangent space $T_{x_M}M$ can be identified as a subspace of $\R^d$. We use $\iota : T_{x_M}M \to \R^d$ to denote the inclusion map. In particular, for any $u\in T_{x_M}M$, we have that $\langle x - x_M, \iota(u)\rangle = 0$.
        Let $\mathrm{Inj}(M)$ denote the injectivity radius of $M$. Note that, by \citet[Corollary 1.4]{alexander2006gauss}, $\mathrm{Inj}(M)\geq \tau_M/4$. So, the exponential map $\exp_{x_M} : T_{x_M}M \to M$ will be an diffeomorphism in a small ball $B_{\tau_M/4}^{T_{x_M}}(\bm{0})\subset T_{x_M}M$. Furthermore, we have the following inclusion relation between geodesic balls and Euclidean balls.

        \begin{lemma}\label{lem:balls}
            For any $0<h\leq \frac{3\tau_M}{25}$, we have that 
            \[M\cap B_{h}(x_M)\subset \exp_{x_M}\left(B_{5h/3}^{T_{x_M}}(\bm{0})\right)\subset M\cap B_{5h/3}(x_M).\]
        \end{lemma}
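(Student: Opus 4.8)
The plan is to establish the two inclusions separately, the upper one being essentially immediate and the lower one resting on the standard comparison between the ambient Euclidean distance and the intrinsic geodesic distance on a submanifold of positive reach.

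For the inclusion $\exp_{x_M}\!\big(B_{5h/3}^{T_{x_M}}(\mathbf{0})\big)\subset M\cap B_{5h/3}(x_M)$: since $h\le\frac{3\tau_M}{25}$ we have $\frac{5h}{3}\le\frac{\tau_M}{5}<\frac{\tau_M}{4}\le\mathrm{Inj}(M)$, so $\exp_{x_M}$ is a diffeomorphism on $B_{5h/3}^{T_{x_M}}(\mathbf{0})$ and for any $v$ in this ball the radial geodesic to $y:=\exp_{x_M}(v)$ is minimizing, giving $d_M(x_M,y)=\|v\|$. As the geodesic distance always dominates the Euclidean one, $\|y-x_M\|\le d_M(x_M,y)=\|v\|<\frac{5h}{3}$, which is the claim.

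For the inclusion $M\cap B_h(x_M)\subset\exp_{x_M}\!\big(B_{5h/3}^{T_{x_M}}(\mathbf{0})\big)$, I would invoke the positive-reach distance comparison: for $y\in M$ with $\|y-x_M\|\le\tau_M/2$,
\[
d_M(x_M,y)\ \le\ \tau_M\Big(1-\sqrt{1-\tfrac{2\|y-x_M\|}{\tau_M}}\,\Big)\ =\ \frac{2\,\|y-x_M\|}{1+\sqrt{1-2\|y-x_M\|/\tau_M}}
\]
(the Niyogi--Smale--Weinberger bound; see also Federer's theory of sets of positive reach \cite{federer1959curvature}). Taking $y\in M\cap B_h(x_M)$, we have $\|y-x_M\|<h\le\frac{3\tau_M}{25}<\frac{\tau_M}{2}$, so the estimate applies; moreover $2h/\tau_M\le\frac{6}{25}$ gives $1-2h/\tau_M>\frac{1}{25}$, hence $\sqrt{1-2h/\tau_M}>\frac{1}{5}$, and therefore the right-hand side above is $<\frac{2h}{1+1/5}=\frac{5h}{3}$. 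Thus $d_M(x_M,y)<\frac{5h}{3}\le\frac{\tau_M}{5}<\mathrm{Inj}(M)$, so $x_M$ and $y$ are joined by a unique minimizing geodesic and there is a unique $v\in T_{x_M}M$ with $\exp_{x_M}(v)=y$ and $\|v\|=d_M(x_M,y)<\frac{5h}{3}$; that is, $y\in\exp_{x_M}\!\big(B_{5h/3}^{T_{x_M}}(\mathbf{0})\big)$.

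The only substantive input is the extrinsic-to-intrinsic distance comparison cited above; everything else is bookkeeping with the numerical constants, which are tuned precisely so that $\frac{5h}{3}$ remains below both $\tau_M/4$ and $\mathrm{Inj}(M)$ (this is where $h\le\frac{3\tau_M}{25}$ enters, via $\mathrm{Inj}(M)\ge\tau_M/4$ as recorded just above the lemma). I do not anticipate a real obstacle; the two points requiring care are (i) checking $\|y-x_M\|\le\tau_M/2$ before applying the comparison bound, and (ii) keeping all radii strictly inside the injectivity ball so that $\exp_{x_M}$ is a diffeomorphism realizing the geodesic distance.
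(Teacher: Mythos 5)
Your proof is correct, and the constants all check out ($d<h\le 3\tau_M/25$ gives $\sqrt{1-2d/\tau_M}>1/5$, hence $d_M(x_M,y)<\tfrac{2h}{6/5}=\tfrac{5h}{3}\le\tau_M/5<\tau_M/4\le\mathrm{Inj}(M)$). The difference from the paper is one of packaging rather than substance: for the harder left inclusion the paper simply cites Lemmas A.1 and A.2(ii) of Aamari--Levrard (noting that their compactness hypothesis is not actually used), whereas you re-derive it directly from the Federer/Niyogi--Smale--Weinberger reach comparison $d_M(x_M,y)\le \tau_M\bigl(1-\sqrt{1-2\|y-x_M\|/\tau_M}\bigr)$ valid for $\|y-x_M\|\le\tau_M/2$, then pull $y$ back through $\exp_{x_M}$ inside the injectivity radius; for the right inclusion both arguments reduce to $\|y-x_M\|\le d_M(x_M,y)$. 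Your route buys a self-contained proof with explicit constants (and makes transparent exactly where $h\le 3\tau_M/25$ and $\mathrm{Inj}(M)\ge\tau_M/4$ enter), at the cost of a slightly longer argument; the paper's citation is shorter but opaque about why the stated radii work. Two points you use implicitly and could state in one line: minimizing geodesics exist because $M$ is a closed (hence complete) submanifold, so Hopf--Rinow applies, and the reach bound implicitly ensures $y$ and $x_M$ lie in the same component when $\|y-x_M\|\le\tau_M/2$; neither is a gap.
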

        \begin{proof}[Proof of \Cref{lem:balls}]
            The proof of the left hand side follows from Lemma A.1 and Lemma A.2 (ii) of \citet{aamari2019nonasymptotic}. Although Lemma A.2 (ii) stated compactness for the manifold $M$, this assumption is not needed in the proof.
            The right hand side follows from the fact that $\|x-y\|\leq d_M(x,y)$ for any $x,y\in M$, where $d_M$ denotes the geodesic distance.
        \end{proof}

        Now, we fix some $r_0\leq 3\tau_M/25$. Then, there exists an open neighborhood $U_{r_0}\subset B_{5r_0/3}^{T_{x_M}}(\bm{0})$ around $\bm{0}\in T_{x_M}M$ so that we have the following diffeomorphism (which gives rise to the normal coordinates):
                $$\exp_{x_M} : U_{r_0}\subseteq T_{x_M}M \to B_{r_0}(x_M)\cap M.$$
        
        In particular, $x_M = \exp_{x_M}(\bm 0)$ and each $x_1 \in B_{r_0}(x_M)\cap M$ can be written as $x_1 = \exp_{x_M}(u)$ for some $u \in U_r$. 
        
        Let  $\bm{I\!I}:T_{x_M}M \times T_{x_M}M\to(T_{x_M}M)^\perp$ denote the second fundamental form of $M$ at $x_M$. 
        Then, the exponential map $\exp_{x_M}$ in $U_{r_0}$ has the following Taylor expansion~\citep{monera2014taylor}:
\begin{equation}\label{eq:expansion_exponential}
            \exp_{x_M}(u) = x_M + \iota(u) + \frac{1}{2}\bm{I\!I}(u,u) + O(\|u\|^3),
        \end{equation}
        where $O(\|u\|^3)\leq C(\bm{I\!I},\nabla \bm{I\!I})\|u\|^3$ for some constant $C(\bm{I\!I},\nabla \bm{I\!I})>0$ only dependent on $\bm{I\!I}$ and its derivatives $\nabla \bm{I\!I}$.

        For any $u\in U_{r_0}$, we let $g(u)$ denote the metric tensor, then $g(\bm{0})$ is the identity matrix. The volume form is given by $\sqrt{\det(g(u))}du^1\wedge\ldots\wedge du^k$ and it admits the following Taylor expansion around $\bm{0}\in T_{x_M}M$:
        \[
        \sqrt{\det(g(u))} = 1 - \frac{1}{6} R_{ij}u^iu^j + O(\|u\|^3),
        \]
        where $R_{ij}$ is the Ricci curvature tensor of $M$ at $x_M$, see e.g.~\citet[Exercise~1.83]{chow2023hamilton}.

        We can write
        \begin{equation}\label{eq:expansion_density}
            \rho(u)\sqrt{\det(g(u))} = \rho(\bm{0}) + R_1(u), \quad |R_1(u)| \leq C_1 \|u\|,
        \end{equation}
        where $C_1$ depends on Ricci curvature tensor $R_{ij}$ and the gradient  $\nabla \rho(x_M)$.

        Now, we define
        \[
        f_\sigma(u) := \exp\left(-\frac{\|x-\exp_{x_M}(u)\|^2}{2\sigma^2}\right)\rho(u)\sqrt{\det(g(u))}.
        \]

        Let $MB_{r_0}:=B_{r_0}(x_M)\cap M$.
        Then, we have that
        $$q_\sigma^x(du) = \frac{f_\sigma(u)du}{\int_{U_{r_0}}f_\sigma(u')du'+\int_{M\backslash MB_{r_0}}\exp\left(-\frac{\|x- x_1'\|^2}{2\sigma^2}\right)\rho(x_1')\mathrm{vol}_M(dx_1')}.$$

        Using the same argument as the one used for controlling $I_2$ in the proof of \Cref{thm:denoiser_limit_sigma}, we have
        \begin{equation}\label{eq:bound_I1_volume}
            \frac{\int_{M\backslash MB_{r_0}}\exp\left(-\frac{\|x- x_1'\|^2}{2\sigma^2}\right)\rho(x_1')\mathrm{vol}_M(dx_1')}{\int_{MB_{r_0}}\exp\left(-\frac{\|x- x_1'\|^2}{2\sigma^2}\right)\rho(x_1')\mathrm{vol}_M(dx_1')}=O\left(\exp\left(-\frac{r_0^2}{8\sigma^2}\right)\right).
        \end{equation}

        So,
        \begin{equation}\label{eq:Ur}
            \int_{MB_{r_0}} \|x_1 - x_M\|^2 q_\sigma^x(dx_1) = \frac{\int_{U_{r_0}} \|\exp_{x_M}(u) - x_M\|^2 f_\sigma(u)du}{\int_{U_{r_0}} f_\sigma(u')du'}\left(1 + O\left(\exp\left(-\frac{r_0^2}{8\sigma^2}\right)\right)\right).
        \end{equation}

        Next, we derive a Taylor expansion of the squared distance from $x$ to $\exp_{x_M}(u)$ for $u$ around $\bm{0}$:
        \begin{lemma}
        \label{lemma:distance expansion}
        Let $v:=x-x_M$. Then, we have that
        $$\|x-\exp_{x_M}u\|^2=\|v\|^2+\|u\|^2 - \langle v, \bm{I\!I}(u,u)\rangle + R_2(u)= \|v\|^2+u^T(\bm{I}-\bm{I\!I}_v)u + R_2(u)$$
        where  $\bm{I\!I}_v:=(\langle v, \bm{I\!I}_{ij}\rangle)_{i,j=1,\ldots,k}$ and $|R_2(u)|\leq C_2\|u\|^3$ for some positive constant $C_2=C_2(\bm{I\!I},\nabla \bm{I\!I})>0$, and $\bm{I}$ is the identity matrix.
        \end{lemma}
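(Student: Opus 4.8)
The plan is a direct second-order Taylor computation in the normal coordinates already set up, built on the expansion \eqref{eq:expansion_exponential} of the exponential map together with two elementary orthogonality facts. Write $v:=x-x_M$. Since $d_M(x)<\tfrac12\tau_M<\tau_M$, the point $x$ lies off the medial axis of $M$, so the nearest point is unique and $v$ is orthogonal to $T_{x_M}M$; hence $\langle v,\iota(u)\rangle=0$ for every $u\in T_{x_M}M$. Moreover the second fundamental form is normal-bundle valued, so $\langle \iota(u),\mathbf{I\!I}(u,u)\rangle=0$, and $\iota$ restricts to an isometry, so $\|\iota(u)\|=\|u\|$. Finally $\|v\|=d_M(x)<\tfrac12\tau_M$ is bounded, which will be used to absorb the lowest-order cross term into the remainder.

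First I would write, using \eqref{eq:expansion_exponential},
\[
x-\exp_{x_M}(u)=v-\iota(u)-\tfrac12\mathbf{I\!I}(u,u)+E(u),\qquad \|E(u)\|\le C(\mathbf{I\!I},\nabla\mathbf{I\!I})\,\|u\|^3,
\]
where the cubic remainder constant is finite by smoothness of $M$ near $x_M$. Squaring and grouping gives
\begin{align*}
\|x-\exp_{x_M}(u)\|^2
&=\|v\|^2+\|\iota(u)\|^2+\tfrac14\|\mathbf{I\!I}(u,u)\|^2+\|E(u)\|^2\\
&\quad-2\langle v,\iota(u)\rangle-\langle v,\mathbf{I\!I}(u,u)\rangle+\langle \iota(u),\mathbf{I\!I}(u,u)\rangle\\
&\quad+2\langle v,E(u)\rangle-2\langle \iota(u),E(u)\rangle-\langle \mathbf{I\!I}(u,u),E(u)\rangle.
\end{align*}
The two orthogonality relations annihilate the $\langle v,\iota(u)\rangle$ and $\langle \iota(u),\mathbf{I\!I}(u,u)\rangle$ terms, and $\|\iota(u)\|^2=\|u\|^2$. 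Collecting everything except $\|v\|^2+\|u\|^2-\langle v,\mathbf{I\!I}(u,u)\rangle$ into $R_2(u)$, the leftover summands are of order $\|u\|^4$, $\|u\|^5$, $\|u\|^6$, or (the single term $2\langle v,E(u)\rangle$) of order $\|v\|\,\|u\|^3$; over the bounded chart $U_{r_0}$ these are all bounded by $C_2\|u\|^3$ with $C_2$ depending only on $\mathbf{I\!I}$ and $\nabla\mathbf{I\!I}$ near $x_M$ and on $\tfrac12\tau_M\ge\|v\|$. This yields the first displayed identity.

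For the second identity I would expand $\mathbf{I\!I}$ in an orthonormal basis $(e_i)_{i=1}^k$ of $T_{x_M}M$: writing $u=\sum_i u^ie_i$ and $\mathbf{I\!I}_{ij}:=\mathbf{I\!I}(e_i,e_j)$, bilinearity gives $\mathbf{I\!I}(u,u)=\sum_{i,j}u^iu^j\,\mathbf{I\!I}_{ij}$, hence $\langle v,\mathbf{I\!I}(u,u)\rangle=\sum_{i,j}u^iu^j\langle v,\mathbf{I\!I}_{ij}\rangle=u^T\mathbf{I\!I}_v u$ (with $\mathbf{I\!I}_v$ symmetric since $\mathbf{I\!I}$ is), so $\|u\|^2-\langle v,\mathbf{I\!I}(u,u)\rangle=u^T(\bm{I}-\mathbf{I\!I}_v)u$, as claimed.

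The only real obstacle is the bookkeeping: verifying that every term pushed into $R_2(u)$ is genuinely $O(\|u\|^3)$ with a constant of the advertised dependence, uniformly over the chart $U_{r_0}$. This reduces entirely to the uniform control of the cubic remainder in \eqref{eq:expansion_exponential} (standard from the smoothness of $M$, quantitatively controlled by local bounds on $\mathbf{I\!I}$ and $\nabla\mathbf{I\!I}$) together with the a priori bound $\|v\|<\tfrac12\tau_M$; there is no conceptual difficulty beyond this.
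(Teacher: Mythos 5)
Your proof is correct and follows essentially the same route as the paper's: Taylor-expand the exponential map via \eqref{eq:expansion_exponential}, square, and eliminate the cross terms using $\langle v,\iota(u)\rangle=0$ and the fact that $\mathbf{I\!I}$ is normal-valued, absorbing all remaining terms into the cubic remainder (using boundedness of $\|u\|$ on the chart and $\|v\|<\tfrac{1}{2}\tau_M$). The paper merely organizes the squaring through the law-of-cosines identity and leaves the quadratic-form rewriting $\langle v,\mathbf{I\!I}(u,u)\rangle=u^T\mathbf{I\!I}_v u$ implicit, so the differences are cosmetic.
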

        \begin{proof}[Proof of \Cref{lemma:distance expansion}]
            We first note that 
            \begin{align*}
                \|x-\exp_{x_M}u\|^2 &= \|x-x_M\|^2+\|x_M-\exp_{x_M}(u)\|^2 + 2\langle x-x_M,x_M-\exp_{x_M}(u)\rangle.
            \end{align*}
            By~\eqref{eq:expansion_exponential}, we have that
            \begin{align*}
                \|x-\exp_{x_M}u\|^2 &= \|v\|^2 + \|u\|^2 +\langle u,\bm{I\!I}(u,u)\rangle - 2\langle v,\iota(u)\rangle -\langle v, \bm{I\!I}(u,u)\rangle + O(\|u\|^3)\\
                &=\|v\|^2 + \|u\|^2 -\langle v, \bm{I\!I}(u,u)\rangle + O(\|u\|^3)
            \end{align*}
            where we used the fact that $v$ belongs the normal space $(T_{x_M}M)^\perp$ of $M$ at $x_M$ so that $\langle v,\iota(u)\rangle = 0$ and $\langle u,\bm{I\!I}(u,u)\rangle =0$.
        \end{proof}

        By \Cref{lemma:distance expansion} and \Cref{eq:Ur}, we have that
        \begin{align*}
            &\int_{U_{r_0}} \|\exp_{x_M}(u) - x_M\|^2 f_\sigma(u)du\\
             & = \exp\left(-\frac{\|v\|^2}{2\sigma^2}\right)\int_{U_{r_0}} {(u^T(\bm{I}-\bm{I\!I}_v)u + R_2(u))}\exp\left(-\frac{u^T(\bm{I}-\bm{I\!I}_v)u+R_2(u)}{2\sigma^2}\right) (\rho(\bm{0})+R_1(u))  du.
        \end{align*}

        and
        \begin{align*}
            \int_{U_{r_0}}  f_\sigma(u)du = \exp\left(-\frac{\|v\|^2}{2\sigma^2}\right)\int_{U_{r_0}} \exp\left(-\frac{u^T(\bm{I}-\bm{I\!I}_v)u+R_2(u)}{2\sigma^2}\right) (\rho(\bm{0})+R_1(u))  du.
        \end{align*}

        We will establish the following claims:
        \begin{claim}
                \begin{equation}\label{eq:manifold_integral_I}
                 \begin{aligned}
                     \int_{U_{r_0}}{(u^T(\bm{I}-\bm{I\!I}_v)u + R_2(u))} \exp\left(-\frac{u^T(\bm{I}-\bm{I\!I}_v)u+R_2(u)}{2\sigma^2}\right) (\rho(\bm{0})+R_1(u))  du\\
                     =\sigma^{m+2}\rho(\bm{0})\int_{\R^m}z^T(\bm{I}-\bm{I\!I}_v)z \exp\left(-\frac{z^T(\bm{I}-\bm{I\!I}_v)z}{2}\right) dz + O(\sigma^{m+3}).
                 \end{aligned}
                 \end{equation}
        \end{claim}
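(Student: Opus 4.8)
The claim \eqref{eq:manifold_integral_I} is a Laplace--Watson-type asymptotic estimate, and the plan is to prove it by rescaling $u=\sigma z$, peeling off the contributions of the remainders $R_1$ and $R_2$, and then extending the domain of integration from $U_{r_0}$ to all of $T_{x_M}M\cong\R^m$ at the cost of an error that is exponentially small in $1/\sigma$. Two preliminary reductions make everything uniform. First, since the operator norm of the second fundamental form is bounded by $1/\tau_M$ and $\|v\|=d_M(x)<\tau_M/2$, the symmetric matrix $\mathbf{I\!I}_v=(\langle v,\mathbf{I\!I}_{ij}\rangle)_{i,j}$ satisfies $\|\mathbf{I\!I}_v\|_{\mathrm{op}}\le\|v\|/\tau_M<\tfrac12$, hence $\bm I-\mathbf{I\!I}_v\succeq\tfrac12\bm I$; this makes all the Gaussian-type integrals below convergent, with bounds uniform over the range $d_M(x)<\tau_M/2$. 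Second, using $|R_2(u)|\le C_2\|u\|^3$ together with \Cref{lem:balls} (which gives $U_{r_0}\subseteq B^{T_{x_M}}_{5r_0/3}(\bm 0)$), one may fix $r_0$ once and for all, small enough that $|R_2(u)|\le\tfrac18\|u\|^2$ throughout $U_{r_0}$; then on $U_{r_0}$ the full exponent obeys $\tfrac1{2\sigma^2}\bigl(u^{\top}(\bm I-\mathbf{I\!I}_v)u+R_2(u)\bigr)\ge\tfrac1{8\sigma^2}\|u\|^2$.

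Substituting $u=\sigma z$ (so $du=\sigma^m\,dz$ and the domain becomes $U_{r_0}/\sigma$) and writing the bracket as $u^{\top}(\bm I-\mathbf{I\!I}_v)u+R_2(u)=\sigma^2 z^{\top}(\bm I-\mathbf{I\!I}_v)z+R_2(\sigma z)$ with $|R_2(\sigma z)|\le C_2\sigma^3\|z\|^3$ and $|R_1(\sigma z)|\le C_1\sigma\|z\|$, the left-hand side of \eqref{eq:manifold_integral_I} becomes
\[
\sigma^{m+2}\rho(\bm 0)\int_{U_{r_0}/\sigma} z^{\top}(\bm I-\mathbf{I\!I}_v)z\,e^{-\frac12 z^{\top}(\bm I-\mathbf{I\!I}_v)z}\,dz\;+\;E_1+E_2+E_3,
\]
where $E_1=\sigma^{m+2}\rho(\bm 0)\int z^{\top}(\bm I-\mathbf{I\!I}_v)z\,e^{-\frac12 z^{\top}(\bm I-\mathbf{I\!I}_v)z}\bigl(e^{-R_2(\sigma z)/2\sigma^2}-1\bigr)dz$, while $E_2$ collects all the terms carrying a factor $R_1(\sigma z)$ and $E_3$ collects the terms carrying a bare factor $R_2(\sigma z)$ from the bracket paired with $\rho(\bm 0)$. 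Using $\bm I-\mathbf{I\!I}_v\succeq\tfrac12\bm I$ one has $e^{-\frac12 z^{\top}(\bm I-\mathbf{I\!I}_v)z}\le e^{-\frac14\|z\|^2}$, and on $U_{r_0}/\sigma$ the remainder obeys $|R_2(\sigma z)|/2\sigma^2\le\min\{\tfrac18\|z\|^2,\tfrac{C_2\sigma}2\|z\|^3\}$, whence $|e^{-R_2(\sigma z)/2\sigma^2}-1|\le\tfrac{C_2\sigma}2\|z\|^3 e^{\frac18\|z\|^2}$; therefore the $E_1$-integrand is bounded by a constant multiple of $\sigma\|z\|^5 e^{-\frac18\|z\|^2}$, which is integrable, so $E_1=O(\sigma^{m+3})$. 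The same combination of the lower bound on the exponent with the one extra power of $\sigma$ furnished by $R_1$ or by $R_2$ shows $E_2,E_3=O(\sigma^{m+3})$ after integrating against a $\|z\|^k e^{-\frac18\|z\|^2}$-type weight.

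Finally, since $U_{r_0}$ is an open neighborhood of $\bm 0$ it contains a ball $B^{T_{x_M}}_{\rho}(\bm 0)$, so in the leading term the difference between $\int_{U_{r_0}/\sigma}$ and $\int_{\R^m}$ is $\sigma^{m+2}\rho(\bm 0)\int_{\|z\|\ge\rho/\sigma} z^{\top}(\bm I-\mathbf{I\!I}_v)z\,e^{-\frac12 z^{\top}(\bm I-\mathbf{I\!I}_v)z}\,dz$, which is bounded by a constant times $\sigma^{m+2}\int_{\|z\|\ge\rho/\sigma}\|z\|^2 e^{-\frac14\|z\|^2}\,dz=O\!\bigl(e^{-\rho^2/8\sigma^2}\bigr)$, and hence $o(\sigma^N)$ for every $N$. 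Collecting the pieces gives exactly \eqref{eq:manifold_integral_I}. The only genuine difficulty is the bookkeeping: one must be sure that $r_0$ can be chosen independently of $x$ (equivalently of $v$) and of $\sigma$ so that the cubic perturbation $R_2$ never overwhelms the quadratic Gaussian weight on $U_{r_0}/\sigma$ — this is precisely what the choice $|R_2(u)|\le\tfrac18\|u\|^2$ on $U_{r_0}$ together with the uniform bound $\bm I-\mathbf{I\!I}_v\succeq\tfrac12\bm I$ secures; everything else is routine dominated-convergence estimation.
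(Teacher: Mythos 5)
Your proof is correct and follows essentially the same route as the paper: uniform positive-definiteness of $\bm{I}-\mathbf{I\!I}_v$ from $\|v\|<\tau_M/2$, a choice of $r_0$ making the cubic remainder dominated by the quadratic form, the rescaling $u=\sigma z$, one extra power of $\sigma$ from each of $R_1$, $R_2$, and the exponential correction, and an exponentially small tail when extending to $\R^m$. The only difference is cosmetic: where the paper splits the rescaled domain at scale $1/\sigma'$ with $\sigma'\sim\sigma^{1/3}$ and expands $\exp(-R_2(\sigma z)/2\sigma^2)=1+O(\sigma\|z\|^3)$ only on the inner piece (its $J_1$/$J_2$ argument), you use the single dominated bound $|e^{-R_2(\sigma z)/2\sigma^2}-1|\leq \tfrac{C_2\sigma}{2}\|z\|^3e^{\|z\|^2/8}$ on all of $U_{r_0}/\sigma$, which yields the same $O(\sigma^{m+3})$ error slightly more cleanly.
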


        \begin{claim}
            \begin{equation}\label{eq:manifold_integral_II}
             \begin{aligned}
                    \int_{U_{r_0}} \exp\left(-\frac{u^T(\bm{I}-\bm{I\!I}_v)u+R_2(u)}{2\sigma^2}\right) (\rho(\bm{0})+R_1(u))  du\\
                    =\sigma^m\rho(\bm{0})\int_{\R^m}\exp\left(-\frac{z^T(\bm{I}-\bm{I\!I}_v)z}{2}\right) dz + O(\sigma^{m+1}).
                \end{aligned}
            \end{equation}
        \end{claim}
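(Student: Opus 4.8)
The plan is to establish both \eqref{eq:manifold_integral_I} and \eqref{eq:manifold_integral_II} by a single rescaling-and-truncation argument — a Laplace/Watson-type asymptotic analysis — the only nonstandard feature being that the integration domain $U_{r_0}$ is a \emph{fixed} neighborhood of the origin in $T_{x_M}M\cong\R^m$ rather than all of $\R^m$.

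\textbf{Step 1 (positive definiteness and uniform bounds).} First I would record that $\bm I-\mathbf{I\!I}_v$ is uniformly positive definite: since $\|v\|=d_M(x)<\tfrac12\tau_M$ and the operator norm of the second fundamental form of a submanifold of reach $\tau_M$ is at most $1/\tau_M$, we get $|u^T\mathbf{I\!I}_v u|=|\langle v,\mathbf{I\!I}(u,u)\rangle|\le\|v\|\,\|u\|^2/\tau_M<\tfrac12\|u\|^2$, hence $\bm I-\mathbf{I\!I}_v\succ\tfrac12\bm I$. Shrinking the fixed radius $r_0$ if necessary (using $|R_2(u)|\le C_2\|u\|^3$), I may also assume $|R_2(u)|\le\tfrac14\|u\|^2$ throughout $U_{r_0}$, so that the exponent satisfies $u^T(\bm I-\mathbf{I\!I}_v)u+R_2(u)\ge\tfrac14\|u\|^2$ on the entire domain. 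All remainder constants ($C_1$ in \eqref{eq:expansion_density}, $C_2$ in \Cref{lemma:distance expansion}, the cubic remainder in \eqref{eq:expansion_exponential}) are finite because $\overline{B_{r_0}(x_M)}\cap M$ is compact and $\rho,\mathbf{I\!I},\nabla\mathbf{I\!I}$ are smooth on it.

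\textbf{Step 2 (substitution $u=\sigma z$).} With $du=\sigma^m\,dz$ the domain becomes $U_{r_0}/\sigma$, which increases to $\R^m$ as $\sigma\to0$; the quadratic prefactor in \eqref{eq:manifold_integral_I} becomes $\sigma^2 z^T(\bm I-\mathbf{I\!I}_v)z+R_2(\sigma z)$ with $R_2(\sigma z)=O(\sigma^3\|z\|^3)$; the exponent becomes $-\tfrac12 z^T(\bm I-\mathbf{I\!I}_v)z-R_2(\sigma z)/2\sigma^2$ with $R_2(\sigma z)/2\sigma^2=O(\sigma\|z\|^3)$; and $\rho(\bm 0)+R_1(\sigma z)=\rho(\bm 0)+O(\sigma\|z\|)$. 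Factoring out $\sigma^m$ (resp.\ $\sigma^{m+2}$) reduces each claim to showing the rescaled integral equals the stated Gaussian integral up to a relative $O(\sigma)$ error. For the leading term I would replace $e^{-R_2(\sigma z)/2\sigma^2}$ by $1$ (error controlled by $|e^a-1|\le|a|e^{|a|}$ with $|a|=O(\sigma\|z\|^3)$), replace $\rho(\bm 0)+R_1(\sigma z)$ by $\rho(\bm 0)$, and extend the domain from $U_{r_0}/\sigma$ to $\R^m$. The global lower bound of Step~1 majorizes the integrand and all these corrections by an integrable function $\operatorname{poly}(\|z\|)\,e^{-\|z\|^2/8}$, and the domain extension costs only $\int_{\|z\|>c/\sigma}\operatorname{poly}(\|z\|)e^{-\|z\|^2/8}\,dz=O(e^{-c'/\sigma^2})$; altogether every correction is $O(\sigma)$ relative to the main term. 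Substituting the resulting expansions back into \eqref{eq:Ur} then yields the ratio $m\sigma^2+O(\sigma^3)$ and hence $d_{\mathrm{W},2}(q_\sigma^x,\delta_{x_M})=\sqrt{m}\,\sigma+O(\sigma^2)$.

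\textbf{Expected main obstacle.} The genuinely delicate point is the \emph{uniform} control of the cubic remainder $R_2$ sitting inside the exponent: after division by $\sigma^2$ it is only $O(\sigma\|z\|^3)$, which is not small when $\|z\|$ is as large as $\sigma^{-1/3}$. This is precisely why Step~1 first shrinks $r_0$ so that $|R_2(u)|\le\tfrac14\|u\|^2$ globally; this upgrades $-[u^T(\bm I-\mathbf{I\!I}_v)u+R_2(u)]/2\sigma^2$ to something $\le-\|u\|^2/8\sigma^2$, providing one integrable Gaussian majorant valid on all of $U_{r_0}$ and all small $\sigma$, after which everything else is a routine Gaussian-moment computation. (Alternatively one could split at an intermediate scale, e.g.\ $\|z\|\le\sigma^{-1/6}$, on which $R_2(\sigma z)/2\sigma^2\to0$ uniformly, and estimate the complementary region by the Gaussian tail; both routes are standard.)
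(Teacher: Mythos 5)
Your proposal is correct and takes essentially the same route as the paper: the substitution $u=\sigma z$, positive definiteness of $\bm{I}-\mathbf{I\!I}_v$ via the reach bound $\|\mathbf{I\!I}_v\|<\tfrac12$, shrinking $r_0$ so the exponent is comparable to $\|u\|^2$ on all of $U_{r_0}$, expanding away the cubic remainder and the density correction at relative order $O(\sigma)$, and extending the rescaled domain to $\R^m$ at exponentially small cost. The only (immaterial) difference is bookkeeping: the paper splits the rescaled domain at the intermediate scale $\|z\|\lesssim (C_2\sigma)^{-1/3}$ and treats the inner and outer pieces separately, whereas you absorb $e^{|R_2(\sigma z)|/2\sigma^2}$ into a single Gaussian majorant via $|R_2(u)|\le\tfrac14\|u\|^2$ and handle the whole domain with one dominated estimate.
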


        With the above two claims, there is

        \begin{align*}
            \int_{MB_{r_0}} \|x_1 - x_M\|^2 q_\sigma^x(dx_1) &= \frac{\int_{U_{r_0}} \|\exp_{x_M}(u) - x_M\|^2 f_\sigma(u)du}{\int_{U_{r_0}} f_\sigma(u')du'}\left(1 + O\left(\exp\left(-\frac{r_0^2}{8\sigma^2}\right)\right)\right)\\
            &=\frac{\sigma^2\int_{\R^m}z^T(\bm{I}-\bm{I\!I}_v)z \exp\left(-\frac{z^T(\bm{I}-\bm{I\!I}_v)z}{2}\right) dz + O(\sigma^3)}{\int_{\R^m}\exp\left(-\frac{z^T(\bm{I}-\bm{I\!I}_v)z}{2}\right) dz + O(\sigma)}\left(1 + O\left(\exp\left(-\frac{r_0^2}{8\sigma^2}\right)\right)\right)\\
            &=\sigma^2\left(\frac{\int_{\R^m}z^T(\bm{I}-\bm{I\!I}_v)z \exp\left(-\frac{z^T(\bm{I}-\bm{I\!I}_v)z}{2}\right) dz}{\int_{\R^m}\exp\left(-\frac{z^T(\bm{I}-\bm{I\!I}_v)z}{2}\right) dz }+O(\sigma)\right)\left(1 + O\left(\exp\left(-\frac{r_0^2}{8\sigma^2}\right)\right)\right)\\
            &=m\sigma^2+O(\sigma^3)
        \end{align*}
        where in the last equality we used the fact that $\mathbb{E}[\bm{X}^TA\bm{X}]=\text{tr}(A\Sigma)$ for a Gaussian random variable $\bm{X}\sim \gN(0,\Sigma)$.

        Therefore, we have that 
        
        \[d_{\mathrm{W},2}(q_\sigma^x,\delta_{x_M})=\sqrt{m}\sigma + O(\sigma^2),\]
        and concludes the proof.

        Now we prove the two claims. For the first claim, note that by
        $\|v\|=d_M(x)<\tau_M/2$, we can use \citet[Theorem 2.1]{berenfeld2022estimating} to conclude
        $$ \|\bm{I\!I}_v\|\leq \|v\|\cdot \max_{i,j} \|\bm{I\!I}_{ij}\| <\frac{1}{2}\tau_M\cdot 1/\tau_M=\frac{1}{2}.$$

        So the matrix $\bm{I}-\bm{I\!I}_v$ is positive definite. We let $\lambda_{\min}>0$ be the smallest  eigenvalues of $\bm{I}-\bm{I\!I}_v$. Note that $\lambda_{\min}\geq 1-\|\bm{I\!I}_v\|\geq \frac{1}{2}$. So, we can choose $r_0$ small enough at the beginning so that there exists some constant $C_3>0$ for all $u\in U_{r_0}$, we have that
        $$u^T(\bm{I}-\bm{I\!I}_v)u + R_2(u)\geq \lambda_{\min}\|u\|^2-C_2\|u\|^3>C_3\|u\|^2.$$

        Consider the transformation $u=\sigma z$. Then, we let $\sigma':=2r_0(C_2\sigma)^{\frac{1}{3}}$ which goes to $0$ as $\sigma\to 0$ and for sufficiently small $\sigma$, we have that $\sigma'> \sigma$ hence there is
        $$\frac{1}{\sigma'} U_{r_0} \subset \frac{1}{\sigma} U_{r_0}.$$
        Then,  we have that

        \begin{align*}
           & \int_{U_{r_0}} u^T(\bm{I}-\bm{I\!I}_v)u \exp\left(-\frac{u^T(\bm{I}-\bm{I\!I}_v)u+R_2(u)}{2\sigma^2}\right) (\rho(\bm{0})+R_1(u))  du\\
        &=\sigma^{m+2}\int_{\frac{1}{\sigma} U_{r_0}}z^T(\bm{I}-\bm{I\!I}_v)z \exp\left(-\frac{z^T(\bm{I}-\bm{I\!I}_v)z}{2}-\frac{R_2(\sigma z)}{2\sigma^2}\right) (\rho(\bm{0})+R_1(\sigma z)) dz\\
        &=\sigma^{m+2}\underbrace{\int_{\frac{1}{\sigma'} U_{r_0}}z^T(\bm{I}-\bm{I\!I}_v)z \exp\left(-\frac{z^T(\bm{I}-\bm{I\!I}_v)z}{2}-\frac{R_2(\sigma z)}{2\sigma^2}\right) (\rho(\bm{0})+R_1(\sigma z)) dz}_{J_1}\\
        &+\sigma^{m+2}\underbrace{\int_{\frac{1}{\sigma} U_{r_0}\backslash \frac{1}{\sigma'} U_{r_0}}z^T(\bm{I}-\bm{I\!I}_v)z \exp\left(-\frac{z^T(\bm{I}-\bm{I\!I}_v)z}{2}-\frac{R_2(\sigma z)}{2\sigma^2}\right) (\rho(\bm{0})+R_1(\sigma z)) dz}_{J_2}.
        \end{align*}
        
        Now, for $J_1$, recall that $U_{r_0}\subset B_{5r_0/3}^{T_{x_M}M}(\bm{0})$. Hence, for any $z\in \frac{1}{\sigma'} U_{r_0}$, we have that $\| z\| \leq \frac{1}{\sigma'} 5r_0/3 < \frac{1}{\sigma'} 2 r_0$ and hence
        $$\frac{|R_2(\sigma z)|}{2\sigma^2}\leq \frac{C_2 (\sigma \|z\|)^3}{2\sigma^2} < \frac{C_2\sigma^3 (2r_0)^3}{2\sigma^2 \cdot (2r_0)^3C_2 \sigma}=\frac{1}{2}.$$
        This implies that there is expansion $\exp\left(-\frac{R_2(\sigma z)}{2\sigma^2}\right)=1+O(\sigma\|z\|^3)$.
        Therefore, we have that

        \begin{align*}
            &J_1=\int_{\frac{1}{\sigma'} U_{r_0}}z^T(\bm{I}-\bm{I\!I}_v)z \exp\left(-\frac{z^T(\bm{I}-\bm{I\!I}_v)z}{2}\right)(1+O(\sigma\|z\|^3)) (\rho(\bm{0})+O(\|\sigma z\|)) dz\\
            &=\rho(\bm{0})\int_{\frac{1}{\sigma'} U_{r_0}}z^T(\bm{I}-\bm{I\!I}_v)z \exp\left(-\frac{z^T(\bm{I}-\bm{I\!I}_v)z}{2}\right) dz + O(\sigma)\\
            &=\rho(\bm{0})\int_{\R^m}z^T(\bm{I}-\bm{I\!I}_v)z \exp\left(-\frac{z^T(\bm{I}-\bm{I\!I}_v)z}{2}\right) dz + O(\sigma).
        \end{align*}
        where in the last part we used the fact that the the decay rate of such an integral as $\sigma$ goes to $0$ is exponential.
        
        For $J_2$, we similarly have that
        
        \begin{align*}
            &|J_2|\leq \int_{\frac{1}{\sigma} U_{r_0}\backslash \frac{1}{\sigma'} U_{r_0}}z^T(\bm{I}-\bm{I\!I}_v)z \exp\left(-C_3\|z\|^2\right) (\rho(\bm{0})+O(\|\sigma z\|)) dz\\
            &=O(\sigma).
        \end{align*}
        
        Therefore,
        \begin{equation}\label{eq:u_r0}
            \begin{aligned}
                \int_{U_{r_0}} u^T(\bm{I}-\bm{I\!I}_v)u \exp\left(-\frac{u^T(\bm{I}-\bm{I\!I}_v)u+R_2(u)}{2\sigma^2}\right) (\rho(\bm{0})+R_1(u))  du\\
                =\sigma^{m+2}\rho(\bm{0})\int_{\R^m}z^T(\bm{I}-\bm{I\!I}_v)z \exp\left(-\frac{z^T(\bm{I}-\bm{I\!I}_v)z}{2}\right) dz + O(\sigma^{m+3}).
            \end{aligned}
        \end{equation}
        Similarly, we have that

        \begin{equation}\label{eq:u_r0_II}
            \begin{aligned}
                \int_{U_{r_0}} \exp\left(-\frac{u^T(\bm{I}-\bm{I\!I}_v)u+R_2(u)}{2\sigma^2}\right) (\rho(\bm{0})+R_1(u))  du\\
                =\sigma^m\rho(\bm{0})\int_{\R^m}\exp\left(-\frac{z^T(\bm{I}-\bm{I\!I}_v)z}{2}\right) dz + O(\sigma^{m+1}).
            \end{aligned}
        \end{equation}

        Additionally, by $|R_2(u)| \leq C_2\|u\|^3$, we have that
        \begin{equation}\label{eq:R2}
            \int_{U_{r_0}} R_2(u) \exp\left(-\frac{u^T(\bm{I}-\bm{I\!I}_v)u+R_2(u)}{2\sigma^2}\right) (\rho(\bm{0})+R_1(u))  du = O(\sigma^{m+3}).
        \end{equation}

        Then Claim 1 follows from \Cref{eq:u_r0} and \Cref{eq:R2} and Claim 2 follows from \Cref{eq:u_r0_II}.
        
\section{Explicit Eamples of FM ODEs}\label{sec:examples}

The following lemma specifies the FM ODE solution when the data distribution is the standard Gaussian itself.

\begin{lemma}\label{lem:standard_gaussian}
    Let the data distribution $p$ be a standard Gaussian distribution $\gN(0, I)$ in $\R^d$. Then the FM ODE can be computed explicitly as
    \[
        \frac{dx_t}{dt} = \frac{\dot{\alpha_t}\alpha_t + \dot{\beta_t}\beta_t}{\alpha_t^2 + \beta_t^2}\, x_t.
    \]
    This implies that the following explicit formula for the ODE trajectory 
    $$x_t = \sqrt{\alpha_t^2 + \beta_t^2}\, x_0$$
    for any initial point $x_0\in\R^d$. In particular, the flow map $\Psi_t$ is a scaling and $\Psi_1$ is the identity map.
\end{lemma}

\begin{proof}[Proof of \Cref{lem:standard_gaussian}]
    The FM ODE is given by
    \[
        \frac{dx_t}{dt} = \frac{\dot{\beta_t}}{\beta_t} x_t + \frac{\dot{\alpha_t}\beta_t - \alpha_t\dot{\beta_t}}{\beta_t}\int \frac{\exp\left(-\frac{\|x_t-\alpha_tx_1\|^2}{2\beta_t^2}\right)x_1}{\int \exp\left(-\frac{\|x_t-\alpha_tx_1'\|^2}{2\beta_t^2}\right)p(dx_1')}p(dx_1).
    \]
    When plugging in the standard Gaussian density for $p$, we have the following observation:
    \begin{align*}
        \exp\left(-\frac{\|x_t-\alpha_tx_1\|^2}{2\beta_t^2}\right)\exp\left (-\frac{\|x_1\|^2}{2}\right ) &= \exp\left (  -\frac{\|x_t-\alpha_tx_1\|^2}{\beta_t^2} - \frac{\beta_t^2 \|x_1\|^2}{2\beta_t^2}\right )\\
        &= \exp\left (  -\frac{1}{2\beta_t^2}{\left \| \frac{\alpha_t}{\sqrt{\alpha_t^2 + \beta_t^2}} x_t - \sqrt{\alpha_t^2 + \beta_t^2} x_1\right \|^2}\right ) \exp\left(
            -\frac{1}{2{(\alpha_t^2 + \beta_t^2)}} \|x_t\|^2
        \right).
    \end{align*}
    For brevity, we denote $B_t:= \sqrt{\alpha_t^2 + \beta_t^2}$ and $C_t:= \frac{\alpha_t}{\sqrt{\alpha_t^2 + \beta_t^2}}$. Then the ODE can be simplified as
    \begin{align*}
        \frac{dx_t}{dt} &= \frac{\dot{\beta_t}}{\beta_t} x_t + \frac{\dot{\alpha_t}\beta_t - \alpha_t\dot{\beta_t}}{\beta_t}\int \frac{\exp\left(-\frac{\|C_tx_t-B_tx_1\|^2}{2\beta_t^2}\right)x_1 dx_1}{\int \exp\left(-\frac{\|C_tx_t-B_tx_1'\|^2}{2\beta_t^2}\right)dx_1'}.\\
        &= \frac{\dot{\beta_t}}{\beta_t} x_t + \frac{\dot{\alpha_t}\beta_t - \alpha_t\dot{\beta_t}}{\beta_t}\int \frac{\exp\left(-\frac{\|C_t/B_tx_t-x_1\|^2}{2\beta_t^2/B_t^2}\right)x_1 dx_1}{\int \exp\left(-\frac{\|C_t/B_tx_t-x_1'\|^2}{2\beta_t^2/B_t^2}\right)dx_1'}.\\
        &= \frac{\dot{\beta_t}}{\beta_t} x_t + \frac{\dot{\alpha_t}\beta_t - \alpha_t\dot{\beta_t}}{\beta_t} C_t/B_t x_t\\
        &=\frac{\dot{\beta_t}}{\beta_t} x_t  + \frac{\dot{\alpha_t}\beta_t - \alpha_t\dot{\beta_t}}{\beta_t} \frac{\alpha_t}{\alpha_t^2 + \beta_t^2}x_t\\
        &= \frac{\dot{\alpha_t}\alpha_t + \dot{\beta_t}\beta_t}{\alpha_t^2 + \beta_t^2} x_t.
    \end{align*}
    The rest of the proof follows from directly verifying that $x_t = \sqrt{\alpha_t^2 + \beta_t^2} x_0$ satisfies the ODE.
\end{proof}

The following proposition shows that for a distribution lies in a linear subspace $\R^m \subset \R^d$, the FM ODE can be decomposed.
\begin{proposition}\label{prop:linear_subspace_fm}
    For any $0 < m \leq d$, consider the subspace $\mathbb{R}^m \subset \mathbb{R}^d$. Let $p$ be a probability measure on $\R^d$ supported on $\mathbb{R}^m$ and assume the FM ODE is well defined. We express any point $\bm{x} \in \mathbb{R}^d$ as $\bm{x} = (x, y)$, where $x \in \mathbb{R}^m$ and $y \in \mathbb{R}^{d-m}$. Then for any FM scheduling functions $\alpha_t, \beta_t$, we have for any initial point $\bm{x}_0 = (x_0, y_0) \in \mathbb{R}^d$, the FM ODE trajectory is given by $(\bm{x}_t = (x_t, \beta_ty_0))_{t \in [0, 1]}$, where $(x_t)_{t\in[0,1]}$ is the trajectory of the FM ODE on $\R^m$ with initial point $x_0$, data distribution $p$ regarded as a probability measure on $\R^m$, and with scheduling functions $\alpha_t, \beta_t$.
\end{proposition}

\begin{proof}[Proof of \Cref{prop:linear_subspace_fm}]
    The FM ODE is given by
    \[
        \frac{d\bm{x}_t}{dt} = \frac{\dot{\beta_t}}{\beta_t} \bm{x}_t + \frac{\dot{\alpha_t}\beta_t - \alpha_t\dot{\beta_t}}{\beta_t}\int \frac{\exp\left(-\frac{\|\bm{x}_t-\alpha_t\bm{x}_1\|^2}{2\beta_t^2}\right)\bm{x}_1}{\int \exp\left(-\frac{\|\bm{x}_t-\alpha_t\bm{x}_1'\|^2}{2\beta_t^2}\right)p(d\bm x_1')}p(d\bm x_1).
    \]
    We have the following two observations:
    \begin{enumerate}
        \item For each $\bm x_1 \in \R^m$, there is $\|\bm x_t - \alpha_t \bm x_1\|^2 = \|x_t - \alpha_t x_1\|^2 + \|y_t\|^2$ where $\bm x_t = (x_t, y_t)$ and $\bm x_1 = (x_1, 0)$.
        \item The denoiser $m_t$ always lies in the subspace $\R^m$ since the data distribution $p$ is supported on $\R^m$.
    \end{enumerate}
    These two observations imply that the FM ODE can be decoupled into two ODEs:
    \begin{equation*}
    \begin{cases}
        \displaystyle \frac{dx_t}{dt} = \frac{\dot{\beta_t}}{\beta_t} x_t + \frac{\dot{\alpha_t}\beta_t - \alpha_t\dot{\beta_t}}{\beta_t}
        \int \frac{\exp\left(-\frac{\|x_t-\alpha_tx_1\|^2}{2\beta_t^2}\right)x_1}
        {\int \exp\left(-\frac{\|x_t-\alpha_tx_1'\|^2}{2\beta_t^2}\right)p(dx_1')}p(dx_1), \\[12pt]
        \displaystyle \frac{dy_t}{dt} = \frac{\dot{\beta_t}}{\beta_t} y_t.
    \end{cases}
    \end{equation*}
    The first equation is the FM ODE on $\R^m$ with initial point $x_0$ and data distribution $p$ in $\R^m$. The second equation is the ODE for $y_t$ which is a linear ODE with solution $y_t = \beta_ty_0$.
\end{proof}

\section{Proofs in \Cref{sec:background}}\label{sec:proof background}

\begin{proof}[Proof of Proposition~\ref{prop:equivalence_variance_exploding}]
    For any $x\in\R^d$ and $t\in (0,1]$, since $\alpha_t >0$, we have that the map $A_t$ is well-defined. Furthermore, we have that
    \begin{align*}
        (A_t)_\#p_t &= (A_t)_\# \left( \int \gN(\cdot|\alpha_t x_0, \beta_t^2 I)p(dx_0)\right)\\
        &= \int \gN(\cdot| x_0, (\beta_t/\alpha_t)^2 I)p(dx_0)\\
        &= \int \gN(\cdot| x_0, \sigma_t^2 I)p(dx_0)=q_{\sigma_t}.
    \end{align*}
\end{proof}
\begin{proof}[Proof of \Cref{prop:equivalence_ode}]
    We first write down the density of $q_\sigma$ explicitly as follows: 
    $$q_\sigma(x)=\frac{1}{{(2\pi \sigma^2)^{d/2}}}\int \exp\left(-\frac{\|x-x_1\|^2}{2\sigma^2}\right)p(dx).$$ 
    Then, we have that
    \[
    \nabla \log q_\sigma(x) = -\frac{1}{\sigma^2}\left(x -\frac{\int y \exp\left(-\frac{\|x-y\|^2}{2\sigma^2}\right)p(dy)}{\int \exp\left(-\frac{\|x-y'\|^2}{2\sigma^2}\right)p(dy')}\right).
    \]
Here, the existence of the gradient easily follows from the dominated convergence theorem.

    Note that $\frac{dt}{d\sigma}=\frac{\alpha_t^2}{\dot{\beta}_t\alpha_t-\beta_t\dot{\alpha}_t}$. 
    Therefore with the reparametrization ${x}_\sigma := x_t/\alpha_t$, we have that
    \begin{align*}
        \frac{dx_\sigma}{d\sigma} &= \frac{dx_\sigma}{dt}\frac{dt}{d\sigma}\\
        &= \frac{1}{\alpha_t^2}\left(\alpha_t \frac{dx_t}{dt}-\dot{\alpha}_tx_t\right)\frac{dt}{d\sigma}\\
        & = \frac{\alpha_t}{\beta_t}\left(\frac{x_t}{\alpha_t}-\int \frac{\exp\left(-\frac{\|x_t-\alpha_ty\|^2}{2\beta_t^2}\right)y}{\int \exp\left(-\frac{\|x_t-\alpha_ty'\|^2}{2\beta_t^2}\right)p(dy')}p(dy)\right)\\
        & = -{\sigma} \nabla \log q_\sigma(x_\sigma).
    \end{align*}
\end{proof}

\section{Proofs in \Cref{sec:denoiser}}\label{sec:denoiser proof}
We provide in this section a convenient pointer to the proofs in \Cref{sec:denoiser}. The formal version of \Cref{thm:attracting_toward_set_meta_informal} is~\Cref{thm:attracting_toward_set_meta} in~\Cref{sec:attracting_absorbing_meta_detail}. Similarly, the proofs of~\Cref{thm:absorbing_by_neighborhoods_meta,prop:initial_stage_convex_hull} are also provided in~\Cref{sec:attracting_absorbing_meta_detail}.

\section{Proofs in \Cref{sec:stage 1 and 2}}\label{sec:proof_attraction_dynamics}
In this section, we provide the deferred proofs in~\Cref{sec:stage 1 and 2}. Note that the proofs of~\Cref{prop:existence of flow maps} and \Cref{prop:initial-stage_mean_geneal} require more preparation and are organized in a subsection at the end of this section; see~\Cref{sec:proof_prop:existence of flow maps} and \Cref{sec:proof_prop:initial-stage_mean_geneal}.

\begin{proof}[Proof of \Cref{prop:local_cluster_denoiser_convergence_sigma}]

    For simplicity, we use $\eta_\sigma := p(\cdot | \bm{X}_\sigma=x)$ to denote the posterior measure at $x$. We restrict $\eta_\sigma$ on $S$ to obtain the following probability measure
    $$\eta_\sigma^S:=\frac{1}{\eta_\sigma(S)}\eta_\sigma|_S.$$
    It is easy to see that $\E(\eta_\sigma^S)$ lies in the convex hull of $S$. We can then bound the distance from the denoiser $m_\sigma(x)$ to the convex hull of $S$ by the Wasserstein distance between $\eta_\sigma$ and $\eta_\sigma^S$.
    
    Similarly to the proof of \Cref{prop:posterior_convergence_discrete_sigma}, we construct the following coupling $\mu$ between $\eta_\sigma$ and $\eta_\sigma^S$:
    \begin{align*}
                    \mu = \Delta_\#(\eta_\sigma|_S)+ \eta_\sigma^S\otimes (\eta_\sigma|_{(\set\backslash S)}),
                \end{align*}
      where $\Delta:\R^d\to\R^d\times \R^d$ is the diagonal map sending $x$ to $(x,x)$.

    We then have the following estimate:
    \begin{align*}
        d_{\mathrm{W},2}^2\left(\eta_\sigma, \eta_\sigma^S\right) &\leq \iint \|y_1 - y_2\|^2 \frac{\eta_\sigma|_S}{\eta_\sigma(S)}(dy_1)\eta_\sigma|_{(\set\backslash S)}(dy_2),\\
        &\leq \eta_\sigma((\set\backslash S))\, \left(\diam(\Omega)\right)^2 \\
        &= \left(\diam(\Omega)\right)^2 \frac{ \int_{(\set\backslash S)}\exp{\left( - \frac{1}{2\sigma^2} \|x - x_1\|^2 \right)} p(dx_1)}{ \int_{S\cup (\set\backslash S)} \exp{\left( - \frac{1}{2\sigma^2} \|x - x_1'\|^2 \right)} p(dx_1')}, \\
        &\leq \left(\diam(\Omega)\right)^2 \frac{\int_{(\set\backslash S)} \exp{\left( - \frac{1}{2\sigma^2} \|x - x_1\|^2 \right)} p(dx_1)}{\int_{S} \exp{\left( - \frac{1}{2\sigma^2} \|x - x_1\|^2 \right)} p(dx_1)}, \\
    \end{align*}
    By the assumption that $d_{\conv(S)}(x) \leq D/2 - \epsilon$, we have that for all $x_1 \in S$
    \[
        \|x-x_1\| \leq D/2 - \epsilon + D = 3D/2 - \epsilon,
    \]
     and for all $x_1 \in \set\backslash S$
    \[
        \|x- x_1\| \geq 2D - (D/2 - \epsilon) = 3D/2 + \epsilon.
    \]
    We then have that
    \begin{align*}
        d_{\mathrm{W},2}\left(\eta_\sigma, \eta_\sigma^S\right)^2 &\leq \left(\diam(\Omega)\right)^2 \frac{\int_{(\set\backslash S)}  \exp{\left( - \frac{1}{2\sigma^2} (3D/2 + \epsilon)^2 \right)} p(dx_1)}{\int_{S} \exp{\left( - \frac{1}{2\sigma^2} (3D/2 - \epsilon)^2 \right)} p(dx_1)}, \\
        & = \left(\diam(\Omega)\right)^2 \frac{1 - a_S}{a_S} \exp\left( - \frac{1}{2\sigma^2} ( (3D/2 + \epsilon)^2 - (3D/2 - \epsilon)^2 ) \right), \\
        & \leq \left(\diam(\Omega)\right)^2 \frac{1 - a_S}{a_S} \exp\left( -\frac{3D \epsilon}{\sigma^2} \right).
    \end{align*}
    Then by applying \Cref{lem:wasserstein_mean} and the fact that $\E(\eta_\sigma^S)\in\conv(S)$, we have that
    \[
        d_{\conv(S)}(m_\sigma(x)) \leq \diam(\Omega) \sqrt{\frac{1 - a_S}{a_S}} \exp\left( -\frac{3D \epsilon}{2\sigma^2}\right).
    \]

    \end{proof}

    \begin{proof}[Proof of \Cref{prop:local_cluster_convergence_sigma}]
    Note that, the set $B_{D/2-\epsilon}( \conv(S))$ is convex itself.
    For the first part, according to Item 2 of \Cref{prop:absorbing_convex}, it suffices to show that for all $x\in \partial \overline{B_{D/2-\epsilon}( \conv(S))}$, the denoiser $m_\sigma(x)$ lies in $B_{D/2-\epsilon}( \conv(S))$ for all $\sigma < \sigma_0(S, \epsilon)$.

    By \Cref{prop:local_cluster_denoiser_convergence_sigma}, we have that 
    \begin{align*}
        d_{\conv(S)}(m_\sigma(x)) \leq \diam(\supp(p)) \sqrt{\frac{1 - a_S}{a_S}} \exp\left( -\frac{3D \epsilon}{2\sigma^2} \right).
    \end{align*}
    Then, it suffices to show that 
    \[\diam(\supp(p)) \sqrt{\frac{1 - a_S}{a_S}} \exp\left( -\frac{3D \epsilon}{2\sigma^2} \right) < D/2 - \epsilon.\]
    This is equivalent to
    \begin{align*}
        \exp\left( -\frac{3D \epsilon}{2\sigma^2} \right) &< C^S_{\epsilon},
    \end{align*}
    where $C^S_{\epsilon} = \frac{D/2 - \epsilon}{\diam(\supp(p)) \sqrt{\frac{1 - a_S}{a_S}}}$.
    Then, when $C^S_{\epsilon} \geq 1$, the above inequality holds for all $\sigma < \infty$ and when $C^S_{\epsilon} < 1$, it is straightforward to verify the above inequality holds for all $0< \sigma < \sigma_0(S, \epsilon):=(\frac{-3D\epsilon}{2\log(C^S_{\epsilon})})^{\frac12}$. This concludes the proof of the first part.

    For the second part, we use $\lambda = -\log\sigma$ as the reparametrization and the corresponding trajectory $z_\lambda:=x_{\sigma(\lambda)}$.
    We want to show that the distance $d_{\conv(S)}(z_\lambda)$ goes to zero as $\lambda$ goes to infinity. By taking the derivative of $d_{\conv(S)}^2(z_\lambda)$, and use the notation
    $$v:= \E\left[\bm{X} | \bm{Z}_\lambda=z_\lambda\text{ and }\bm{X}\in S\right]\text{ where }\bm{Z}_\lambda\sim q_{\sigma(\lambda)},$$
    we have
    \begin{align*}
        \frac{d\, d_{\conv(S)}^2(z_\lambda)}{d\lambda} &= - 2\langle z_\lambda - \proj_{\conv(S)}(z_\lambda),  z_\lambda - m_\lambda(z_\lambda)\rangle,\\
        &= - 2\langle z_\lambda - \proj_{\conv(S)}(z_\lambda),  z_\lambda - \proj_{\conv(S)}(z_\lambda)\rangle \\
        & \quad \,- 2\langle z_\lambda - \proj_{\conv(S)}(z_\lambda),  \proj_{\conv(S)}(z_\lambda) - v\rangle \\
        &\quad \, - 2\langle z_\lambda - \proj_{\conv(S)}(z_\lambda),  v - m_\lambda(z_\lambda)\rangle,\\
        &\leq -2 d_{\conv(S)}^2(z_\lambda) + 2 \| z_\lambda - v\| \| v - m_\lambda(z_\lambda)\|,\\
        &\leq -2 d_{\conv(S)}^2(z_\lambda) + 2 (D/2 - \epsilon) \| v - m_\lambda(z_\lambda)\|.
    \end{align*}
    where in the second to last inequality we use the fact that $v\in \conv(S)$ and hence
    $$\langle z_\lambda - \proj_{\conv(S)}(z_\lambda),  \proj_{\conv(S)}(z_\lambda) - v\rangle \geq 0.$$
    Note that  $v$ is exactly the quantity used in the proof of~\Cref{prop:local_cluster_denoiser_convergence_sigma} which by the absorbing property in the first part of the proof, we can apply (in terms of $\lambda$) to $z_\lambda$ and obtain
    $$\| v - m_\lambda(z_\lambda)\| \leq \diam(\supp(p)) \sqrt{\frac{1 - a_S}{a_S}} \exp\left( -\frac{3D \epsilon}{2} \, e^{2\lambda} \right).$$
    Consequently, we have
    \begin{align*}
        \frac{d\, d_{\conv(S)}^2(z_\lambda)}{d\lambda} &\leq -2 d_{\conv(S)}^2(z_\lambda) + 2 (D/2 - \epsilon) \diam(\supp(p)) \sqrt{\frac{1 - a_S}{a_S}} \exp\left( -\frac{3D \epsilon}{2} \, e^{2\lambda} \right).
    \end{align*}
    For notation simplicity, we denote $\phi(e^{-\lambda}) =  (D/2 - \epsilon)\diam(\supp(p)) \sqrt{\frac{1 - a_S}{a_S}} \exp\left( -\frac{3D \epsilon}{2} \, e^{2\lambda} \right)$.
    Then we have
    \begin{align*}
        \frac{d\, d_{\conv(S)}^2(z_\lambda)}{d\lambda} &\leq -2 d_{\conv(S)}^2 + 2 \phi(e^{-\lambda})
    \end{align*}
    By \Cref{rem:attracting_toward_set_meta}, we have that
    \begin{align*}
        d_{\conv(S)}(z_\lambda) &\leq e^{-(\lambda-\lambda_1)} d_{\conv(S)}(z_{\lambda_1}) + e^{-\lambda} \sqrt{\int_{\lambda_1}^{\lambda} 2e^{2t} \phi(e^{-t})dt}.
    \end{align*}
    Now we want to bound $2e^{2t}\phi(e^{-t})$, by introducing the constants $C_1:= (D/2 - \epsilon)\diam(\supp(p)) \sqrt{\frac{1 - a_S}{a_S}}$ and $C_2 = \frac{3D\epsilon}{2}$, we have
    \begin{align*}
        2e^{2t}\phi(e^{-t}) &= 2C_1 e^{2t} \exp\left( -C_2\, e^{2t} \right)\\
        & \leq \frac{2C_1}{\sqrt{2C_2 e}}e^t,
    \end{align*}
    where we used the fact that $e^{t} \exp\left( -C_2\, e^{2t} \right)$ is maximized at $t = \frac{1}{2}\log\left(\frac{1}{2C_2}\right)$ and the maximum value is $\frac{1}{\sqrt{2C_2e}}$.
    Then we have
    \begin{align*}
        d_{\conv(S)}(z_\lambda) &\leq e^{-(\lambda-\lambda_1)} d_{\conv(S)}(z_{\lambda_1}) + e^{-\lambda} \sqrt{\int_{\lambda_1}^{\lambda} \frac{2C_1}{\sqrt{2C_2 e}}e^t dt}, \\
        &\leq e^{-(\lambda-\lambda_1)} d_{\conv(S)}(z_{\lambda_1}) + \sqrt{\frac{2C_1}{\sqrt{2C_2 e}}} \sqrt{e^{-\lambda} - e^{\lambda_1 - 2 \lambda}}.
    \end{align*}

    In terms of $\sigma$, we have that
    \begin{equation}\label{eq:cluster_decay}
        d_{\conv(S)}(x_\sigma) \leq \frac{\sigma}{\sigma_1} d_{\conv(S)}(x_{\sigma_1}) + \sqrt{\frac{2(D/2 - \epsilon)\diam(\supp(p)) \sqrt{\frac{1 - a_S}{a_S}}}{\sqrt{3D\epsilon e}}} \sqrt{\sigma (1 - \sigma/\sigma_1)}.
    \end{equation}

    In particular, $d_{\conv(S)}(x_\sigma) \to 0$ as $\sigma\to 0$.
\end{proof}

    We can extend \Cref{prop:local_cluster_convergence_sigma} to the case where the data distribution $p$ is of the form $p = p_b\ast \gN(0, \delta^2 I)$ for some $p_b$ that has a well-separated local cluster $S$ and $\delta >0$ is some constant. We have the following result.
\begin{corollary}\label{coro:local_cluster_convergence_sigma}
    Assume that $S$ is a local cluster of a probability measure $p_b$ satisfying the \hyperlink{assumption:local_cluster}{Local Cluster Assumption} and $a_S:=p_b(S)>0$.  
    Let $p = p_b\ast \gN(0, \delta^2 I)$ for some $\delta >0$ is some constant. 
    Then for any $0< \epsilon < D/2$, let $C_\epsilon^S := \frac{D/2 - \epsilon}{\diam(\supp(p)) \sqrt{\frac{1 - a_S}{a_S}}}$, and define:
    \begin{align*}
        \sigma_0(S, \epsilon) = \begin{cases}
            \infty & \text{ if } C_\epsilon^S \geq 1,\\
            \left( -\frac{3D\epsilon}{2\log(C_\epsilon^S)}\right)^{1/2} & \text{ if } C_\epsilon^S < 1.
        \end{cases}
    \end{align*}
    Assume that $\delta^2 < \sigma_0(S, \epsilon)$. Then, for any $\sigma_1 < \sqrt{\sigma_0(S, \epsilon) - \delta^2}$ the set 
    $\overline{B_{D/2 - \epsilon}\bigl(\conv(S)\bigr)}$ 
    is absorbing on the interval $(0,\sigma_1]$ with respect to the distribution $p$. Moreover, for any FM ODE trajectory 
    $(x_\sigma)_{\sigma \in (0,\sigma_1]}$
    with data distribution $p$ starting from a point 
    $x_{\sigma_1}\in \partial\overline{B_{D/2 - \epsilon}\bigl(\conv(S)\bigr)},$
    the following estimate for the distance $d_{\conv(S)}(x_\sigma)$ holds:
    \begin{align*}
        d_{\conv(S)}(x_\sigma) &\leq \frac{\sqrt{\sigma^2 + \delta^2}}{\sqrt{\sigma_1^2 + \delta^2}} d_{\conv(S)}(x_{\sigma_1}) + \sqrt{\frac{2(D/2 - \epsilon)\diam(\supp(p)) \sqrt{\frac{1 - a_S}{a_S}}}{\sqrt{3D\epsilon e}}} \sqrt{(\sqrt{\sigma^2 + \delta^2}) \left(1 - \sqrt{\frac{\sigma^2 + \delta^2}{\sigma_1^2 + \delta^2}}\right)}.
    \end{align*}
    \end{corollary}
    \begin{proof}[Proof of~\Cref{coro:local_cluster_convergence_sigma}]
            Let $(x_\sigma)_{\sigma\in(0,\sigma_1]}$ be the trajectory of the FM ODE with data distribution $p$ starting at some $x_{\sigma_1}$. By \Cref{lemma:ode_pb_gaussian}, we have that $x_\sigma=y_{\sqrt{\sigma^2+\delta^2}}$ for all $\sigma\in(0,\sigma_1]$ where $(y_{\sigma_b})_{\sigma_b\in(0,\sqrt{\sigma_1^2+\delta^2}]}$ is the FM ODE trajectory with data distribution $p_b$ starting from $y_{\sqrt{\sigma_1^2+\delta^2}}=x_{\sigma_1}$.

            Since $\sigma_1 < \sqrt{\sigma_0(S, \epsilon) - \delta^2}$, we have that $\sqrt{\sigma_1^2 + \delta^2}<\sigma_0(S, \epsilon)$. By \Cref{prop:local_cluster_convergence_sigma} we have that $\overline{B_{D/2 - \epsilon}(\conv(S))}$ is absorbing in $(0,\sqrt{\sigma_1^2 + \delta^2}]$ for the distribution $p_b$ and hence absorbing in $(0,\sigma_1]$ for the distribution $p$.
           
            For the second part, we apply~\Cref{eq:cluster_decay} to the trajectory $x_\sigma=y_{\sqrt{\sigma^2 + \delta^2}}$ and obtain the desired result.
    \end{proof}
    \subsection{Proof of \Cref{prop:existence of flow maps}}\label{sec:proof_prop:existence of flow maps}
    
    We are going to apply the theory of continuity equations to show that the flow ODE is well-posed. In particular, we need the following result which is a direct consequence of \citep[Lemma 8.1.4, Proposition 8.1.8]{ambrosio2008gradient} and \citep[Remark 7]{ambrosio2014continuity}.

    \begin{lemma}\label{lemma: continuity equation}
        Let $(q_t)_{t\in[0,1)}$ be a narrowly continuous family of probability measures on $\R^d$ with densities solving the continuity equation below w.r.t. a smooth vector field $v_t$:
        \[\partial_t q_t + \nabla\cdot(v_t q_t) = 0, t\in[0,1).\]
        We further assume that 
        \begin{enumerate}
            \item $\int_0^1\int_{\R^d}\|v_t(x)\|q_t(dx)dt<\infty$;
            \item for any $t_1\in[0,1)$ and any compact set $K\subset \R^d$, 
            \[\int_0^{t_1}\left(\sup_K\|v_t\|+\mathrm{Lip}(v_t,K)\right)<\infty,\]
            where $\mathrm{Lip}(v_t,K)$ is the Lipschitz constant of $v_t$ on $K$;
            \item for any $0<t_0<t_1<1$, there exists a constant $C_{t_0,t_1}$ such that 
            \[\|v_t(x)\|\leq C_{t_0,t_1}(1+\|x\|),\,\text{ for all }t\in[t_0,t_1].\]
        \end{enumerate}
        Then, the ODE
        \[
            \frac{dx}{dt} = u_t(x), x(0) = x_0
        \]
        has a unique solution $x(t)$ for all $t\in[0,1)$. Furthermore, the flow map $\Psi_t:\R^d\to\R^d$ is continuous and satisfies that $(\Psi_t)_\#q_0=q_t$ for all $t\in[0,1)$.
    \end{lemma}
    \begin{proof}[Proof of \Cref{lemma: continuity equation}]
        By \citep[Proposition 8.1.8]{ambrosio2008gradient}, for any $t_1\in[0,1)$, we have that the flow map $(\Psi_t)_{t\in[0,t_1]}$ exists for $q_0$-a.e. $x\in\R^d$ and satisfies that $(\Psi_t)_\#q_0=q_t$. We can take a sequence of rational numbers $t_1^{(n)}\to 1$ to conclude that the flow map $(\Psi_t)_{t\in[0,1)}$ exists for $q_0$-a.e. $x\in\R^d$ and satisfies that $(\Psi_t)_\#q_0=q_t$.

        Now, we need to upgrade the $q_0$-a.e. conclusion to all points $x$.
        By \citep[Lemma 8.1.4]{ambrosio2008gradient}, for each initial point $x_0\in\R^d$, the ODE admits a unique (right) maximal solution defined in a maximal interval $I(x_0)=[0,\tau(x_0))$. Pick any $t_0\in (0,\tau(x_0))$ and any $t_1\in(\tau(x_0),1)$. Then, the ODE starting at $x_{t_0}$ at time $t_0$ has a unique (right) maximal solution in the interval $[t_0,\tau(x_0))$. Suppose on the contrary that $\tau(x_0)<1$. By item 3 in the lemma, we have that the solution $(x(t))_{t\in[t_0,\tau(x_0))}$ such that $x(t_0)=x_{t_0}$ must be {uniformly bounded} in $t$ because $\|v_t\|$ has a linear growth rate for all $t\in[t_0,t_1]$. This can be seen easily by applying Grönwall's inequality to the differential inequality below:
        \[\frac{d \|x_t\|^2}{dt}=2\langle x_t, v_t(x_t)\rangle\leq 2C_{t_0,t_1}\|x_t\|(1+\|x_t\|)\leq\begin{cases}
            4C_{t_0,t_1}, & \text{if } \|x_t\|\geq 1\\
            4C_{t_0,t_1}\|x_t\|^2, & \text{if } \|x_t\|>1
        \end{cases}.\]

        By \citep[Lemma 8.1.4]{ambrosio2008gradient} again, $\tau(x_0)\geq t_1$, contradiction. Therefore, $\tau(x_0)=1$ and this proves that the flow map $\Psi_t$ is well-defined for all $t\in[0,1)$ and all $x_0\in\R^d$.
        
        Finally, since the solution exists and is unique, and the vector field is locally Lipschitz, it follows that the flow map $\Psi_t$ is continuous; see, for example, \citep[Theorem 3.5]{khalil2002nonlinear}. 
    \end{proof}

    Now, we verify that the assumptions in \Cref{lemma: continuity equation} are satisfied by the vector field $u_t$ defined in \Cref{eq:ut and mt} and the probability measure $p_t$ defined in \Cref{eq:pt}. First of all, for any $p$ with a finite 2-moment, the path $(p_t)_{t\in[0,1]}$ is obviously narrowly continuous. Furthermore, each $p_t$ is absolutely continuous w.r.t. Lebesgue measure for any $t\in[0,1)$ with a density function given by \Cref{eq:pt}.

    \citet[Theorem 3.1]{gao2024gaussian} shows that as long as $p$ has a finite 2-moment and absolutely continuous w.r.t. Lebesgue measure, the density function of $(p_t)_{t\in[0,1]}$ satisfies the continuity equation
    \begin{equation}\label{eq:continuity equation}
        \partial_t p_t + \nabla\cdot(u_t p_t) = 0, t\in[0,1],
    \end{equation}
    where $u_t$ is defined in \Cref{eq:ut and mt}.

    Note that our assumption only requires $p$ to have a finite 2-moment. In this case, although $p$ may not have a density function, for all $t\in[0,1)$, the probability measure $p_t$ is absolutely continuous w.r.t. Lebesgue measure. We point out that the same proof of \citep[Theorem 3.1]{gao2024gaussian} is valid under this more general assumption that $p$ has a finite 2-moment, and hence the continuity equation \Cref{eq:continuity equation} still holds when we restrict to $t\in[0,1)$ which is the case we are interested in this theorem.

    Now we verify the integrality of $u_t$ to show that item 1 in \Cref{lemma: continuity equation} is satisfied.
    \begin{align*}
        &\int_{0}^{1} \int \left\|u_t(x)\right\| p_t(dx) dt \\
        &\leq \int_{0}^{1} \int \int \|u_t(x|x_1)\| p_{t}(dx|\bm{X}=x_1) \, p(dx_1) dt\\
        &= \int_{0}^{1} \int \int \left\| \frac{\dot{\beta}_t}{\beta_t} x + \frac{\dot{\alpha}_t\beta_t-\alpha_t\dot{\beta}_t}{\beta_t} x_1\right\|\frac{1}{(2\pi \beta_t^2)^{d/2}} \exp\left(-\frac{\|x - \alpha_t x_1\|^2}{2\beta_t^2}\right) dx\, p(dx_1) dt\\
        &\leq \int_{0}^{1} \int \int \left(\left\| \frac{\dot{\beta}_t}{\beta_t} x - \frac{\alpha_t\dot{\beta}_t}{\beta_t} x_1\right\| + \left\|\dot{\alpha_t} x_1\right\| \right)\frac{1}{(2\pi \beta_t^2)^{d/2}} \exp\left(-\frac{\|x - \alpha_t x_1\|^2}{2\beta_t^2}\right) dx\, p(dx_1) dt.
    \end{align*}
    
    We split the integral by the two terms according to the summation $\| \frac{\dot{\beta}_t}{\beta_t} x - \frac{\alpha_t\dot{\beta}_t}{\beta_t} x_1\| + \left\|\dot{\alpha_t} x_1\right\|$.
    For the first term, we use $\tilde{x} = x - \alpha_t x_1$ and the fact about the expected norm of a Gaussian random variable with variance $\sigma^2$ is $\sigma \sqrt{\frac{\pi}{2} }\frac{\Gamma((n+1)/2)}{\Gamma(n/2)}$.
    \begin{align*}
        &\int_{0}^{1} \int \int \left|\frac{\dot{\beta}_t}{\beta_t}\right|\left\|x -  \alpha_tx_1\right\|\frac{1}{(2\pi \beta_t^2)^{d/2}} \exp\left(-\frac{\|x - \alpha_t x_1\|^2}{2\beta_t^2}\right) dx\, p(dx_1) dt\\
        &= \int_{0}^{1} \int \left|\frac{\dot{\beta}_t}{\beta_t}\right|\beta_t\sqrt{\frac{\pi}{2}} \frac{\Gamma((d+1)/2)}{\Gamma(d/2)} p(dx_1) dt\\
        &= \sqrt{\frac{\pi}{2}} \frac{\Gamma((d+1)/2)}{\Gamma(d/2)} \int_{0}^{1} - \dot{\beta_t} dt\\
        &= \sqrt{\frac{\pi}{2}} \frac{\Gamma((d+1)/2)}{\Gamma(d/2)},
    \end{align*}
    where we use the assumption that $\beta_t$ is a non-increasing function of $t$ and hence $\dot{\beta}_t\leq 0$.
    
    For the second term, we have that 
    \begin{align*}
        &\int_{0}^{1} \int \int \left\|\dot{\alpha_t} x_1\right\|\frac{1}{(2\pi \beta_t^2)^{d/2}} \exp\left(-\frac{\|x - \alpha_t x_1\|^2}{2\beta_t^2}\right) dx\, p(dx_1) dt\\
        &\leq  \int_{0}^{1} \int \dot{\alpha_t} \|x_1\| p(dx_1) dt\\
        &\leq  \int \| x_1\| p(dx_1)<\infty.
    \end{align*}
    The last step follows from the fact that $p$ has a finite 2-moment and, hence, a finite first moment. We also use the assumption that $\alpha_t$ is a non-decreasing function of $t$ and hence $\dot{\alpha}_t\geq 0$.
    
    Now we verify that $u_t$ satisfies item 2 in \Cref{lemma: continuity equation}. Recall $u_t(x) = {\dot{\beta}_t}/{\beta_t}\cdot x + {(\dot{\alpha}_t\beta_t-\alpha_t\dot{\beta}_t)}/{\beta_t}\cdot  \E[\bm{X}|\bm{X}_t=x]$ and by \Cref{thm:denoiser_jacobian}, we have that
    \[\nabla_xm_t(x) = \frac{\alpha_t}{\beta_t^2}\mathrm{Cov}[\bm{X}|\bm{X}_t=x],\]
    where $\mathrm{Cov}[\bm{X}|\bm{X}_t=x]$ denotes the covariance matrix of the posterior distribution $p(\cdot|\bm{X}_t=x)$. For simplicity, we let $\Sigma_t(x):=\mathrm{Cov}[\bm{X}|\bm{X}_t=x]$ below. Therefore,

    \[\nabla_xu_t(x)=\frac{\dot\beta_t}{\beta_t}I+\frac{\alpha_t(\dot{\alpha}_t\beta_t-\alpha_t\dot{\beta}_t)}{\beta_t^3}\Sigma_t(x).\]

   Recall that $\alpha_0=0$, $\beta_0=1$ and $\alpha_t,\beta_t$ are positive continuous when $t\in(0,1)$. Also we have assumed that derivatives exist and are bounded. Therefore, for any $t_1\in[0,1)$, the coefficients above consisting of $\alpha_t,\beta_t$ will be uniformly bounded within $[0,t_1]$. Hence, to prove the locally Lipschitz property, {it suffices to show that the covariance matrix $\Sigma_t(x)$ of the posterior distribution $p(\cdot|\bm{X}_t=x)$ is locally (w.r.t. $x$) uniformly bounded (w.r.t. $t$).   We establish the following lemma for this purpose.}
                
                \begin{lemma}\label{bounedness_of_posterior_covariance_2}
                    Let $p$ be a probability measure on $\R^d$ with a finite 2-moment $\mathsf{M}_2(p)$.
                    For any $x\in\R^d$, consider the posterior distribution:
                    \[p(dz|\bm{X}_t=x) = \frac{\exp\left(-\frac{\|x - \alpha_t z\|^2}{2 \beta_t^2}\right)p(dz)}{\int_\set \exp\left(-\frac{\|x - \alpha_t z\|^2}{2 \beta_t^2}\right) p(dz)}.\]
                    We let $N_t(x):=\int \exp\left(-\frac{\|x - \alpha_t z\|^2}{2\beta_t^2}\right) p(dz)$. Then, the covariance matrix $\Sigma_t(x)$ of $p(\cdot|\bm{X}_t=x)$ satisfies:
                    \[\Sigma_t(x) \preceq \left(2\|m_t(x)\|^2 + \frac{2 \mathsf{M}_2(p)}{N_t(x)}\right)I.\]
                    \end{lemma}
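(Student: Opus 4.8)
\textbf{Proof plan for \Cref{bounedness_of_posterior_covariance_2}.} The plan is to dominate the posterior covariance by the posterior second moment (as matrices), and then control the latter by discarding the Gaussian weight, which is bounded by $1$.

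First I would let $\bm{Z}$ be a random vector distributed according to the posterior $p(\cdot\mid\bm{X}_t=x)$, so that $m_t(x)=\E[\bm{Z}]$ and $\Sigma_t(x)=\E\!\left[(\bm{Z}-m_t(x))(\bm{Z}-m_t(x))^T\right]$. I would invoke the elementary matrix inequality that for any random vector $\bm{v}$ with $\E\|\bm{v}\|^2<\infty$ one has $\E[\bm{v}\bm{v}^T]\preceq \E[\|\bm{v}\|^2]\,\bm{I}$, which holds because for every unit vector $u$, $u^T\E[\bm{v}\bm{v}^T]u=\E[\langle\bm{v},u\rangle^2]\le \E[\|\bm{v}\|^2]$. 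Applying this with $\bm{v}=\bm{Z}-m_t(x)$ gives $\Sigma_t(x)\preceq \E\!\left[\|\bm{Z}-m_t(x)\|^2\right]\bm{I}$, so it remains to bound the scalar quantity $\E\!\left[\|\bm{Z}-m_t(x)\|^2\right]$.

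Next I would use $\|\bm{Z}-m_t(x)\|^2\le 2\|\bm{Z}\|^2+2\|m_t(x)\|^2$ and take expectations, reducing the problem to bounding $\E[\|\bm{Z}\|^2]$. Writing this out with the explicit posterior density and using $\exp\!\big(-\|x-\alpha_t z\|^2/(2\beta_t^2)\big)\le 1$ pointwise,
\[
\E[\|\bm{Z}\|^2]=\frac{1}{N_t(x)}\int \|z\|^2\exp\!\left(-\frac{\|x-\alpha_t z\|^2}{2\beta_t^2}\right)p(dz)\le \frac{1}{N_t(x)}\int \|z\|^2\,p(dz)=\frac{\mathsf{M}_2(p)}{N_t(x)},
\]
where $N_t(x)=\int \exp\!\big(-\|x-\alpha_t z\|^2/(2\beta_t^2)\big)p(dz)\in(0,1]$ is strictly positive since the integrand is positive and $p$ is a probability measure, and $\mathsf{M}_2(p)<\infty$ by hypothesis. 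Combining the two bounds yields $\Sigma_t(x)\preceq\big(2\|m_t(x)\|^2+2\mathsf{M}_2(p)/N_t(x)\big)\bm{I}$, which is the claim.

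There is essentially no obstacle in the lemma itself; the only points to record are that $N_t(x)>0$ and $\mathsf{M}_2(p)<\infty$ so that every quantity is finite, which is precisely the standing assumption. The real work occurs downstream in \Cref{prop:existence of flow maps}, where this bound is combined with the Jacobian identity $\nabla_x m_t(x)=(\alpha_t/\beta_t^2)\,\Sigma_t(x)$ from \Cref{thm:denoiser_jacobian} together with a locally uniform (in $t$, for $t$ in compact subsets of $[0,1)$) lower bound on $N_t(x)$ to deduce that $u_t$ is locally Lipschitz, and is then paired with the integrability estimate $\int_0^1\!\int_{\R^d}\|u_t(x)\|\,p_t(dx)\,dt<\infty$ to apply the mass conservation formula.
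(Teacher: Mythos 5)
Your proof is correct and follows essentially the same route as the paper: the paper fixes a unit vector $v$ and bounds $v^\top\Sigma_t(x)v$ directly, which is exactly the content of your matrix inequality $\E[\bm{v}\bm{v}^T]\preceq\E[\|\bm{v}\|^2]\bm{I}$, and both arguments then use $\|z-m_t(x)\|^2\le 2\|z\|^2+2\|m_t(x)\|^2$ and drop the Gaussian weight (bounded by $1$) to arrive at $2\|m_t(x)\|^2+2\mathsf{M}_2(p)/N_t(x)$. No substantive difference.
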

                    \begin{proof}[Proof of \Cref{bounedness_of_posterior_covariance_2}]
                        Fix a unit vector $v$. Then, we have that
                        \begin{align*}
                        v^\top\Sigma_t(x)v &= \int \langle z-m_t(x),v\rangle^2 \frac{\exp\left(-\frac{\|x - \alpha_t z\|^2}{2 \beta_t^2}\right)p(dz)}{\int \exp\left(-\frac{\|x - \alpha_t z'\|^2}{2\beta_t^2}\right) p(dz')}.
                        \end{align*}
                        
                        For $z \in \R^d$, we have that
                        \begin{align*}
                        \langle z-m_t(x),v\rangle^2 &\leq \|z-m_t(x)\|^2 \leq 2\|z\|^2 + 2\|m_t(x)\|^2.
                        \end{align*}
            Therefore, one has that
                        \begin{align*}
                            v^\top\Sigma_t(x)v&\leq 2\|m_t(x)\|^2 + 2\int \|z\|^2\frac{\exp\left(-\frac{\|x - \alpha_t z\|^2}{2 \beta_t^2}\right) p(dz)}{\int \exp\left(-\frac{\|x - \alpha_t z'\|^2}{2\beta_t^2}\right) p(dz')}\\
                            &= 2\|m_t(x)\|^2 + \frac{2\int \|z\|^2 p(dz)}{\int \exp\left(-\frac{\|x - \alpha_t z'\|^2}{2\beta_t^2}\right) p(dz')}\\
                            &= 2\|m_t(x)\|^2 + \frac{2 \mathsf{M}_2(p)}{N_t(x)}.
                        \end{align*}
                        Since $v$ is arbitrary, this concludes the proof.
                        \end{proof}
                By dominated convergence theorem, it is straightforward to check that $N:[0,1)\times\R^d\to\R$ is continuous {w.r.t. $(t,x)$}. Hence, for any $x\in\R^d,\, t\in [0,1)$ and any {local compact neighborhood of $x$ in $\R^d$}, $N$ is bounded below by some positive constant. {Similarly, $m_t(x)$ is continuous w.r.t. $x$, we have that $m_t(x)$ is locally uniformly bounded as well in any local compact neighborhood of $x$ in $\R^d$}. These together with \Cref{thm:denoiser_jacobian} and \Cref{eq:ut and mt} imply that the vector field $u:[0,1)\times \R^d\to\R^d$ is locally Lipschitz in $x$ for any fixed $t\in[0,1)$. {In fact, by continuity of $N_t$ and $m_t$ in $t$, we have that for any fixed $t_1\in[0,1)$ and any compact set $K\subset \R^d$, }
                $$\sup_{t\in[0,t_1]}\sup_K\|u_t\|+\mathrm{Lip}_K(u_t)<\infty.$$
                This implies that $u_t$ satisfies item 2 in \Cref{lemma: continuity equation}.

Finally, we show that $u_t$ satisfies item 3 in \Cref{lemma: continuity equation}. This is done by establishing the following claim.

    \begin{claim}\label{claim:flow_map}
       There exists a positive constant $\hat{C}_t$ continuously dependent on $t\in(0,1)$ so that for any $x\in \R^d$ 
       \[\|u_t(x)\|\leq  \hat{C}_{t}(1+\|x\|).\]
       Here the constant $\hat{C}_t$ may blow up as $t\to 0$ or $t\to 1$ but it is finite for any fixed $t\in(0,1)$.
    \end{claim}
    
    \begin{proof}[Proof of \Cref{claim:flow_map}]
        We first show that for any $\sigma\in(0,\infty)$, then for any $x\in\R^d$, we have that 
    \[\|m_\sigma(x)\|\leq C_\sigma(1+\|x\|).\]
    where $C_\sigma$ continuously depends on $\sigma$.
    
    By Markov's inequality, we have that for any $a>0$,
    \[p(\|\bm{X}-x\|\geq a)\leq \frac{\mathbb{E}\|\bm{X}-x\|}{a}\leq \frac{\mathsf{M}_1+\|x\|}{a},\]
    where $\mathsf{M}_1$ denotes the first moment of the distribution $p$. We let $A_x:=\lceil 2(\mathsf{M}_1+\|x\|)\rceil\in\mathbb{N}$. Then, we have the following estimates:
    \begin{align*}
        \|m_\sigma(x)-x\| &=\left\|\frac{\int \exp\left(\frac{-\|x-y\|^2}{2\sigma^2}\right)(y-x)p(dy)}{\int \exp\left(\frac{-\|x-y'\|^2}{2\sigma^2}\right)p(dy')}\right\|\\
        &\leq\underbrace{\sum_{n=A_x+1}^\infty\frac{\int_{n\leq \|y-x\|<n+1} \exp\left(\frac{-\|x-y\|^2}{2\sigma^2}\right)\|y-x\|p(dy)}{\int \exp\left(\frac{-\|x-y'\|^2}{2\sigma^2}\right)p(dy')}}_{I_1}+\underbrace{\frac{\int_{\|y-x\|<A_x+1} \exp\left(\frac{-\|x-y\|^2}{2\sigma^2}\right)\|y-x\|p(dy)}{\int \exp\left(\frac{-\|x-y'\|^2}{2\sigma^2}\right)p(dy')}}_{I_2}.\\
    \end{align*}
    
    If we let $\bar{\bm{X}}:=\chi_{\|\bm{X}-x\|<A_x+1}\cdot \|\bm{X}-x\|$, then we have that 
    \[I_2=\mathbb{E}[\bar{\bm{X}}|\bm{X}_\sigma=x]\leq A_x+1.\]
    
    Then, we pick any small $0<\epsilon<1$. We have that 
    \begin{align*}
        I_1&\leq \sum_{n=A_x+1}^\infty\frac{\int_{n\leq \|y-x\|<n+1} \exp\left(\frac{-\|x-y\|^2}{2\sigma^2}\right)\|y-x\|p(dy)}{\int_{\|y-x\|<n-\epsilon} \exp\left(\frac{-\|x-y'\|^2}{2\sigma^2}\right)p(dy')}\\
        &\leq \sum_{n=A_x+1}^\infty\frac{(n+1)\exp\left(\frac{-n^2}{2\sigma^2}\right)p(\{y:n\leq \|y-x\|\})}{\exp\left(\frac{-{(n-\epsilon)}^2}{2\sigma^2}\right)p(\{y:\|y-x\|<n-\epsilon\})}\\
        &\leq \sum_{n=A_x+1}^\infty\frac{(n+1)\exp\left(\frac{-(2\epsilon n-\epsilon^2)}{2\sigma^2}\right)p(\{y:n-\epsilon\leq \|y-x\|\})}{p(\{y:\|y-x\|<n-\epsilon\})}.
    \end{align*}
    
    We have that
    \[p(\{y:n-\epsilon\leq \|y-x\|\})=p(\|\bm{X}-x\|\geq n-\epsilon)\leq p(\|\bm{X}-x\|\geq A_x)\leq \frac{\mathsf{M}_1+\|x\|}{A_x}\leq \frac{1}{2}.\]
    
    Then, we have that
    \[\frac{p(\{y:n-\epsilon\leq \|y-x\|\})}{p(\{y:\|y-x\|<n-\epsilon\})}= \frac{p(\{y:n-\epsilon\leq \|y-x\|\})}{1-p(\{y:n-\epsilon\leq \|y-x\|\})}\leq 1.\]
    
    In this way, we can continue to bound $I_1$ as follows:
    \begin{align*}
        I_1 &\leq \sum_{n=A_x+1}^\infty (n+1)\exp\left(\frac{-(2\epsilon n-\epsilon^2)}{2\sigma^2}\right)\\
        &= \exp\left(\frac{\epsilon^2}{2\sigma^2}\right)\sum_{n=1}^\infty (n+1)\exp\left(-\frac{\epsilon }{\sigma^2}n\right)=:C_{\epsilon,\delta}<\infty.
    \end{align*}
    The final inequality follows from the fact that $\sum_{n=1}^\infty (n+1)\exp\left(-\frac{\epsilon}{\sigma^2}n\right)$ is a convergent series (which can be seen easily from ratio test).
    
    Then, we have that
    \[\|m_\sigma(x)\|\leq \|x\|+I_1+I_2\leq \|x\|+ C_{\epsilon,\delta}+A_x+1\leq C_{\epsilon,\delta}+2+2\mathsf{M}_1+3\|x\|=C_\delta(1+\|x\|),\]
    where $C_\delta$ is a constant that depends on $\delta$ and the first moment of the distribution $p$ (we can simply take $\epsilon=1/2$ to remove the dependency on $\epsilon$).
    
    This implies that 
    \[\|m_t(x)\|=\|m_{\sigma_t}(x/\alpha_t)\|\leq C_{\delta_t}(1+\|x/\alpha_t\|)\leq C_{t}(1+\|x\|)\]
    where $C_{t}:=C_{\delta_t}/\alpha_t$ is continuously dependent on $t\in(0,1)$ (note that $t=0$ has to be excluded otherwise $C_0=\infty$). 
    
    Therefore, as $u_t$ is a linear combination of $x$ and $m_t$ (cf. \Cref{eq:ut and mt}), we have that 
    \[\|u_t(x)\|\leq  \hat{C}_{t}(1+\|x\|)\]
    for some constant $\hat{C}_{t}$ that is continuously dependent on $t$ for all $t\in(0,1)$. 
    \end{proof}
    Now, for any $0<t_0<t_1<1$, by continuity of $C_t$, we have that $C_{t_0,t_1}:=\sup_{t\in[t_0,t_1]}C_t<\infty$ and hence we have that $u_t$ satisfies item 3 in \Cref{lemma: continuity equation}.

    Then, by applying \Cref{lemma: continuity equation}, we conclude the proof.

\subsection{Proof of \Cref{prop:initial-stage_mean_geneal}}\label{sec:proof_prop:initial-stage_mean_geneal}
We now prove~\Cref{prop:initial-stage_mean_geneal} which is a consequence of how close the posterior distribution is to the data distribution when $\sigma$ is large. We will first only consider the case where the data distribution has bounded support and then extend the result to the case where the data distribution is a convolution of a bounded support distribution and a Gaussian distribution using~\Cref{lemma:ode_pb_gaussian}.

\begin{proposition}[Initial stability of posterior measure]\label{prop:initial-stability}
    Let $p$ be a probability measure on $\mathbb{R}^d$ with bounded support which is denoted as $\set:=\mathrm{supp}(p)$.
    Let $x$ be a point and consider the posterior measure $p(\cdot | \bm{X}_\sigma=x)$.
    We then have the following Wasserstein distance bound:
    \[
    d_{\mathrm{W}, 1}(p(\cdot | \bm{X}_\sigma=x), p) < \left(
        \exp\left(\frac{2\|x - \E[\bm{X}]\|\diam(\set)}{\sigma^2}\right) - 1
    \right)\diam(\set).
    \]
\end{proposition}
\begin{proof}[Proof of Proposition~\ref{prop:initial-stability}]
    Let $R_1 = \|x - \E[\bm{X}]\|$.
    Consider the function $g_\sigma(y) = \exp\left(-\frac{\|x-y\|^2}{2\sigma^2}\right)$.
    By the fact that $\|y - \E[\bm{X}]\| \leq \diam(\set)$ for any $y\in \set$, we have that:
    \[
    \exp\left(-\frac{(R_1+\diam(\set))^2}{2\sigma^2}\right) \leq g_\sigma(y) \leq \exp\left(-\frac{(R_1-\diam(\set))^2}{2\sigma^2}\right)
    \]
    for all $y \in \set$.
    Then for any Borel measurable set $A\subseteq \set$, we can bound the ratio of the posterior and the data distribution as:
    \begin{align*}
    \frac{p(A | \bm{X}_\sigma=x)}{p(A)} &= \frac{\int_A g_\sigma(y)p(dy)}{p(A)\int_\set g_\sigma(y)p(dy)} \\
    &\leq \frac{\exp\left(-\frac{(R_1-\diam(\set))^2}{2\sigma^2}\right)\int_A p(dy)}{p(A)\exp\left(-\frac{(R_1+\diam(\set))^2}{2\sigma^2}\right)\int_\set p(dy)} \\
    &= \underbrace{\exp\left(\frac{(R_1+\diam(\set))^2}{2\sigma^2}-\frac{(R_1-\diam(\set))^2}{2\sigma^2}\right)}_{a}.
    \end{align*}
    Similarly, we can bound the ratio from below:
    \begin{align*}
    \frac{p(A | \bm{X}_\sigma=x)}{p(A)} &\geq \frac{\exp\left(-\frac{(R_1+\diam(\set))^2}{2\sigma^2}\right)\int_A p(dy)}{p(A)\exp\left(-\frac{(R_1-\diam(\set))^2}{2\sigma^2}\right)\int_\set p(dy)} \\
    &= \exp\left(\frac{(R_1-\diam(\set))^2}{2\sigma^2} - \frac{(R_1+\diam(\set))^2}{2\sigma^2}
    \right)\\
    &= 1/a.
    \end{align*}
    Since $a>1$, we have $|1/a - 1| \leq a - 1$. Therefore, we have:
    \[
    \left|\frac{p(A | \bm{X}_\sigma=x)}{p(A)} - 1\right| \leq
    a -1.
    \]
    In particularly, there is $| p(A | \bm{X}_\sigma=x) - p(A)| \leq a - 1$ for all $A\subseteq \set$ which by definition bounds the total variation distance between $p(\cdot | \bm{X}_\sigma=x)$ and $p$ by $a - 1$.
   Then the Wasserstein distance $d_{W, 1}(p(\cdot | \bm{X}_\sigma=x), p)$ can be bounded by $(a-1)\diam(\set)$; see e.g. \citet[Proposition~7.10]{villani2003topics}.
\end{proof}

Now we are ready to prove the following special case of~\Cref{prop:initial-stage_mean_geneal}.
\begin{proposition}\label{prop:initial-stage_mean}
    Let $p$ be a probability measure on $\mathbb{R}^d$ with bounded support which is denoted as $\set:=\mathrm{supp}(p)$. Let $x_0$ be a point and denote $\|x_0 - \E[\bm{X}]\| = R_0$, where $\bm{X}\sim p$.
    Let $\zeta$ be a parameter such that $0 < \zeta < 1$. Then the constant
    \[
    \sigma_{\text{init}}(\set, \zeta, R_0) := \sqrt{\frac{2R_0\diam(\set)}{\log\left(1 + \frac{\zeta R_0}{\diam(\set)}\right)}}
    \]
    satisfies that for any $\sigma_1 > \sigma_{\text{init}}(\set, \zeta, R_0)$, the ODE trajectory $(x_{\sigma})_{\sigma\in ({\sigma_{\text{init}}(\set, \zeta, R_0)},\sigma_1]}$ starting from $x_{\sigma_1}=x_0$ will move toward the mean of the data distribution $p$ as shown in the following estimate:
    \[
    \|x_\sigma - \E[\bm{X}]\| < \frac{\sigma^{{1-\zeta}}}{\sigma_1^{{1-\zeta}}} \|x_{\sigma_1} - \E[\bm{X}]\|,
    \]
    where $\bm{X}\sim p$.
\end{proposition}

\begin{proof}[Proof of \Cref{prop:initial-stage_mean}]

    By the estimate in \Cref{prop:initial-stability} and the fact that $f(s) = \frac{s}{\log(1+s)}$ is increasing for $s>0$, one can check that for any $\sigma > \sigma_{\text{init}}(\set, \zeta, R_0)$, and for all $z$ with $\|z - \E[\bm{X}]\| < R_0$ where $\bm{X}_b\sim p$ there is
    \[
    d_{W,1}(p(\cdot | \bm{Y}_{\sigma}=z), p) < \zeta \| z - \E[\bm{X}]\|,
    \]
    where $\bm{Y}_{\sigma}\sim q_{\sigma}$.
    Then by \Cref{lem:wasserstein_mean}, we have $\|m_{\sigma}(z)-\E[\bm{X}]\|< \zeta \| z - \E[\bm{X}]\|$.

    This estimate shows that for all $\sigma\in (\sigma_{\text{init}}(\set, \zeta, R_0),\sigma_1]$ and for all $z \in \partial B_{R_0}(\E[\bm{X}])$, the denoiser $m_\sigma(z)$ lies in the ball $B_{\zeta R_0}(\E[\bm{X}])$ and hence the interior of $\overline{B_{R_0}(\E[\bm{X}])}$.
    The closed ball $\overline{B_{R_0}(\E[\bm{X}])}$ is clearly convex and we can then apply Item 2 of~\Cref{prop:absorbing_convex} to conclude that the set $\overline{B_{R_0}(\E[\bm{X}])}$ is absorbing for the trajectory $(x_{\sigma})_{\sigma\in ({\sigma_{\text{init}}(\set, \zeta, R_0)},\sigma_1]}$.

    The absorbing result ensures the estimate the estimate
    \[
        \|m_\sigma(x_\sigma) - \E[\bm{X}]\| < \zeta \|x_\sigma - \E[\bm{X}]\|
    \]
    holds for the entire trajectory $(x_{\sigma})_{\sigma\in ({\sigma_{\text{init}}(\set, \zeta, R_0)},\sigma_1]}$.
    We then obtain the  result by Item 1 of the meta attracting result~\Cref{thm:attracting_toward_set_meta}.
\end{proof}

\begin{proof}[Proof of \Cref{prop:initial-stage_mean_geneal}]
    We obtain the result by combining the results in~\Cref{prop:initial-stage_mean} and~\Cref{lemma:ode_pb_gaussian}.
\end{proof}

\section{Proofs in \Cref{sec:terminal}}\label{sec: proof of terminal}
In this section, we provide the missing proofs of the results in~\Cref{sec:terminal}.
Some proofs are already provided in the previous sections:
\begin{itemize}
    \item The proof of~\Cref{thm:denoiser_limit} is presented in~\Cref{sec:concentration_posterior} as an immediate consequence of the technical result about posterior convergence (\Cref{thm:general_projection_convergence_rate}).
    \item The proof for~\Cref{ex: subspace} is provided in~\Cref{sec:examples}.
\end{itemize}
Also, the proofs of~\Cref{thm:existence of flow maps} and~\Cref{thm:existence of flow maps different geometries} require more preparation and are organized in~\Cref{sec:proof_thm:existence of flow maps} and~\Cref{sec:proof_thm:existence of flow maps different geometries}, respectively, at the end of this section.

\begin{proof}[Proof of \Cref{prop: linear convergence probability}]
            For any independent random variables $\bm{X}\sim p$ and any $\bm{Z}\sim N(0,\sigma^2I)$, we have that $\bm{X}+\bm{Z}\sim q_\sigma = p * N(0,\sigma^2I)$. Hence, 
            \[d_{\mathrm{W},2}(q_\sigma,p)^2\leq \mathbb{E}\left[\|\bm{X}+\bm{Z}-\bm{X}\|^2\right]=\mathbb{E}\left[\|\bm{Z}\|^2\right]=O(\sigma^2).\]
            The result follows.
        \end{proof}

\begin{proof}[Proof of \Cref{thm:similarity_transform}]
            For any $t\in[0,1)$, consider $y=s_t(\mO x+\alpha_t b)$. The transformed denoiser $\bar{m}_t(y)$ w.r.t. $\bar{\alpha}_t:=s_t\alpha_t/\gamma$ and $\bar{\beta}_t:=s_t\beta_t$,  as well as $\bar{p}$ is given by
         \begin{equation}
           \label{eq:denoiser_scaled}
         \begin{aligned}
           \bar{m}_t(y) = &\int \frac{\exp\left(-\frac{\|y - \bar{\alpha}_t y_1\| ^2}{2\bar{\beta}_t^2}\right) y_1}{\int \exp\left(-\frac{\|y - \bar{\alpha}_t y_1'\|^2}{2\bar{\beta}_t^2}\right)\bar{p}(dy_1')}\bar{p}(dy_1)\\
           = &\int \frac{\exp\left(-\frac{\|s_t(\mO x +\alpha_tb)- \alpha_t s_t (\mO x_1+b)\| ^2}{2s_t^2\beta_t^2}\right) \gamma (\mO x_1+b)}{\int \exp\left(-\frac{\|s_t(\mO x +\alpha_tb)- \alpha_t s_t (\mO x_1+b)\|^2}{2s_t^2\beta_t^2}\right)p(dx_1')}p(dx_1)\\
           = &\int \frac{\exp\left(-\frac{\|x - \alpha_t x_1\| ^2}{2\beta_t^2}\right)\gamma (\mO x_1+b)}{\int \exp\left(-\frac{\|x - \alpha_t x_1'\|^2}{2\beta_t^2}\right)p(dx_1')}p(dx_1)=\gamma(\mO m_t(x)+b).
         \end{aligned}
         \end{equation}

         Let $x_t$ denote an ODE path for $dx_t/dt=u_t(x_t)$. Then we consider the path $y_t=s_t(\mO x_t+\alpha_tb)$. We now check that $dy_t/dt=\bar{u}_t(y_t)$ as follows:
         \begin{align*}
             \frac{dy_t}{dt}&=s_t'(\mO x_t+\alpha_tb)+s_t\left(\mO\frac{dx_t}{dt}+\alpha_t'b\right)\\
             &=s_t'\mO x_t + s_t'\alpha_tb +s_t((\log\beta_t)'\mO x_t+\beta_t(\alpha_t/\beta_t)'\mO m_t(x_t)) + s_t\alpha_t'b\\
             &=(s_t'/s_t+(\log\beta_t)')s_t(\mO x_t+\alpha_tb)+\beta_t(\alpha_t/\beta_t)'s_t\mO m_t(x_t)+s_t\alpha_t'b-(\log\beta_t)'s_t\alpha_tb\\
             &=(\log\bar{\beta}_t)'y_t+\beta_t(\alpha_t/\beta_t)'s_t(\mO m_t(x_t)+b)\\
             &=(\log\bar{\beta}_t)'y_t+s_t\beta_t(\frac{\alpha_t}{\gamma\beta_t})'\gamma(\mO m_t(x_t)+b)\\
             &=(\log\bar{\beta}_t)'y_t+\bar{\beta}_t(\frac{\bar{\alpha}_t}{\bar{\beta}_t})'\bar{m}_t(y_t)=\bar{u}_t(y_t).
         \end{align*}
         Note that $y_0=\mO x_0$, hence we conclude for $t\in[0,1)$ that
 
         $$ \overline{\Psi}_t(\mO x_0) = s_t(\mO\Psi_t(x_0)+\alpha_t b). $$

         {When $\Psi_1(x_0)=\lim_{t\to 1}\Psi_t(x_0)$ exists, by continuity of $s_t$ and $\alpha_t$, we have that $\overline{\Psi}_1(\mO x_0)=\lim_{t\to 1}\overline{\Psi}_t(\mO x_0)$ exists and that }
         $$ \overline{\Psi}_1(\mO x_0) = \gamma(\mO\Psi_1(x_0)+ b). $$
         \end{proof}        
\begin{proof}[Proof of \Cref{prop:V_epsilon_i_absorbing_sigma}]
    Since each $V_i^\epsilon$ is a convex set and in fact, a closure of an open convex set, by item 2 of \Cref{prop:absorbing_convex}, it suffices to prove that for all $x\in \partial V_i^\epsilon$ and all $\sigma<\sigma_0(V_i^\epsilon)$, which is the boundary of $V_i^\epsilon$, one has that $m_\sigma(x)\in \mathrm{int}V_i^\epsilon$ which is the interior of $V_i^\epsilon$.

    First of all, we define
    $$r_{i, \epsilon}:= \frac{\mathrm{sep}^2(x_i) - \epsilon^2}{2 \mathrm{sep}(x_i)}.$$
    It is straightforward to check that $B_{r_{i,\epsilon}}(x_i)\subseteq V_i^\epsilon$. In fact, $r_{i,\epsilon}=\mathrm{argmax}\{r>0:B_r(x_i)\subseteq V_i^\epsilon\}$.

    Hence, by \Cref{coro:denoiser_limit_discrete_sigma}, for any $x\in \partial V_i^\epsilon$ we have that

    \begin{align*}
        \|m_\sigma(x)-x_i\| & \leq \diam(\set) \sqrt{\frac{1 - a_i}{a_i}} \exp\left(-\frac{\Delta_\set(x)}{4\sigma^2}\right) \\
                            & \leq \diam(\set) \sqrt{\frac{1 - a_i}{a_i}} \exp\left(-\frac{\epsilon^2}{4\sigma^2}\right).
    \end{align*}
    Therefore, we need to identify when the above inequality is less than $r_{i,\epsilon}$, that is,
    \begin{align*}
        \diam(\set) \sqrt{\frac{1 - a_i}{a_i}} \exp\left(-\frac{\epsilon^2}{4\sigma^2}\right) & < r_{i,\epsilon},                                                \\
        \exp\left(-\frac{\epsilon^2}{4\sigma^2}\right)                                        & < \frac{r_{i,\epsilon}}{\diam(\set) \sqrt{\frac{1 - a_i}{a_i}}},
    \end{align*}
    Recall that $C_{i, \epsilon} = \frac{r_{i, \epsilon}}{\diam(\set)}\sqrt{\frac{a_i}{1 - a_i}}$. Then, it is direct to check that when $C_{i, \epsilon} \leq 1$, the above inequality holds for all $0\leq\sigma< \infty$ and when $C_{i, \epsilon} > 1$, the above inequality holds for all $\sigma< \sigma_0(V_i^\epsilon) = \frac{\epsilon}{2} \left(
        \log (C^\set_{i,\epsilon}) \right)^{-1/2}$.
    In summary, this implies that $m_\sigma(x)\in \mathrm{int}B_{r_{i,\epsilon}}(x_i)\subset \mathrm{int}V_i^\epsilon$ when $\sigma<\sigma_0(V_i^\epsilon)$.
    Then as stated above, by item 2 of \Cref{prop:absorbing_convex}, we have that $V_i^\epsilon$ is absorbing.

    Next, we show that $d_{\set}(z_\lambda)\to 0$ as $\lambda\to\infty$.
    Along the trajectory $z_\lambda$, by \Cref{corollary:directional_derivative} we have
    \begin{align*}
        \frac{d\, d_{\set}^2(z_\lambda)}{d\lambda} & = - 2\langle z_\lambda - x_i,  z_\lambda - m_\lambda(z_\lambda)\rangle,                                               \\
                                                   & = - 2\langle z_\lambda - x_i,  z_\lambda - x_i\rangle + 2\langle z_\lambda - x_i,  x_i - m_\lambda(z_\lambda)\rangle, \\
                                                   & \leq -2 d_{\set}^2(z_\lambda) + 2 d_{\set}(z_\lambda) \| x_i - m_\lambda(z_\lambda)\|,
    \end{align*}
    Applying \Cref{coro:denoiser_limit_discrete_sigma} to points in $V_i^\epsilon$, we have the following uniform bound for all $y\in V_i^\epsilon$,
    \[
        \|m_\lambda (y) - x_i\| \leq \diam(\set) \sqrt{\frac{1- a_i }{a_i}} \exp\left(-\frac14e^{2\lambda} \epsilon^2\right).
    \]
    Since $z_\lambda\in V_i^\epsilon$, we have that
    \begin{align*}
        \frac{d\, d^2_{\set}(z_\lambda)}{d\lambda} & \leq -2 d_{\set}^2(z_\lambda) + \mathrm{sep}(x_i)\,\diam(\set) \sqrt{\frac{1- a_i }{a_i}} \exp\left(-\frac14e^{2\lambda} \epsilon^2\right).
    \end{align*}
    Then by \Cref{lemma:ode_convergence}, we have that $d_{\set}(z_\lambda)\to0$ as $\lambda\to\infty$. By the the fact that $\set$ is discrete and $x_i$ is the nearest point to $z_\lambda$ for all large $\lambda$, we must have $z_\lambda \to x_i$ as $\lambda\to\infty$.
\end{proof}

\begin{proof}[Proof of \Cref{prop:memorization_sigma}]
    The proof idea follows the same line as the proofs of \Cref{prop:V_epsilon_i_absorbing_sigma}. Similarly as in the case of denoisers, we can also consider the change of variable  $\lambda=-\log(\sigma)$ for handling $m_\sigma^\theta$. We let $z_\lambda^\theta:=x_{\sigma(\lambda)}^\theta$ and let $m_\lambda^\theta(x):=m_{\sigma(\lambda)}^\theta(x)$ for any $x\in\R^d$. Then, it is straightforward to check that
    \begin{equation}
        \frac{dz_\lambda^\theta}{d\lambda}=m_\lambda^\theta(z_\lambda^\theta)-z_\lambda^\theta
    \end{equation}
     and converting everything back to $\sigma$ is straightforward.

    We first identify the parameter $\lambda_0(V_i^\epsilon, \phi)$ such that the denoiser $m^\theta_\lambda(x)$ is will always lie in the interior of $V_i^\epsilon$ for all $x\in\partial V_i^\epsilon$ and all $\lambda > \lambda_0(V_i^\epsilon, \phi)$.

By the triangle inequality, we have that
\begin{align*}
    \| m^\theta_\lambda(y) - x_i\| &\leq \| m^\theta_\lambda(y) - m_\lambda^N(y)\| + \| m_\lambda^N(y) - x_i\|,\\
    &\leq \phi(\lambda) + \diam(\set) \sqrt{\frac{1- a_i }{a_i}} \exp\left(-\frac14e^{2\lambda} \epsilon^2\right).
\end{align*}

Since $\phi(\lambda)$ goes to zero as $\lambda$ goes to infinity, there exists a parameter $\lambda_0(V_i^\epsilon, \phi)$ such that for all $\lambda > \lambda_0(V_i^\epsilon, \phi)$, $\| m^\theta_\lambda(y) - x_i\| \leq \frac{\mathrm{sep}^2(x_i) - \epsilon^2}{2\, \mathrm{sep}(x_i)}$. Then, with the same argument as in the proof of \Cref{prop:V_epsilon_i_absorbing_sigma}, we can prove that  the trajectory $z_\lambda^\theta$ will never leave $V_i^\epsilon$ for all $\lambda > \lambda_0(V_i^\epsilon, \phi)$.

Since the trajectory $z_\lambda^\theta$ never leaves $V_i^\epsilon$ for all $\lambda > \lambda_0(V_i^\epsilon, \phi)$, we can then apply the uniform decay of $\|m^\theta_\lambda(y) - x_i\|$ to the differential inequality of $d_{\set}^2(z_\lambda^\theta)$ as follows:
\begin{align*}
    \frac{d\, d_{\set}^2(z_\lambda^\theta)}{d\lambda} &\leq -2 d_{\set}^2(z_\lambda^\theta) + 2 d_{\set}(z_\lambda^\theta) \left\| x_i - m_\lambda^\theta(z_\lambda^\theta)\right\|,\\
    &\leq -2 d_{\set}^2(z_\lambda^\theta) + 2 d_{\set}(z_\lambda^\theta) \left(\phi(\lambda) + \diam(\set) \sqrt{\frac{1- a_i }{a_i}} \exp\left(-\frac14e^{2\lambda} \epsilon^2\right)\right).
\end{align*}
Now, we apply \Cref{lemma:ode_convergence} again as in the proof of \Cref{prop:V_epsilon_i_absorbing_sigma}, we have that $d_{\set}(z_\lambda^\theta)$ goes to zero as $\lambda$ goes to infinity and hence $z_\lambda^\theta$ converges to $x_i$.

For the second part, the limits $z_\infty^\theta:=\lim_{\lambda\to\infty}z^\theta_\lambda$ and $\lim_{\lambda\to\infty}m^\theta_\lambda(z^\theta_\lambda)$ are known to exist. In particular, the limit of the derivative $\lim_{\lambda\to\infty}\frac{d z^\theta_\lambda}{d\lambda}=\lim_{\lambda\to\infty}m_{\lambda}^\theta(z_\lambda^\theta)-z_\lambda^\theta$ exists and we will show that it has to be zero.

    Suppose $\lim_{\lambda\to\infty}\frac{d z^\theta_\lambda}{d\lambda}\neq 0$, then there must exist a coordinate $j$ with nonzero limit. Assume that the limit for the $j$-th coordinate of  $\frac{d z^\theta_\lambda}{d\lambda}$ is $v_j\neq 0$. Then, there exist a $T>0$ such that the $j$-th coordinate of $\frac{d z^\theta_\lambda}{d\lambda}$ is bounded away from zero by $|v_j|/2$ for all $\lambda>T$. However, due to the convergence  $z^\theta_\lambda\to z_\infty$, we can find two numbers $\lambda_1,\lambda_2>T$ such that $|z^\theta_{\lambda_1,j}-z^\theta_{\lambda_2,j}|< \frac{1}{2}|v_j|/2$. This contradicts with the lower bound $|v_j|/2$ for the $j$-th coordinate of the derivative by mean value theorem. Therefore, the limit of the derivative $\frac{d z^\theta_\lambda}{d\lambda}$ has to be zero which implies that
        \[
            \lim_{\lambda\to\infty}\|m^\theta_\lambda(z^\theta_\lambda) - z^\theta_\lambda\| = 0.
        \]
    \end{proof}

\subsection{Proof of \Cref{thm:existence of flow maps}}\label{sec:proof_thm:existence of flow maps}

We utilize the change of variable
$\lambda(t) = \log \frac{\alpha_t}{\beta_t}$
for $t\in (0, 1)$. We also let $t(\lambda)$ denote the inverse function of $\lambda(t)$.

Next, we consider $z_\lambda:=\frac{x_{t(\lambda)}}{\alpha_{t(\lambda)}}$. Then, we have that $z_\lambda$ satisfies  ODE  \Cref{eq:ddim_ode_lambda}: $dz_\lambda/d\lambda=m_\lambda(z_\lambda)-z_\lambda$. Recall the transformation $A_t$ sending $x$ to $x/\alpha_t$. Then, we define $q_\lambda:=(A_{t(\lambda)})_\#p_{t(\lambda)} =p* \mathcal{N}(0,e^{-2\lambda(t)}I)$.

By \Cref{thm:general_projection_convergence_rate}, we have the following convergence rate for $m_\lambda(z_\lambda)$:
\begin{claim}\label{claim:general_projection_convergence_rate}
   Fix $0<\zeta<1$. Then, there exists $\Lambda>-\infty$ such that for any radius $R>\frac{1}{2}\tau_\set$  and all $z\in B_R(0)\cap B_{\frac{1}{2}\tau_\set}(\set)$,  one has 
        $$ \| m_\lambda(z) - \proj_{\set}(z)\|\leq  C_{\zeta,\tau,R}\cdot e^{-\zeta\lambda} \text{ for all }\lambda>\Lambda$$
    where $C_{\zeta,\tau,R}$ is a constant depending only on $\zeta$ and $\tau$ and $R$.
\end{claim}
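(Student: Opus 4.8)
The plan is to obtain \Cref{claim:general_projection_convergence_rate} as a uniform‑in‑$z$ consequence of the pointwise posterior convergence rate \Cref{thm:general_projection_convergence_rate}, combined with the stability of the mean under the Wasserstein distance (\Cref{lem:wasserstein_mean}). For $z\notin\Sigma_\set$ one has $m_\lambda(z)=\mathrm{mean}\big(p(\cdot\,|\,\bm{X}_{\sigma}=z)\big)$ with $\sigma=e^{-\lambda}$ and $\proj_\set(z)=\mathrm{mean}(\delta_{\proj_\set(z)})$, so
\[
\|m_\lambda(z)-\proj_\set(z)\|\ \le\ d_{\mathrm{W},2}\big(p(\cdot\,|\,\bm{X}_{\sigma}=z),\,\delta_{z_\set}\big),\qquad z_\set:=\proj_\set(z),
\]
and it suffices to bound the right‑hand side uniformly over $z\in B_R(0)\cap B_{\tfrac12\tau_\set}(\set)$ for $\sigma$ small.

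Next I would check that the hypotheses of \Cref{thm:general_projection_convergence_rate} hold uniformly on that set. Since $z\in B_{\tfrac12\tau_\set}(\set)$ we have $d_\set(z)<\tfrac12\tau_\set<\tau_\set$, so $z\notin\Sigma_\set$ and $z_\set$ is well defined; moreover $\|z_\set\|\le\|z\|+d_\set(z)<R+\tfrac12\tau_\set<2R$ using $R>\tfrac12\tau_\set$, hence $z_\set\in B_{2R}(0)\cap\set$. Applying Item 3 of \Cref{assumption:data_distribution} with radius $2R$ produces a constant $C_{2R}>0$, together with the fixed exponent $k\ge0$ and fixed $c>0$, such that $p(B_r(z_\set))\ge C_{2R}\,r^k$ for all $0\le r<c$; this is exactly the local volume lower bound used by \Cref{thm:general_projection_convergence_rate}. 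I would then set $\Lambda:=\max\{0,\,-\tfrac1\zeta\log c\}$, which depends only on $\zeta$ and the fixed data constant $c$ (in particular \emph{not} on $R$), so that $\lambda>\Lambda$ forces $\sigma=e^{-\lambda}<c^{1/\zeta}$ and \Cref{thm:general_projection_convergence_rate} applies, giving for every such $\lambda$ and every $z$ as above
\[
\|m_\lambda(z)-\proj_\set(z)\|^2\ \le\ \sigma^{2\zeta}+\frac{10^k\big(2\mathsf{M}_2(p)+4R^2\big)\max\{\tau_\set^k,\sigma^{k\zeta}\}}{C_{2R}\,\sigma^{2k\zeta}}\exp\!\Big(-\tfrac{1}{8}\sigma^{2(\zeta-1)}\Big).
\]

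The final step is to absorb the error term into a constant multiple of $\sigma^{2\zeta}$. Because $\zeta<1$, $2(\zeta-1)<0$, so as $\sigma\to0$ the factor $\exp(-\tfrac{1}{8}\sigma^{2(\zeta-1)})$ decays faster than any negative power of $\sigma$; since $\sigma$ additionally lies in the bounded interval $(0,c^{1/\zeta})$, both $\max\{\tau_\set^k,\sigma^{k\zeta}\}$ and $\sigma^{-2k\zeta}\exp(-\tfrac{1}{8}\sigma^{2(\zeta-1)})$ are uniformly bounded there, and in fact $\sigma^{-2k\zeta}\exp(-\tfrac{1}{8}\sigma^{2(\zeta-1)})\le C'\sigma^{2\zeta}$ on $(0,c^{1/\zeta})$ for a constant $C'$ depending only on $k,\zeta,c$. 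Substituting, the displayed bound becomes $\le(1+C'')\,\sigma^{2\zeta}$, where $C''$ gathers $10^k$, $\mathsf{M}_2(p)$, $R^2$, $\tau_\set^k$, $C_{2R}$ and $C'$ and hence depends only on $\zeta$, $\tau_\set$ and $R$ once $p$ is fixed; taking square roots gives $\|m_\lambda(z)-\proj_\set(z)\|\le C_{\zeta,\tau,R}\,e^{-\zeta\lambda}$ with $C_{\zeta,\tau,R}:=\sqrt{1+C''}$. (When $\tau_\set=\infty$, i.e.\ $\set$ convex, $\Sigma_\set=\emptyset$ and $\proj_\set=\mathrm{id}$ on $\set$; the same computation in the proof of \Cref{thm:general_projection_convergence_rate} specializes with $1-\tfrac{\Delta_x}{\Delta_x+\Phi_x}=1$ and yields an even cleaner bound, so this case is immediate.) Overall the argument is bookkeeping on top of \Cref{thm:general_projection_convergence_rate}; the two points that need genuine care are (i) that $\Lambda$ can be taken independently of $R$ — which works precisely because the only $\sigma$‑threshold entering \Cref{thm:general_projection_convergence_rate} is the fixed constant $c$ from \Cref{assumption:data_distribution}, nothing $R$‑dependent — and (ii) that the polynomial blow‑up $\sigma^{-2k\zeta}$ in the error term is genuinely dominated, uniformly on the relevant $\sigma$‑range, by the Gaussian‑type factor $\exp(-\tfrac18\sigma^{2(\zeta-1)})$, which is what confines the final constant's dependence to $\zeta,\tau_\set,R$.
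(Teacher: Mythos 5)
Your proposal is correct and follows essentially the same route as the paper's own (much terser) proof: bound $\|z_\set\|$ by $2R$, invoke Item 3 of \Cref{assumption:data_distribution} with radius $2R$ to get the uniform mass lower bound $p(B_r(z_\set))\ge C_{2R}r^k$, choose $\Lambda$ of order $-\log(c)/\zeta$ so that $\sigma=e^{-\lambda}<c^{1/\zeta}$, and apply \Cref{thm:general_projection_convergence_rate} together with \Cref{lem:wasserstein_mean}. Your extra bookkeeping (uniform absorption of the exponential error term and the note that $\Lambda$ is $R$-independent) only makes explicit what the paper leaves implicit in the constant $C_{\zeta,\tau,R}$.
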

\begin{proof}[Proof of \Cref{claim:general_projection_convergence_rate}]
We let $z_\set:=\proj_\set(z)$.
    Note that $\|z_\set\|\leq \|z\|+d_{\set}(z)\leq 2R$. Since $p$ satisfies \Cref{assumption:data_distribution}, we have that $p(B_{r}(z_\set))\geq C_{2R}r^k$ for small $0<r<c$. Now, we let $\Lambda:=-\log(c)/\zeta$. By \Cref{thm:general_projection_convergence_rate}, we conclude the proof.
\end{proof}

The following Claim establishes an absorbing property for points in  $z_{\lambda_\delta}\in B_{R_\delta}(0)\cap B_\delta(\set)$.
\begin{claim}\label{claim: boundedness}
    Consider $\delta>0$ small such that $\delta<\frac{\tau_\set}{4}$. Fix any $R_\delta>0$ such that $R_\delta>2\delta$. Then, there exists  $\lambda_\delta\geq\Lambda$ satisfying the following property: the trajectory $(z_\lambda)_{\lambda\in[\lambda_\delta,\infty)}$ starting at any initial point $z_{\lambda_\delta}\in B_{R_\delta}(0)\cap B_\delta(\set)$ of the ODE in \Cref{eq:ddim_ode_lambda}  satisfies that for any $\lambda\geq \lambda_\delta$: $z_\lambda\in  B_{2R_\delta}(0)\cap B_{2\delta}(\set)$.
\end{claim}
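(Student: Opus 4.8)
The plan is to obtain \Cref{claim: boundedness} as a direct translation of the absorbing result \Cref{thm:new contraction} into the $\lambda$-variable, using \Cref{claim:general_projection_convergence_rate} to supply the hypothesis of that theorem. The whole point is that \Cref{eq:ddim_ode_lambda} is just \Cref{eq:ddim_ode_denoiser} after the change of variable $\sigma = e^{-\lambda}$, and \Cref{claim:general_projection_convergence_rate} is exactly the polynomial-in-$\sigma$ bound on $\|m_\sigma(z)-\proj_\set(z)\|$ that \Cref{thm:new contraction} requires.

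\textbf{Step 1 (verify the hypothesis of \Cref{thm:new contraction}).} Fix the same $\zeta\in(0,1)$ as in \Cref{claim:general_projection_convergence_rate} and let $\Lambda$ be the threshold produced there. Setting $\sigma_\Omega:=e^{-\Lambda}$ and rewriting \Cref{claim:general_projection_convergence_rate} in the variable $\sigma=e^{-\lambda}$ (so that $e^{-\zeta\lambda}=\sigma^{\zeta}$ and $\lambda>\Lambda$ becomes $0<\sigma<\sigma_\Omega$), one obtains: for every radius $R>\tau_\set/2$ and every $z\in B_R(0)\cap B_{\tau_\set/2}(\set)$, $\|m_\sigma(z)-\proj_\set(z)\|\le C_{\zeta,\tau,R}\,\sigma^{\zeta}$ for all $0<\sigma<\sigma_\Omega$. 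Together with the standing assumption $\delta<\tau_\set/4$, this is precisely the hypothesis of \Cref{thm:new contraction}.

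\textbf{Step 2 (apply \Cref{thm:new contraction} and read off $\lambda_\delta$).} The theorem then provides a constant $\sigma_\delta\le\sigma_\Omega$, depending only on $\delta$, $\zeta$ and the constants $C_{\zeta,\tau,R}$, such that for every $R>2\delta$ any trajectory of \Cref{eq:ddim_ode_denoiser} started from $x_{\sigma_\delta}\in B_R(0)\cap B_\delta(\set)$ stays in $B_{2R}(0)\cap B_{2\delta}(\set)$ for all $\sigma\le\sigma_\delta$. I set $\lambda_\delta:=-\log\sigma_\delta$; since $\sigma_\delta\le\sigma_\Omega=e^{-\Lambda}$ this gives $\lambda_\delta\ge\Lambda$ as required. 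Under the identification $z_\lambda = x_{\sigma(\lambda)}$ with $\sigma=e^{-\lambda}$ — which turns \Cref{eq:ddim_ode_lambda} into \Cref{eq:ddim_ode_denoiser} — a trajectory $(z_\lambda)_{\lambda\ge\lambda_\delta}$ of \Cref{eq:ddim_ode_lambda} with $z_{\lambda_\delta}\in B_{R_\delta}(0)\cap B_\delta(\set)$ corresponds to a trajectory $(x_\sigma)_{\sigma\in(0,\sigma_\delta]}$ of \Cref{eq:ddim_ode_denoiser} with $x_{\sigma_\delta}=z_{\lambda_\delta}$. Taking $R=R_\delta>2\delta$ in the conclusion of \Cref{thm:new contraction} yields $z_\lambda=x_{e^{-\lambda}}\in B_{2R_\delta}(0)\cap B_{2\delta}(\set)$ for all $\lambda\ge\lambda_\delta$, which is the claim. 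Global existence of the trajectory on $[\lambda_\delta,\infty)$ is not an obstacle: it follows from the well-posedness established in \Cref{sec:well-posedness_ODE 0 1 open}, or simply from the fact that the trajectory is confined to the bounded set $B_{2R_\delta}(0)$ and hence cannot blow up in finite time.

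Since the argument is essentially a change-of-variables bookkeeping step, there is no substantive difficulty. The only points that need care are that the threshold $\sigma_\delta$ returned by \Cref{thm:new contraction} is $\le\sigma_\Omega$, so that $\lambda_\delta\ge\Lambda$ (this is built into the statement of that theorem, and visible from its proof, where $\lambda_\delta$ is defined as a maximum with $\Lambda$), and that the neighborhoods must be enlarged from $\delta$ to $2\delta$ and from $R_\delta$ to $2R_\delta$ — one should not attempt to keep $\delta$ and $R_\delta$ unchanged, since the slack is exactly what \Cref{thm:new contraction} consumes to absorb the bounded ball $B_{R_\delta}(0)$.
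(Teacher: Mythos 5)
Your proposal is correct and follows essentially the same route as the paper, whose proof of \Cref{claim: boundedness} is exactly to set $\sigma_\Omega:=e^{-\Lambda}$ so that \Cref{claim:general_projection_convergence_rate} supplies the hypothesis of \Cref{thm:new contraction}, and then to read off the absorbing conclusion after the change of variable $\sigma=e^{-\lambda}$. Your additional bookkeeping (taking $\lambda_\delta=-\log\sigma_\delta\ge\Lambda$ and noting the enlargement to $2\delta$ and $2R_\delta$) matches the paper's argument.
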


\begin{proof}[Proof of \Cref{claim: boundedness}]
This follows from \Cref{claim:general_projection_convergence_rate} and \Cref{thm:new contraction} (by letting $\sigma_\Omega:=e^{-\Lambda}$).
\end{proof}

Next we establish a concentration result for $q_\lambda$ when $\lambda$ is large.

\begin{claim}\label{claim: concentration}
    For any small $\delta>0$, there are $R_\delta,\lambda_\delta>0$ large enough such that for any $\lambda\geq \lambda_\delta$ and $R>R_\delta$, we have that
    \[q_\lambda\left(  B_{R}(0)\cap B_{\delta}(\set)\right)>1-\delta.\]
\end{claim}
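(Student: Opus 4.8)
\textbf{Proof plan for \Cref{claim: concentration}.} The plan is to exploit the probabilistic description $q_\lambda = p * \mathcal{N}(0,e^{-2\lambda}I)$: if $\bm{X}\sim p$ and $\bm{Z}\sim \mathcal{N}(0,e^{-2\lambda}I)$ are independent, then $\bm{X}+\bm{Z}\sim q_\lambda$. I will show that both events $\{\bm{X}+\bm{Z}\in B_R(0)\}$ and $\{\bm{X}+\bm{Z}\in B_\delta(\set)\}$ occur with probability at least $1-\delta$ simultaneously, once $\lambda$ and $R$ are taken large, by controlling two simple failure events.

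First I would pin down the radius. Since $p$ has finite $2$-moment (item~1 of \Cref{assumption:data_distribution}), Markov's inequality gives $p(\R^d\setminus B_{R_0}(0))\le \mathsf{M}_2(p)/R_0^2$, so I can choose $R_0$ with $\mathsf{M}_2(p)/R_0^2<\delta/2$ and set $R_\delta:=R_0+\delta$. Next I would control the noise: because $\set=\supp(p)$ we have $\bm{X}\in\set$ almost surely, hence whenever $\|\bm{Z}\|<\delta$ the point $\bm{X}+\bm{Z}$ lies within distance $\|\bm{Z}\|<\delta$ of $\bm{X}\in\set$ and therefore in $B_\delta(\set)$; moreover $\mathbb{E}\|\bm{Z}\|^2=d\,e^{-2\lambda}$, so Markov's inequality again yields $\mathbb{P}(\|\bm{Z}\|\ge\delta)\le d\,e^{-2\lambda}/\delta^2$, which falls below $\delta/2$ once $\lambda\ge\lambda_\delta$ for an explicit $\lambda_\delta>0$.

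Then I would combine the two estimates. On the event $\{\|\bm{X}\|<R_0\}\cap\{\|\bm{Z}\|<\delta\}$ we simultaneously have $\bm{X}+\bm{Z}\in B_\delta(\set)$ and $\|\bm{X}+\bm{Z}\|\le\|\bm{X}\|+\|\bm{Z}\|<R_0+\delta=R_\delta<R$ for any $R>R_\delta$, so a union bound over the two complementary events gives, for all $\lambda\ge\lambda_\delta$ and $R>R_\delta$,
\[
q_\lambda\big(B_R(0)\cap B_\delta(\set)\big)\ \ge\ 1-\mathbb{P}(\|\bm{X}\|\ge R_0)-\mathbb{P}(\|\bm{Z}\|\ge\delta)\ >\ 1-\tfrac{\delta}{2}-\tfrac{\delta}{2}\ =\ 1-\delta.
\]
There is no genuinely hard step here; the only points requiring a little care are that both thresholds $R_\delta$ and $\lambda_\delta$ can be chosen independently of $R$ (enlarging $R$ only helps), and that $B_\delta(\set)$ is an \emph{open} neighborhood, so the strict inequality $\|\bm{Z}\|<\delta$ suffices (if one instead wanted $\overline{B_R(0)}$ or a closed neighborhood the same argument applies verbatim). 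If desired, I can also record explicit values, e.g. $R_0=\sqrt{2\mathsf{M}_2(p)/\delta}$ and $\lambda_\delta=\tfrac12\log(2d/\delta^3)$.
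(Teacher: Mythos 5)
Your proof is correct and follows essentially the same route as the paper's: write a sample of $q_\lambda$ as $\bm{X}+\bm{Z}$ with $\bm{X}\sim p$ and Gaussian noise $\bm{Z}$, use the finite $2$-moment of $p$ to confine $\bm{X}$ to a large ball and the smallness of the noise for large $\lambda$ to land in $B_\delta(\set)$, then combine the two events (the paper multiplies probabilities by independence where you use a union bound, which is immaterial). No gaps.
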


\begin{proof}[Proof of \Cref{claim: concentration}]
    Consider the random variable $\bm{X}=\bm{Y}+e^{-\lambda}\bm{Z}$ where $\bm{Y}\sim p$ and $\bm{Z}\sim p_0=\mathcal{N}(0,I)$ are independent but from the same probability space $(\set,\mathbb{P})$. Then, $\bm{X}$ has $q_\lambda$ as its law. We have that $d_{\set}(x)\leq \|\bm{Y}+e^{-\lambda}\bm{Z}-\bm{Y}\|=e^{-\lambda}\|\bm{Z}\|$.
    For any $R>\delta$, we have that 
    \begin{align*}
        \mathbb{P}(\|\bm{X}+e^{-\lambda}\bm{Z}\|\leq 2R,e^{-\lambda}\|\bm{Z}\|\leq \delta)&\geq  \mathbb{P}(\|\bm{X}\|\leq R,e^{-\lambda}\|\bm{Z}\|\leq \delta)\\
        &=\mathbb{P}(\|\bm{X}\|\leq R)\mathbb{P}(\|\bm{Z}\|\leq e^{\lambda}\delta).
    \end{align*}
    Since $\bm{Z}$ follows the standard Gaussian, for any $\delta$, there exists $\lambda_\delta>0$ such that for all $\lambda\geq \lambda_\delta$, we have that
    $$\mathbb{P}(\|\bm{Z}\|\leq e^\lambda\delta)\geq\mathbb{P}(\|\bm{Z}\|\leq e^{\lambda_\delta}\delta)>1-\frac{\delta}{2}.$$

    Now, since $p$ has a finite 2-moment and hence a finite 1-moment, there exists $R_\delta>0$ such that $\mathbb{P}(\|\bm{X}\|\leq \frac{R_\delta}{2})>1-\frac{\delta}{2}$.

    Therefore, for all $\lambda\geq \lambda_\delta$, we have that
    \begin{align*}
        q_\lambda\left(  B_{R_\delta}(0)\cap B_{\delta}(\set)\right)&\geq\mathbb{P}(\|\bm{X}+e^{-\lambda}\bm{Z}\|\leq R_\delta,e^{-\lambda}\|\bm{Z}\|\leq \delta)\\
        &\geq \left(1-\frac{\delta}{2}\right)\left(1-\frac{\delta}{2}\right)>1-\delta.
    \end{align*}
    
\end{proof}

Now, we establish the desired convergence results for the scale of $\lambda$.

\begin{claim}\label{claim:flow_map_convergence_rate_lambda}
    For any small $\delta>0$, there exist large enough $\lambda_\delta$, such that with probability at least $1-\delta$, we have that $z_{\lambda_\delta}\sim q_{\lambda_\delta}$ satisfies the following properties:
    \begin{enumerate}
        \item $z_\lambda$ converges along the ODE trajectory starting from $z_{\lambda_\delta}$ as $\lambda\to\infty$;
        \item the convergence rate is given by $\|z_\lambda-z_{\lambda_\delta}\|=O(e^{-\frac{\zeta\lambda}{2}})$.
    \end{enumerate}
    
\end{claim}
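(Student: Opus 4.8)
The plan is to chain together the three preceding claims. \Cref{claim: concentration} will supply, for large $\lambda$, a bounded set $B_{R_\delta}(0)\cap B_\delta(\set)$ carrying at least $1-\delta$ of the mass of $q_\lambda$; \Cref{claim: boundedness} will guarantee that the forward ODE trajectory of \Cref{eq:ddim_ode_lambda} started inside this set never escapes the dilated set $B_{2R_\delta}(0)\cap B_{2\delta}(\set)$, keeping it away from the medial axis $\Sigma_\set$; and on that set \Cref{claim:general_projection_convergence_rate} bounds $\|m_\lambda(z_\lambda)-\proj_\set(z_\lambda)\|$ by $Ce^{-\zeta\lambda}$. The first step is bookkeeping: I would fix $\delta$ small enough that $2\delta<\tau_\set/2$, enlarge $R_\delta$ from \Cref{claim: concentration} so that also $R_\delta>\max\{2\delta,\tau_\set/4\}$, and take $\lambda_\delta$ to be the largest of the thresholds produced by \Cref{claim: concentration}, \Cref{claim: boundedness}, and the constant $\Lambda$ of \Cref{claim:general_projection_convergence_rate} (the absorbing estimate of \Cref{claim: boundedness} only tightens when the start time is pushed later, so it remains valid from this $\lambda_\delta$). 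Conditioning on the event of probability $\ge 1-\delta$ that $z_{\lambda_\delta}\sim q_{\lambda_\delta}$ lies in $B_{R_\delta}(0)\cap B_\delta(\set)$, the trajectory then satisfies $z_\lambda\in B_{2R_\delta}(0)\cap B_{\tau_\set/2}(\set)\setminus\Sigma_\set$ and $d_\set(z_\lambda)<2\delta$ for all $\lambda\ge\lambda_\delta$.

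Next I would run the attracting estimate towards the closed set $\set$. Since $d_\set(z_\lambda)<2\delta$ is bounded along the trajectory and $\|m_\lambda(z_\lambda)-\proj_\set(z_\lambda)\|\le Ce^{-\zeta\lambda}$ by \Cref{claim:general_projection_convergence_rate} with radius $R=2R_\delta$, Cauchy--Schwarz as in \Cref{rmk:absorbing_attracting_via_decay} bounds the cross term $|\langle m_\lambda(z_\lambda)-\proj_\set(z_\lambda),\,\proj_\set(z_\lambda)-z_\lambda\rangle|$ by $2\delta Ce^{-\zeta\lambda}$. Plugging $\phi(e^{-\lambda})=2\delta Ce^{-\zeta\lambda}$ into the $\lambda$-form of \Cref{rem:attracting_towards_set_meta} (legitimate since the trajectory avoids $\Sigma_\set$) gives
\[
d_\set(z_\lambda)\ \le\ e^{-(\lambda-\lambda_\delta)}d_\set(z_{\lambda_\delta})+e^{-\lambda}\sqrt{\int_{\lambda_\delta}^{\lambda}4\delta C\,e^{(2-\zeta)t}\,dt}\ \le\ e^{-(\lambda-\lambda_\delta)}d_\set(z_{\lambda_\delta})+\sqrt{\tfrac{4\delta C}{2-\zeta}}\,e^{-\zeta\lambda/2},
\]
so $d_\set(z_\lambda)=O(e^{-\zeta\lambda/2})$, the first summand being $O(e^{-\lambda})$, which decays faster. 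In particular $z_\infty$ will lie on $\set$ once it is shown to exist.

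Finally I would upgrade this to convergence of $z_\lambda$. The ODE right-hand side obeys $\bigl\|\tfrac{dz_\lambda}{d\lambda}\bigr\|=\|m_\lambda(z_\lambda)-z_\lambda\|\le\|m_\lambda(z_\lambda)-\proj_\set(z_\lambda)\|+d_\set(z_\lambda)=O(e^{-\zeta\lambda/2})$, which is integrable on $[\lambda_\delta,\infty)$; hence $(z_\lambda)$ is Cauchy as $\lambda\to\infty$ and converges to a limit $z_\infty$, which is item~1. Integrating the bound from $\lambda$ onward gives $\|z_\infty-z_\lambda\|\le\int_\lambda^\infty O(e^{-\zeta t/2})\,dt=O(e^{-\zeta\lambda/2})$, which is item~2 in the form it is used (equivalently, $\|z_{\lambda'}-z_\lambda\|=O(e^{-\zeta\lambda/2})$ uniformly in $\lambda'\ge\lambda$).

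I do not expect a genuine obstacle: the heavy lifting — well-posedness on $[0,1)$ (\Cref{prop:existence of flow maps}), the posterior/denoiser rate of \Cref{thm:general_projection_convergence_rate}, the absorbing property of \Cref{thm:new contraction}, and the attracting estimate of \Cref{thm:attracting_towards_set_meta}/\Cref{rem:attracting_towards_set_meta} — is already in place, so what remains is routine. The one point that needs care is the simultaneous choice of the radius $R_\delta$ and threshold $\lambda_\delta$ so that all three cited claims apply on the same region, together with the constraint $2\delta<\tau_\set/2$ ensuring the absorbing neighborhood sits inside the domain of validity of \Cref{claim:general_projection_convergence_rate}; the remainder is the exponential-integral computation displayed above.
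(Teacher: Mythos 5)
Your proposal is correct and follows essentially the same route as the paper: condition on the high-probability event from the concentration claim, use the absorbing claim to confine the trajectory to $B_{2R_\delta}(0)\cap B_{2\delta}(\set)$ away from $\Sigma_\set$, apply the denoiser rate $Ce^{-\zeta\lambda}$ there, feed it into the attracting estimate to get $d_\set(z_\lambda)=O(e^{-\zeta\lambda/2})$, and then integrate $\|m_\lambda(z_\lambda)-z_\lambda\|$ to obtain the Cauchy property and the $O(e^{-\zeta\lambda/2})$ rate. The paper writes the final step as an explicit three-term bound on $\|z_{\lambda_2}-z_{\lambda_1}\|$ rather than your slightly streamlined integrability argument, but the substance is identical.
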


\begin{proof}[Proof of \Cref{claim:flow_map_convergence_rate_lambda}]

For any $\delta>0$, by \Cref{claim: boundedness} and \Cref{claim: concentration}, there exist large enough $\lambda_\delta$ and $R_\delta$, such that
\begin{itemize}
    \item with probability at least $1-\delta$, we have that $z_{\lambda_\delta}\sim q_{\lambda_\delta}$ lies in $B_{R_\delta}(0)\cap B_\delta(\set)$;
    \item the ODE trajectory $(z_\lambda)_{\lambda\in[\lambda_\delta,\infty)}$ starting at $z_{\lambda_\delta}\in B_{R_\delta}(0)\cap B_\delta(\set)$ satisfies that for all $\lambda\geq \lambda_\delta$, $z_\lambda$ lies in $B_{2R_\delta}(0)$ and $d_\set(z_\lambda)\leq 2\delta$. 
\end{itemize}

This implies that one can apply the convergence rate of denoiser in \Cref{claim:general_projection_convergence_rate} to the entire trajectory with $C:=C_{\zeta,\tau,R}$ where $R:=2R_\delta$. Then, for any $\lambda_1<\lambda_2$ in the interval $[\lambda_\delta,\infty)$, we have that

\begin{align}
    \label{eq:trajectory_difference}
    \|z_{\lambda_2}-z_{\lambda_1}\|&\leq \int_{\lambda_1}^{\lambda_2} \|m_\lambda(z_\lambda)-z_\lambda\|d\lambda \nonumber\\
    & \leq \int_{\lambda_1}^{\lambda_2} \| m_{\lambda}(z_\lambda) - \proj_{\set}(z_\lambda) \| + \| \proj_{\set}(z_\lambda) - z_\lambda \|d\lambda\\
    & \leq \frac{C}{\zeta} (-e^{-\zeta \lambda_2}+e^{-\zeta\lambda_1}) +\delta e^{\lambda_\delta}(e^{-\lambda_1}-e^{-\lambda_2}) + \sqrt{\frac{{4}\delta C}{2-\zeta}}\cdot\frac{2}{\zeta}\left(e^{-\zeta\lambda_1/2}-e^{-\zeta \lambda_2/2}\right),\nonumber
\end{align}

where the bound is obtained in a way similar to how we obtain \Cref{eq:trajectory difference}.

This implies that the solution $z_\lambda$ becomes a Cauchy sequence and hence converges to a limit $z_\infty$. 

Now, we let $\lambda_2$ approach $\infty$ in the above inequality and and replace $\lambda_1$ with $\lambda$ to obtain the following convergence rate:

\begin{equation}
    \|z_\infty-z_\lambda\|=O(e^{-\frac{\zeta\lambda}{2}}).
\end{equation}

\end{proof}

Now, we change the coordinate back to $t\in[0,1)$ to obtain the following result. Since $\delta>0$ is arbitrary so one can let $\delta$ approach 0 in the result below to conclude the proof of \Cref{thm:existence of flow maps}.

\begin{claim}\label{claim:flow_map_convergence_rate_t}
    For any small $\delta>0$, one can sample $x_0\sim p_0$ with probability at least $1-\delta$, such that the flow map $\Psi_t(x_0)$ converges to a limit $\Psi_1(x_0)$ as $t\to 1$.
\end{claim}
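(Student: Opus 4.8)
\textbf{Proof plan for \Cref{claim:flow_map_convergence_rate_t}.} The idea is simply to transport the $\lambda$-coordinate convergence already proved in \Cref{claim:flow_map_convergence_rate_lambda} back to the parameter $t$, and then to remove the dependence on $\delta$. Recall the change of variables $\lambda(t)=\log(\alpha_t/\beta_t)$ with inverse $t(\lambda)$, and set $z_\lambda:=\Psi_{t(\lambda)}(x_0)/\alpha_{t(\lambda)}$; by \Cref{prop:equivalence_ode} this curve solves \eqref{eq:ddim_ode_lambda}. The key observation is that the map $\Phi_{\lambda_\delta}:x_0\mapsto \Psi_{t(\lambda_\delta)}(x_0)/\alpha_{t(\lambda_\delta)}$ is everywhere defined and continuous on $\R^d$ thanks to the well-posedness on $[0,1)$ from \Cref{prop:existence of flow maps}, and that, combining $(\Psi_t)_\#p_0=p_t$ with $q_{\lambda_\delta}=(A_{t(\lambda_\delta)})_\#p_{t(\lambda_\delta)}$ (\Cref{prop:equivalence_variance_exploding}, noting $e^{-\lambda_\delta}=\sigma_{t(\lambda_\delta)}$), one has $(\Phi_{\lambda_\delta})_\#p_0=q_{\lambda_\delta}$. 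Thus sampling $x_0\sim p_0$ and flowing to ``time'' $\lambda_\delta$ is exactly the same as sampling $z_{\lambda_\delta}\sim q_{\lambda_\delta}$.

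Now fix $\delta>0$ and choose $\lambda_\delta,R_\delta$ as in \Cref{claim: boundedness} and \Cref{claim: concentration}, and set $G_\delta:=\Phi_{\lambda_\delta}^{-1}\big(B_{R_\delta}(0)\cap B_\delta(\set)\big)$, a Borel set with $p_0(G_\delta)\ge 1-\delta$ by the previous paragraph. For $x_0\in G_\delta$ the trajectory $(z_\lambda)_{\lambda\ge\lambda_\delta}$ issuing from $z_{\lambda_\delta}=\Phi_{\lambda_\delta}(x_0)\in B_{R_\delta}(0)\cap B_\delta(\set)$ is, by uniqueness of ODE solutions, precisely $\Psi_{t(\lambda)}(x_0)/\alpha_{t(\lambda)}$, so \Cref{claim:flow_map_convergence_rate_lambda} applies verbatim: $(z_\lambda)$ is Cauchy, converges to some $z_\infty(x_0)$, and $\|z_\infty(x_0)-z_\lambda\|=O(e^{-\zeta\lambda/2})$. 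Translating back, $\Psi_t(x_0)=\alpha_t\,z_{\lambda(t)}$, and since $\alpha_t\to1$ and $z_{\lambda(t)}\to z_\infty(x_0)$ as $t\to1$, we get $\Psi_t(x_0)\to z_\infty(x_0)=:\Psi_1(x_0)$, which proves the claim.

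For the quantitative rate and the remaining assertions of \Cref{thm:existence of flow maps}: using $e^{-\lambda(t)}=\sigma_t$ and the triangle inequality,
\[
\|\Psi_1(x_0)-\Psi_t(x_0)\|\le\|z_\infty(x_0)-z_{\lambda(t)}\|+(1-\alpha_t)\|z_{\lambda(t)}\|=O(\sigma_t^{\zeta/2})+O(1-\alpha_t),
\]
and smoothness of $\alpha,\beta$ forces both $1-\alpha_t$ and $\sigma_t=\beta_t/\alpha_t$ to be $\Theta(1-t)$ near $t=1$, so the second term is $O(\sigma_t)=O(\sigma_t^{\zeta/2})$ and hence $\|\Psi_1(x_0)-\Psi_t(x_0)\|=O(\sigma_t^{\zeta/2})$. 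To pass from ``probability $\ge1-\delta$ for each $\delta$'' to ``almost everywhere'', take $\delta=1/n$ and let $G:=\bigcup_n G_{1/n}$; then $p_0(G^c)\le1/n$ for all $n$, so $G$ has full $p_0$-measure, equivalently full Lebesgue measure since $p_0=p_{\mathrm{prior}}$ is the standard Gaussian with everywhere-positive density. On $G$ the map $\Psi_1$ is the pointwise limit of the continuous maps $\Psi_t$ along $t\to1$, hence Borel measurable; and for every bounded continuous $f$, dominated convergence gives $\int f\,d(\Psi_1)_\#p_0=\lim_{t\to1}\int f\,d(\Psi_t)_\#p_0=\lim_{t\to1}\int f\,dp_t=\int f\,dp$, the last equality being weak convergence $p_t\to p$, which follows from $q_{\sigma_t}\to p$ (\Cref{prop: linear convergence probability}) together with $p_t=(A_t^{-1})_\#q_{\sigma_t}$ and $A_t^{-1}\to\mathrm{id}$. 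Hence $(\Psi_1)_\#p_0=p$.

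\textbf{Main obstacle.} The delicate part is not any estimate but the measure-theoretic transfer in the first two paragraphs: the high-probability ``good region'' is a statement about the random point $z_{\lambda_\delta}\sim q_{\lambda_\delta}$, and one must convert it into a statement about a $p_0$-measurable set of \emph{initial conditions} $x_0$ via the continuous push-forward map $\Phi_{\lambda_\delta}$, whose very existence leans on well-posedness on $[0,1)$; and one must invoke uniqueness to be sure that on this set the concatenated trajectory genuinely coincides with $\Psi_{t(\lambda)}(x_0)/\alpha_{t(\lambda)}$, so that the $\lambda$-estimates of \Cref{claim:flow_map_convergence_rate_lambda} transfer without loss. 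Once this identification is in place, the limit $t\to1$, the rate conversion $\sigma_t=e^{-\lambda(t)}$, and the union over $\delta=1/n$ are all routine.
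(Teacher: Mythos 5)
Your proposal is correct and takes essentially the same approach as the paper: identify the $\lambda$-trajectory started from $z_{\lambda_\delta}\sim q_{\lambda_\delta}$ with the rescaled flow $\Psi_{t(\lambda)}(x_0)/\alpha_{t(\lambda)}$ via uniqueness and the pushforward identity $(A_{t_\delta}\circ\Psi_{t_\delta})_\#p_0=q_{\lambda_\delta}$, transfer the probability of the good event accordingly, invoke \Cref{claim:flow_map_convergence_rate_lambda}, and pass back to $t$ using $\alpha_t\to1$. The only cosmetic difference is that you pull back the good set as a preimage under the forward pushforward map $\Phi_{\lambda_\delta}$ (thereby not needing the paper's assertion that $A_{t_\delta}\circ\Psi_{t_\delta}$ is a bijection), and your final paragraph on the $O(\sigma_t^{\zeta/2})$ rate, measurability, and $(\Psi_1)_\#p_{\mathrm{prior}}=p$ addresses material that the paper places in the surrounding proof of \Cref{thm:existence of flow maps} rather than in this claim.
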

\begin{proof}[Proof of \Cref{claim:flow_map_convergence_rate_t}]

Let $\lambda_\delta$ be the one given in \Cref{claim:flow_map_convergence_rate_lambda}. Then, we let $t_\delta:= t(\lambda_\delta)$.
Consider the map $A_{t_\delta}:\R^d\to\R^d$ sending $x$ to $x/\alpha_{t_\delta}$. Then, we have that 
\begin{equation}
    \label{eq:change_of_variable-0-lambda}
    (A_{t_\delta})_\#p_{t_\delta} = q_{\lambda_\delta}.
\end{equation}

Both maps $A_{t_\delta}$ and $\Psi_{t_\delta}$ (whose existence follows from \Cref{prop:existence of flow maps}) are continuous bijections. It is then easy to see that if an ODE trajectory of \Cref{eq:ddim_ode_lambda} converges starting with some $z_{\lambda_\delta}\sim q_{\lambda_\delta}$ and has convergence rate $O(e^{-\frac{\zeta\lambda}{2}})$, then the corresponding trajectory of the ODE \Cref{eq:flow_ode} starting with $x_0:=(A_{t_\delta}\circ\Psi_{t_\delta})^{-1}(z_{\lambda_\delta})$ also converges to a limit as $t\to 1$ with the same convergence rate up to the change of variable $\lambda\to t$. Finally, by \Cref{eq:change_of_variable-0-lambda} we also have that 
$$p_0(x_0\text{ with the desired properties})=q_{\lambda_\delta}(z_{\lambda_\delta}\text{ with the desired properties})>1-\delta,$$
which concludes the proof.
\end{proof}

Item 2 in the theorem is a direct consequence of item 1. As $\Psi_1$ is the pointwise limit of {continuous} maps $\Psi_t$ (cf. \Cref{prop:existence of flow maps}), it is a Borel measurable map. We know that $p_t$ weakly converges to $p_1=p$ as $t\to 1$. Now, we just verify that $(\Psi_1)_\#p_0$ is also a weak limit of $p_t=(\Psi_t)_\#p_0$ to show that $(\Psi_1)_\#p_0=p_1$.

For any continuous and bounded function $f$, we have that
\begin{align*}
    \lim_{t\to 1}\int f(x)(\Psi_t)_\#p_0(dx)&=\lim_{t\to 1}\int f(\Psi_t(x))p_0(dx)\\
    &=\int f(\Psi_1(x))p_0(dx)
\end{align*}
where we used the bounded convergence theorem in the last step. Therefore, we have that $(\Psi_1)_\#p_0=p_1$ and hence $\Psi_1$ is the flow map associated with the ODE $dx_t/dt=u_t(x_t)$.

\subsection{Proof of \Cref{thm:existence of flow maps different geometries}}\label{sec:proof_thm:existence of flow maps different geometries}

\noindent \textbf{Part 1.} We first establish the following volume growth condition for the manifold $M$ satisfying assumptions in the theorem.
\begin{lemma}\label{lemma:volume_growth_manifold}
There exists $0<r_M'<\mathrm{Inj}(M)$ sufficiently small, where $\mathrm{Inj}(M)$ denotes the injectivity radius, such that the following holds. For any $R>0$, there exists a constant $C_R>0$ so that for any radius $0<r<r_M'$, and for any $x\in M\cap B_R(0)$, one has $p(B_r(x))\geq C_Rr^m$.
\end{lemma}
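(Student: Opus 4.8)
The plan is to reduce the Euclidean estimate to an \emph{intrinsic} volume estimate on $M$ and then control the latter uniformly using the bounded-geometry hypotheses. Since for any $x,y\in M$ the chord length is dominated by the geodesic distance, $\|x-y\|\le d_M(x,y)$, the intrinsic ball $B^M_r(x):=\{y\in M: d_M(x,y)<r\}$ is contained in $B_r(x)\cap M$, so it suffices to lower bound $p(B^M_r(x))=\int_{B^M_r(x)}\rho\,\mathrm{vol}_M(dy)$. First I would fix $R>0$ and require $r_M'\le 1$ (to be shrunk further below); then for $x\in M\cap B_R(0)$ and $r<r_M'$ every $y\in B^M_r(x)$ has $\|y\|\le\|x\|+\|x-y\|<R+1$, hence $B^M_r(x)\subset M\cap\overline{B_{R+1}(0)}$. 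The latter set is closed and bounded, hence compact, and $\rho$ is a non-vanishing continuous probability density, hence strictly positive; therefore $\rho_R:=\min_{M\cap\overline{B_{R+1}(0)}}\rho>0$ and $p(B^M_r(x))\ge\rho_R\,\mathrm{vol}_M(B^M_r(x))$.

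Next I would establish a lower bound $\mathrm{vol}_M(B^M_r(x))\ge c_m r^m$ that is uniform in $x\in M$. By \cite[Corollary 1.4]{alexander2006gauss} the injectivity radius satisfies $\mathrm{Inj}(M)\ge\tau_M/4>0$, so once $r_M'<\tau_M/4$ the exponential map $\exp_x:B^{T_xM}_r(\mathbf 0)\to B^M_r(x)$ is a diffeomorphism for all $x\in M$ and all $r<r_M'$, and $\mathrm{vol}_M(B^M_r(x))=\int_{B^{T_xM}_r(\mathbf 0)}\sqrt{\det g_x(u)}\,du$. As recalled in the proof of \Cref{thm:conditional_convergence}, $\sqrt{\det g_x(u)}=1-\tfrac16\,\mathrm{Ric}_x(u,u)+O(\|u\|^3)$; by the Gauss equation the Ricci tensor of $M$ and, upon differentiating, the cubic remainder are controlled purely in terms of $\|\mathbf{I\!I}\|$ and $\|\nabla\mathbf{I\!I}\|$, which are bounded on $M$ by hypothesis. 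Hence the expansion is uniform in $x$, and there is $r_M'>0$, independent of $x$, with $\sqrt{\det g_x(u)}\ge\tfrac12$ whenever $\|u\|<r_M'$. Integrating gives $\mathrm{vol}_M(B^M_r(x))\ge\tfrac12\,\omega_m r^m$, where $\omega_m$ is the volume of the unit ball in $\mathbb R^m$. Combining with the previous paragraph yields $p(B_r(x))\ge\tfrac12\rho_R\omega_m\,r^m=:C_Rr^m$ for all $0<r<r_M'$ and all $x\in M\cap B_R(0)$, as claimed.

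The main obstacle is precisely this uniform volume lower bound over the possibly non-compact manifold $M$: a pointwise Taylor expansion of the volume density is not enough, and one must argue that both the radius threshold $r_M'$ and the constant are independent of the basepoint. This is exactly what the two ``bounded geometry'' assumptions (bounded $\mathbf{I\!I}$ and bounded $\nabla\mathbf{I\!I}$) together with the uniform injectivity-radius bound $\mathrm{Inj}(M)\ge\tau_M/4$ are for; alternatively one could invoke G\"unther's volume comparison theorem with a uniform lower sectional-curvature bound (again a consequence of $\|\mathbf{I\!I}\|$ being bounded). All remaining steps are routine.
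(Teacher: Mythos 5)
Your proposal is correct and follows the same skeleton as the paper's proof: bound $p(B_r(x))$ from below by the intrinsic ball via $\|x-y\|\le d_M(x,y)$, get a uniform positive lower bound $\rho_R$ for $\rho$ on the compact set $M\cap\overline{B_{R'}(0)}$ (the paper uses $R'=R+\mathrm{Inj}(M)$, you use $R+1$; immaterial), and then prove a basepoint-uniform lower bound $\mathrm{vol}_M(B^M_r(x))\ge c\,r^m$. The only genuine difference is how that last bound is obtained. The paper deduces an upper sectional curvature bound $\kappa$ from the boundedness of $\mathbf{I\!I}$ (via the Gauss equation) and applies G\"unther's volume comparison theorem for $r<\mathrm{Inj}(M)\ (\ge\tau_M/4)$, which immediately gives a uniform constant $c_\kappa$ with no further estimates. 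You instead expand $\sqrt{\det g_x(u)}$ to second order and argue the remainder is uniform in $x$ because $\mathbf{I\!I}$ and $\nabla\mathbf{I\!I}$ are bounded. That route works in this context (those bounds are among the standing hypotheses of \Cref{thm:existence of flow maps different geometries}), but the uniformity of the Taylor remainder over all basepoints and all $\|u\|<r_M'$ is exactly the delicate point: making it rigorous amounts to uniform Jacobi-field estimates under curvature bounds, which is what Rauch/G\"unther comparison packages for free and needs only bounded $\mathbf{I\!I}$, not $\nabla\mathbf{I\!I}$. Your closing remark that one could alternatively invoke G\"unther with the curvature bound coming from $\|\mathbf{I\!I}\|$ is precisely the paper's argument, and is the cleaner and slightly more economical of the two.
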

\begin{proof}
    Notice that within the compact region $ M_{R+\mathrm{Inj}(M)}:=\overline{B_{R+\mathrm{Inj}(M)}(0)} \cap M $, the density $ \rho $ is lower bounded by a positive constant $\rho_R>0$. Additionally, the sectional curvature of $M$ is upper bounded by some constant $\kappa>0$ due to the boundedness of the second fundamental form. Then by Gunther's volume comparison theorem \citep[3.101~Theorem]{gallot1990riemannian}
    for $0<r<\mathrm{Inj}(M)$, there exists constant $c_{\kappa}>0$ such that
    \[\mathrm{vol}_M\left(B^M_r(x)\right)\geq \mathrm{vol}_{M_\kappa^m}\left(B^{M_\kappa^m}_r(x)\right),\]
    where $B_{r}^M(x):=\{y\in M:\,d_M(x,y)<r\}$, $d_M$ denotes the geodesic distance, and ${M_\kappa^m}$ denotes the $m$-dimensional sphere with sectional curvature $\kappa$.

    We then choose $r_M'$ to be small enough such that
    \[\mathrm{vol}_{M_\kappa^m}\left(B^{M_\kappa^m}_r(x)\right)\geq c_{\kappa}r^m\]
    for some constant $c_{\kappa}>0$ and all $r< r_M'$.

    Since  $\|x-y\|\leq d_M(x,y)$ for any $x,y\in M$, we have that $B_r^M(x)\subset B_r(x)$. Hence, we have that for all $x\in B_R(0)\cap M$ and $r<r_M'$,
    \[p(B_r(x))\geq p(B_{r}^M(x))=\int_ {B_r^M(x)}\rho(y)\mathrm{vol}_M(dy)\geq \rho_R\mathrm{vol}_M(B_r^M(x))\geq \rho_R\,c_{\kappa}\cdot r^m.\]
    By letting $C_R:=\rho_R\,c_\kappa$, we conclude the proof.
\end{proof}

Then, we can replicate the proof of \Cref{thm:existence of flow maps} with only small changes of \Cref{claim:general_projection_convergence_rate} as follows using $\zeta=1$:

\begin{claim}\label{claim:general_projection_convergence_rate_manifold}
    Under assumptions as in \Cref{thm:existence of flow maps different geometries}, there exsits $\Lambda>-\infty$ such that for any radius $R>\tau_M/2$, all $z\in B_R(0)\cap B_{\tau_M/2}(M)$, we have that
        $$ \| m_\lambda(z) - \proj_{M}(z)\|\leq  C_{\tau,R}\cdot e^{-\lambda},\text{ for }\lambda>\Lambda$$
    where $C_{M,R}$ is a constant depending only on $R$ and the geometry of $M$.
\end{claim}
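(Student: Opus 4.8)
The plan is to revisit the proof of \Cref{thm:conditional_convergence} (equivalently \Cref{coro:denoiser_limit_manifold}) and check that every implied constant appearing there can be taken uniform over $z\in B_R(0)\cap B_{\tau_M/2}(M)$; the claim then follows after substituting $\sigma=e^{-\lambda}$ and using \Cref{lem:wasserstein_mean}. Note first that since $d_M(z)<\tfrac12\tau_M<\tau_M$ is strictly below the reach, $z\notin\Sigma_M$ and $\proj_M(z)$ is well defined, and moreover $\|\proj_M(z)\|\le\|z\|+d_M(z)<2R$ (using $R>\tau_M/2$).

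The uniformity ingredients are all available. Because $M$ is closed in $\R^d$, the set $M\cap\overline{B_{2R+1}(0)}$ is compact, so the smooth non-vanishing density $\rho$, its reciprocal, its gradient $\nabla\rho$, and the Ricci curvature of $M$ are bounded there by constants depending only on $R$ and the geometry of $M$. The positive reach yields $\mathrm{Inj}(M)\ge\tau_M/4$ (via \cite[Corollary 1.4]{alexander2006gauss}) and $\max_{i,j}\|\mathbf{I\!I}_{ij}\|\le 1/\tau_M$ (via \cite[Theorem 2.1]{berenfeld2022estimating}); the assumption that $\mathbf{I\!I}$ and $\nabla\mathbf{I\!I}$ are bounded makes the constants $C(\mathbf{I\!I},\nabla\mathbf{I\!I})$, $C_1$, $C_2$ in the expansions \Cref{eq:expansion_exponential}, \Cref{eq:expansion_density}, \Cref{lemma:distance expansion} uniform; and \Cref{lemma:volume_growth_manifold}, applied with radius $2R$, gives a uniform volume lower bound $p(B_r(\proj_M(z)))\ge C_{2R}r^m$ for all $z$ in the region and all $r<r_M'$. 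Consequently one can fix a single $r_0$ — depending only on the geometry of $M$, e.g.\ $r_0<\min\{\tau_M/8,\,r_M',\,1/(4C_2)\}$ — that simultaneously validates the normal-coordinate chart, \Cref{lem:balls}, and the pointwise lower bound $u^\top(\bm I-\mathbf{I\!I}_v)u+R_2(u)>\tfrac14\|u\|^2$ (here one uses $\lambda_{\min}(\bm I-\mathbf{I\!I}_v)\ge 1-\|v\|\max_{i,j}\|\mathbf{I\!I}_{ij}\|\ge\tfrac12$, uniform since $\|v\|=d_M(z)<\tfrac12\tau_M$) for every $z$ in the region.

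With $r_0$ so fixed, I would rerun the two-part estimate of \Cref{thm:conditional_convergence} verbatim with $x_M:=\proj_M(z)$ and $v:=z-x_M$. The tail term $I_2$ is $O(\exp(-r_0^2/(8\sigma^2)))$ uniformly, since $1-\Delta_z/(\Delta_z+\Phi_z)\ge\tfrac12$, $\|x_M\|<2R$, and the denominator $p(B_{r^*}(x_M))$ is bounded below by the uniform volume-growth constant; the main term $I_1$ equals $m\sigma^2+O(\sigma^3)$ with implied constant assembled only from the uniformly controlled quantities above. Hence there exist $C''_{M,R}>0$ and a threshold $\sigma_0>0$, both depending only on $R$ and the geometry of $M$, with $d_{\mathrm{W},2}\!\left(p(\cdot|\bm{X}_\sigma=z),\delta_{x_M}\right)^2\le m\sigma^2+C''_{M,R}\sigma^3$ for all such $z$ and all $\sigma<\sigma_0$. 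Applying \Cref{lem:wasserstein_mean} gives $\|m_\sigma(z)-\proj_M(z)\|\le\sqrt{m\sigma^2+C''_{M,R}\sigma^3}\le C_{M,R}\,\sigma$ for $\sigma\le\min\{\sigma_0,1/C''_{M,R}\}$; setting $\Lambda:=-\log\min\{\sigma_0,1/C''_{M,R}\}$ and recalling $\sigma=e^{-\lambda}$ and $m_\lambda=m_{\sigma(\lambda)}$ yields the stated bound for all $\lambda>\Lambda$.

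The only real obstacle is the bookkeeping in the previous paragraph: one must confirm that none of the constants generated in the proof of \Cref{thm:conditional_convergence} depends on $z$ except through $d_M(z)$, $\|\proj_M(z)\|$, and the pointwise geometry of $M$ near $\proj_M(z)$ — all of which are controlled uniformly on $B_R(0)\cap B_{\tau_M/2}(M)$ by the compactness of $\overline{B_{2R}(0)}\cap M$ together with the positive-reach and bounded-curvature hypotheses. Everything else is a routine substitution.
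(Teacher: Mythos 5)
Your proposal is correct and follows essentially the same route as the paper: the paper's proof likewise re-examines the constants in the proof of \Cref{thm:conditional_convergence} (the bounds in \Cref{eq:bound_I2}, \Cref{eq:expansion_density}, \Cref{eq:bound_I1_volume}, \Cref{lemma:distance expansion}), checks they are uniform over $B_R(0)\cap B_{\tau_M/2}(M)$ using the positive reach, the bounded $\mathbf{I\!I}$, $\nabla\mathbf{I\!I}$, and \Cref{lemma:volume_growth_manifold} with a uniformly chosen $r_0$, and then concludes via $\|m_\lambda(z)-\proj_M(z)\|\leq d_{\mathrm{W},2}(p(\cdot|\bm{X}_\lambda=z),\delta_{z_M})=\sqrt{m}e^{-\lambda}+O(e^{-2\lambda})$ after substituting $\sigma=e^{-\lambda}$. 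Your extra bookkeeping (explicit choice of $r_0$, the $\lambda_{\min}(\bm{I}-\mathbf{I\!I}_v)\geq\tfrac12$ bound, and the threshold defining $\Lambda$) matches the paper's argument in substance.
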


\begin{proof}[Proof of \Cref{claim:general_projection_convergence_rate_manifold}]
    This can be proved by carefully examining all bounds involved in the proof of \Cref{thm:conditional_convergence}.

    \begin{itemize}
        \item \Cref{eq:bound_I2}: Since $z\in B_R(0)\cap B_{\tau/2}(M)$, we have that $z_M\in B_{R+\tau_M/2}(0)$ and hence $\|z_M\|$ in the numerator can be bounded by $R+\tau_M/2$. The denominator is lower bounded by some polynomial of $r_0$ by \Cref{ineq:r*} and the volume growth condition of $p$ established in \Cref{lemma:volume_growth_manifold}. We also notice that $r_0$ can be chosen uniformly for all $z\in B_R(0)\cap B_{\tau_M/2}(M)$ as it is completely dependent on the reach. Therefore, the big O function is bounded above by some function of $R$.
        \item \Cref{eq:expansion_density}: $C_1$ can be bounded by $R$ and the bounds on the local Lipschitz constant of the density and the second fundamental form (which bounds the Ricci tensor).
        \item \Cref{eq:bound_I1_volume}: this one is bounded similarly as the one in \Cref{eq:bound_I2} by some function of $R$.
        \item \Cref{lemma:distance expansion}: the bound $C_2$ is bounded by the bounds on the second fundamental form and its covariant derivatives.
    \end{itemize}
    In conclusion, 
    \[ \| m_\lambda(z) - \proj_{M}(z)\|\leq d_{\mathrm{W},2}(p(\cdot|\bm{X}_\lambda=z),\delta_{z_M})=\sqrt{m}e^{-\lambda}+G(z)\]
    where $|G(z)|\leq C'_{M,R}e^{-2\lambda}$ for all $z\in B_R(0)\cap B_{\tau_M/2}(M)$ and $C'_{M,R}$ depends only on $R$ and geometry bounds of $M$ such as reach and second fundamental form bounds. Then, by combining the two exponential terms, one concludes the proof. 
\end{proof}
\noindent \textbf{Part 2.} In the discrete case, we let $\set:=\{x_1,\ldots,x_N\}$. We can improve the convergence rate in \Cref{thm:existence of flow maps} to be exponential by considering a direct consequence of \Cref{coro:denoiser_limit_discrete_sigma}:

\begin{claim}\label{claim:general_projection_convergence_rate_discrete}
    For all $z\in \R^d$ such that $d_{\set}(z)<\frac{1}{4}\mathrm{sep}_\set$, we have that
        $$ \| m_\lambda(z) - \proj_{\set}(z)\|\leq  C_2\cdot \exp({-C_1e^{2\lambda}}),$$
    where $C_1,C_2$ only depends on $\mathrm{sep}_\set:=\min_{x_i\neq x_j\in \set}\|x_i-x_j\|$ is the minimal separation between the points in $\set$.
\end{claim}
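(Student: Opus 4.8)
The plan is to read the bound off directly from \Cref{coro:denoiser_limit_discrete_sigma} after two reductions: the change of variables $\sigma=e^{-\lambda}$, and a purely geometric step that turns the hypothesis $d_\set(z)<\tfrac14\mathrm{sep}_\set$ into a uniform lower bound on the ``gap'' $\Delta_\set(z)$ that controls the exponential rate in that corollary.

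First I would carry out the geometric step. Write $\mathrm{sep}:=\mathrm{sep}_\set$ and pick $x_i\in\set$ with $\|z-x_i\|=d_\set(z)$. For any $x_j\in\set\setminus\{x_i\}$ the triangle inequality gives $\|z-x_j\|\ge\|x_i-x_j\|-\|z-x_i\|\ge\mathrm{sep}-\tfrac14\mathrm{sep}=\tfrac34\mathrm{sep}$, whereas $\|z-x_i\|<\tfrac14\mathrm{sep}$. Hence the nearest point of $\set$ to $z$ is unique, so $z\notin\Sigma_\set$ and $\proj_\set(z)=x_i$; moreover the second-smallest distance value obeys $d_\set(z;2)\ge\tfrac34\mathrm{sep}$ while $d_\set(z;1)=d_\set(z)<\tfrac14\mathrm{sep}$, so
\[
\Delta_\set(z)=d_\set^2(z;2)-d_\set^2(z;1)\ge\tfrac{9}{16}\mathrm{sep}^2-\tfrac{1}{16}\mathrm{sep}^2=\tfrac12\mathrm{sep}^2 .
\]
(If $N=1$ the claim is vacuous, and if $z\in\set$ the same computation yields the even stronger $\Delta_\set(z)\ge\mathrm{sep}^2$, so I would simply assume $N\ge 2$ and $d_\set(z)>0$ in the main line of argument.)

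Next I would apply the ``in particular'' clause of \Cref{coro:denoiser_limit_discrete_sigma} with this $x_i=\proj_\set(z)$ — legitimate precisely because we have just shown $z\notin\Sigma_\set$. Substituting $\sigma=e^{-\lambda}$, so that $1/(4\sigma^2)=e^{2\lambda}/4$, and inserting the lower bound on $\Delta_\set(z)$, this yields
\[
\|m_\lambda(z)-\proj_\set(z)\|\le\diam(\set)\sqrt{\tfrac{1-a_i}{a_i}}\,\exp\!\Big(-\tfrac{\Delta_\set(z)}{4}\,e^{2\lambda}\Big)\le C_2\,\exp\!\big(-C_1\,e^{2\lambda}\big),
\]
with $C_1:=\tfrac18\mathrm{sep}_\set^2$ and $C_2:=\diam(\set)\max_{1\le j\le N}\sqrt{(1-a_j)/a_j}$, valid for all $\lambda\in\R$. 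Here $C_1$ depends only on $\mathrm{sep}_\set$ as asserted; $C_2$ in fact also involves $\diam(\set)$ and the weights $a_j$, which is the only slight imprecision in the statement and is immaterial for its subsequent use in the proof of \Cref{thm:existence of flow maps different geometries}.

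I do not expect a genuine obstacle: the whole argument is an elementary triangle-inequality estimate for $\Delta_\set(z)$ together with the already-established \Cref{coro:denoiser_limit_discrete_sigma}. The only spots requiring mild care are (i) confirming that $d_\set(z)<\tfrac14\mathrm{sep}_\set$ forces a unique nearest point, so that $\proj_\set(z)$ is defined and the ``in particular'' branch of the corollary applies, and (ii) the degenerate cases $N=1$ and $z\in\set$, which are only easier.
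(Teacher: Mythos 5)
Your proof is correct and follows exactly the route the paper intends: the claim is stated there as a direct consequence of \Cref{coro:denoiser_limit_discrete_sigma}, and your triangle-inequality bound $\Delta_\set(z)\ge\tfrac12\mathrm{sep}_\set^2$ under $d_\set(z)<\tfrac14\mathrm{sep}_\set$ (which also guarantees $z\notin\Sigma_\set$) is precisely the missing elementary step, combined with the substitution $\sigma=e^{-\lambda}$. Your observation that $C_2$ in fact also depends on $\diam(\set)$ and the weights $a_i$ (not only on $\mathrm{sep}_\set$) is a fair, harmless correction to the claim's wording.
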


Therefore, when $\lambda_\delta$ is large enough, we have that for any $\lambda_2>\lambda_1\geq\lambda_\delta$

\begin{align*}
    \frac{d ( e^{2\lambda} d_{\set}^2(z_\lambda))}{d\lambda} &\leq {2}e^{2\lambda} d_{\set}(z_\lambda) \| m_\lambda(z_\lambda) - \proj_\set(z_\lambda)\|\\
    &\leq {2}C_2e^{2\lambda-C_1e^{2\lambda}} d_{\set}(z_\lambda)\\
    &\leq C_3e^{-\lambda}d_{\set}(z_\lambda),
\end{align*}
where $C_3$ is a constant dependent on $C_1,C_2$ and the properties of the exponential function.

Then, in particular, there is
\begin{align*}
    e^{2\lambda} d_{\set}^2(z_{\lambda}) & \leq e^{2\lambda_\delta} d_{\set}^2(z_{\lambda_\delta}) + C_3\int_{\lambda_\delta}^{\lambda} e^{-\lambda'}d_{\set}(z_{\lambda'})d\lambda', \\
    &\leq  e^{2\lambda_\delta} C_5 + C_4\int_{\lambda_\delta}^{\lambda} e^{-\lambda'} d\lambda'
\end{align*}
where the we use the fact that $d_{\set}(z_\lambda)$ is bounded by the absorbing result in~\Cref{thm:new contraction} and hence the integral is bounded by a constant $C_4$.

This implies that $d_{\set}^2(z_{\lambda}) \leq e^{-2(\lambda-\lambda_\delta)}C_5 + C_4e^{-2\lambda}\leq C_6 e^{-\lambda}$ for some constant $C_6$ depending on $C_4,C_5$. Now, following the rest of the estimations in the proof of \Cref{thm:existence of flow maps}, we can replace the rate $O(e^{-\frac{\zeta\lambda}{2}})$ in \Cref{claim:flow_map_convergence_rate_lambda} with $O(e^{-\lambda})$ after the change above and conclude the proof.

\section{Experiments}\label{sec:experiments}

\subsection{A Synthetic Dataset with Three Clusters}\label{sec:experiment_synthetic}
\begin{figure}[htbp!]
    \centering
    \includegraphics[width=0.8\textwidth]{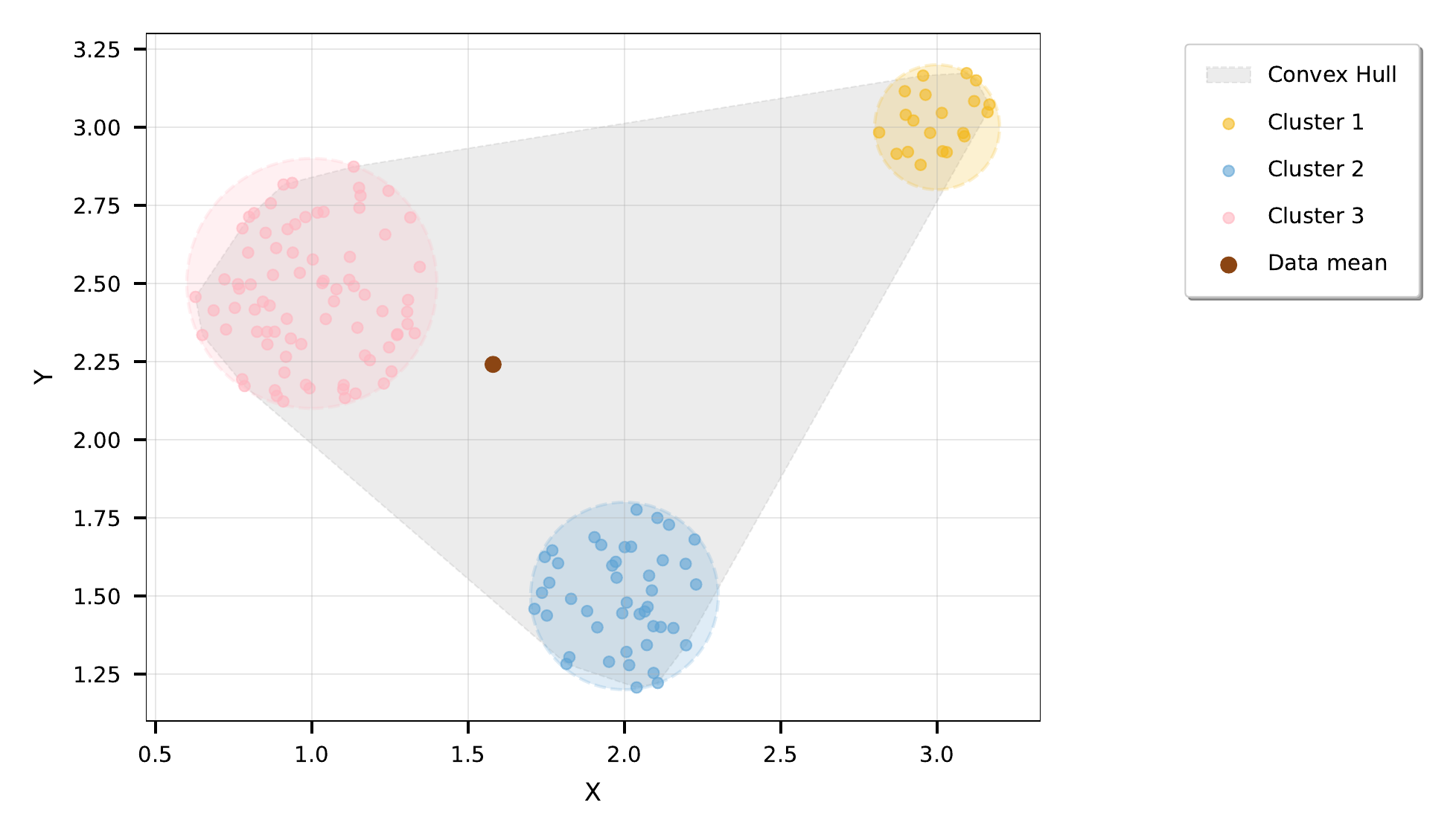}
    \caption{\textbf{The synthetic dataset showing three clusters with different scales and sample sizes.} The brown point marks the data mean.}
    \label{fig:cluster}
\end{figure}
We generate a synthetic dataset in $\R^2$ consisting of three distinct clusters shown in~\Cref{fig:cluster}:
\begin{itemize}
    \item \textbf{Cluster 1 (pink):} 80 points drawn from the uniform distribution in the disk centered at $(1.0, 2.5)$ with radius $0.4$.
    \item \textbf{Cluster 2 (blue):} 44 points drawn from the uniform distribution in the disk centered at $(2.0, 1.5)$ with radius $0.3$.
    \item \textbf{Cluster 3 (yellow):} 20 points drawn from the uniform distribution in the disk centered at $(3.0, 3.0)$ with radius $0.2$
\end{itemize}
The total dataset contains 144 points with a diameter of 2.623. As shown in~\Cref{fig:cluster}, the clusters exhibit different scales and sample sizes. 
This is the same dataset used in~\Cref{fig:trajectory}. We will continue to use this dataset to illustrate the various stages of a trajectory in the FM ODE with respect to the parameter $\sigma$ (cf.~\Cref{eq:ddim_ode_denoiser}). 
We use the closed form optimal denoiser~\Cref{eq:denoiser t} with $\alpha_t = t$ and $\beta_t = 1-t$ (the Recitified flow scheduling) for the sampling process. This will generate a trajectory in $t$ parameter, and we transform it into $\sigma$ parameter to align with the general results in~\Cref{sec:initial,sec:local_cluster_dynamics,sec:terminal_behavior_discrete}.
To this end, we use the transformation in~\Cref{prop:equivalence_variance_exploding} with a starting $\sigma$ value $\sigma_1=100$.

\paragraph{Single trajectory visualization.} We first examine a single trajectory in detail shown in~\Cref{fig:trajectory_stages}. The trajectory exhibits clear ``turning points" suggesting transitions between stages - first approaching the data mean (brown point), then being attracted to a local cluster, and finally converging to a specific data point. We examine the quantitative results we developed in~\Cref{sec:initial,sec:local_cluster_dynamics,sec:terminal_behavior_discrete} on these stages.

\begin{figure}[htbp!]
    \centering
    \begin{subfigure}[b]{0.48\textwidth}
        \centering
        \includegraphics[width=\textwidth]{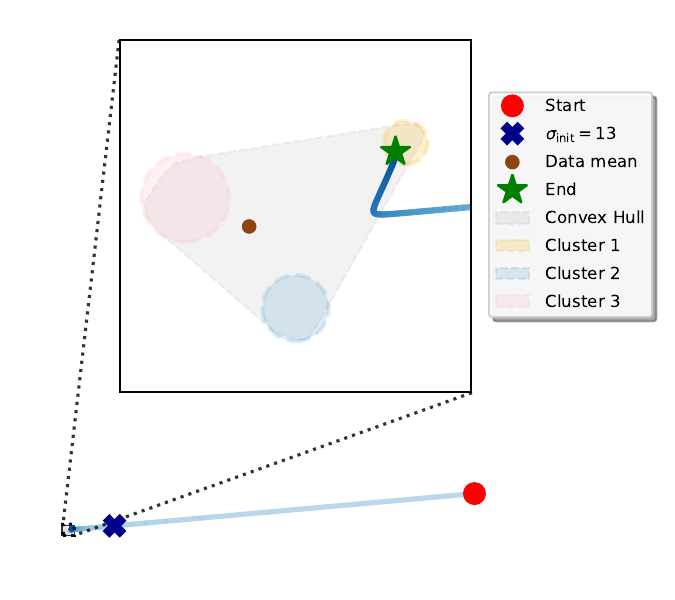}
        \caption{Initial stage: trajectory moves toward data mean (brown point). Blue cross marks $x_{\sigma_{\text{init}}}$. Top: zoomed view near clusters. Bottom: full view showing coarse movement.}
        \label{fig:trajectory_initial}
    \end{subfigure}
    \hfill
    \begin{subfigure}[b]{0.48\textwidth}
        \centering
        \includegraphics[width=\textwidth]{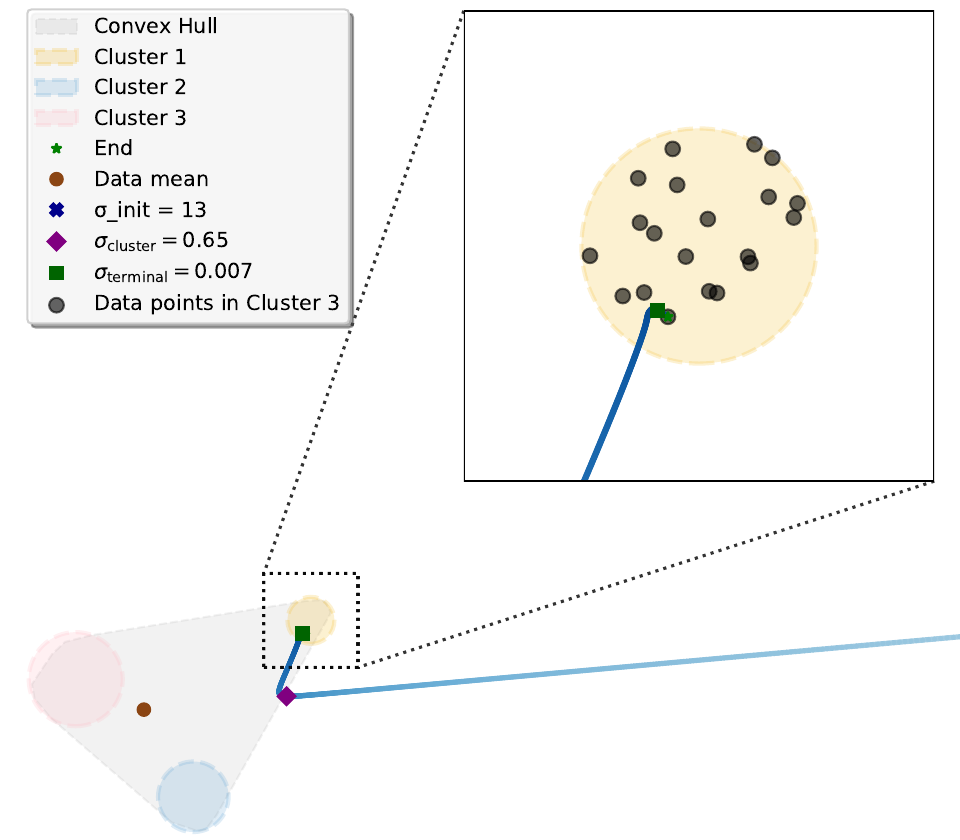}
        \caption{Later stages: attraction to yellow cluster (purple rhombus marks $x_{\sigma_{\text{cluster}}}$) and convergence to training point (green square marks $x_{\sigma_{\text{terminal}}}$).}
        \label{fig:trajectory_cluster_terminal}
    \end{subfigure}
    \caption{\textbf{Evolution of a single trajectory showing three distinct stages.}}
    \label{fig:trajectory_stages}
\end{figure}

\begin{figure}[htbp!]
    \centering
    \includegraphics[width=\textwidth]{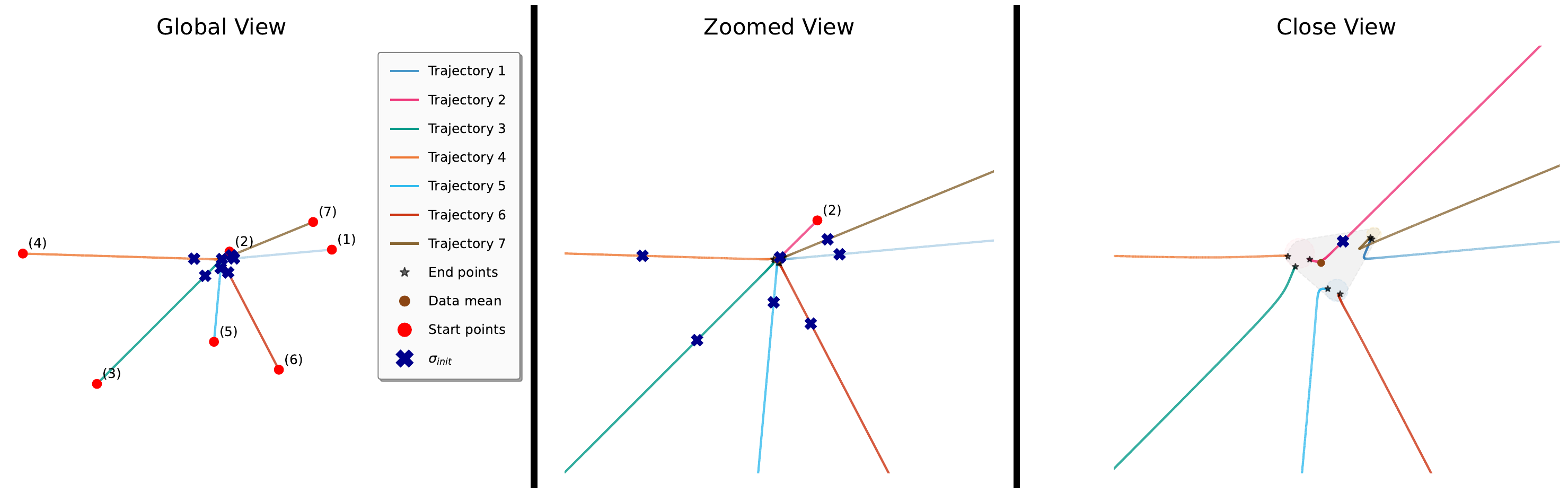}
    \caption{\textbf{Multiple trajectories from different initial samples.} Left: Global view showing roughly linear paths to mean. Middle: Zoomed view. Right: Close view near data support. Blue crosses mark $x_{\sigma_{\text{init}}}$ for each path.}
    \label{fig:trajectory_initial_multiple}
\end{figure}

\noindent \textbf{Initial stage:} Starting with $\sigma_1=100$ and initial distance to mean 122, \Cref{prop:initial-stage_mean_geneal} predicts $\sigma_{\text{init}}=13$ with $\zeta=0.5$. This suggests that the trajectory will approach the mean \emph{at least} before $\sigma_{\text{init}}=13$. We mark the location $x_{\sigma_{\text{init}}}$ (blue cross) in~\Cref{fig:trajectory_initial} and indeed observe that the trajectory moves toward the mean prior to $\sigma_{\text{init}}=13$.

We additionally examine multiple trajectories in~\Cref{fig:trajectory_initial_multiple}. Most initial trajectories form nearly straight lines toward the mean, with $x_{\sigma_{\text{init}}}$ (blue crosses) consistently making good predictions when the distance to the mean is monotonically decreasing.
Interestingly, Trajectory 2, which starts closer to the data distribution, overshoots the mean. From the top right, it initially moves toward the mean, then continues past it, and is eventually absorbed into the pink cluster.

In~\Cref{fig:trajectory_late_stages}, we show the detailed view of Trajectory 1 in our running example (\Cref{fig:trajectory}) as well as Trajectory 7, which also converges to the yellow cluster. We will use them to illustrate the intermediate and terminal stages.

\begin{figure}[htbp!]
    \centering
    \begin{subfigure}[b]{0.48\textwidth}
        \centering
        \includegraphics[width=\textwidth]{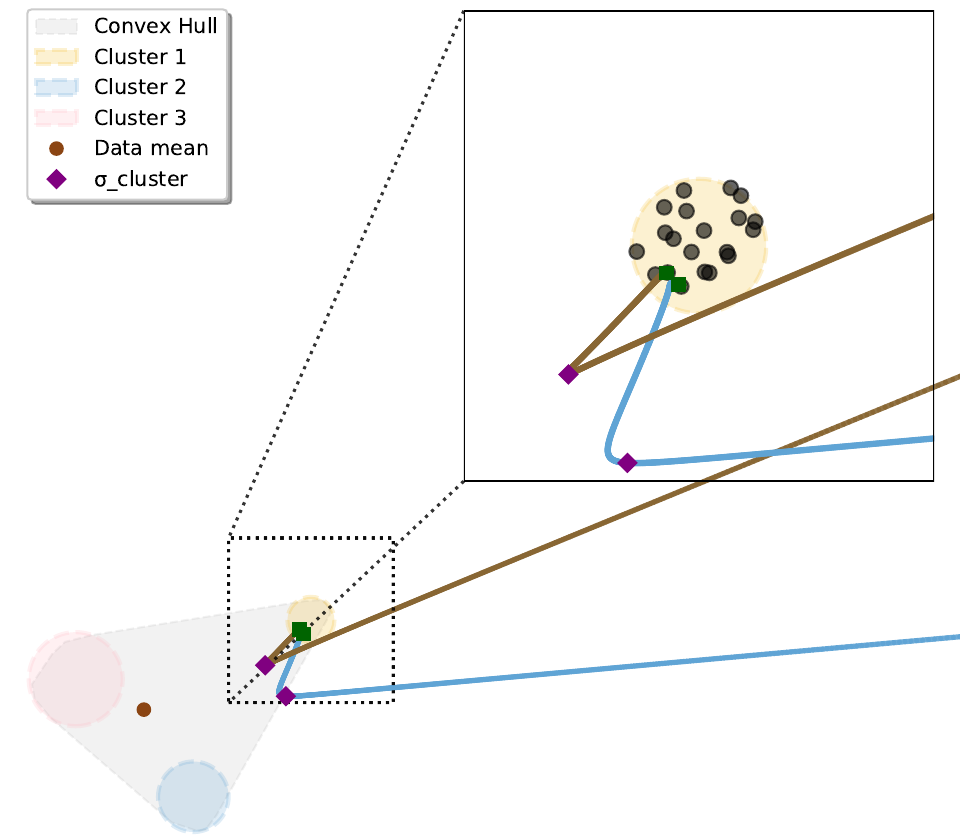}
        \caption{Intermediate stage: Two trajectories attracted to yellow cluster. Purple rhombuses mark $x_{\sigma_{\text{cluster}}}$.}
        \label{fig:trajectory_cluster}
    \end{subfigure}
    \hfill
    \begin{subfigure}[b]{0.48\textwidth}
        \centering
        \includegraphics[width=\textwidth]{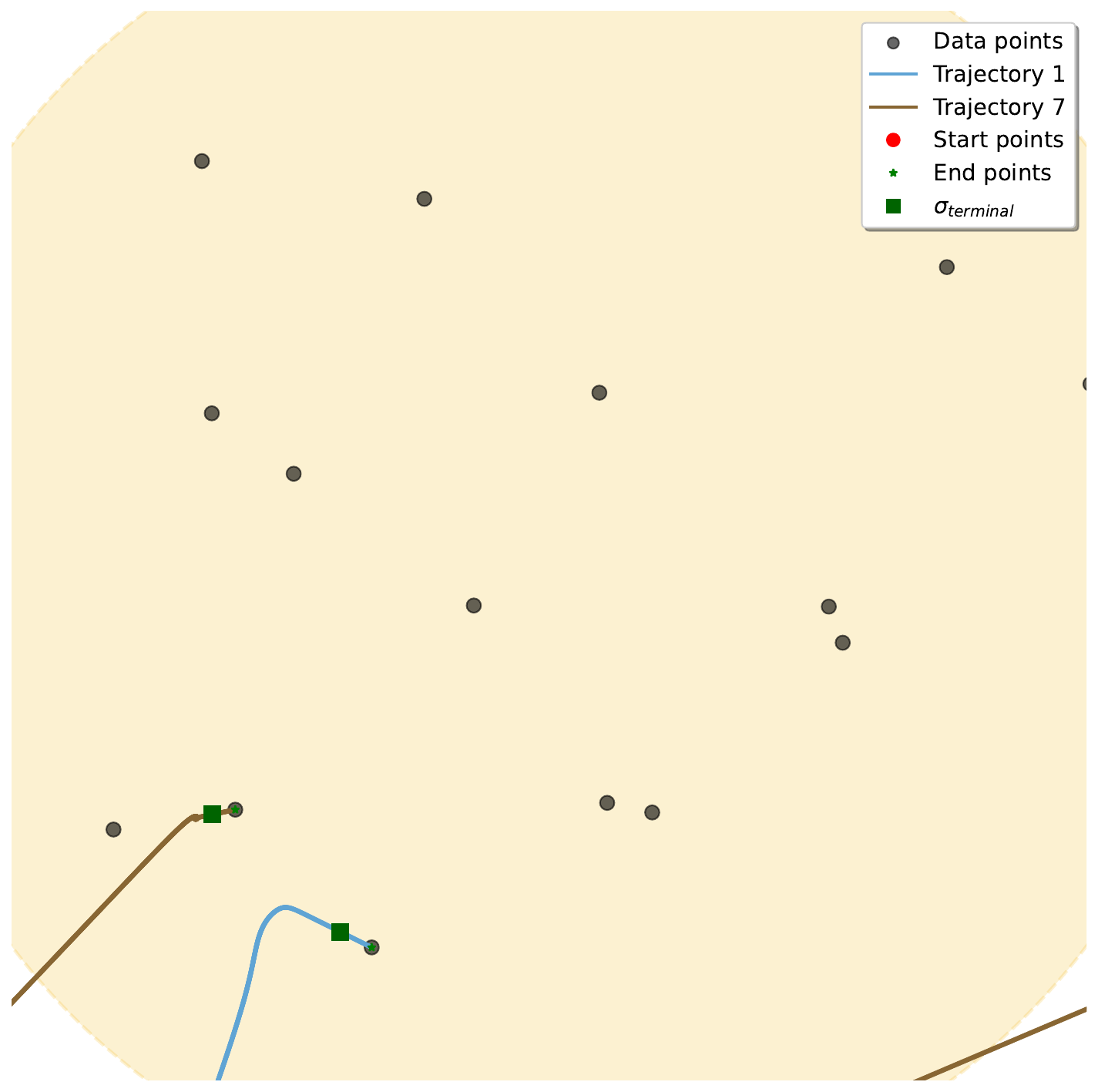}
        \caption{Terminal stage: Final convergence to training points. Green squares mark $x_{\sigma_{\text{terminal}}}$.}
        \label{fig:trajectory_terminal}
    \end{subfigure}
    \caption{\textbf{Detailed view of intermediate and terminal stages for two example trajectories.}}
    \label{fig:trajectory_late_stages}
\end{figure}

\noindent \textbf{Intermediate stage:} 
Using \Cref{prop:local_cluster_convergence_sigma} with the yellow cluster's diameter of $0.364$ and $\epsilon=0.1$, we compute $\sigma_{\text{cluster}}$ ($\sigma_0(S, \epsilon)$ for yellow cluster) is $0.65$. The locations $x_{\sigma_{\text{cluster}}}$ for both trajectories (purple rhombuses) align well with the points where both trajectories exhibit a clear attraction to this local cluster.

\noindent \textbf{Terminal stage:} We then examine the terminal stage in~\Cref{fig:trajectory_terminal}. First, note that the two trajectories indeed converge to two training data points, validating the terminal time convergence. We use \Cref{prop:memorization_sigma} to compute $\sigma_0(V_i^\epsilon)$ for data points $x_i$ which we denote in this section as $\sigma_{\text{terminal}}$ ((green square) for two trajectories and see the Proposition predicts trajectory after $\sigma_{\text{terminal}}$ will converge to nearest data point. For Trajectory 1, the separation of the converged point is $0.06$, and we choose $\epsilon$ as one-third of the separation. The predicted $\sigma_{\text{terminal}}$ for Trajectory 1 is $0.007$ and marked on the trajectory. For Trajectory 7, the separation is $0.04$, and we choose $\epsilon$ as one-third of the separation. The predicted $\sigma_{\text{terminal}}$ for Trajectory 7 is $0.004$ and marked on the trajectory.
In both cases, the predicted $\sigma_{\text{terminal}}$ lies in the segment where the trajectory is attracted to a specific training point, validating the prediction.

\noindent \textbf{Memorization with asymptotically optimal denoiser:} We also examine the memorization phenomenon with the asymptotically optimal denoiser $m_\sigma$ described in~\Cref{prop:memorization_sigma}.
We start with the same initial points as those in~\Cref{fig:trajectory_initial_multiple} and evolve the trajectories using the asymptotically optimal denoiser given by:
\[
\widetilde{m}_\sigma(x) = m_\sigma(x) + \sigma * {\epsilon}
\]
where ${\epsilon}$ is a random perturbation sampled from $\mathcal{N}(0, 3^2I)$. The trajectories are shown in~\Cref{fig:trajectory_memorization} with the perturbation vector shown in the left panel of the subfigure on the left. Comparing with~\Cref{fig:trajectory_initial_multiple}, we observe that the perturbation significantly drifts the trajectories; however, as the perturbation decays with $\sigma$, the trajectories still converge to a specific training point. This validates the memorization phenomenon with the asymptotically optimal denoiser in~\Cref{prop:memorization_sigma}.

\begin{figure}[htbp!]
    \centering
    \includegraphics[width=0.8\textwidth]{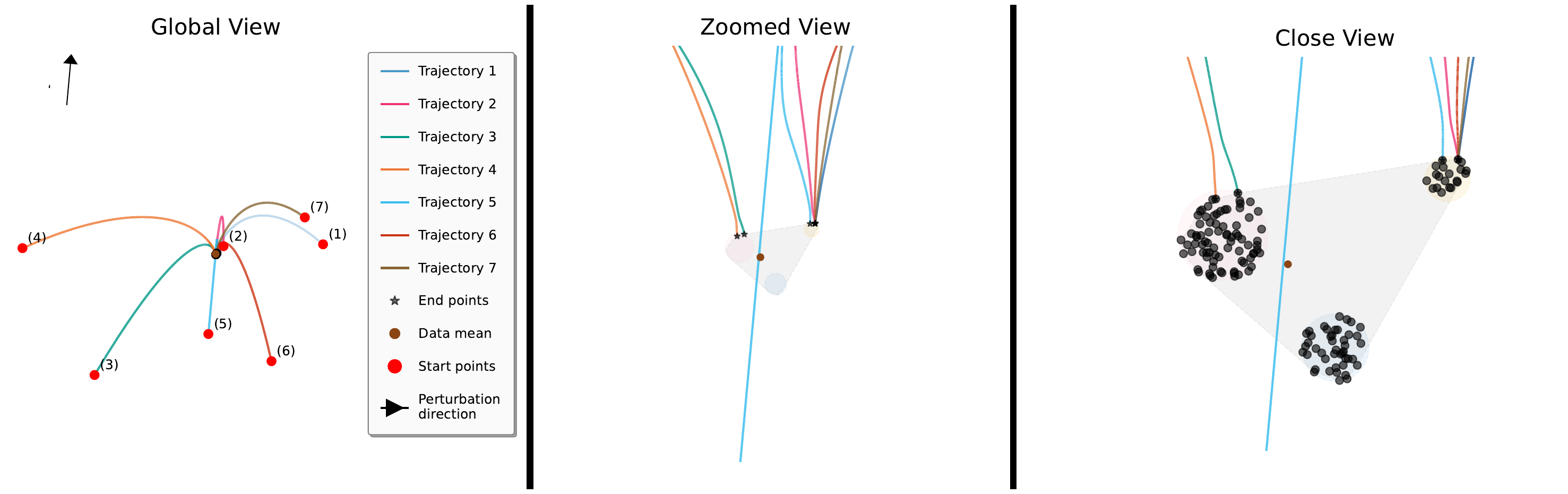}
    \caption{\textbf{Memorization behavior with asymptotically optimal denoiser.} Left: Global view showing trajectories under perturbation, with perturbation vector displayed. Middle: Zoomed view of trajectories. Right: Close view near data support. While perturbations initially steer trajectories (most notably in trajectory 5, which traverses across the data region), all trajectories eventually converge to data points as the perturbation strength diminishes with $\sigma$, demonstrating the robustness predicted by our theoretical results.
    }
    \label{fig:trajectory_memorization}
\end{figure}

\subsection{The CIFAR-10 Dataset}\label{sec:cifar10}
The CIFAR-10 dataset~\citep{krizhevsky2009learning} contains $50,000$ training images across $10$ classes and is a popular benchmark for evaluating generative models. In this subsection, we investigate the mean attraction property of flow models and the memorization issue highlighted in our theoretical analysis utilizing the CIFAR-10 dataset. These data consist of $32\times32$ RGB images, each with $3$ channels. Following the practice in~\citet{karras2022elucidating}, we normalize the pixel values to $[-1, 1]$ by the transformation $x \mapsto x/127.5 - 1$. This preprocessing results in a dataset with diameter $106.8$ and averaged norm $27.2$.

To establish a reference, we define the \textit{empirical optimal denoiser} $m_\sigma$ based on the empirical distribution $p$ over the training images of CIFAR-10. The corresponding FM ODE trajectory is denoted by $(x_\sigma)_{\sigma \in [\sigma_2, \sigma_1]}$, as governed by~\Cref{eq:ddim_ode_denoiser}. We refer to this trajectory as the \textit{empirical optimum}. Additionally, we consider a pre-trained denoiser $m_\sigma^{\mathrm{EDM}}$ from the EDM model~\citep{karras2022elucidating}, with its corresponding FM ODE trajectory denoted as $(x_\sigma^{\mathrm{EDM}})_{\sigma \in [\sigma_2, \sigma_1]}$ and governed by~\Cref{eq:ODE trained}.

To generate ODE samples, we initialize from random Gaussian noise and evolve the trajectories using the 18-step polynomial noise schedule (discretization) from EDM:
$$
\sigma_n = \left(\sigma_{\text{max}}^{1/\rho} + \frac{n}{N}(\sigma_{\text{min}}^{1/\rho} - \sigma_{\text{max}}^{1/\rho})\right)^\rho, \quad n = 0, 1, \ldots, N,
$$
with parameters $\sigma_{\text{max}} = 80$, $\sigma_{\text{min}} = 0.002$, $\rho = 7$, and $N = 18$.

\subsubsection{Initial Mean Attraction}\label{sec:cifar10 attraction}
To investigate the convergence-to-mean behavior in the initial stages of sampling, we analyze trajectories generated by both the empirical optimal denoiser $m_\sigma$ and the pre-trained EDM denoiser $m_\sigma^{\mathrm{EDM}}$. Let $\mathrm{mean}$ represent the mean of the CIFAR-10 dataset.

For clarity, we use $x_\sigma^*$ to denote either $x_\sigma$ or $x_\sigma^\mathrm{EDM}$, and $m_\sigma^*$ to denote either $m_\sigma$ or $m_\sigma^{\mathrm{EDM}}$. At each sampling step, we evaluate two key metrics:
\begin{enumerate}
    \item The distance $\|m_\sigma^*(x_\sigma^*) - \mathrm{mean}\|$ between the denoiser output and the dataset mean.
    \item The ratio $\|m_\sigma^*(x_\sigma^*) - \mathrm{mean}\| / \|m_\sigma^*(x_\sigma^*) - x_\sigma^*\|$, which quantifies the distance between the denoiser output and the data mean relative to the trajectory direction $m_\sigma^*(x_\sigma^*) - x_\sigma^*$. This ratio can be interpreted as a relative error when approximating the denoiser output with the dataset mean.
\end{enumerate}
\begin{figure}[htbp!]
    \centering
    \begin{subfigure}[b]{0.48\textwidth}
        \centering
        \includegraphics[width=\textwidth]{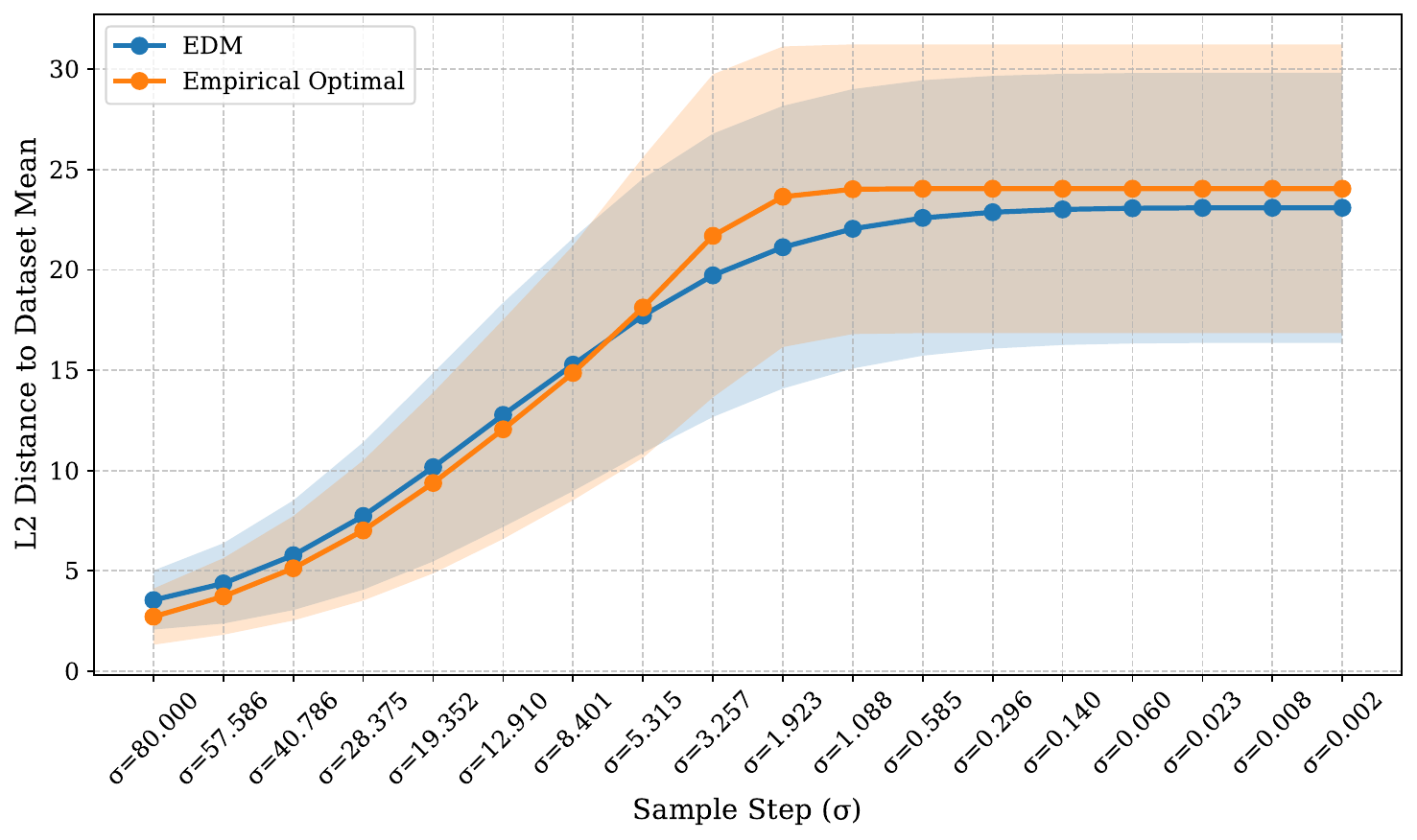}
        \caption{Mean deviation from data mean over time, averaged across 10,000 random seeds. Shaded regions show $\pm1$ standard deviation.}
        \label{fig:mean_deviation}
    \end{subfigure}
    \hfill
    \begin{subfigure}[b]{0.48\textwidth}
        \centering
        \includegraphics[width=\textwidth]{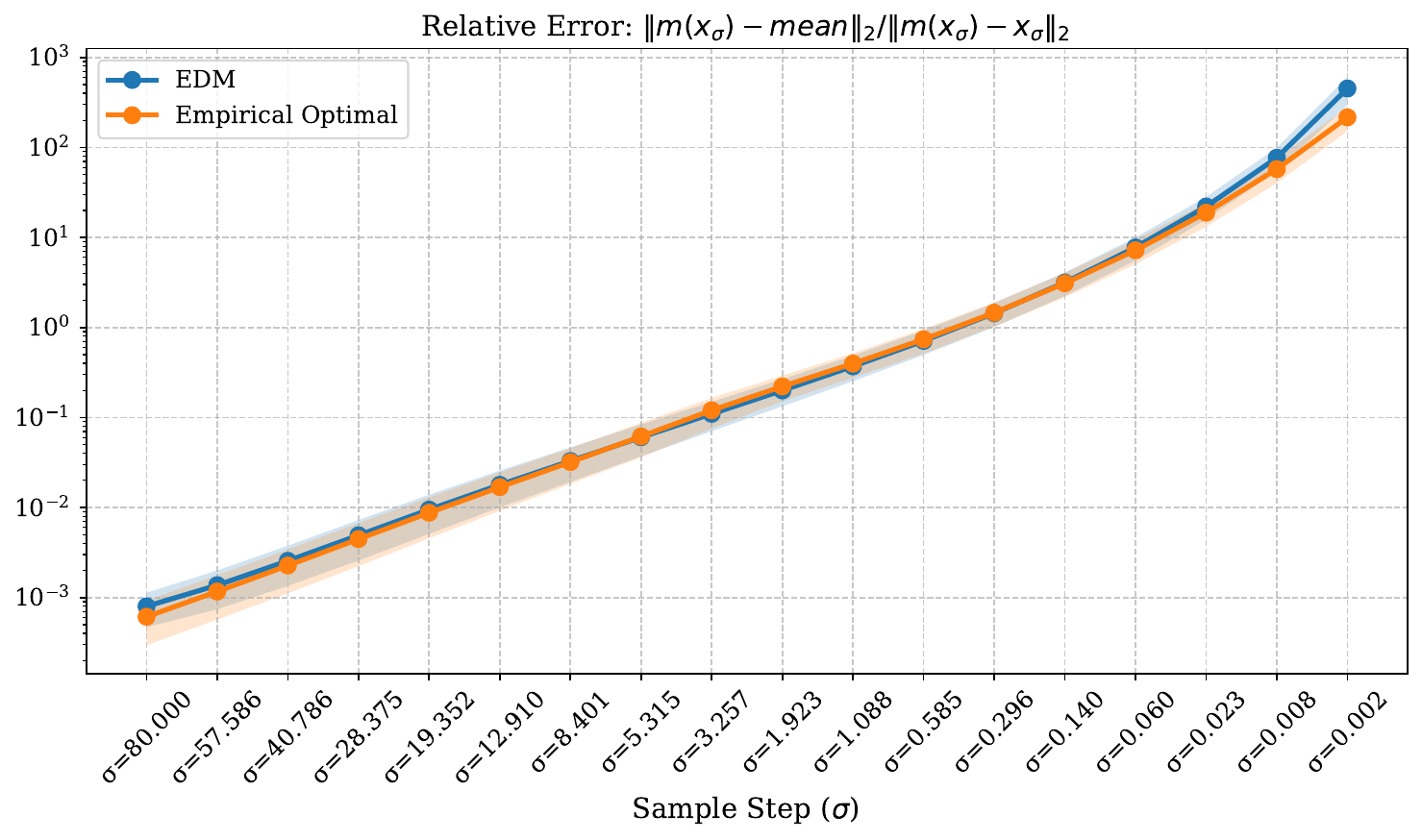}
        \caption{Log-scale relative error between trajectory direction and denoiser-mean difference, averaged across 10,000 random seeds. Shaded regions show $\pm1$ standard deviation.}
        \label{fig:relative_error}
    \end{subfigure}
    \caption{\textbf{Initial mean attraction in CIFAR-10 trajectories.} Results averaged over 10,000 random initializations with shaded regions showing $\pm1$ standard deviation. Both empirical optimal and trained denoisers exhibit strong initial attraction to the dataset mean when $\sigma$ is large, as evidenced by the small deviation to the mean (left) and small relative errors in trajectory direction (right, log scale), validating theoretical predictions.}
    \label{fig:cifar10_mean}
\end{figure}

We compute these metrics for the empirical optimal denoiser and the pre-trained EDM denoiser over 10,000 random initial seeds, with results shown in~\Cref{fig:cifar10_mean}. The figures plot mean with shaded regions indicating $\pm1$ standard deviation. When $\sigma$ is large, both the empirical optimal and trained denoisers are close to the dataset mean, as evidenced by the small deviation to the mean in~\Cref{fig:mean_deviation} and the small relative errors in trajectory direction in~\Cref{fig:relative_error}.
Notably, the relative error is very small for the first step and approximately below $1\%$ for the first $4$ steps, with a small variance across seeds. This suggests that the trajectory is strongly and consistently attracted to the data mean in the initial steps when $\sigma$ is large, validating the theoretical prediction in~\Cref{prop:initial-stage_mean_geneal}.

\subsubsection{Terminal Convergence and Memorization}\label{sec:experiment_memorization}
First, we examine the behavior of the empirical optimal denoiser to validate~\Cref{prop:V_epsilon_i_absorbing_sigma} and serves as a reference for the perturbed case in~\Cref{prop:memorization_sigma}. Starting from a Gaussian noise, \Cref{fig:memorization} illustrates the trajectory $x_\sigma$ (top) and the corresponding denoiser outputs $m_\sigma(x_\sigma)$ (bottom). As predicted by~\Cref{prop:V_epsilon_i_absorbing_sigma} or the general convergence result in~\Cref{thm:existence of flow maps}, the unperturbed trajectory progressively refines the sample, with both the denoiser outputs showing clear image structure throughout the sampling process and ultimately converging to a training image.

\begin{figure}[htbp!]
    \centering
    \includegraphics[width=\textwidth]{figs/clean_steps.png}
    \caption{\textbf{Reference case: Sample generation with unperturbed empirically optimal denoiser.} Top: ODE trajectory $x_\sigma$. Bottom: Denoiser outputs $m_\sigma(x_\sigma)$. Note the clear image structure throughout and smooth convergence to a training image.}
    \label{fig:memorization}
\end{figure}

To validate the memorization result for an asymptotically optimal denoiser presented in~\Cref{prop:memorization_sigma}, we then introduce a significant perturbation to the denoiser by adding noise that scales with $\sigma$. Specifically, we sample a fixed ${\epsilon} \sim \gN(0, 10^2I)$ and perturb the empirical optimal denoiser as follows:
$$ 
\widetilde{m}_\sigma(x) := m_\sigma(x) + \sigma {\epsilon},\,\forall x\in\R^d.
$$
The scaling ensures the perturbed denoiser is asymptotically optimal. 
This perturbation dramatically affects the denoiser outputs, as shown in~\Cref{fig:memorization_perturbed}. Compared to the clear outputs in the unperturbed case, the perturbed denoiser outputs (bottom row) are almost unrecognizable for all but the last four steps due to the large-scale noise. However, despite this substantial corruption of the denoiser outputs, the ODE trajectory (top row) still manages to get very close to a training image visually and through nearest neighbor search.

We also provide in \Cref{fig:distance_to_cifar10-clean} and \Cref{fig:distance_to_denoiser-perturbed} the quantitative results of the convergence and memorization with 10,000 random seeds. In the reference case with the empirical optimal denoiser, the distances to CIFAR-10 training set for both trajectories and denoiser outputs are shown in~\Cref{fig:distance_to_cifar10-clean}.
The denoiser output initially is around $10$ away from the CIFAR-10 dataset and then quickly converges to $0$---validating the quick convergence predicted by~\Cref{prop:V_epsilon_i_absorbing_sigma}.
The distances from the trajectory to CIFAR-10 dataset smoothly converge to close to around $0.1$---very small compared to the mean norm $27.2$ of the dataset. In the perturbed case with the heavily corrupted denoiser shown in~\Cref{fig:distance_to_denoiser-perturbed}, the denoiser outputs are heavily corrupted by noise and far from the CIFAR-10 dataset until late stages. However, the trajectories still manage to get close to training images with the distances converging to around $1.0$---still small compared to the mean norm of the dataset. Also, note that we are only using a coarse sampling schedule with $18$ steps, and the convergence can be further improved with a finer schedule.

\begin{figure}[htbp!]
    \centering
    \includegraphics[width=\textwidth]{figs/noisy_steps.png}
    \caption{\textbf{Perturbed case: Sample generation with heavily perturbed but asymptotically optimal denoiser.} Despite the denoiser outputs being severely corrupted by noise (bottom) compared to the reference case, the ODE trajectory (top) still remarkably converges to a training image, validating the robustness predicted by \Cref{prop:memorization_sigma}.}
    \label{fig:memorization_perturbed}
\end{figure}

\begin{figure}[htbp!]
    \centering
    \begin{subfigure}[b]{0.48\textwidth}
        \centering
        \includegraphics[width=\textwidth]{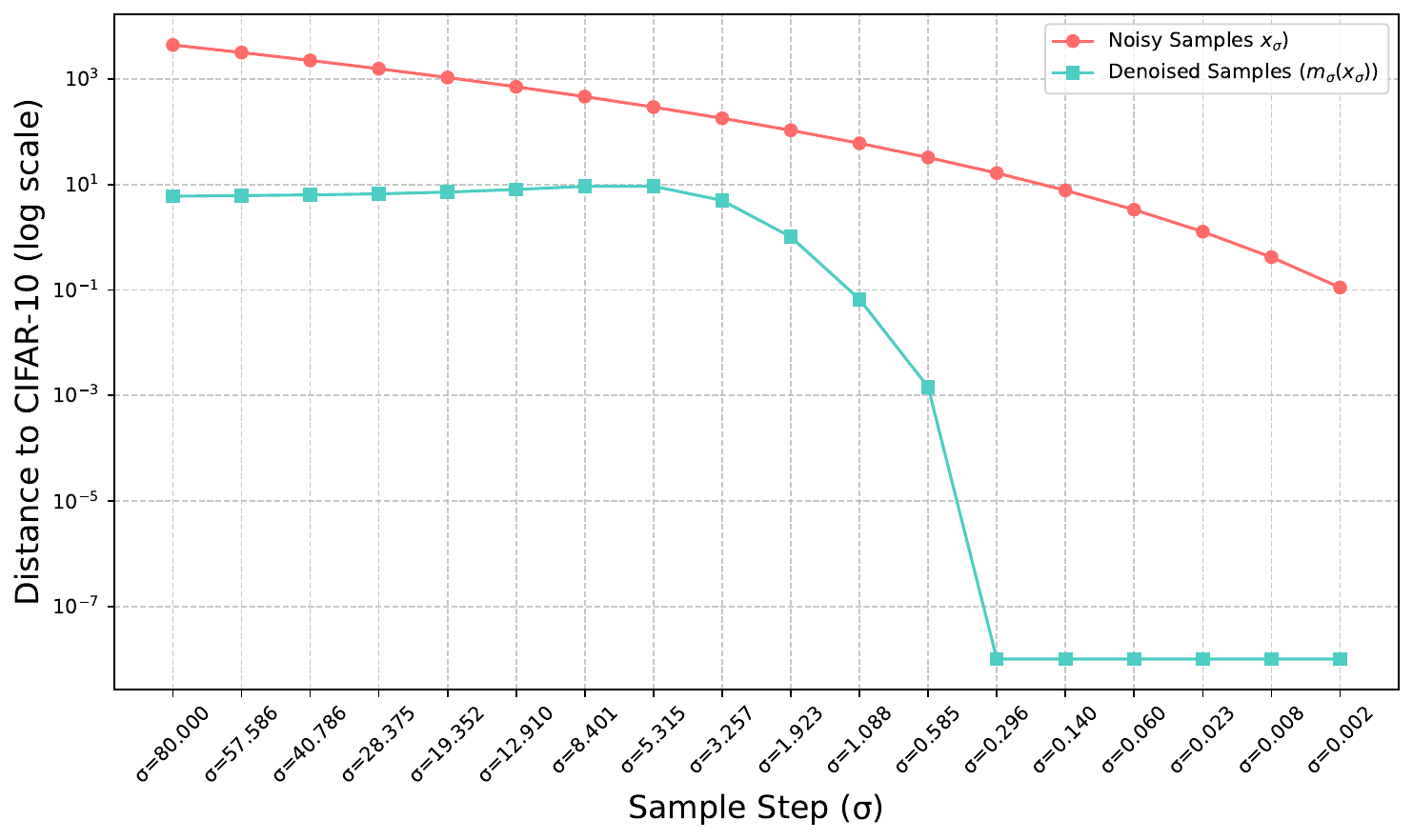}
        \caption{
        Distance from trajectory points $x_\sigma$ and denoiser outputs $m_\sigma(x_\sigma)$ to nearest CIFAR-10 images for the unperturbed empirical optimal denoiser $m_\sigma$. While denoiser outputs stay close to the CIFAR-10 dataset, trajectories initially deviate but eventually converge.}
        \label{fig:distance_to_cifar10-clean}
    \end{subfigure}
    \hfill
    \begin{subfigure}[b]{0.48\textwidth}
        \centering
        \includegraphics[width=\textwidth]{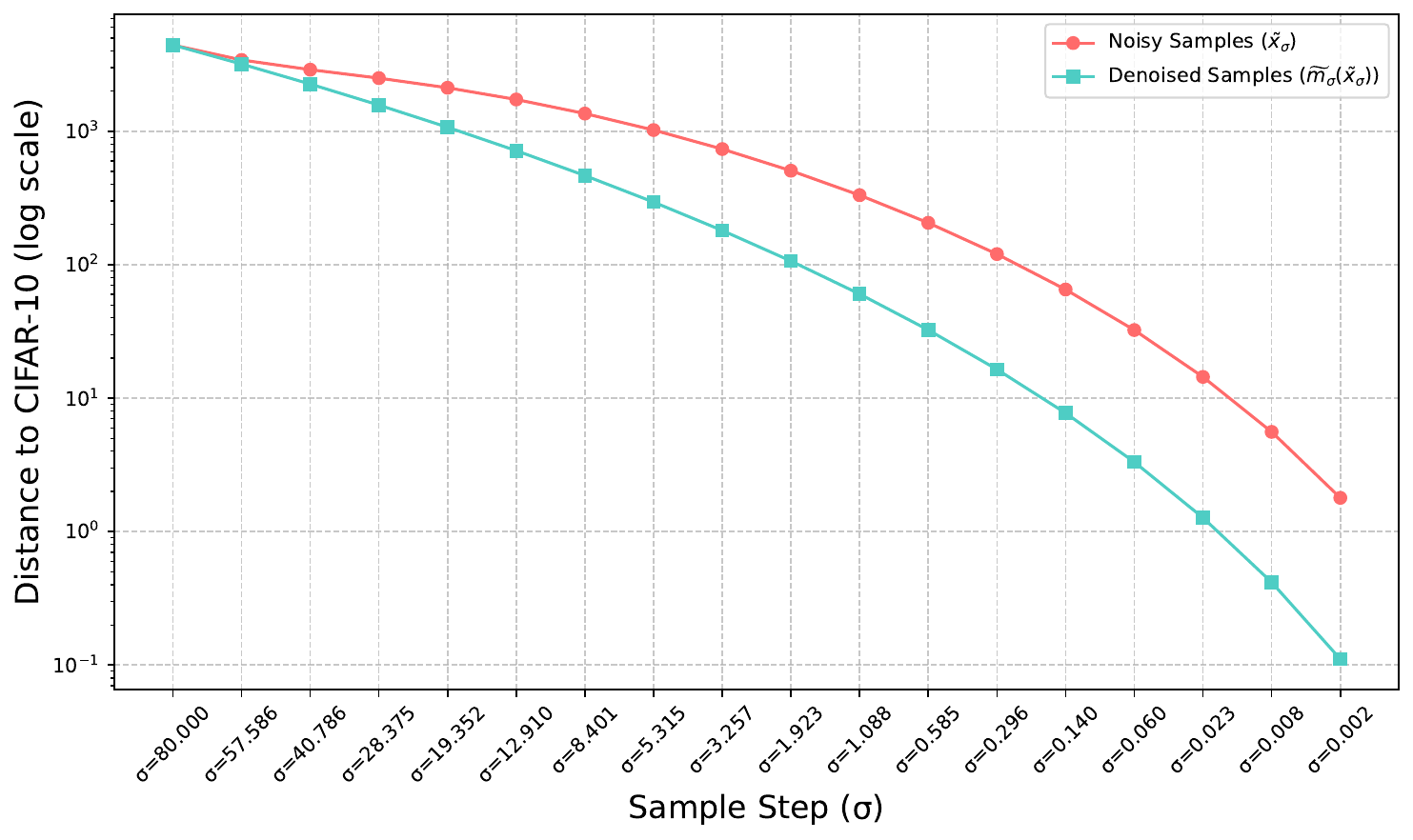}
        \caption{Distance metrics for perturbed denoiser $\widetilde{m}_\sigma$. Despite highly corrupted denoiser outputs (blue) that stay far from the CIFAR-10 dataset until the late stages, trajectories (orange) still manage to get very close to training images, validating the theoretical robustness prediction.}
        \label{fig:distance_to_denoiser-perturbed}
    \end{subfigure}
    \caption{
    \textbf{Quantitative result of convergence and memorization.} Evolution of distances to the CIFAR-10 training set for both trajectories and denoiser outputs using the empirical optimal denoiser and its perturbed version, averaged over 10,000 random seeds. Left: Reference case with empirical optimal denoiser shows smooth convergence. Right: Despite severe perturbation corrupting intermediate denoiser outputs, trajectories still converge toward training data, demonstrating the robustness of memorization predicted by \Cref{prop:memorization_sigma}.}
\end{figure}

These experiments empirically validate our theoretical result in~\Cref{prop:memorization_sigma}: even when the denoiser outputs are severely corrupted during intermediate steps, as long as the trained denoiser asymptotically approximates the empirical optimal denoiser, the FM ODE trajectory will still converge to the training data. This observation highlights the importance of carefully regularizing terminal time behavior during training to prevent memorization.

\subsection{Local Cluster {Absorbing and Attracting} Behavior}\label{sec:local_cluster_behavior}
In this section, we provide additional experimental results to validate the local cluster {absorbing and attracting} behavior of the FM ODE. We use the FFHQ dataset~\cite{karras2019style} which contains high-resolution human face images. We randomly sample $10,000$ images from the FFHQ dataset and downsample them to $64\times64$ resolution.
To visualize the distribution of facial images in the feature space, we perform t-SNE dimensionality reduction on the downsampled FFHQ dataset. As shown in Figure \ref{fig:tsne}, we color code the points based on two related attributes: (1) the average RGB intensity (left) and (2) the illumination value (right). The average RGB intensity is computed as the mean of the pixel values across all three channels, while the illumination value is computed as the mean of the pixel values in the Y channel of the YCbCr color space. While the dataset does not form distinct, separated clusters as in our synthetic example in \Cref{sec:experiment_synthetic}, it still contains regions of varying density along the illumination spectrum. In particular, very dark faces and very bright faces concentrate at opposite ends of the feature space, creating two high-density regions. This natural organization provides an ideal setting to evaluate our theoretical results on absorption phenomena in a realistic dataset even when distinct clusters are not present. We will demonstrate that the flow model trajectories are attracted to these high-density illumination regions, consistent with our local cluster attraction theory.
\begin{figure}[htbp!]
    \centering
    \includegraphics[width=0.95\linewidth]{./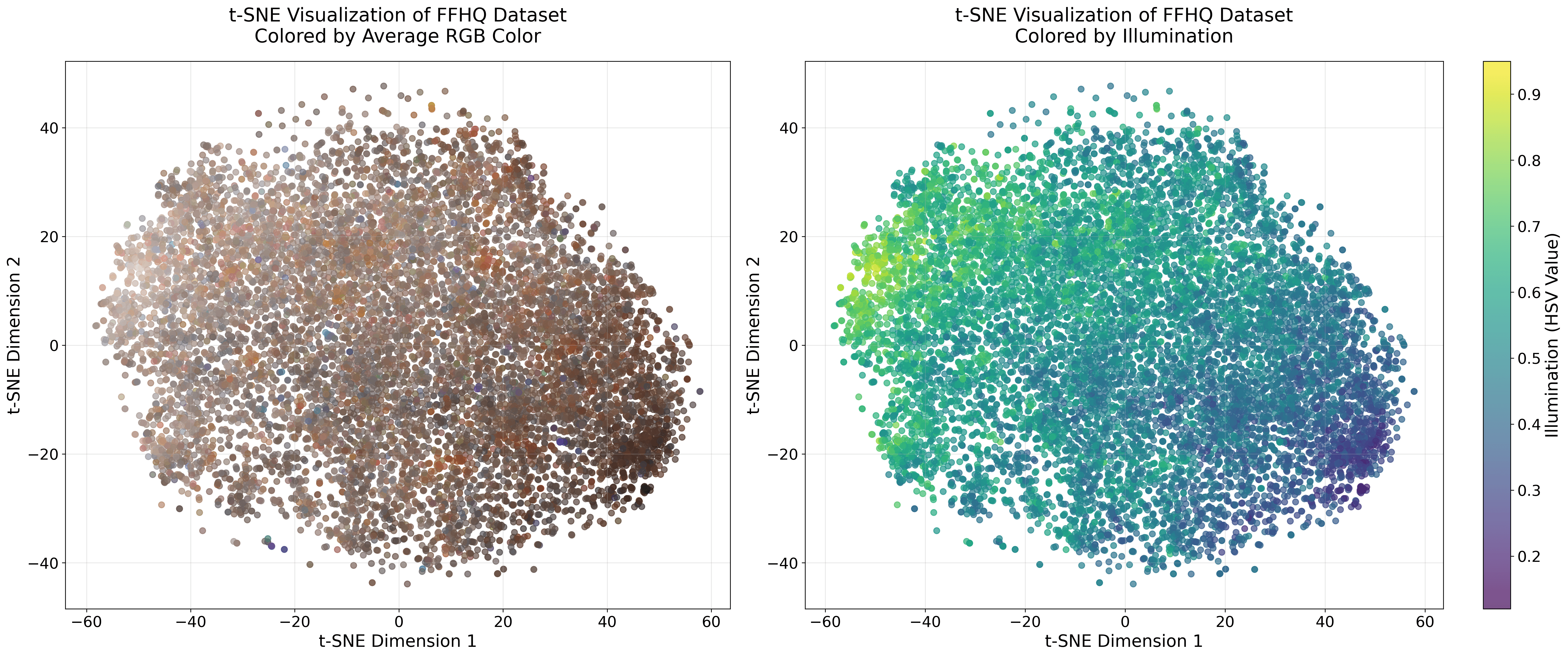}
    \caption{\textbf{t-SNE Visualization of FFHQ (Human Face) Dataset.} We downsample the original dataset from high-resolution (1024×1024) human face images to 64×64 resolution---aligning the training procedure in EDM and subsample 10,000 data points to perform t-SNE. The visualization shows feature embeddings colored by average RGB intensity (left) and illumination value (right).  Although the data does not form distinct clusters, samples with similar illumination naturally organize into local neighborhoods in the feature space. The extremes of the illumination spectrum (very dark and very bright regions) exhibit higher local density.}   \label{fig:tsne}
\end{figure}

In Figure \ref{fig:samples}, we show samples generated from a pretrained EDM model using three different initialization strategies: (1) random noise, (2) noise initialized near dark illumination regions, and (3) noise initialized near bright illumination regions. The results demonstrate that random initialization yields samples across the illumination spectrum, while dark and bright initializations consistently generate samples with corresponding illumination characteristics. Thus, even without explicit clusters, the flow gravitates toward locally dense regions defined by a continuous attribute—illumination which aligns with our cluster-absorption theory.

\begin{figure}[htbp!]
    \centering
    \begin{tabular}{@{}c@{\hspace{2mm}}c@{\hspace{2mm}}c@{}}
        \includegraphics[width=0.31\linewidth]{./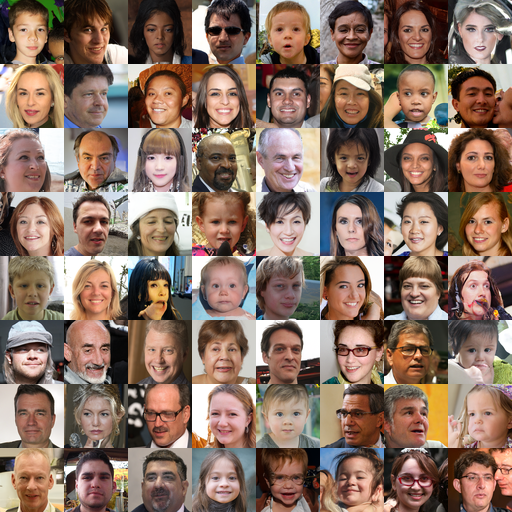} &
        \includegraphics[width=0.31\linewidth]{./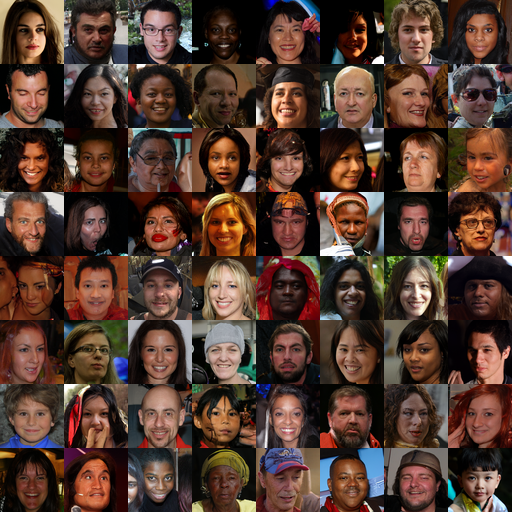} &
        \includegraphics[width=0.31\linewidth]{./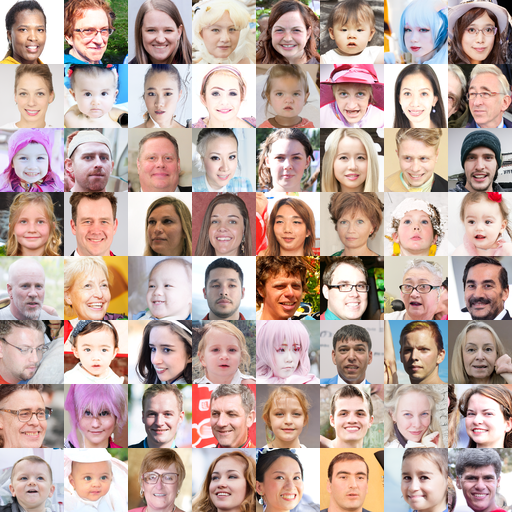}
    \end{tabular}
    \caption{\textbf{Illumination-based Absorption Behavior.} Samples generated using three initialization strategies: random noise (left), noise near dark illumination regions (middle), and noise near bright illumination regions (right). Random initialization produces samples across the illumination spectrum, while dark and bright initializations generate samples with corresponding illumination characteristics, aligning with our cluster absorption result even for continuous attributes rather than discrete clusters.}
    \label{fig:samples}
\end{figure}

\end{document}